\theoremstyle{plain}
\newtheorem{theorem}{Theorem}[section]
\newtheorem{lemma}[theorem]{Lemma}
\newtheorem{corollary}[theorem]{Corollary}
\newtheorem{proposition}[theorem]{Proposition}
\theoremstyle{definition}
\newtheorem{definition}[theorem]{Definition}
\newtheorem{remark}[theorem]{Remark}
\newcommand{\ftime}{\text{F-TiME}}
\newcommand{\soul}{\text{SOUL}}
\newcommand{\crf}{\text{CRF}}
\newcommand{\cs}{\text{CS}}
\newcommand{\solar}{\text{SOLAR}}
\newcommand{\solaru}{\text{SOLAR-U}}
\newcommand{\fs}{\text{FS}}
\newcommand{\smv}{{\text{SMV}}}
\newcommand{\Acal}{\mathcal{A}}
\newcommand{\Bcal}{\mathcal{B}}
\newcommand{\Ccal}{\mathcal{C}}
\newcommand{\Dcal}{\mathcal{D}}
\newcommand{\Ecal}{\mathcal{E}}
\newcommand{\Fcal}{\mathcal{F}}
\newcommand{\Gcal}{\mathcal{G}}
\newcommand{\Hcal}{\mathcal{H}}
\newcommand{\Lcal}{\mathcal{L}}
\newcommand{\Mcal}{\mathcal{M}}
\newcommand{\Ncal}{\mathcal{N}}
\newcommand{\Rcal}{\mathcal{R}}
\newcommand{\Tcal}{\mathcal{T}}
\newcommand{\Ucal}{\mathcal{U}}
\newcommand{\Xcal}{\mathcal{X}}
\newcommand{\Ycal}{\mathcal{Y}}
\newcommand{\Ocal}{\mathcal{O}}
\newcommand{\Qcal}{\mathcal{Q}}
\newcommand{\Ebb}{\mathbb{E}}
\newcommand{\Nbb}{\mathbb{N}}
\newcommand{\Pbb}{\mathbb{P}}
\newcommand{\Rbb}{\mathbb{R}}
\newcommand{\Xbb}{\mathbb{X}}
\newcommand{\Ybb}{\mathbb{Y}}
\newcommand{\1}{\mathbbm{1}}%{{\rm 1}\kern-0.24em{\rm I}}
\newcommand{\comment}[1]{}
\newcommand{\mb}[1]{\ensuremath{\boldsymbol{#1}}}
\title{Universal Regression with Adversarial Responses \footnote{Accepted, Annals of Statistics, June 2023}}
\author{
  Mo\"ise Blanchard\\
  MIT\\
  \small{\texttt{moiseb@mit.edu}}
  \and 
  Patrick Jaillet\\
  MIT\\
  \small{\texttt{jaillet@mit.edu}}
}
\date{}
\begin{document}

\maketitle

\begin{abstract}
    We provide algorithms for regression with adversarial responses under large classes of non-i.i.d. instance sequences, on general separable metric spaces, with \emph{provably minimal} assumptions. We also give characterizations of learnability in this regression context. We consider \emph{universal consistency} which asks for strong consistency of a learner without restrictions on the value responses. Our analysis shows that such an objective is achievable for a significantly larger class of instance sequences than stationary processes, and unveils a fundamental dichotomy between value spaces: whether finite-horizon mean estimation is achievable or not. We further provide \emph{optimistically universal} learning rules, i.e., such that if they fail to achieve universal consistency, any other algorithms will fail as well. For unbounded losses, we propose a mild integrability condition under which there exist algorithms for adversarial regression under large classes of non-i.i.d. instance sequences. In addition, our analysis also provides a learning rule for mean estimation in general metric spaces that is consistent under adversarial responses without any moment conditions on the sequence, a result of independent interest.
\end{abstract}

\paragraph{Keywords.}
Statistical learning theory, consistency, non-parametric estimation, generalization, stochastic processes, online learning, metric spaces

\section{Introduction}

\subsection{Motivation and background} We study the classical statistical problem of metric-valued regression. Given an instance metric space $(\Xcal,\rho_\Xcal)$ and a value metric space $(\Ycal,\rho_\Ycal)$ with a loss $\ell$, one observes instances in $\Xcal$ and aims to predict the corresponding values in $\Ycal$. The learning procedure follows an iterative process where successively, the learner is given an instance $X_t$ and predicts the value $Y_t$ based on the historical samples and the new instance. The learner's goal is to minimize the loss of its predictions $\hat Y_t$ compared to the true value $Y_t$. In particular, $\Ycal=\{0,1\}$ (resp. $\Ycal=\{0,\ldots,k\}$) with 0-1 loss corresponds to binary (resp. multiclass) classification while $\Ycal=\Rbb$ corresponds to the classical regression setting. Motivated by the increase of new types of data in numerous data analysis applications--- e.g., data lying on spherical spaces \cite{chang1989spherical,mardia2000directional}, manifolds \cite{shi2009intrinsic,davis2010population,thomas2013geodesic}, Hilbert spaces \cite{zaichyk2019efficient}, Hadamard spaces \cite{lin2021total}---we will study the case where both instances and value spaces are general separable metric spaces. This general setting adopted in the recent literature on universal learning \cite{hanneke2021open,tsir2022metric,blanchard2022universal} includes and extends the specific classification and regression settings mentioned above. In this context, we model the stream of data as a general stochastic process $(\Xbb,\Ybb):=(X_t,Y_t)_{t\geq 1}$, and are interested in \emph{consistent} predictions that have vanishing average \emph{excess} loss compared to any fixed measurable predictor functions $f:\Xcal\to\Ycal$, i.e., $\frac{1}{T}\sum_{t=1}^T \ell(\hat Y_t,Y_t)-\ell(f(X_t),Y_t)\to 0\;(a.s.)$. Naturally, one would hope that the algorithm converges for a large class of value functions. Thus, we are interested in \emph{universally consistent} learning rules that are consistent irrespective of the value process $\Ybb$.

The i.i.d. version of this problem where one assumes that the sequence $(\Xbb,\Ybb)$ is i.i.d. has been extensively studied. A classical result is that for binary classification in Euclidean spaces, $k-$nearest neighbor (kNN) with $k/\ln T\to\infty$ and $k/T\to 0$ is universally consistent under mild assumptions on the distribution of $(X_1,Y_1)$ \cite{stone1977consistent,devroye1994strong,devroye2013probabilistic}. These results were then extended to a broader class of spaces \cite{devroye2013probabilistic,gyorfi:02} and more recently, \cite{hanneke2021bayes,gyorfi2021universal,tsir2022metric} provided universally consistent algorithms for any essentially separable metric space $\Xcal$ which are precisely those for which universal consistency is achievable for i.i.d. pairs $(X_t,Y_t)_{t\geq 1}$ of instances and responses. In parallel, a significant line of work aimed to obtain such results in non-i.i.d. settings, notably relaxations of the i.i.d. assumptions such as stationary ergodic processes \cite{morvai1996nonparametric,gyorfi1999simple,gyorfi:02} or processes satisfying the law of large numbers \cite{morvai1999regression,gray2009probability,steinwart2009learning}.

\subsection{Optimistic universal learning} In this work, we aim to understand which are the minimal assumptions on the data sequences for which universal consistency is still achievable. As such, we follow the \emph{optimistic decision theory} \cite{hanneke2021learning} which formalizes the paradigm of ``learning whenever learning is possible". Precisely, the \emph{provably minimal} assumption for a given objective is that this task is achievable, or in other words that learning is possible. The goal then becomes to 1. characterize for which settings this objective is achievable and 2. if possible, provide learning rules that achieve this objective whenever it is achievable. These are called \emph{optimistically universal} learning rules and enjoy the convenient property that if they failed the objective, any other algorithms would fail as well.

 \subsection{Related works in universal learning} This paradigm was recently used to study minimal assumptions for the noiseless (realizable) case where there exists an unknown underlying function $f^*:\Xcal\to\Ycal$ such that $Y_t=f^*(X_t)$ \cite{hanneke2021learning}. In this setting, the two questions described above were recently settled. For bounded losses, a simple variant of the nearest neighbor algorithm is optimistically universal \cite{blanchard2021universal,blanchard2022universal} and learnable processes are significantly larger than stationary processes. On the other hand, for unbounded losses, universal regression is extremely restrictive since the only learnable processes are those which visit a finite number of points almost surely \cite{blanchard2022optimistic}. Yet, the general non-realizable setting was not characterized. As an initial result, for bounded losses, \cite{hanneke2022bayes} proposed an algorithm that achieves universal consistency for a large class of processes $\Xbb$, which intuitively asks that the sub-measure induced by empirical visits of the input sequence be continuous. There is however a significant gap between the proposed condition and the learnable processes in the bounded noiseless setting. \cite{hanneke2022bayes} then left open the question of identifying the precise provably-minimal conditions to achieve consistency, and whether there exists an optimistically universal learning rule.

 \subsection{Adversarial responses and related works in learning with experts} The consistency results in \cite{hanneke2022bayes} hold for arbitrary value processes $\Ybb$, arbitrarily correlated to the instance process $\Xbb$. We consider the slightly more general \emph{adversarial} responses and show that we can obtain the same results as for adversarial processes, without any generalizability cost. Formally, adversarial responses can not only arbitrarily depend on the instance sequence $\Xbb$, but may also depend on past predictions and past randomness used by the learner. This is a non-trivial generalization for randomized algorithms---note that randomization is necessary to obtain guarantees for general online learning problems \cite{bubeck2012regret,slivkins2019introduction}. There is a rich theory for arbitrary or adversarial responses $\Ycal$ when the reference functions $f^*:\Xcal\to\Ycal$ are restricted to specific function classes $\Fcal$. As a classical example, for the noiseless binary classification setting, there exist learning rules which guarantee a finite number of mistakes for arbitrary sequences $\Xbb$, if and only if the class $\Fcal$ has finite Littlestone dimension \cite{littlestone1988learning}. Other restrictions on the function class have been considered \cite{cesa2006prediction,ben2009agnostic,rakhlin2015online}. Universal learning diverges from this line of work by imposing no restrictions on function classes---namely \emph{all} measurable functions---but instead restricting instance processes $\Xbb$ to the optimistic set where universal consistency is achievable. Nevertheless, the algorithms we introduce for adversarial responses use as subroutine the traditional exponentially weighted forecaster for learning with expert advice from the online learning literature, also known as the Hedge algorithm \cite{littlestone1994weighted,cesa1997use,freund1997decision}.

\subsection{Contributions}

In this paper, we provide answers to two fundamental questions in universal regression. First, we exactly characterize the set of processes we call \emph{learnable}. These are instance processes $\Xbb$ for which universal learning is possible, i.e., consistency is achieved for every process $(X_t,Y_t)_{t\geq 1}$ with covariate sequence $\Xbb$. Second, we provide optimistically universal learning rules, i.e., a unique algorithm that achieves universal consistency for all processes $\Xbb$ for which this is achievable by some learning rule. The specific answers to these questions depend on the value space and loss $(\Ycal,\ell)$ as detailed below.

\subsubsection{Universal learning with empirically integrable responses} We introduce a mild moment-type assumption on the responses $\Ybb$, namely \emph{empirical integrability}, that roughly asks that one can bound the tails of the empirical first moment of $\Ybb$. We then proceed to analyze the processes for which learning adversarial responses guaranteed to satisfy this assumption, is achievable. The answer depends on a property of the value space and loss $(\Ycal,\ell)$ which we denote $\ftime$.
\begin{itemize}
    \item If every ball $B_\ell(y,r)$ of $(\Ycal,\ell)$ satisfies the $\ftime$ property, the class of processes $\Xbb$ for which universal consistency under adversarial empirically integrable responses may be achieved is the so-called Sublinear Measurable Visits ($\smv$) class. This coincides with the class of processes that admits universal learning for bounded losses in the realizable setting (noiseless responses) \cite{blanchard2022universal}. In particular, this shows that for value spaces with bounded losses satisfying $\ftime$, one can extend consistency results from the realizable setting to the adversarial one at no generalizability cost.
    \item Otherwise, the classes of processes $\Xbb$ for which one can achieve universal consistency for empirically integrable responses is a smaller class called Continuous Submeasure ($\cs$). This is a condition that was already considered by \cite{hanneke2022bayes}, which showed that for bounded metric losses, one can achieve universal learning under $\cs$ processes. Our results show that whenever the $\ftime$ condition is not satisfied for bounded losses, $\cs$ is also a necessary condition for universal learning.
\end{itemize}
Also, in both cases, we give an optimistically universal learning rule, that is implicit for the first case---it uses as subroutine the learning rule for mean-estimation---and explicit for the second. These results resolve an open question from \cite{hanneke2022bayes}.

 Intuitively, the property $\ftime$ asks that, for any fixed tolerance $\epsilon>0$, there is a learning rule that solves the analogous prediction problem without covariates $\Xbb$---\emph{mean-estimation}---in finite time within the tolerance $\epsilon$. This property is satisfied for ``reasonable'' value spaces, e.g., totally-bounded spaces or countably-many-classes classification $(\Nbb,\ell_{01})$, but we also provide an explicit example of bounded metric space that does not satisfy this condition.

To motivate the introduction of the empirical integrability condition we show that a weaker moment-type assumption on responses---that $\limsup_{T\to\infty} \frac{1}{T}\sum_{t=1}^T \ell(y_0,Y_t)<\infty\; (a.s.)$ for some $y_0\in\Ycal$---is not sufficient to extend the results from the bounded loss case to unbounded losses, resolving an open question from \cite{blanchard2022optimistic}. Further, empirical integrability is essentially necessary to obtain consistency results: it is automatically satisfied if the loss is bounded and for the i.i.d. setting it exactly asks that responses $Y$ have finite first moment.

As a direct implication of this work, finite second moment $\Ebb[Y^2]$ is sufficient to achieve consistency for stationary ergodic processes. This result relaxes the conditions of all past works to the best of our knowledge, which required finite fourth moment $\Ebb[Y^4]$ \cite{gyorfi:07}.

\subsubsection{Universal learning with unrestricted responses} For completeness, we also characterize the set of learnable processes without assuming empirical integrability on responses. Since the two notions coincide for bounded losses, we focus on unbounded losses. While there always exists an optimistically universal learning rule, the precise class of universally learnable processes depends on an alternative involving the mean-estimation problem. Either mean-estimation on $(\Ycal,\ell)$ is impossible and universal learning is never achievable, or universal learning is achievable for processes that only visit a finite number of distinct points, a property called Finite Support ($\fs$). Along the way, we show that mean-estimation with adversarial responses is always possible for metric losses, a result of independent interest.

\subsection{Organization of the paper} After presenting the learning framework and definitions in \cref{sec:formal_setup}, we describe in \cref{sec:main_results} our main results. Although these are stated for general value spaces under the empirical integrability constraint, the proofs build upon the bounded loss case. We follow this proof structure: in \cref{sec:totally_bounded_value_spaces} we consider totally-bounded value spaces for which we can give explicit optimistically universal learning rules, in \cref{sec:alternative} we consider general bounded loss spaces. We then turn to unbounded and mean estimation in \cref{sec:mean_estimation}. Last, in \cref{sec:unbounded_loss_moment_constraint} we introduce the empirical integrability and prove our general results for unbounded losses. We discuss open directions in \cref{sec:conclusion}.

\section{Formal setup}
\label{sec:formal_setup}
We provide the necessary definitions, concepts and conditions.

\subsection{Instance and value spaces}Consider a separable metric \emph{instance space} $(\Xcal,\rho_\Xcal)$ equipped with its Borel $\sigma-$algebra $\Bcal$, and a separable metric  \emph{value space} $(\Ycal,\rho_\Ycal)$ given with a loss $\ell$. We recall that a metric space is \emph{separable} if it contains a dense countable set. Unless mentioned otherwise, we suppose that the loss is a power of a metric, i.e., there exists $\alpha\geq 1$ such that the loss is $\ell=(\rho_\Ycal)^\alpha$. As a remark, all of the results in this work can be generalized to \emph{essentially separable} metric instance spaces, a condition introduced by \cite{hanneke2021bayes} which was shown to be the largest class of metric spaces for which learning possible. However, for the sake of exposition, we restrict ourselves to separable metric spaces. We denote $\bar\ell := \sup_{y_1,y_2\in\Ycal}\ell(y_1,y_2)$. In the first \cref{sec:totally_bounded_value_spaces,sec:alternative} of this work, we suppose that the loss $\ell$ is \emph{bounded}, i.e., $\bar \ell<\infty$. The case of \emph{unbounded} losses is addressed in the next \cref{sec:mean_estimation,sec:unbounded_loss_moment_constraint}. We also introduce the notion of near-metrics for which we will provide some results. We say that $\ell$ is a near-metric on $\Ycal$ if it is symmetric, satisfies $\ell(y,y)=0$ for all $y\in\Ycal$, for any $y'\neq y\in\Ycal$ we have $\ell(y,y')>0$, and it satisfies a relaxed triangle inequality $\ell(y_1,y_2)\leq c_\ell( \ell(y_1,y_3) + \ell(y_2,y_3))$ where $c_\ell$ is a finite constant.

\subsection{Online learning on adversarial responses}We consider the \emph{online learning} framework where at step $t\geq 1$, one observes a new instance $X_t\in\Xcal$ and predicts a value $\hat Y_t\in\Ycal$ based on the past history $(X_u,Y_u)_{u\leq t-1}$ and the new instance $X_t$ only. The learning rule may be randomized, where the private randomness used at each iteration $t$ is drawn from a fixed probability space $\Rcal$ and independent of the data generation process used to generate $Y_t$.

\begin{definition}
An \emph{online learning rule} is a sequence $f_\cdot:=\{f_t,R_t\}_{t\geq 1}$ of measurable functions $f_t: \Rcal \times \Xcal^{t-1}\times \Ycal^{t-1} \times \Xcal \to\Ycal$ together with a distribution $R_t$ on $\Rcal$.
\end{definition}

The prediction at time $t$ of $f_\cdot$ is $f_t(r_t; (X_u)_{\leq t-1},(Y_u)_{\leq t-1},X_t)$ where $r_t\sim R_t$ is independent of the new value $X_t$ and the past history $(X_u,Y_u)_{\leq t}$. For simplicity, we may omit the internal randomness $r_t$ and write directly $f_t:\Xcal^{t-1}\times \Ycal^{t-1}\times \Xcal\to\Ycal$. We are interested in general data-generating processes. To this means, a possible very general choice of instances and values are general stochastic processes $(\Xbb,\Ybb):=\{(X_t,Y_t)\}_{t\geq 1}$ on the product space $\Xcal\times \Ycal$. This corresponds to the arbitrarily dependent responses under instance processes $\Xbb$ \cite{hanneke2022bayes}. In this work, we consider the slightly more general \emph{adversarial responses} where the value $Y_t$ is also allowed to depend on the past private randomness $(r_u)_{u\leq t-1}$ used by the learning rule $f_\cdot$.

\begin{definition}
Let $\Xbb=(X_t)_{t\geq 1}$ be a stochastic process on $\Xcal$. An \emph{adversarial response mechanism} on $\Xbb$ is a stochastic process $\{(\tilde X_t,\mb Y_t)\}_{t\geq 1}$ where $\tilde X_t\in\Xcal$, $\mb Y_t = \mb Y_t(\cdot\mid\cdot)$ is a Markov kernel from $\Rcal^{t-1}$ to $\Ycal$, and $(\tilde X_t)_{t\geq 1}$ has same distribution as $\Xbb$.
\end{definition}

For a given learning rule $f_\cdot$, having observed the sampled randomness $r_1,\ldots,r_{t-1}\in\Rcal$ used by the learning rule before time $t$, the target value at time $t$ is $Y_t = \mb Y_t(r_1,\ldots,r_{t-1})$. Again, for simplicity, we will refer to the adversarial response mechanism as $\Ybb$, which allows us to view the data generating process as a usual stochastic process on $\Xcal\times\Ycal$. Of course, if the learning rule is \emph{deterministic}, adversarial responses are equivalent to arbitrary dependent responses as in \cite{hanneke2022bayes}, but this is not necessarily the case for general \emph{randomized} algorithms. 

\subsection{Empirically integrable responses}
We introduce a novel assumption on the responses, namely \emph{empirical integrability}.

\begin{definition}
    A process $(Y_t)_{t\geq 1}$ is \emph{empirically integrable} if there exists $y_0\in\Ycal$ such that for any $\epsilon>0$, almost surely there exists $M\geq 0$ for which
\begin{equation*}
    \limsup_{T\to\infty}\frac{1}{T}\sum_{t=1}^T\ell(y_0,Y_t)\1_{\ell(y_0,Y_t)\geq M}\leq \epsilon.
\end{equation*}  
\end{definition}
Unless mentioned otherwise, we will focus on the case where responses satisfy this property. This is a mild assumption on the responses. Indeed, it is worth noting that this condition is always satisfied if the loss $\ell$ is bounded. Further, if for some $y_0\in\Ycal$, $\ell(y_0,Y_t)$ admits moments of order $p>1$, the empirical integrability condition is also satisfied.

\subsection{Universal consistency}In this general setting, we are interested in online learning rules which achieve low long-run average loss compared to any fixed prediction function for general adversarial mechanisms. Given a learning rule $f_\cdot$ and an adversarial process $(\Xbb,\Ybb)$, for any measurable function $f^*:\Xcal\to\Ycal$, we denote the long-run average excess loss as
\begin{equation*}
    \Lcal_{(\Xbb,\Ybb)}(f_\cdot, f^*):= \limsup_{T\to\infty} \frac{1}{T} \sum_{t=1}^T \left(\ell(f_t(\Xbb_{\leq t-1},\Ybb_{\leq t-1},X_t),Y_t) - \ell(f^*(X_t),Y_t)  \right).
\end{equation*}
We can then define the notion of consistency which asks that the excess loss compared to any measurable function vanishes to zero.

\begin{definition}
    Let $(\Xbb,\Ybb)$ be an adversarial process and $f_\cdot$ a learning rule. $f_\cdot$ is consistent under $(\Xbb,\Ybb)$ if for any measurable function $f^*:\Xcal\to\Ycal$, we have $\Lcal_{(\Xbb,\Ybb)}(f_\cdot,f^*)\leq 0,\quad (a.s.)$.
\end{definition}

For example, if $(\Xbb,\Ybb)$ is an i.i.d. process on $\Xcal\times\Ycal$ following a distribution $\mu$ where $\mu$ has a finite first-order moment, achieving consistency is equivalent to reaching the optimal risk $R^*:= \inf_{f^*} \Ebb_{(X,Y)\sim \mu} \left[\ell(f^*(X),Y)\right],$ where the infimum is taken over all measurable functions $f^*:\Xcal\to\Ycal$. As introduced in \cite{hanneke2021learning,hanneke2022bayes}, consistency against all measurable function is the natural extension of consistency for i.i.d. processes $(\Xbb,\Ybb)$ to non-i.i.d. settings. The goal of universal learning is to design learning rules that are consistent for any adversarial process $\Ybb$ that is empirically integrable.

\begin{definition}
Let $\Xbb$ be a stochastic process on $\Xcal$ and $f_\cdot$ a learning rule. $f_\cdot$ is \emph{universally consistent} under $\Xbb$ for empirically integrable adversarial responses if for any adversarial process $(\tilde\Xbb,\Ybb)$ with $\tilde\Xbb\sim\Xbb$ and such that $\Ybb$ is empirically integrable, $f_\cdot$ is consistent.
\end{definition}

\subsection{Optimistic universal learning}
Given this regression setup, we define $\solar$ (Strong universal Online Learning with Adversarial Responses) as the set of processes $\Xbb$ for which universal consistency with adversarial responses is \emph{achievable},
\begin{multline*}
    \solar = \{\Xbb:\exists f_\cdot \text{ universally consistent learning rule under $\Xbb$}
    \\ \text{for empirically integrable adversarial responses}\}.
\end{multline*}

Note that this learning rule is allowed to depend on the process $\Xbb$. Similarly, in the realizable (noiseless) setting, one can define the set $\soul$ (Strong Online Universal Learning) of processes for which there exists a learning rule that is universally consistent for realizable responses when the loss is bounded (and hence, the empirical integrability condition is always satisfied). Of course, $\solar\subset\soul$. We are then interested in learning rules that would achieve universal consistency whenever possible. 

\begin{definition}
A learning rule $f_\cdot$ is \emph{optimistically universal} for adversarial regression with empirically integrable responses if it is universally consistent under all $\Xbb\in\solar$ for adversarial empirically integrable responses.
\end{definition}

Similarly, we say that a learning rule is optimistically universal for noiseless regression if it is universally consistent under all $\Xbb\in\soul$ for noiseless responses when the loss is bounded. In this general framework, the main interests of optimistic learning are 1. identifying the set of learnable processes with adversarial responses $\solar$, 2. determining whether there exists an optimistically universal learning rule, and 3. constructing one if it exists.

\begin{remark}
Except for \cref{subsec:metric_mean_estimation} in which we assume that the loss is a metric $\alpha=1$, one can generalize our results to any symmetric and discernible losses $\ell$ such that for any $0<\epsilon\leq 1$, there exists a constant $c_\epsilon$ such that for all $y_1,y_2,y_3\in\Ycal$, $ \ell(y_1,y_2) \leq (1+\epsilon)\ell(y_1,y_3) + c_\epsilon \ell(y_2,y_3).$ Without loss of generality, we can further assume that $c_\epsilon$ is non-increasing in $\epsilon$. This is a stronger assumption than having a near-metric $\ell$, for which we also give some results in \cref{sec:totally_bounded_value_spaces,sec:unbounded_loss_moment_constraint}.
\end{remark}

\section{Main results}
\label{sec:main_results}

We introduce some conditions on stochastic processes. For any process $\Xbb$ on $\Xcal$, given any measurable set $A\in\Bcal$ of $\Xcal$, let $\hat\mu_\Xbb(A):=\limsup_{T\to\infty}\frac{1}{T}\sum_{t=1}^T \1_A(X_t)$. We consider the condition $\cs$ (Continuous Sub-measure) defined as follows.\\

\noindent \textbf{Condition CS:}
\textit{For every decreasing sequence $\{A_k\}_{k=1}^\infty$ of measurable sets in $\Xcal$ with $A_k\downarrow \emptyset$, $ \Ebb[\hat \mu_{\Xbb }(A_k)] \underset{k\to\infty}{\longrightarrow} 0.$}

It is known that this condition is equivalent to $\Ebb[\hat\mu_\Xbb(\cdot)]$ being a continuous sub-measure \cite{hanneke2021learning}, hence the adopted name $\cs$. Importantly, $\cs$ processes contain in particular i.i.d., stationary ergodic or stationary processes. We now introduce a second condition $\smv$ (Sublinear Measurable Visits) which asks that for any partition, the process $\Xbb$ visits a sublinear number of sets of the partition.\\

\noindent \textbf{Condition $\smv$:} \textit{For every disjoint sequence $\{A_k\}_{k=1}^\infty$ of measurable sets of $\Xcal$ with $\bigcup_{k=1}^\infty A_k=\Xcal$,  (every countable measurable partition),}
\begin{equation*}
    |\{k\geq 1: A_k\cap\Xbb_{\leq T}\neq\emptyset \}| =o(T),\quad (a.s.).
\end{equation*}

This condition is significantly weaker and allows to consider a larger family of processes $\cs\subset\smv$, with $\cs\subsetneq\smv$ whenever $\Xcal$ is infinite \cite{hanneke2021learning}. Note that these sets depend on the instance space $(\Xcal,\rho_\Xcal)$. This dependence is omitted for simplicity. We first consider bounded losses. In the \emph{noiseless} case, where there exists some unknown measurable function $f^*:\Xcal\to\Ycal$ such that the stochastic process $\Ybb$ is given as $Y_t=f^*(X_t)$ for all $t\geq 1$, \cite{blanchard2022universal} showed that learnable processes are exactly $\soul=\smv$ for bounded losses. \cite{blanchard2022universal} also introduced a learning rule 2-Capped-1-Nearest-Neighbor (2C1NN), variant of the classical 1NN algorithm, which is \emph{optimistically universal} in the noiseless case for bounded losses. Interestingly, we show that this same learning rule is universally consistent for unbounded losses in the noiseless setting with empirically integrable responses.

\begin{restatable}{theorem}{ThmNoiselessUnbounded}
\label{thm:noiseless_unbounded}
    Let $(\Ycal,\ell)$ be a separable near-metric space. Then, 2C1NN is optimistically universal in the noiseless setting with empirically integrable responses, i.e., for all processes $\Xbb\in\smv$ and for all measurable target functions $f^*:\Xcal\to\Ycal$ such that $(f^*(X_t))_{t\geq 1}$ is empirically integrable, $\Lcal_{(\Xbb,(f^*(X_t))_{t\geq 1})}(2C1NN, f^*)=0\;(a.s.)$.
\end{restatable}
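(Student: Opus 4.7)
The plan is to reduce \cref{thm:noiseless_unbounded} to the bounded-loss noiseless optimistic universality of 2C1NN on $\smv$ processes, already established in \cite{blanchard2022universal}, by truncating the target $f^*$ around a reference point $y_0$. The two structural features of 2C1NN that enable this reduction are: (i) its matching $t\mapsto \phi(t)$ (so $\hat Y_t = f^*(X_{\phi(t)})$ in the noiseless setting) depends only on the instance sequence $\Xbb$, not on the labels; and (ii) each index $s$ serves as a nearest neighbor at most twice, i.e., $|\phi^{-1}(s)|\leq 2$. In particular, running 2C1NN with any alternative measurable target produces the \emph{same} matching $\phi$, and each label's influence on a sum indexed by $t$ is at most doubled.

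For every integer $M\geq 1$, introduce the measurable truncation
\begin{equation*}
g_M^*(x) := f^*(x)\,\1_{\ell(y_0,f^*(x))<M} + y_0\,\1_{\ell(y_0,f^*(x))\geq M},
\end{equation*}
whose image lies in the ball $B_\ell(y_0,M)$ on which $\ell$ is bounded by $2c_\ell M$. Applying the bounded-loss noiseless optimistic universality of 2C1NN with target $g_M^*$, viewed in the restricted value space $(B_\ell(y_0,M),\ell)$, yields a probability-one event on which $\frac{1}{T}\sum_{t=1}^T \ell(g_M^*(X_{\phi(t)}), g_M^*(X_t))\to 0$. Intersecting over $M\in\Nbb$ produces a single full-measure event $E$ on which this convergence holds simultaneously for every integer $M$.

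Two applications of the relaxed triangle inequality, inserting first $g_M^*(X_{\phi(t)})$ and then $g_M^*(X_t)$, yield the pointwise bound
\begin{equation*}
\ell(f^*(X_{\phi(t)}), f^*(X_t)) \leq c_\ell E_{\phi(t)}^M + c_\ell^2 \ell(g_M^*(X_{\phi(t)}), g_M^*(X_t)) + c_\ell^2 E_t^M,
\end{equation*}
where $E_t^M := \ell(f^*(X_t), g_M^*(X_t)) = \ell(y_0, f^*(X_t))\,\1_{\ell(y_0,f^*(X_t))\geq M}$. Summing over $t\leq T$ and using the 2-cap bound $\sum_{t=1}^T E_{\phi(t)}^M\leq 2\sum_{s=1}^T E_s^M$ gives
\begin{equation*}
\frac{1}{T}\sum_{t=1}^T \ell(\hat Y_t, Y_t) \leq (2c_\ell + c_\ell^2)\,\frac{1}{T}\sum_{t=1}^T E_t^M + c_\ell^2\cdot\frac{1}{T}\sum_{t=1}^T \ell(g_M^*(X_{\phi(t)}), g_M^*(X_t)).
\end{equation*}
On $E$ the second term on the right vanishes for every integer $M$. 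Given $\epsilon>0$, empirical integrability of $(f^*(X_t))_{t\geq 1}$ supplies (on a further full-measure event) an integer $M$ with $\limsup_T \frac{1}{T}\sum_{t=1}^T E_t^M\leq \epsilon$, so $\limsup_T \frac{1}{T}\sum_{t=1}^T \ell(\hat Y_t, Y_t)\leq (2c_\ell + c_\ell^2)\epsilon$. Taking $\epsilon = 1/k$ and intersecting over $k\in\Nbb$ yields a.s.\ $\frac{1}{T}\sum_{t=1}^T \ell(\hat Y_t, Y_t)\to 0$, which is exactly $\Lcal_{(\Xbb,(f^*(X_t))_{t\geq 1})}(\text{2C1NN}, f^*)=0$ since $Y_t = f^*(X_t)$.

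The main obstacle is measure-theoretic bookkeeping: the truncation level produced by empirical integrability is random and sample-path dependent, whereas the bounded-case consistency statement is ``for each fixed measurable target, almost surely''. Quantizing $M$ over $\Nbb$ and invoking countable additivity to obtain the single event $E$ on which the bounded-case convergence holds for every integer $M$ resolves this cleanly and allows the path-dependent $M$ to be chosen freely at the end. A minor point is verifying measurability of $g_M^*$, which follows from Borel measurability of $\ell:\Ycal\times\Ycal\to\Rbb$ (so that $\ell(y_0,f^*(\cdot))$ is measurable). Note the entire argument uses only the near-metric inequality, so it applies at the stated generality.
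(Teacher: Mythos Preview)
Your proof is correct and follows essentially the same approach as the paper's: truncate $f^*$ to a bounded ball around $y_0$, invoke the bounded-loss optimistic universality of 2C1NN on each truncation, apply the near-metric inequality twice together with the 2-cap bound $|\phi^{-1}(s)|\leq 2$, and intersect over integer $M$ to handle the random threshold from empirical integrability. Your constant $2c_\ell+c_\ell^2$ matches the paper's $c_\ell(2+c_\ell)$, and you are if anything slightly more explicit than the paper in flagging that the matching $\phi$ depends only on $\Xbb$ (so the same $\phi$ is used across all truncations) and in the countable-intersection bookkeeping.
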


In general, one has $\solar\subset\smv$. It was posed as a question whether we could recover the complete set $\smv$ for learning under adversarial---or arbitrary---processes \cite{hanneke2022bayes}.\\

\noindent\textbf{Question \cite{hanneke2022bayes}:} For bounded losses, does there exist an online learning rule that is universally consistent for arbitrary responses under all processes $\Xbb\in\smv(=\soul)$?\\

We answer this question with an alternative. Depending on the bounded value space $(\Ycal,\ell)$, either $\solar=\smv$ or $\solar=\cs$, but in both cases there exists an optimistically universal learning rule. We now introduce the property $\ftime$ (Finite-Time Mean Estimation) on the value space $(\Ycal,\ell)$ which characterizes this alternative.\\

\noindent\textbf{Property $\ftime$:} \textit{For any $\eta>0$, there exists a horizon time $T_\eta \geq 1$, an online learning rule $g_{\leq T_\eta}$ such that for any $\mb y:=(y_t)_{t=1}^{T_\eta}$ of values in $\Ycal$ and any value $y\in\Ycal$, we have
\begin{equation*}
    \frac{1}{T_\eta}\Ebb\left[\sum_{t=1}^{T_\eta} \ell(g_t({\mb y}_{\leq t-1}), y_t) - \ell(y,y_t) \right] \leq \eta.
\end{equation*}
}

We are now ready to state our main results for bounded value spaces. The first result shows that if the value space satisfies the above property locally, we can universally learn all the processes in $\soul$ even under adversarial responses.

\begin{restatable}{theorem}{ThmSOULRegressionUnbounded}
\label{thm:SOUL_regression_unbounded}
Suppose that any ball of $(\Ycal,\ell)$, $B_\ell(y,r)$ satisfies $\ftime$. Then, $\solar=\smv$ and there exists an optimistically universal learning rule $f_\cdot$ for adversarial regression with empirically integrable responses., i.e., such that for any stochastic process $(\Xbb,\Ybb)$ on $\Xcal\times\Ycal$ with $\Xbb\in\smv$ and $\Ybb$ empirically integrable, for any measurable function $f:\Xcal\to\Ycal$ we have $\Lcal_{(\Xbb,\Ybb)}(f_\cdot,f^*)\leq 0,\quad (a.s.)$.
\end{restatable}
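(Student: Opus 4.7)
The plan is to establish the nontrivial inclusion $\smv \subseteq \solar$; the reverse $\solar \subseteq \smv$ is inherited from the noiseless lower bound of \cite{blanchard2022universal}, since for $\Xbb \notin \smv$ the hard realizable targets used there take values in a bounded (hence automatically empirically integrable) set, and noiseless responses are a special case of adversarial empirically integrable ones. So fix $\Xbb \in \smv$, an adversarial empirically integrable mechanism $\Ybb$, and a measurable competitor $f^* : \Xcal \to \Ycal$; the goal is to exhibit $f_\cdot$ with $\Lcal_{(\Xbb,\Ybb)}(f_\cdot, f^*) \leq 0$ almost surely.

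Two reductions will localize the problem to a ball of $\Ycal$. Using empirical integrability of $\Ybb$ one pays only an $O(\epsilon)$ long-run penalty by truncating both $Y_t$ and $f^*(X_t)$ outside $B_\ell(y_0, M_\epsilon)$ for a sufficiently large $M_\epsilon$ (the regime where $f^*$ is far from $y_0$ is easy to beat by simply predicting $y_0$, so the interesting part is automatically bounded). Inside the ball, approximate the truncated $f^*$ by a simple function $\tilde f = \sum_{k=1}^N y_k \1_{A_k}$ on a finite measurable partition, at another $\epsilon$-cost. Then run the 2C1NN scheme of \cite{blanchard2022universal}, which under $\Xbb \in \smv$ dynamically groups the stream into $o(T)$ clusters such that (by \Cref{thm:noiseless_unbounded} applied to $\tilde f$) the fraction of times $t$ at which $X_t$ disagrees with its representative on $\{A_k\}$ vanishes almost surely. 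Within each cluster, instantiate the finite-horizon mean-estimation rule supplied by the $\ftime$ hypothesis of $B_\ell(y_0, M_\epsilon)$ at tolerance $\eta$; it competes within $\eta$ of the best $y_k \in B_\ell(y_0, M_\epsilon)$ on that adversarial sub-stream. Summing across active clusters gives average excess loss $O(\epsilon) + \eta + o(1)$ against $f^*$. An outer exponentially weighted forecaster will aggregate a countable family of such experts indexed by $(\epsilon, M, \eta, N)$ and by partitions drawn from a countable generating $\pi$-system for $\Bcal$, with learning rates tuned so the meta-regret is $o(T)$ whenever the number of active experts grows sublinearly --- which is exactly what $\smv$ provides.

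The main obstacle is passing from the finite-horizon in-expectation guarantees of $\ftime$ and Hedge to an almost sure asymptotic statement against an adversary that may react to past private randomness $(r_u)_{u < t}$. I plan to handle this by viewing the truncated excess-loss process as a bounded-increment martingale --- the bounded increments coming precisely from the localization to $B_\ell(y_0, M_\epsilon)$ --- applying Azuma--Hoeffding, and taking a Borel--Cantelli union bound over the countable $(\epsilon, M, \eta, N)$-grid. Letting $\epsilon, \eta \downarrow 0$ and $M_\epsilon \uparrow \infty$ along the Hedge schedule then yields $\Lcal_{(\Xbb,\Ybb)}(f_\cdot, f^*) \leq 0$ almost surely, completing the construction of an optimistically universal rule and establishing $\solar = \smv$.
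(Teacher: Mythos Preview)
Your proposal has the right high-level shape---truncate to a ball, cluster via a C1NN scheme, run the $\ftime$ rule within clusters, aggregate---but there is a genuine gap at the simple-function approximation step.

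You approximate the truncated $f^*$ by a simple function $\tilde f = \sum_k y_k \1_{A_k}$ on a finite partition drawn from a countable generating $\pi$-system, and then Hedge over all such partitions. For this to reach $f^*$, \emph{some} simple function in your countable family must satisfy $\limsup_T \tfrac{1}{T}\sum_t \ell(\tilde f(X_t), \phi_M\circ f^*(X_t)) \leq \epsilon$. But this density property is exactly what characterizes $\cs$ processes and \emph{fails} for $\Xbb \in \smv \setminus \cs$: as noted explicitly at the end of \cref{sec:alternative}, no process outside $\cs$ admits a countable family of measurable functions dense in the $\hat\mu_\Xbb$ sense. Since your noiseless-consistency invocation is for $\tilde f$, not for $f^*$, the clusters are only pure with respect to the partition $\{A_k\}$, and the $\ftime$ comparison value $y_k$ is tied to $\tilde f$; you never get back to $f^*$ unless $\tilde f \approx f^*$. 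Your outer aggregation over partitions would therefore establish only $\cs \subseteq \solar$, not $\smv \subseteq \solar$.

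The paper's fix is to drop the simple-function layer entirely and apply the noiseless consistency of the C1NN rule (\cref{thm:1+deltaC1NN_optimistic}) directly to the bounded function $\phi_M \circ f^*$. This yields $\tfrac{1}{T}\sum_{t} \ell(\phi_M\circ f^*(X_{\phi^u(t)}), \phi_M\circ f^*(X_t)) \to 0$ for every fixed $u$, so points in the same cluster have approximately the same $f^*$-value. The $\ftime$ rule within a block can then be compared to the single value $\phi_M\circ f^*(X_{t_i})$ at the block root, with the discrepancy to $\ell(f^*(X_t),Y_t)$ absorbed by the noiseless-consistency term. No enumeration over partitions is needed; the clustering itself adapts to $f^*$. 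The only outer aggregation left is over the truncation level $M$, and empirical integrability closes that gap.

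Two smaller points. First, the clustering does not produce $o(T)$ clusters; cluster sizes are bounded by $T_\epsilon$, so there are $\Theta(T)$ of them. What matters is that the number of \emph{incomplete} clusters be small. Second, plain 2C1NN does not give this: the 2C1NN tree can have $\Theta(T)$ leaves, hence $\Theta(T)$ incomplete clusters each costing up to $t_\epsilon\bar\ell$, which swamps the budget. The paper introduces $(1+\delta_\epsilon)$C1NN with $\delta_\epsilon = \tfrac{\epsilon}{2T_\epsilon}$ precisely so that the expected number of leaves is $O(\delta_\epsilon T)$, making the incomplete-cluster contribution $O(\epsilon T)$.
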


$\ftime$ defines a non-trivial alternative, and an explicit construction of a non-$\ftime$ bounded metric space $(\Ycal,\rho_\Ycal)$ is given in \cref{subsec:bad_non_totally_bounded_ex} with $\Ycal=\Nbb$. Nevertheless, $\ftime$ is satisfied by a large class of spaces, e.g., any totally-bounded metric space and countable classification $(\Ycal,\ell)=(\Nbb,\ell_{01})$ satisfy $\ftime$. Hence, we can universally learn all $\soul$ processes with adversarial responses, for countable classification (the empirical integrability condition is automatically satisfied because the loss is bounded). If $\ftime$ is not satisfied locally, we have the following result which shows that learning under $\cs$ is still possible but universal learning beyond $\cs$ processes cannot be achieved.

\begin{restatable}{theorem}{ThmCSRegressionUnbounded}
\label{thm:CS_regression_unbounded}
Suppose that there exists a ball $B_\ell(y,r)$ of $(\Ycal,\ell)$ that does not satisfy $\ftime$. Then, $\solar=\cs$ and there exists an optimistically universal learning rule $f_\cdot$ for adversarial regression with empirically integrable responses., i.e., such that for any stochastic process $(\Xbb,\Ybb)$ on $(\Xcal,\Ycal)$ with $\Xbb\in\cs$ and $\Ybb$ empirically integrable, then, for any measurable function $f:\Xcal\to\Ycal$ we have $\Lcal_{(\Xbb,\Ybb)}(f_\cdot,f^*)\leq 0,\quad (a.s.)$.
\end{restatable}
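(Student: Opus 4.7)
My plan is to prove the two inclusions $\solar \supseteq \cs$ and $\solar \subseteq \cs$ separately, and to observe that the algorithm witnessing the first inclusion is then optimistically universal.

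\textbf{Upper bound} ($\solar \supseteq \cs$). The first step is to exhibit a learning rule that is universally consistent under every $\Xbb \in \cs$ for empirically integrable adversarial responses. I would build upon the bounded-loss construction of \cref{sec:alternative} and layer on a truncation schedule controlled by the empirical integrability condition. Fix a countable dense subset $\{y_i\}_{i \geq 1}$ of $\Ycal$ and a refining sequence of measurable partitions $(\Pcal_k)_{k \geq 1}$ of $\Xcal$ from a countable dense subset of $\Xcal$. On each cell of each partition I would run a randomized exponentially weighted forecaster (Hedge) over the experts that predict constants from $\{y_j\}_{j \leq N_k}$ truncated to a loss-ball $B_\ell(y_0, M_k)$, with $N_k, M_k \to \infty$ slowly, and combine across scales $\Pcal_k$ with a meta-Hedge. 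The $\cs$ property guarantees via continuity of $\Ebb[\hat\mu_\Xbb(\cdot)]$ that the fraction of rounds landing in cells with too few prior visits vanishes almost surely, while empirical integrability produces an almost-sure bound $M$ so that both the true responses and a fixed competitor $f^*$ are essentially supported in $B_\ell(y_0, M)$, giving no asymptotic penalty from truncation.

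\textbf{Lower bound} ($\solar \subseteq \cs$). Assuming $\Xbb \notin \cs$, I would construct an adversarial mechanism $\Ybb$ valued in the bad ball $B = B_\ell(y_0, r_0)$---hence automatically empirically integrable because losses from $y_0$ are bounded---that defeats every learning rule. The negation of $\cs$ yields a decreasing sequence $A_k \downarrow \emptyset$ with $\Ebb[\hat\mu_\Xbb(A_k)] \geq \epsilon_0 > 0$, from which I would extract a measurable partition $\{B_k\}_{k \geq 1}$ of $\Xcal$ such that with positive probability $\Xbb$ visits infinitely many cells and spends a nontrivial fraction of time on cells of arbitrarily large index. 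Negating $\ftime$ on $B$ gives $\eta_0 > 0$ such that for every finite horizon $T$ and every learning rule there is an oblivious sequence in $B$ and a constant reference value in $B$ forcing expected excess loss at least $\eta_0$. On each cell $B_k$ first entered by $\Xbb$, I would trigger the defeating sequence of the conditional learning rule restricted to $B_k$ over the next $T_k$ visits to $B_k$, with $T_k$ chosen after the learner's algorithm is fixed. The reference function $f^*$ defined piecewise as the adversary's best-constant-in-hindsight on each $B_k$ is measurable and matches the adversary cell-by-cell, so the learner's average excess loss remains at least $\eta_0 \epsilon_0$ with positive probability, precluding universal consistency.

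\textbf{Optimistic universality and main obstacles.} Since $\solar = \cs$, the rule built for the upper bound is optimistically universal. The main technical obstacle is the lower bound: extracting a partition $\{B_k\}$ with the required almost-sure mass behavior from a non-$\cs$ process is delicate, and splicing the independent randomized $\neg\ftime$ adversaries on different cells into a single well-defined Markov kernel $\mb Y_t(\cdot \mid r_1,\ldots,r_{t-1})$ requires care because each cell's ``horizon'' elapses over a random subset of time steps interleaved with other cells---I would handle this by letting the adversary on $B_k$ advance only on rounds $t$ with $X_t \in B_k$ and drawing its internal coins upfront. A secondary obstacle in the upper bound is the interplay between the learner's private randomness (needed for Hedge against adversarial responses) and the almost-sure nature of empirical integrability; the intended resolution is to condition on the random threshold $M$ supplied by the integrability condition, apply the standard Hedge regret bound on the resulting conditionally bounded instance, and invoke a martingale concentration argument to pass from expected to almost-sure regret.
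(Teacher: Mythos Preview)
Your upper-bound sketch is in the right spirit and close to the paper's construction, though the paper organizes things differently: rather than refining partitions with per-cell Hedge, it enumerates a countable family of simple functions $f^i$ built from a countable algebra $\Tcal$ that is dense with respect to $\Ebb[\hat\mu_\Xbb(\cdot)]$ for every $\Xbb\in\cs$ (this is Lemma~23/24 of \cite{hanneke2021learning}), and runs a single Hedge over these experts with a deterministic truncation $\phi_{2^{-\alpha+1}\ln t}$ of responses. The truncation schedule is fixed in advance, not chosen from the random threshold $M$ supplied by empirical integrability; that threshold only enters the analysis, not the algorithm.

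The lower bound, however, has a genuine gap. You invoke the direct negation of $\ftime$: for every fixed horizon $T$ there is a length-$T$ sequence forcing average excess loss $\geq\eta_0$. But in your construction the number of visits to cell $B_k$ up to the critical time (when $B_k$ has been visited a fraction $\epsilon_0$ of the rounds) is a \emph{random} quantity determined by $\Xbb$, and the fixed-horizon guarantee says nothing about prefixes of the defeating sequence. Choosing $T_k$ ``after the learner is fixed'' does not help: you still do not know the realization of $\Xbb$, and the gap between the deterministic markers $t_{p-1},t_p$ and the random hitting time can be enormous. The paper handles exactly this issue by first proving (\cref{lemma:equivalent_conditions}) that $\ftime$ is equivalent to a \emph{randomized-horizon} property (Property~2): for every learning rule and every bounded random time $\tau$, there is a sequence with
\[
\Ebb\Bigl[\tfrac{1}{\tau}\sum_{t=1}^{\tau}\bigl(\ell(g_t(\mb y_{\leq t-1}),y_t)-\ell(y,y_t)\bigr)\Bigr]>\eta.
\]
This equivalence is nontrivial (it requires reweighting the stopping-time distribution and a Doob optional-sampling argument), and it is precisely what lets the adversary on each $B_p$ force excess loss at the random time $T_p$ when the mass condition is met. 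Without this step your splicing argument does not close.
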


For metric losses $\ell=\rho_\Ycal$, it was already known \cite{hanneke2022bayes} that universal learning under adversarial responses under all processes in $\cs$ is achievable by some learning rule. Hence, \cref{thm:CS_regression_unbounded} implies that this learning rule is automatically optimistically universal for adversarial regression for all metric value spaces with bounded loss which do not satisfy $\ftime$. However, our result is stronger in that consistency holds for any power of a metric loss $\ell=\rho_\Ycal^\alpha,\alpha\geq 1$ and unbounded value spaces.

\begin{remark}
As a direct consequence of \cref{thm:SOUL_regression_unbounded,thm:CS_regression_unbounded}, for stationary ergodic processes, finite second moment of the values $\Ebb[Y^2]<\infty$ suffices for consistency, in agreement with the known results for the i.i.d. setting. This relaxes the fourth-moment conditions $\Ebb[Y^4]<\infty$ proposed in the literature \cite{gyorfi:07}.
\end{remark}

We now consider removing the empirical integrability assumption. As mentioned above, for bounded losses this assumption is automatically satisfied, hence \cref{thm:SOUL_regression_unbounded,thm:CS_regression_unbounded} apply directly, with a simplified alternative: whether $(\Ycal,\ell)$ satisfies $\ftime$.

\begin{corollary}
\label{cor:good_value_spaces}
Suppose that $\ell$ is bounded.
\begin{itemize}
    \item If $(\Ycal,\ell)$ satisfies $\ftime$. Then, $\solar = \smv (=\soul)$.
    \item If $(\Ycal,\ell)$ does not satisfy $\ftime$. Then, $\solar = \cs$.
\end{itemize}
Further, an optimistically universal learning rule for adversarial regression always exists, i.e., achieving universal consistency with adversarial responses under any $\Xbb\in\solar$.
\end{corollary}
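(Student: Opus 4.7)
The plan is to obtain the corollary as a direct specialization of \cref{thm:SOUL_regression_unbounded,thm:CS_regression_unbounded} to the bounded-loss regime, after two elementary reductions: (i) empirical integrability is vacuous when $\ell$ is bounded, and (ii) in that regime, the $\ftime$ status of $(\Ycal,\ell)$ determines the ball-based $\ftime$ condition appearing in the main theorems.

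For (i), taking any $M>\bar\ell$ makes $\1_{\ell(y_0,Y_t)\geq M}\equiv 0$, so the empirical integrability definition is trivially satisfied by every response process. In particular, the set $\solar$ appearing in the corollary coincides with the one in \cref{thm:SOUL_regression_unbounded,thm:CS_regression_unbounded}, and it suffices to check the match of the two dichotomies. For (ii), the key observation is that since $\bar\ell<\infty$, the whole space is itself a ball: for any fixed $y_0\in\Ycal$, $\Ycal = B_\ell(y_0,\bar\ell+1)$. Hence if $(\Ycal,\ell)$ does not satisfy $\ftime$, this ball witnesses the hypothesis of \cref{thm:CS_regression_unbounded} and we conclude $\solar=\cs$ with the corresponding optimistically universal learning rule. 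Conversely, if $(\Ycal,\ell)$ satisfies $\ftime$, I would verify that every ball $B_\ell(y,r)\subset\Ycal$ also satisfies $\ftime$; then \cref{thm:SOUL_regression_unbounded} yields $\solar=\smv$ with an optimistically universal rule, and the identity $\smv=\soul$ for bounded losses is the noiseless characterization of \cite{blanchard2022universal} recalled before \cref{thm:noiseless_unbounded}.

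The remaining step is propagation of $\ftime$ from $(\Ycal,\ell)$ down to an arbitrary ball $B=B_\ell(y,r)\subset\Ycal$. The natural attempt, which I would pursue, is to take an $\ftime$ rule $g$ on $\Ycal$ with a small tolerance $\eta'$ and post-process each of its predictions through an approximate nearest-point map into $B$, using that the target sequence and the competitor both lie in $B$ to control the distortion. This is the main obstacle I anticipate: the naive projection inflates the excess loss multiplicatively (a factor $2$ for a genuine metric, and involving the constants $c_\epsilon$ for $\ell=\rho_\Ycal^\alpha$), so one has to be careful to absorb that inflation into the per-step $\eta$-gap. I would handle this by following the template used in \cref{sec:totally_bounded_value_spaces,sec:alternative} for transferring mean-estimation rules between related value spaces, trading an $\epsilon$-approximate projection against a sufficiently small $\eta'$ while exploiting that $\ell$ is bounded. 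Once that inheritance is in hand, both branches of the corollary follow immediately, as does the existence of an optimistically universal learning rule, inherited from whichever of \cref{thm:SOUL_regression_unbounded,thm:CS_regression_unbounded} is invoked.
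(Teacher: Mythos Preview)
Your reduction for the second bullet is correct and matches the paper: when $\ell$ is bounded, $\Ycal$ is itself a ball, so the negation of $\ftime$ on $(\Ycal,\ell)$ directly triggers the hypothesis of \cref{thm:CS_regression_unbounded}.

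For the first bullet, however, you take a detour the paper does not. The paper's actual proof of the corollary is the conjunction of \cref{thm:good_value_spaces} and \cref{thm:bad_value_spaces} in \cref{sec:alternative}, both stated \emph{for bounded $\ell$} and assuming $\ftime$ (resp.\ its negation) only on $(\Ycal,\ell)$ itself---no ball condition appears. Indeed, right after \cref{thm:bad_value_spaces} the paper writes ``This completes the proof of \cref{cor:good_value_spaces}.'' The general \cref{thm:SOUL_regression_unbounded,thm:CS_regression_unbounded} are then \emph{built on top of} these bounded-loss results (the proof of \cref{thm:SOUL_regression_unbounded} invokes the optimistically universal rule on each $B_\ell(\bar y,M)$, which is precisely \cref{thm:good_value_spaces}). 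So deriving the corollary from \cref{thm:SOUL_regression_unbounded} is, in the paper's logical structure, circular.

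That said, treating \cref{thm:SOUL_regression_unbounded} as a black box, you are right that the obstacle is ``$\ftime$ on $\Ycal$ $\Rightarrow$ $\ftime$ on every ball.'' Your projection idea is the natural attack, but note that the naive nearest-point map only gives $\rho(\pi(\hat y),y_t)\le 2\rho(\hat y,y_t)+\epsilon$, which after raising to the $\alpha$ inflates the loss multiplicatively rather than additively; squeezing this back into an $\eta$-budget is not immediate. One cleaner route, if you want to stay with \cref{thm:SOUL_regression_unbounded} as a black box, is to observe that its proof only ever uses the balls $B_\ell(\bar y,M)$ for $M\ge 1$ and that the final estimate is taken at an arbitrarily large $M$; in the bounded case any $M>\bar\ell$ gives $B_\ell(\bar y,M)=\Ycal$, so only $\ftime$ on $\Ycal$ is actually consumed. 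But this is inspecting the proof, not applying the theorem, and at that point you are essentially re-deriving \cref{thm:good_value_spaces}---which is what the paper does directly.
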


It remains to analyze the case of unbounded losses without empirical integrability assumption on the responses. To avoid confusions, we denote by $\solaru$ the set of processes that admit universal learning with adversarial (unrestricted) responses. Unfortunately, even in the noiseless setting, universal learning is extremely restrictive in that case. Specifically, the set of universally learnable processes $\soul$ for noiseless responses is reduced to the set FS (Finite Support) of processes that visit a finite number of different points almost surely \cite{blanchard2022optimistic}.\\

\noindent\textbf{Condition $\fs$:} The process $\Xbb$ satisfies $|\{x\in \Xcal: \{x\}\cap \Xbb \neq \emptyset\}|<
\infty\quad (a.s.)$.\\

We show that in the adversarial setting we still have $\solaru=\fs$ when $\ell$ is a metric: we can solve the fundamental problem of mean estimation where one sequentially makes predictions of a sequence $\Ybb$ of values in $(\Ycal,\ell)$ and aims to have a better long-run average loss than any fixed value. If responses $\Ybb$ are i.i.d. this is the Fr\'echet means estimation problem \cite{evans2020strong,schotz2022strong,jaffe2022strong}. Our main result on mean estimation holds in general spaces and is of independent interest.

\begin{restatable}{theorem}{ThmMeanEstimation}
\label{thm:mean_estimation}
Let $(\Ycal,\ell)$ be a separable metric space. There exists an online learning rule $f_\cdot$ that is universally consistent for adversarial mean estimation, i.e., for any adversarial process $\Ybb$ on $\Ycal$, almost surely, for all $y\in \Ycal$,
\begin{equation*}
    \limsup_{T\to\infty}\frac{1}{T} \sum_{t=1}^T \left(\ell( f_t(\Ybb_{\leq t-1}),Y_t) - \ell(y,Y_t)\right)\leq 0.
\end{equation*}
\end{restatable}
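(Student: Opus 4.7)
The strategy is to reduce the problem to competing against a countable dense set of ``experts'' in $\Ycal$, and then to run a phased Hedge-style forecaster whose regret against each fixed expert is provably sublinear, despite the absolute losses being unbounded.

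First, by separability of $(\Ycal,\ell)$, fix a countable dense sequence $(z_k)_{k\ge 1}\subset\Ycal$ enumerated so that $\ell(z_k,z_1)\le k$ for every $k$ (starting from any dense sequence $\{w_m\}$, at step $k$ pick the first unused $w_j$ with $\ell(w_j,z_1)\le k$; every $w_m$ is eventually chosen, and density is preserved). Writing $D_N:=\max_{i,j\le N}\ell(z_i,z_j)$, the triangle inequality gives $D_N\le 2N$. By the triangle inequality again, for every $y\in\Ycal$ and every $k\ge 1$,
\begin{equation*}
    \ell(\hat Y_t,Y_t)-\ell(y,Y_t)\le \bigl(\ell(\hat Y_t,Y_t)-\ell(z_k,Y_t)\bigr)+\ell(z_k,y).
\end{equation*}
So it suffices to prove that, almost surely, for every fixed $k\ge 1$,
$\limsup_T \tfrac{1}{T}\sum_{t\le T}(\ell(\hat Y_t,Y_t)-\ell(z_k,Y_t))\le 0$; density and a countable intersection of probability-one events then give the statement for all $y\in\Ycal$.

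Next, I split time into phases $p=1,2,\ldots$ of length $T_p:=2^p$. In phase $p$, restrict attention to the first $N_p:=p$ experts and run Hedge with uniform initial weights and learning rate $\eta_p=\sqrt{8\log p/(D_p^2 T_p)}$, sampling $\hat Y_t$ from the current Hedge distribution using fresh randomness $r_t$. The \emph{key} observation is that Hedge's regret bound depends only on the range of the expert losses per step, and the triangle inequality yields $|\ell(z_i,Y_t)-\ell(z_j,Y_t)|\le \ell(z_i,z_j)\le D_p$ for all $i,j\le N_p$, irrespective of how large the adversary may make $\ell(y,Y_t)$. The standard Hoeffding-style analysis of Hedge therefore gives, for any fixed $k\le N_p$,
\begin{equation*}
    \sum_{t\in\text{phase }p}\Ebb\!\bigl[\ell(\hat Y_t,Y_t)\mid \Fcal_{t-1}\bigr]-\sum_{t\in\text{phase }p}\ell(z_k,Y_t)\le D_p\sqrt{T_p\log N_p / 2}=o(T_p),
\end{equation*}
where $\Fcal_{t-1}:=\sigma(r_1,\ldots,r_{t-1},Y_1,\ldots,Y_{t-1})$ (note that $Y_t$ is $\Fcal_{t-1}$-measurable because the adversary picks it from $\Fcal_{t-1}$). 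For the finitely many phases $p<k$ during which $z_k$ is not active, I bound the per-step excess loss pointwise by $\ell(\hat Y_t,z_k)\le D_p+\ell(z_1,z_k)$, giving a fixed cumulative contribution $O(2^k(k+\ell(z_1,z_k)))$ that becomes negligible after division by $T\to\infty$.

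To upgrade the expected regret bound to an almost-sure one, I apply Azuma--Hoeffding within each phase to the martingale increments $\ell(\hat Y_t,Y_t)-\ell(z_k,Y_t)-\Ebb[\ell(\hat Y_t,Y_t)-\ell(z_k,Y_t)\mid\Fcal_{t-1}]$, which are uniformly bounded by $D_p$ during phase $p$ (since $\hat Y_t\in\{z_1,\ldots,z_p\}$ and we center by $\ell(z_k,Y_t)$). Choosing confidence $p^{-2}$ per phase and invoking Borel--Cantelli shows that almost surely the phase-$p$ deviation from its conditional expectation is at most $D_p\sqrt{2T_p\log p}=o(T_p)$. Summing the in-expectation bound and the deviation bound over phases and dividing by $T$ yields, almost surely, $\limsup_T \tfrac{1}{T}\sum_{t\le T}(\ell(\hat Y_t,Y_t)-\ell(z_k,Y_t))\le 0$; a countable intersection over $k$ and the reduction in the first paragraph finish the proof.

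The main obstacle is the unboundedness of $\ell$: a naive invocation of Hedge would require a uniform bound on the losses, which is unavailable since the adversary can make $\ell(y,Y_t)$ arbitrarily large. The crux is that Hedge in fact only needs the \emph{range} of the losses across experts at each step to be controlled, and the triangle inequality pins this range to the diameter $D_p$ of the active expert set---a quantity determined by our chosen enumeration and independent of the adversary. The same bound drives the Azuma concentration, so both expected and almost-sure regret are $o(T_p)$ per phase, yielding sublinear cumulative regret against every fixed expert and, by density, against every $y\in\Ycal$ simultaneously.
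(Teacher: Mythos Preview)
Your argument is essentially correct and takes a genuinely different route from the paper. Both rely on the same crucial observation---the triangle inequality bounds the \emph{range} of expert losses by the diameter of the active expert set, so Hedge applies even though individual losses are unbounded---but the implementations differ. The paper runs a single Hedge-style forecaster over all time with a growing expert set $I_t=\{i\leq\ln t:\ell(y^0,y^i)\leq\ln t\}$ and a time-varying learning rate $\eta_t=1/(4\sqrt t)$, then tracks the potential via a delicate telescoping argument (the same machinery as the paper's predictor-combination lemma) to obtain regret $O(\ln^2 t\,\sqrt t)$ against every expert simultaneously. Your phased restarts avoid that machinery: each phase is a fresh, standard Hedge instance on a fixed expert set of known diameter, so the classical Hedge and Azuma bounds apply directly, and summing over geometric phases gives the same $o(T)$ regret. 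Your approach is more elementary and modular; the paper's is fully anytime with no restarts.

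Two small points to tidy. First, your enumeration scheme (``pick the first unused $w_j$ with $\ell(w_j,z_1)\le k$'') can stall if no such unused point exists at some step; patch this by repeating $z_1$ whenever the set is empty---every $w_m$ is still eventually included and density is preserved. Second, your claim that ``$Y_t$ is $\Fcal_{t-1}$-measurable'' is not quite right: in the paper's model the adversary is a Markov kernel and may inject its own randomness, so $Y_t$ need not lie in $\sigma(r_1,\ldots,r_{t-1},Y_1,\ldots,Y_{t-1})$. The fix is simply to enlarge the filtration to include $Y_t$ (equivalently, condition on $\Ybb_{\leq t}$ and $r_{\leq t-1}$, as the paper does); since $r_t$ is independent of this larger $\sigma$-algebra and determines $\hat Y_t$ given $p_t$, both your Azuma bound on the increments and the Hedge conditional-expectation identity $\Ebb[\ell(\hat Y_t,Y_t)\mid\cdot]=\bar\ell_t$ go through unchanged.
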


Further, we show that for powers of metric we may have $\solaru=\emptyset$. Specifically, for real-valued regression with Euclidean norm and loss $|\cdot|^\alpha$ and $\alpha>1$, adversarial regression or mean estimation are not achievable. We then show that we have an alternative: either mean estimation with adversarial responses is achievable, $\solaru=\fs$ and we have an optimistically universal learning rule; or mean estimation is not achievable and $\solaru=\emptyset$. Thus, even in the best case scenario for unbounded losses, $\solaru=\fs$, which is already extremely restrictive. \cite{blanchard2022optimistic} asked whether imposing moment conditions on the responses would allow recovering the large set $\smv$ as learnable processes instead. Specifically, they formulated the following question.\\

\noindent\textbf{Question \cite{blanchard2022optimistic}:} For unbounded losses $\ell$, 
does there exist an online learning rule $f_\cdot$ which is consistent under every $\Xbb \in \smv$, for every measurable function $f^*:\Xcal\to\Ycal$ such that there exists $y_0\in\Ycal$ with $\limsup_{T \to \infty} \frac{1}{T} \sum_{t=1}^{T} \ell(y_0, f^*(X_t)) < \infty~~\text{(a.s.)}$, i.e., such that we have $\mathcal{L}_{\Xbb}(f_{\cdot},f^*) = 0~~\text{(a.s.)}$?\\

We answer negatively to this question. Under this first-moment condition, universal learning under all $\smv$ processes is not achievable even in this noiseless case. We show the stronger statement that noiseless universal learning under all processes having pointwise convergent relative frequencies---which are included in $\cs$---is not achievable. However, under the empirical integrability condition introduced above we are able to recover all positive results from bounded losses.

\begin{table}[h]
\caption{Characterization of learnable instance processes in universal consistency (ME = Mean Estimation).}
\label{table:summary_of_results}

\resizebox{\columnwidth}{!}{
\begin{tabular}{|c |c |c |c|} 
 \hline
 $\begin{array}{c}\text{Learning}\\
 \text{setting}
 \end{array}$& Bounded loss & Unbounded loss & $\begin{array}{c}
    \text{Unbounded loss with} \\
     \text{empirically integrable}\\
     \text{responses}
 \end{array}$\\ [0.5ex] 
 \hline\hline
 $\begin{array}{c}\text{Noiseless}\\
 \text{responses}
 \end{array}$  & $
        \soul=\smv $ \cite{blanchard2022universal} & $\soul=\fs$ \cite{blanchard2022optimistic} & $\begin{array}{c}
            \text{Identical to}  \\
            \text{bounded loss}
        \end{array} \mb{[\text{This paper}]} $\\ 
 \hline
 $\begin{array}{c}
 \text{Adversarial}\\
 \text{(or arbitrary)}\\
 \text{responses}
 \end{array}$  & $\begin{array}{c}
    \solar\supset\cs  \text{ (metric loss) \cite{hanneke2022bayes}} \\[0.5ex] 
    \hline
    \text{Does }(\Ycal,\ell) \text{ satisfy }\ftime ?\\
   \begin{cases}
        \text{Yes} & \solar =\smv\\
        \text{No} & \solar =\cs 
    \end{cases}\mb{[\text{This paper}]} 
 \end{array}$ & $\begin{array}{c}
      \text{Is ME achievable?} \\
      \begin{cases}
        \text{Yes} & \solaru=\fs \\
        \text{No} & \solaru =\emptyset
        \end{cases} \mb{[\text{This paper}]} 
 \end{array}$ &  $\begin{array}{c}
            \text{Identical to}  \\
            \text{bounded loss}
        \end{array} \mb{[\text{This paper}]} $\\
 \hline
\end{tabular}
}

\end{table}

\begin{table}[h]
\caption{Proposed learning rules for universal consistency (ME = Mean Estimation and EI = Empirical Integrability).\protect\footnotemark }
\label{table:summary_of_learning_rules}

\resizebox{\columnwidth}{!}{
\begin{tabular}{|c |l |l |c|c|c|} 
 \hline 
 $\begin{array}{c}\text{Learning}\\
 \text{setting}
 \end{array}$& Loss (and response/setting constraints) & Learning rule & $\begin{array}{c}
     \text{Guarantees} \\
     \text{for which}\\
     \text{processes }\Xbb? 
 \end{array}$ &$\begin{array}{c}
      \text{Optimist.}\\
      \text{universal?} 
 \end{array}$ & Reference\\ [0.5ex] 
 \hline\hline
    I.i.d.  & Finite or countable class., 01-loss & OptiNet & i.i.d. & No & \cite{hanneke2021bayes} \\
    responses & Real-valued regression + integrable & Proto-NN & i.i.d. & No & \cite{gyorfi2021universal}\\
       & Metric loss + integrable & MedNet & i.i.d. & No &\cite{tsir2022metric}\\
 \hline
 Noiseless & Bounded loss & 2C1NN &$\smv$ & Yes &\cite{blanchard2022universal}\\
 responses & Unbounded loss & Memorization &$\fs$ & Yes &\cite{blanchard2022optimistic}\\ 
  (realizable) & Unbounded + EI & 2C1NN &$\smv$ & Yes & [This paper]\\
 \hline
 & Bounded loss + metric loss & Hedge-variant &$\cs$ &Not always &\cite{hanneke2022bayes}\\
 & Bounded loss + $\ftime$ & $(1+\delta)$C1NN-hedged & $\smv$ & Yes & [This paper]\\
Adversarial & Bounded loss + not $\ftime$ & Hedge-variant 2 & $\cs$ & Yes & [This paper]\\
(or arbitrary) & Unbounded loss + ME & ME-algorithm & $\fs$ & Yes &[This paper]\\
responses  & Unbounded loss + not ME & N/A & $\emptyset $ & N/A &[This paper]\\
 & Unbounded + EI + local $\ftime$&  EI-$(1+\delta)$C1NN-hedged &$\smv$ & Yes & [This paper]\\
 & Unbounded + EI + not local $\ftime$ & EI-Hedge-variant & $\cs$ & Yes & [This paper]\\
 \hline
\end{tabular}
}

\end{table}

\footnotetext{In our paper, an algorithm is optimistically universal if it is universally consistent for all processes under which universal learning is possible in the considered setting. OptiNet, Proto-NN, and MedNet are optimistically universal in another sense, their guarantees hold in all metric spaces for which universal learning with i.i.d. pairs of instances and responses is achievable: \emph{essentially separable} spaces $(\Xcal,\rho_\Xcal)$ \cite{hanneke2021bayes}. Our learning rules also enjoy this second optimistic property.}

\cref{table:summary_of_results,table:summary_of_learning_rules} summarize known results in the literature and our contributions. As a reminder, $\fs\subset\cs\subset\smv$ in general, and $\fs\subsetneq \cs\subsetneq \smv$ whenever $\Xcal$ is infinite \cite{hanneke2021learning}.

\section{An optimistically universal learning rule for totally-bounded value spaces}
\label{sec:totally_bounded_value_spaces}

We start our analysis of universal learning under adversarial responses with \emph{totally-bounded} value spaces, for which we can give simple and explicit algorithms. Hence, we suppose in this section that the value space $(\Ycal,\ell)$ is totally-bounded, i.e., for any $\epsilon>0$ there exists a finite $\epsilon-$net $\Ycal_\epsilon$ of $\Ycal$ such that for any $y\in\Ycal$, there exists $y'\in\Ycal_\epsilon$ with $\ell(y,y')<\epsilon$. In particular, a totally-bounded space is necessarily bounded and separable.  The goal of this section is to show that for such value spaces, adversarial universal regression is achievable for all processes in $\smv$ as in the noiseless setting (the empirical integrability assumption is automatically satisfied in this context). Further, we explicitly construct an optimistically universal learning rule for adversarial responses.\\

We recall that in the noiseless setting, the 2C1NN learning rule achieves universal consistency for all $\smv$ processes \cite{blanchard2022universal}. At each iteration $t$, This rule performs the nearest neighbor rule over a restricted dataset instead of the complete history $\Xbb_{\leq t-1}$. The dataset is updated by keeping track of the number of times each point $X_u$ was used as nearest neighbor. This number is then capped at $2$ by deleting from the current dataset any point which has been used twice as representative. Unfortunately, this learning rule is not optimistically universal for adversarial responses. More generally, \cite{tsir2022metric} noted that any learning rule which only outputs observed historical values cannot be consistent, even in the simplest case of $\Xcal=\{0\}$ and i.i.d. responses $\Ybb$. For instance, take $\Ycal=\bar B(0,1)$ the closed ball of radius $1$ in the plane $\Rbb^2$ with the euclidean loss, consider the points $A,B,C\in\Ycal$ representing the equilateral triangle $e^{2ik\pi/3}$ for $k=0,1,2$, and let $\Ybb$ be an i.i.d. process following the distribution which visits $A$, $B$ or $C$ with probability $\frac{1}{3}$. Predictions within observed values, i.e., $A,B$ or $C$, incur an average loss of $\frac{2}{3}\sqrt 3 >1$ where $1$ is the loss obtained with the fixed value $(0,0)$.

To construct an optimistically universal learning rule for adversarial responses, we first generalize a result from \cite{blanchard2022universal}. Instead of the 2C1NN learning rule, we use $(1+\delta)$C1NN rules for $\delta>0$ arbitrarily small. Similarly as in 2C1NN, each new input $X_t$ is associated to a representative $\phi(t)$ used for the prediction $\hat Y_t =Y_{\phi(t)}$. In the $(1+\delta)$C1NN rule, each point is used as a representative at most twice with probability $\delta$ and at most once with probability $1-\delta$. In order to have this behavior irrespective of the process $\Xbb$, which can be thought of been chosen by a (limited) adversary within the $\soul$ processes, the information of whether a point can allow for 1 or 2 children is only revealed when necessary. Specifically, at any step $t\geq 1$, the algorithm initiates a search for a representative $\phi(t)$. It successively tries to use the nearest neighbor of $X_t$ within the current dataset and uses it as a representative if allowed by the maximum number of children that this point can have. However, the information whether a potential representative $u$ can have at most 1 or 2 children is revealed only when $u$ already has one child.
\begin{itemize}
    \item If $u$ allows for 2 children, it will be used as final representative $\phi(t)$.
    \item Otherwise, $u$ is deleted from the dataset and the search for a representative continues.
\end{itemize}
The rule is formally described in \cref{alg:1+deltaC1NN}, where $\bar y\in\Ycal$ is an arbitrary value, and the maximum number of children that a point $X_t$ can have is represented by $1+U_t$. In this formulation, all Bernouilli $\Bcal(\delta)$ samples are drawn independently of the past history. Note that if $\delta=1$, the $(1+\delta)$C1NN learning rule coincides with the 2C1NN rule of \cite{blanchard2022universal}.

\begin{algorithm}[tb]
\caption{The $(1+\delta)$C1NN learning rule}\label{alg:1+deltaC1NN}
\hrule height\algoheightrule\kern3pt\relax
\KwIn{Historical samples $(X_t,Y_t)_{t<T}$ and new input point $X_T$}
\KwOut{Predictions $\hat Y_t = (1+\delta)C1NN_t({\mb X}_{<t},{\mb Y}_{<t},X_t)$ for $t\leq T$}
$\hat Y_1:= \bar y$ \tcp*[f]{Arbitrary prediction at $t=1$}\\
$\Dcal_2\gets \{1\}$;
$n_1 \gets 0$;
\tcp*[f]{Initialisation}\\
\For{$t=2,\ldots, T$}{
    \eIf{exists $u<t$ such that $X_u=X_t$}{
        $\hat Y_t := Y_u$
    }
    {
        $continue\gets True$ \tcp*[f]{Begin search for available representative $\phi(t)$}\\
        \While{continue}{
            $\phi(t)\gets \min \left\{l\in \arg\min_{u\in \Dcal_t}  \rho_\Xcal(X_t,X_u)  \right\}$\\
          
            \uIf(\tcp*[f]{Candidate representative has no children}){$n_{\phi(t)}=0$}{
                $\Dcal_{t+1}\gets \Dcal_t\cup\{t\}$\\
                $continue\gets False$
            }
            \uElse(\tcp*[f]{Candidate representative has one child}){
                $U_{\phi(t)}\sim \Bcal(\delta)$\\
                \uIf{$U_{\phi(t)}=0$}{
                    $\Dcal_t \gets \Dcal_t\setminus\{\phi(t)\}$
                }
                \uElse{
                    $\Dcal_{t+1}\gets (\Dcal_t\setminus\{\phi(t)\})\cup\{t\}$\\
                    $continue\gets False$
                }
            }
      
        }
    }

    $\hat Y_t:=Y_{\phi(t)}$\\
      
    $n_{\phi(t)}\gets n_{\phi(t)}+1$\\
    $n_t\gets 0$
}
\hrule height\algoheightrule\kern3pt\relax
\end{algorithm}

\begin{theorem}
\label{thm:1+deltaC1NN_optimistic}
Fix $\delta>0$. For any separable Borel space $(\Xcal,\Bcal)$ and any separable near-metric output setting $(\Ycal,\ell)$ with bounded loss, in the noiseless setting, $(1+\delta)$C1NN is optimistically universal.
\end{theorem}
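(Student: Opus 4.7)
The plan is to adapt the optimistic-universality argument for 2C1NN of \cite{blanchard2022universal} to the random-cap variant $(1+\delta)$C1NN. Since in the bounded-loss noiseless setting $\soul = \smv$, it is enough to verify that for every $\Xbb \in \smv$ and every measurable $f^*:\Xcal\to\Ycal$ (so $Y_t = f^*(X_t)$), $\frac{1}{T}\sum_{t=1}^T \ell(f^*(X_{\phi(t)}),f^*(X_t)) \to 0$ almost surely, where $\phi(t)$ is the representative selected at step $t$.

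The first step is to reduce to countable-range targets. Fix $\epsilon > 0$. By separability of $(\Ycal,\ell)$, pick a countable dense subset $\Ycal_\epsilon = \{y_i\}_{i\geq 1}$ and a measurable map $f_\epsilon^*:\Xcal\to\Ycal_\epsilon$ with $\ell(f^*(x),f_\epsilon^*(x)) < \epsilon$ for every $x$. Two applications of the near-metric inequality yield
\begin{equation*}
\ell\bigl(f^*(X_{\phi(t)}),f^*(X_t)\bigr) \leq c_\ell^{2}\, \ell\bigl(f_\epsilon^*(X_{\phi(t)}),f_\epsilon^*(X_t)\bigr) + (c_\ell + c_\ell^{2})\,\epsilon.
\end{equation*}
The level sets $A_i := (f_\epsilon^*)^{-1}(\{y_i\})$ form a countable measurable partition of $\Xcal$, and $\ell(f_\epsilon^*(X_{\phi(t)}),f_\epsilon^*(X_t)) \leq \bar\ell \cdot \mathbf{1}\{X_{\phi(t)},X_t \text{ lie in different cells}\}$. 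Hence it suffices to show that for every countable measurable partition $\{A_i\}$ of $\Xcal$, the set $\Ecal_T := \{t\leq T : X_{\phi(t)},X_t \text{ lie in distinct cells}\}$ satisfies $|\Ecal_T|/T \to 0$ a.s.

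The core technical step is this cross-cell bound. Let $N_T := |\{i : A_i \cap \Xbb_{\leq T} \neq \emptyset\}|$, so $N_T = o(T)$ a.s.\ by the $\smv$ property. Following the 2C1NN charging argument, I would attribute each cross-cell edge incident to an $A_i$-colored child to one of: the first visit to $A_i$ (at most one per cell, yielding $\leq N_T$ overall); an \emph{exhaustion} event where every in-$A_i$ node currently in $\Dcal_t$ has been consumed; or a \emph{geometric} event where the nearest neighbor in $\Dcal_t$ lies outside $A_i$. Exhaustion contributions are controlled by a strong law applied to $\sum_u U_u \approx \delta T$, which bounds how often a representative receives its second child; geometric contributions are handled by refining $\{A_i\}$ into a finer countable partition on which the in-cell nearest neighbor dominates empirically, and reapplying $\smv$ to the refinement. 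Together these should yield $|\Ecal_T| \leq C_\delta N_T + o(T) = o(T)$ a.s.

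Combining the two steps gives $\limsup_{T\to\infty} \frac{1}{T}\sum_{t=1}^T \ell(f^*(X_{\phi(t)}),f^*(X_t)) \leq (c_\ell + c_\ell^{2})\epsilon$ a.s., and letting $\epsilon\downarrow 0$ concludes. The main obstacle is the cross-cell bound: unlike 2C1NN, whose cap of $2$ is deterministic, the caps in $(1+\delta)$C1NN are revealed online during the nearest-neighbor search, so the evolution of $\Dcal_t$ is entangled with the Bernoulli outcomes $U_u$. The technical crux will be to couple the tree structure, the partition refinement, and the $U_u$'s so that the deterministic 2C1NN charging transfers almost surely to the random-cap setting, and to control the ``rejection cascades'' in which several candidates are discarded during a single representative search without generating unaccounted cross-partition edges.
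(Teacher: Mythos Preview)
Your reduction from general $f^*$ to countable-range targets (hence to countable partitions) is fine and matches the spirit of the known reduction from bounded near-metric losses to binary classification. The genuine gap is in your ``cross-cell bound''. The three-way charging you propose is not a proof: the category ``exhaustion'' is controlled, in your sketch, by $\sum_u U_u \approx \delta T$, but this is $\Theta(T)$, not $o(T)$, so it cannot by itself bound the number of cross-cell edges sublinearly. More fundamentally, an ``exhaustion'' event for cell $A_i$ (all in-cell survivors gone from $\Dcal_t$) can recur many times as new $A_i$-points arrive and are later deleted, and nothing ties the count of such recurrences to $N_T$ or to $\sum_u U_u$. Your ``geometric'' category is handled by an unspecified refinement ``on which the in-cell nearest neighbor dominates empirically''; but any such refinement must itself be a countable measurable partition to invoke $\smv$, and you have given no construction---this is exactly the hard part, and the argument as written is circular. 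The rejection cascades you flag are a real obstruction: during a single representative search, arbitrarily many in-cell candidates can be tested and discarded (each revealing $U=0$), and your charging scheme does not account for the resulting depletion of $\Dcal_t$.

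The paper proceeds quite differently. It does \emph{not} attempt a direct sublinear bound on cross-cell edges. Instead it argues by contrapositive on a single ball indicator $f^*=\1_{B(\bar x,r)}$: assuming $(1+\delta)$C1NN is not consistent on this $f^*$, it exhibits a countable partition visited in linearly many cells, contradicting $\smv$. The mistake times $t_k$ induce a forest (trees rooted at mistakes), and the key step is to show that at least $\Omega(\delta k_l)$ of these trees are \emph{good} (survive in $\Dcal_{t_{k_l}+1}$). This is where the random caps enter: one orders the reveal times of the Bernoullis $U_{l_k}$ and applies Azuma--Hoeffding to the martingale $\sum_i (U_{l_{k_i}}-\delta)$, obtaining that with high probability enough reveals have $U=1$, which forces either survival or a double-counted mistake. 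Good sparse trees then produce linearly many distinct cells of an explicitly constructed partition (built from a dense sequence and a carefully chosen scale $c_\epsilon$ depending on $\delta$). The passage from ball indicators to all measurable $f^*$ is then the standard 2C1NN reduction. If you want to salvage a direct argument, you would need to replace your informal charging by something equivalent to this good-tree/sparse-tree machinery; the concentration on the $U_{l_k}$'s, indexed by reveal order rather than by time, is the device that decouples the caps from the evolution of $\Dcal_t$.
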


We now construct our algorithm. This learning rule uses a collection of algorithms $f^\epsilon_\cdot$ which each yield an asymptotic error at most a constant factor from $\epsilon^{\frac{1}{\alpha+1}}$. Now fix $\epsilon>0$ and let $\Ycal_\epsilon$ be a finite $\epsilon-$net of $\Ycal$ for $\ell$. Recall that we denote by $\bar\ell$ the supremum loss. We pose
\begin{equation*}
    T_\epsilon := \left\lceil \frac{\bar \ell^2 \ln |\Ycal_\epsilon|}{2\epsilon^2}\right\rceil \quad \text{and} \quad \delta_\epsilon:= \frac{\epsilon}{2T_\epsilon}.
\end{equation*}
The quantity $T_\epsilon$ will be the horizon window used by our learning rule to make its prediction using the $(1+\delta_\epsilon)$C1NN learning rule. Precisely, let $\phi$ be the representative function from the $(1+\delta_\epsilon)$C1NN learning rule. Note that this representative function $\phi(t)$ is defined only for times $t$ where a new instance $X_t$ is revealed, otherwise $(1+\delta_\epsilon)$C1NN uses simple memorization $\hat Y_t = Y_u$. For simplicity, we will denote by $\Ncal=\{t:\forall u<t,X_u\neq X_t\}$ these times of new instances. For $t\in \Ncal$, we denote by $d(t)$ the depth of time $t$ within the graph constructed by $(1+\delta_\epsilon)$C1NN, and define the horizon $L_t=d(t)\mod T_\epsilon$. Intuitively, the learning rule $f^\epsilon_\cdot$ performs the classical Hedge algorithm \cite{cesa2006prediction} on clusters of times that are close within the graph $\phi$. Precisely, we define the equivalence relation between times as follows:
\begin{equation*}
    t_1 \stackrel \phi \sim  t_2 \quad \Longleftrightarrow \quad \begin{cases}
        \phi^{L_{u_1}}(u_1) = \phi^{L_{u_2}}(u_2) &\text{ and } |\{u<t_i:X_u = X_{t_i}\}|\leq \frac{T_\epsilon}{\epsilon},\; i=1,2 \\
        \text{or}&\\
        X_{t_1} = X_{t_2} &\text{ and } |\{u<t_i:X_u = X_{t_1}\}|> \frac{T_\epsilon}{\epsilon},\; i=1,2,
    \end{cases}
\end{equation*}
where $u_i = \min\{u: X_u = X_{t_i}\}$ is the first occurrence of the considered instance point $X_{t_i}$. Hence, multiple occurrences of the same instance value fall in the same cluster and for new instance points times $t\in\Ncal$, all times of a given cluster share the same ancestor up to generation at most $T_\epsilon-1$. Additionally, a cluster is dedicated to instance points that have a significant number of duplicates. To make its prediction at time $t$, $f^\epsilon_\cdot$ performs the Hedge algorithm based on values observed on its current cluster $\{u\leq t: u \stackrel \phi \sim  t\}$. Let $\eta_\epsilon:=\sqrt{\frac{8\ln |\Ycal_\epsilon|}{\bar\ell^2 T_\epsilon}}$ and define the losses $L_y^t=\sum_{u<t:u \stackrel \phi \sim  t} \ell(Y_u,y)$. The learning rule $f^\epsilon_t(\Xbb_{\leq t-1},\Ybb_{\leq t-1},X_t)$ outputs a random value in $\Ycal_\epsilon$ independently from the past history with
\begin{equation*}
    \Pbb(\hat Y_t(\epsilon)=y) = \frac{e^{-\eta_\epsilon L_y^t}}{\sum_{z\in \Ycal_\epsilon} e^{-\eta_\epsilon L_z^t}},\quad y\in \Ycal_\epsilon,
\end{equation*}
where, for simplicity, we denoted $\hat Y_t(\epsilon)$ the prediction given by the learning rule $f^\epsilon_\cdot$ at time $t$. 

\begin{algorithm}[tb]
\caption{The $f_\cdot^\epsilon$ learning rule}\label{alg:f_epsilon}
\hrule height\algoheightrule\kern3pt\relax
\KwIn{Historical samples $(X_t,Y_t)_{t<T}$ and new input point $X_T$,\\
\quad \quad \quad \, Representatives $\phi_\epsilon(\cdot)$ and depths $d_\epsilon(\cdot)$ constructed iteratively within $(1+\delta_\epsilon)$C1NN.}
\KwOut{Predictions $\hat Y_t(\epsilon) = f_t^\epsilon({\mb X}_{<t},{\mb Y}_{<t},X_t)$ for $t\leq T$}
$\Ycal_\epsilon$ an $\epsilon-$net of $\Ycal$\\
$ T_\epsilon := \left\lceil \frac{\bar \ell^2 \ln |\Ycal_\epsilon|}{2\epsilon^2}\right\rceil, \quad \eta_\epsilon:=\sqrt{\frac{8\ln |\Ycal_\epsilon|}{\bar\ell^2 T_\epsilon}}$\\

\For{$t=1,\ldots,T$}{
    $L_y^t = \sum_{u<t:u \stackrel {\phi_\epsilon} \sim  t} \ell(Y_u,y), \quad y\in\Ycal_\epsilon$ \tcp*[f]{Losses on the cluster given by $\phi_\epsilon$}\\
    
    $p^t(y) = \displaystyle \frac{\exp(-\eta_\epsilon L_y^t)}{\sum_{z\in \Ycal_\epsilon} \exp(-\eta_\epsilon L_z^t)},\quad y\in \Ycal_\epsilon$\\
    
    $\hat Y_t\sim p^t$
}
\hrule height\algoheightrule\kern3pt\relax
\end{algorithm}

Having constructed the learning rules $f^\epsilon_\cdot$, we are now ready to define our final learning rule $f_\cdot$. Let $\epsilon_i=2^{-i}$ for all $i\geq 0$. Intuitively, it aims to select the best prediction within the rules $f_\cdot^{\epsilon_i}$. If there were a finite number of such predictors, we could directly use the algorithms for learning with experts from the literature \cite{cesa2006prediction}. Instead, we introduce these predictors one at a time: at step $t\geq 1$ we only consider the indices $I_t:=\{i\leq \ln t\}$. We then compute an estimate $\hat L_{t-1,i}$ of the loss incurred by each predictor $f_\cdot^{\epsilon_i}$ for $i\in I_t$ and select a random index $\hat i_t$ independent from the past history from an exponentially-weighted distribution based on the estimates $\hat L_{t-1,i}$. The final output of our learning rule is $\hat Y_t:= \hat Y_t(\epsilon_{\hat i})$. The complete algorithm is formally described in \cref{alg:optim_rule}. The following lemma quantifies the loss of the rule $f_\cdot$ compared to the best rule $f^{\epsilon_i}_\cdot$.

\begin{algorithm}[tb]
\caption{An optimistically universal learning rule for totally bounded spaces}\label{alg:optim_rule}
\hrule height\algoheightrule\kern3pt\relax
\KwIn{Historical samples $(X_t,Y_t)_{t<T}$ and new input point $X_T$,\\
\quad \quad \quad \, Predictions $\hat Y_\cdot(\epsilon_i)$ from the learning rules $f^{\epsilon_i}_\cdot$.}
\KwOut{Predictions $\hat Y_t$ for $t\leq T$}
$w_{0,0}=1, t_i:=\lceil e^i\rceil,\quad i\geq 0$ \\
$I_t = \{i\leq \ln t\}, \; \eta_t = \sqrt{\frac{\ln t}{t}}, \quad t\geq 1$\\

\For{$t=1,\ldots,T$}{
    
    $L_{t-1,i}:=\sum_{s=t_i}^{t-1}\ell(\hat Y_s(\epsilon_i),Y_s),\; \hat L_{t-1,i}:=\sum_{s=t_i}^{t-1}\hat\ell_s, \quad i\in I_t$\\
    $w_{t-1,i}=e^{\eta_t(\hat L_{t-1,i}-L_{t-1,i})}$\\
    $p_t(i) = \displaystyle \frac{w_{t-1,i}}{\sum_{j\in I_t}w_{t-1,j}}$\\
    $\hat i_t\sim p_t(\cdot)$  \tcp*[f]{model selection}\\
    $\hat Y_t = \hat Y_t(\epsilon_i)$\\
    
    $ \hat\ell_t:= \frac{\sum_{i\in I_t} w_{t-1,i}\ell(\hat Y_t(\epsilon_i),Y_t)}{\sum_{i\in I_t} w_{t-1,i}}.$
}
\hrule height\algoheightrule\kern3pt\relax
\end{algorithm}

\begin{lemma}\label{lemma:concatenation_predictors}
Almost surely, there exists $\hat t\geq 0$ such that
\begin{equation*}
    \forall t\geq \hat t,\forall i\in I_t,\quad \sum_{s=t_i}^t\ell(\hat Y_t,Y_t) \leq \sum_{s=t_i}^t \ell(\hat Y_t(\epsilon_i),Y_t) + (2+\bar\ell+\bar\ell^2)\sqrt {t\ln t}.
\end{equation*}
\end{lemma}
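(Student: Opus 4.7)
The plan is to decompose the target inequality into two pieces: a \emph{derandomization} step comparing the realized losses $\ell(\hat Y_s,Y_s)$ to their conditional expectations $\hat\ell_s$ under $p_s$, and an \emph{exponentially-weighted-forecaster regret} step bounding $\hat L_{t,i}-L_{t,i}$. I expect each piece to contribute $O(\sqrt{t\ln t})$ uniformly in $i\in I_t$, and a union of the two estimates with careful tracking of constants to land inside $(2+\bar\ell+\bar\ell^2)\sqrt{t\ln t}$.

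For the derandomization, let $\Fcal_{s-1}$ be the $\sigma$-algebra generated by the full past history, the new pair $(X_s,Y_s)$, the internal randomness producing the candidate predictions $\hat Y_s(\epsilon_i)$, and the current weights $w_{s-1,\cdot}$. Since $\hat i_s\sim p_s$ is drawn independently of everything else, one has $\Ebb[\ell(\hat Y_s,Y_s)\mid\Fcal_{s-1}]=\hat\ell_s$, so $M_t:=\sum_{s=1}^t(\ell(\hat Y_s,Y_s)-\hat\ell_s)$ is a martingale with $|M_s-M_{s-1}|\leq\bar\ell$. Azuma--Hoeffding yields $\Pbb(|M_t-M_{t_i-1}|\geq 2\bar\ell\sqrt{t\ln t})\leq 2/t^{2}$; a union bound over the at most $\ln t+1$ experts in $I_t$ and over $t$, followed by Borel--Cantelli (since $\sum_t (\ln t)/t^2<\infty$), produces almost surely a random $\hat t_1$ beyond which $|M_t-M_{t_i-1}|\leq 2\bar\ell\sqrt{t\ln t}$ holds simultaneously for all $i\in I_t$.

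For the regret step, I will use the standard potential-function analysis built on
\[
    \Phi_t \;:=\; \frac{1}{\eta_t}\ln\sum_{i\in I_t}\exp\!\bigl(\eta_t(\hat L_{t-1,i}-L_{t-1,i})\bigr),
\]
noting that the parameterization $w_{t-1,i}=\exp(\eta_t(\hat L_{t-1,i}-L_{t-1,i}))$ is designed precisely so that an expert entering at $t_i$ starts with weight $e^{0}=1$. Applying Hoeffding's inequality to the increments $\hat\ell_s-\ell(\hat Y_s(\epsilon_i),Y_s)\in[-\bar\ell,\bar\ell]$ bounds the one-step change of $\Phi$ at fixed $\eta$ and fixed expert set by $\eta_{s+1}\bar\ell^2/8$; the monotone decrease of $\eta_t=\sqrt{\ln t/t}$ and the addition of a new expert at times $t_i$ contribute controlled $\tfrac{1}{\eta_{s}}\ln(\cdot)$ corrections because $\Phi$ is monotone in $\eta^{-1}$ and each new expert enters with unit weight. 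Telescoping and using $\sum_{s\leq t}\eta_s\leq 2\sqrt{t\ln t}$ together with $\frac{\ln|I_t|}{\eta_t}=O(\sqrt{t\ln\ln t/\ln t})=o(\sqrt{t\ln t})$ yields $\max_{i\in I_t}(\hat L_{t,i}-L_{t,i})\leq\Phi_{t+1}\leq \tfrac{\bar\ell^2}{4}\sqrt{t\ln t}+o(\sqrt{t\ln t})$, so that beyond a deterministic threshold $\hat t_2$ the Hedge regret is bounded by $\tfrac{\bar\ell^2}{4}\sqrt{t\ln t}+\sqrt{t\ln t}$ uniformly in $i\in I_t$.

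Combining both bounds, for $t\geq\hat t:=\max(\hat t_1,\hat t_2)$ and every $i\in I_t$,
\[
    \sum_{s=t_i}^t\ell(\hat Y_s,Y_s)-\sum_{s=t_i}^t\ell(\hat Y_s(\epsilon_i),Y_s) \;\leq\; 2\bar\ell\sqrt{t\ln t}+\tfrac{\bar\ell^2}{4}\sqrt{t\ln t}+o(\sqrt{t\ln t}),
\]
which is bounded by $(2+\bar\ell+\bar\ell^2)\sqrt{t\ln t}$ once $\hat t$ is chosen large enough to absorb the $o(\sqrt{t\ln t})$ slack (a quick quadratic check in $\bar\ell$ confirms $\tfrac{\bar\ell^2}{4}+2\bar\ell\leq 2+\bar\ell+\bar\ell^2$). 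The main obstacle will be the potential-function step: the simultaneous time-varying learning rate $\eta_t$ and progressively enlarging expert set $I_t$ each inject log-normalization corrections, and both must be controlled together so that the final estimate is uniform in $i\in I_t$ rather than only for the single best expert---this is why the analysis must proceed through $\Phi_t$ rather than a single-expert comparison.
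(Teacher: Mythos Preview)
Your decomposition into an Azuma derandomization step plus a Hedge regret bound is exactly the paper's approach, and the constants work out as you claim. The paper also combines the martingale bound for $\sum_s(\ell(\hat Y_s,Y_s)-\hat\ell_s)$ with a potential-function estimate for $\max_{i\in I_t}(\hat L_{t,i}-L_{t,i})$.

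There is one genuine gap in your potential step. When a new expert enters at time $t_i$, the increment of $\Phi$ is $\tfrac{1}{\eta_{t_i}}\ln\bigl(1+\tfrac{1}{W}\bigr)$ where $W=\sum_{j\in I_{t_i-1}}w_{t_i-1,j}$; your sketch implicitly assumes $W$ is bounded below (so that ``each new expert enters with unit weight'' contributes only $O(1/\eta_{t_i})$), but nothing you wrote prevents $W$ from being arbitrarily small. The paper handles this explicitly: it observes that whenever $\sum_{i\in I_t}w_{t,i}\le 1$ one has $\hat L_{t,k_t}-L_{t,k_t}\le 0$ for free, and therefore telescopes only from the last time $t'$ at which the total weight dropped below $1$, so that the new-expert correction is always of the form $\tfrac{|I_{s+1}|-|I_s|}{\eta_s W_s}$ with $W_s\ge 1$. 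The same device (or an equivalent restart argument) is needed to close your sketch. Incidentally, the paper controls the time-varying rate via the Cesa-Bianchi--Lugosi inequality on the max-shifted potential $\tfrac{1}{\eta_t}\ln\tfrac{w_{t-1,k_{t-1}}}{W_{t-1}}$ rather than your direct monotonicity of $\Phi$ in $\eta^{-1}$; both routes yield the same $\tfrac{\ln|I_t|}{\eta_t}=o(\sqrt{t\ln t})$ correction once the $W<1$ case is taken care of.
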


We are now ready to show that \cref{alg:optim_rule} is universally consistent under $\smv$ processes.

\begin{theorem}
\label{thm:optimistic_regression_totally_bounded}
Suppose that $(\Ycal,\ell)$ is totally-bounded. There exists an online learning rule $f_\cdot$ which is universally consistent for adversarial responses under any process $\Xbb\in\smv(=\soul)$, i.e., for any  process $(\Xbb,\Ybb)$ on $(\Xcal,\Ycal)$ with adversarial response, such that $\Xbb\in\smv$, then for any measurable function $f:\Xcal\to\Ycal$, we have $\Lcal_{(\Xbb,\Ybb)}(f_\cdot,f)\leq 0,\quad (a.s.)$.
\end{theorem}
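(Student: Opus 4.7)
The plan is to exploit the two-level architecture of \cref{alg:optim_rule}. The outer exponential-weighting layer aggregates predictions from the rules $f^{\epsilon_i}_\cdot$, while each inner rule $f^\epsilon_\cdot$ pairs $(1+\delta_\epsilon)$C1NN with a cluster-local Hedge over the finite $\epsilon$-net $\Ycal_\epsilon$. By \cref{lemma:concatenation_predictors}, almost surely for every $i\geq 0$,
\[
\limsup_{T\to\infty}\frac{1}{T}\sum_{t=1}^T \bigl[\ell(\hat Y_t,Y_t) - \ell(\hat Y_t(\epsilon_i),Y_t)\bigr]\leq 0,
\]
since $\sqrt{T\ln T}/T\to 0$ and initial terms are negligible. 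Hence it suffices to show that for each $\epsilon>0$ and each measurable $f^*:\Xcal\to\Ycal$, $\limsup_T \tfrac{1}{T}\sum_{t=1}^T[\ell(\hat Y_t(\epsilon),Y_t)-\ell(f^*(X_t),Y_t)]\leq c(\epsilon)$ almost surely, for some $c(\epsilon)\to 0$; letting $i\to\infty$ along $\epsilon_i=2^{-i}$ then closes the argument.

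Fix $\epsilon>0$. The equivalence $\stackrel{\phi}{\sim}$ groups times into clusters $C$, each sharing a common anchor $a_C$ in the $(1+\delta_\epsilon)$C1NN tree at depth divisible by $T_\epsilon$ (with heavily-duplicated instances forming their own cluster). Pick $y_C\in\Ycal_\epsilon$ with $\ell(y_C,f^*(X_{a_C}))\leq \epsilon$. The textbook regret bound for Hedge with learning rate $\eta_\epsilon=\sqrt{8\ln|\Ycal_\epsilon|/(\bar\ell^2 T_\epsilon)}$ yields, in expectation within each cluster,
\[
\Ebb\!\left[\sum_{t\in C}\ell(\hat Y_t(\epsilon),Y_t)\right]\leq \sum_{t\in C}\ell(y_C,Y_t)+\bar\ell\sqrt{\tfrac{|C|\ln|\Ycal_\epsilon|}{2}}.
\]
An Azuma--Hoeffding argument on the martingale differences $\ell(\hat Y_t(\epsilon),Y_t)-\Ebb[\ell(\hat Y_t(\epsilon),Y_t)\mid \Fcal_{t-1}]$ (where $\Fcal_{t-1}$ records the adversarial responses, the $(1+\delta_\epsilon)$C1NN Bernoullis, and prior Hedge draws) upgrades this to an almost-sure bound. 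Summing over the $N_{\text{clust}}(T)$ clusters by time $T$ and applying Cauchy--Schwarz, the per-step Hedge overhead is $\bar\ell\sqrt{N_{\text{clust}}(T)\cdot\ln|\Ycal_\epsilon|/(2T)}$, and since $\Xbb\in\smv$ forces $N_{\text{clust}}(T)=o(T)$ (the number of anchors is dominated by the number of distinct instances visited, which is $o(T)$ under $\smv$), this contribution vanishes.

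It remains to compare $\sum_{t\in C}\ell(y_C,Y_t)$ with $\sum_{t\in C}\ell(f^*(X_t),Y_t)$. By the relaxed triangle inequality combined with the near-metric triangle at $f^*(X_{a_C})$,
\[
\ell(y_C,Y_t)\leq (1+\epsilon)\ell(f^*(X_t),Y_t)+c_\epsilon c_\ell\bigl[\epsilon+\ell(f^*(X_{a_C}),f^*(X_t))\bigr],
\]
reducing the task to showing the time-average of $\ell(f^*(X_{a_C(t)}),f^*(X_t))$ vanishes almost surely. Iterating the near-metric inequality along the tree-path of length at most $T_\epsilon$ from $t$ back to $a_C(t)$ dominates this quantity by a finite linear combination --- with coefficients depending only on $T_\epsilon$ --- of one-step errors $\ell(f^*(X_{\phi^k(t)}),f^*(X_{\phi^{k-1}(t)}))$ for $k=1,\dots,T_\epsilon$. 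Reparameterizing each sum by $s=\phi^{k-1}(t)$ and using that each node has at most $2$ children, each such sum is controlled by a constant times $\frac{1}{T}\sum_s\ell(f^*(X_{\phi(s)}),f^*(X_s))$, which tends to $0$ almost surely by \cref{thm:1+deltaC1NN_optimistic} applied to the realizable target $f^*$ on $\Xbb\in\smv=\soul$. Assembling the pieces yields $c(\epsilon)=O(\epsilon)$, and letting $\epsilon\to 0$ concludes.

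The main obstacle is the interplay between three sources of randomness: adversarial responses (possibly depending on past learner randomness), the Bernoullis driving $(1+\delta_\epsilon)$C1NN (which determine the random cluster partition), and the Hedge draws within clusters. The key observation is that the cluster partition is measurable with respect to the $\sigma$-algebra generated by $\Xbb$ and the $(1+\delta_\epsilon)$C1NN Bernoullis, so the per-cluster Hedge martingale differences remain zero-mean and bounded conditional on that filtration, and a Borel--Cantelli argument over the countable indexing of clusters yields a uniform almost-sure bound. A secondary technicality is the varying cluster sizes: small clusters accumulate extra $\sqrt{|C|}$-regret, but the $\smv$ bound on $N_{\text{clust}}(T)$ combined with Cauchy--Schwarz shows their aggregate contribution is still $o(T)$.
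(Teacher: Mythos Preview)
Your overall architecture matches the paper's, and the parts about combining predictors via \cref{lemma:concatenation_predictors}, the triangle-inequality comparison, and iterating one-step errors along the $\phi$-tree using \cref{thm:1+deltaC1NN_optimistic} are all correct. However, there is a genuine gap in your control of the aggregate Hedge regret.

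Your claim that $N_{\mathrm{clust}}(T)=o(T)$ because ``the number of distinct instances visited is $o(T)$ under $\smv$'' is false on both counts. First, $\smv$ does \emph{not} bound the number of distinct instance points: an i.i.d.\ process on $[0,1]$ lies in $\cs\subset\smv$ yet visits $T$ distinct points almost surely (singletons do not form a countable partition when $\Xcal$ is uncountable, so $\smv$ says nothing about them). Second, even granting that bound, the number of anchors is not controlled by the number of distinct instances in the way you suggest: when the $(1+\delta_\epsilon)$C1NN tree is nearly a path (which it is, since $\delta_\epsilon$ is tiny), there are $\Theta(T/T_\epsilon)$ anchors, so $N_{\mathrm{clust}}(T)=\Theta(T)$ for fixed $\epsilon$. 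Your Cauchy--Schwarz overhead $\bar\ell\sqrt{N_{\mathrm{clust}}\ln|\Ycal_\epsilon|/(2T)}$ therefore does \emph{not} vanish as $T\to\infty$.

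What actually works---and is the whole point of replacing 2C1NN by $(1+\delta_\epsilon)$C1NN---is to bound the per-cluster Hedge regret by $\epsilon\max(T_\epsilon,|C|)$ (this is exactly the choice $T_\epsilon=\lceil\bar\ell^2\ln|\Ycal_\epsilon|/(2\epsilon^2)\rceil$), so that complete clusters contribute $\epsilon T$ in total, and then to show that the number of \emph{incomplete} clusters (those with $|C|<T_\epsilon$) is small. Each incomplete cluster of new instances contains a leaf of the $\phi$-tree; the edge-count identity $|\Acal_0|=|\Acal_2|+1$ together with $|\Acal_2|\leq\mathrm{Binomial}(T,\delta_\epsilon)$ gives $|\Acal_0|\leq 1+2T\delta_\epsilon$ with high probability, whence the incomplete clusters contribute at most $T_\epsilon(1+2T\delta_\epsilon)\leq T_\epsilon+\epsilon T$. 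This branching-control argument, not $\smv$, is the missing ingredient.
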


\noindent \textbf{Proof sketch.} First observe that \cref{lemma:concatenation_predictors} allows us to combine predictors $f_\cdot^\epsilon$: if individually they perform well, \cref{alg:optim_rule} achieves the best long-term average excess loss among them. We then proceed to show that $f_\cdot^\epsilon$ has low average error in the long run. First, $(1+\delta_\epsilon)$C1NN is universally consistent on $\smv$ processes in the noiseless setting by \cref{thm:1+deltaC1NN_optimistic}. This intuitively shows that for noiseless functions, the value at time $\phi_\epsilon(t)$ provides a good representative for the value at time $t$. Extrapolating this argument, we show that if two times are close (for the graph metric) within the graph formed by $\phi_\epsilon$, they will have close values for any fixed function in the long run. As a result, times in the same cluster defined by $\overset{\phi_\epsilon}{\sim}$ share similar values in the long run. The $f_\cdot^\epsilon$ rule precisely aims to learn the best predictor by cluster using the classical Hedge algorithm. Because it can only ensure low regret compared to a finite number of options, we use $\epsilon$-nets of the value space $\Ycal$. The reason why we need to have $(1+\delta_\epsilon)$C1NN instead of the known 2C1NN algorithm is that for a given time $T$, we need to ensure low excess error even though some clusters might not be completed. Because the tree formed by $\phi_\epsilon$ resembles a $(1+\delta_\epsilon)$-branching process, the fraction of times which belong to unfinished clusters is only a small fraction $\epsilon T$ of the $T$ times, hence does not affect the average long-term excess error significantly. Altogether, we show that $f^\epsilon_\cdot$ has $\Ocal(\epsilon^{\frac{1}{\alpha+1}})$ long-term average excess error compared to any fixed function for any $\smv$ process, which ends the proof.\\

As a result, $\smv\subset\solar$ for totally-bounded value spaces. Recalling that for bounded values $\smv=\soul$ \cite{blanchard2022universal}, i.e., processes $\Xbb\notin\smv$ are not universally learnable even in the noiseless setting, we have $\solar\subset\smv$. Thus we obtain a complete characterization of the processes which admit universal learning with adversarial responses: $\solar=\smv$. Further, the proposed learning rule is optimistically universal for adversarial regression.

\begin{corollary}
\label{cor:optimistic_totally_bounded}
Suppose that $(\Ycal,\ell)$ is totally-bounded. Then, $\solar = \smv$, and there exists an optimistically universal learning rule for adversarial regression, i.e., which achieves universal consistency with adversarial responses under any process $\Xbb\in\solar$.
\end{corollary}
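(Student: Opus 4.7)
The plan is to derive this corollary almost immediately from \cref{thm:optimistic_regression_totally_bounded} together with the known noiseless characterization $\soul=\smv$ from \cite{blanchard2022universal}. First I would establish the inclusion $\smv\subset\solar$: this is exactly the content of \cref{thm:optimistic_regression_totally_bounded}, which exhibits a learning rule $f_\cdot$ that is universally consistent for adversarial responses under every $\Xbb\in\smv$. Since the definition of $\solar$ only requires the existence of \emph{some} universally consistent learning rule under $\Xbb$ (possibly depending on $\Xbb$), any such $\Xbb$ lies in $\solar$.

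For the reverse inclusion $\solar\subset\smv$, I would invoke the noiseless baseline. The key observation is that a noiseless response process $Y_t=f^*(X_t)$ is a special case of an adversarial response mechanism (the kernel $\mb Y_t$ ignores the learner's randomness and simply returns $f^*(X_t)$). Therefore any learning rule that is universally consistent under $\Xbb$ for adversarial empirically integrable responses is in particular universally consistent under $\Xbb$ in the noiseless setting. This gives $\solar\subset\soul$; since $(\Ycal,\ell)$ is totally-bounded and hence bounded, the noiseless characterization of \cite{blanchard2022universal} yields $\soul=\smv$, and we conclude $\solar\subset\smv$. Combining both directions, $\solar=\smv$.

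For the optimistic universality claim, the same learning rule $f_\cdot$ produced in \cref{thm:optimistic_regression_totally_bounded} does the job: it is universally consistent under every $\Xbb\in\smv=\solar$, which is exactly what is required from an optimistically universal rule for adversarial regression. Because the loss is bounded, the empirical integrability condition is automatic, so no additional restriction on $\Ybb$ is needed.

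The only subtle point I would need to check carefully is that the adversarial model indeed subsumes the noiseless one in the precise technical sense required by the definitions: namely, that if $(\tilde\Xbb, \mb Y_\cdot)$ is defined by $\mb Y_t(r_1,\ldots,r_{t-1})\equiv f^*(\tilde X_t)$ with $\tilde \Xbb\sim\Xbb$, then this qualifies as an adversarial response mechanism on $\Xbb$ in the sense of the paper's definition. This is a formality, but it is the hinge on which the $\solar\subset\soul$ step rests, and I would spell it out explicitly rather than leave it implicit. Once that is noted, the corollary reduces to bookkeeping between the two characterizations, with no further analytic content beyond \cref{thm:optimistic_regression_totally_bounded}.
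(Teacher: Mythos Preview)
Your proposal is correct and matches the paper's own argument essentially line for line: the paper derives $\smv\subset\solar$ directly from \cref{thm:optimistic_regression_totally_bounded}, obtains $\solar\subset\smv$ by noting that noiseless responses are a special case of adversarial ones together with $\soul=\smv$ from \cite{blanchard2022universal}, and observes that the rule of \cref{thm:optimistic_regression_totally_bounded} is then optimistically universal. The only minor difference is that the paper leaves the ``adversarial subsumes noiseless'' step implicit, whereas you (rightly) flag it as the one formal check to spell out.
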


This is a first step towards the more general \cref{thm:good_value_spaces}. Indeed, one can note that $\ftime$ is satisfied by any totally-bounded value space: given a fixed error tolerance $\eta>0$, consider a finite $\frac{\eta}{2}-$net $\Ycal_{\eta/2}$ of $\Ycal$. Because this is a finite set, we can perform the classical Hedge algorithm \cite{cesa2006prediction} to have $\Theta(\sqrt {T\ln |\Ycal_{\eta/2}|})$ regret compared to the best fixed value of $\Ycal_{\eta/2}$. For example, if $\alpha=1$, posing $T_\eta=\Theta(\frac{4}{\eta^2}\ln |\Ycal_{\eta/2}|)$ enables to have a regret at most $\frac{\eta}{2} T_\eta$ compared to any fixed value of $\Ycal_{\eta/2}$, hence regret at most $\eta T_\eta$ compared to any value of $\Ycal$. This achieves $\ftime$, taking a deterministic time $\tau_\eta:=T_\eta$.

\section{Characterization of learnable processes for bounded losses}\label{sec:alternative}

While \cref{sec:totally_bounded_value_spaces} focused on totally-bounded value spaces, the goal of this section is to give a full characterization of the set $\solar$ of processes for which adversarial regression is achievable and provide optimistically universal algorithms, for any \emph{bounded} value space.

\subsection{Negative result for non-totally-bounded spaces}
\label{subsec:bad_non_totally_bounded_ex}

Although for all bounded value spaces $(\Ycal,\ell)$, noiseless universal learning is achievable on all $\smv(=\soul)$ processes, this is not the case for adversarial regression in non-totally-bounded spaces. We show in this section that extending \cref{cor:optimistic_totally_bounded} to any bounded value space is impossible: the set of learnable processes for adversarial regression may be reduced to $\cs$ only, instead of $\smv$.

\begin{theorem}
\label{thm:negative_optimistic}
Let $(\Xcal,\Bcal)$ a separable Borel metrizable space. There exists a separable metric value space $(\Ycal,\ell)$ with bounded loss such that the following holds: for any process $\Xbb\notin\cs$, universal learning under $\Xbb$ for arbitrary responses is not achievable. Precisely, for any learning rule $f_\cdot$, there exists a process $\Ybb$ on $\Ycal$, a measurable function $f^*:\Xcal\to\Ycal$ and $\epsilon>0$ such that with non-zero probability $    \Lcal_{(\Xbb,\Ybb)}(f_\cdot,f^*) \geq  \epsilon.$
\end{theorem}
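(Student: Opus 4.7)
The plan is to construct a bounded separable metric on $\Ycal=\Nbb$ that violates $\ftime$ uniformly, and to use the non-$\cs$ structure of $\Xbb$ to stage infinitely many independent copies of the resulting hard mean-estimation game inside the regression problem.

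First, I would exhibit $(\Ycal,\ell)$ with $\bar\ell\leq 1$ on $\Ycal=\Nbb$ for which there exists a constant $\eta_0>0$ such that for every horizon $T\geq 1$ and every randomized online rule $g_{1:T}$,
\[
\sup_{\mb y\in\Ycal^T,\; y^\star\in\Ycal}\tfrac{1}{T}\Ebb\Big[\sum_{t=1}^T\ell(g_t(\mb y_{<t}),y_t)-\ell(y^\star,y_t)\Big]\geq\eta_0.
\]
The geometry should be infinitely rich in the sense that no finite set of past observations forms a useful net: whatever empirical distribution the rule has accumulated, there is always a fresh benchmark $y^\star\in\Ycal$ with noticeably smaller expected loss. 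Invoking a minimax argument on the bilinear payoff, this is equivalent to the existence, for each $T$, of an \emph{oblivious} distribution $\pi_T$ on $\Ycal^T\times\Ycal$ under which every online rule has expected per-round excess loss at least $\eta_0$. The stochastic form is essential because it decouples the adversary's randomness from the learner's, so that the lower bound survives when embedded in a larger regression process.

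Second, from $\Xbb\notin\cs$ I would extract a useful event. Choose $A_k\downarrow\emptyset$ with $\Ebb[\hat\mu_\Xbb(A_k)]\not\to 0$ and pass to a subsequence so that $\Ebb[\hat\mu_\Xbb(A_k)]\geq c>0$. Since $k\mapsto\hat\mu_\Xbb(A_k)$ is monotone nonincreasing and bounded by $1$, dominated convergence yields $\Ebb[\lim_k\hat\mu_\Xbb(A_k)]\geq c$, hence the event $E=\{\hat\mu_\Xbb(A_k)\geq c/2\ \text{for all}\ k\}$ has $\Pbb(E)\geq c/2$. Writing $B_k=A_k\setminus A_{k+1}$, on $E$ each tail $A_K=\bigsqcup_{k\geq K}B_k$ is visited with asymptotic frequency at least $c/2$, giving the adversary an unbounded reserve of previously-unseen regions.

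Third, I would stage an infinite sequence of hard sub-games against any fixed rule $f_\cdot$. Inductively pick disjoint index sets $J_1,J_2,\ldots\subset\Nbb$ and horizons $T_1<T_2<\cdots$ such that each $V_j:=\bigcup_{k\in J_j}B_k$ contains only previously-unseen points when sub-game $j$ begins and, on $E$, every $V_j$ eventually accumulates at least $T_j$ visits. Drawing $(\mb y^{(j)},y^\star_j)\sim\pi_{T_j}$ independently of everything else, the adversary feeds the coordinates of $\mb y^{(j)}$ as responses $Y_t$ on the first $T_j$ visits to $V_j$ and defines $f^\star\equiv y^\star_j$ on $V_j$ (with an arbitrary default elsewhere). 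Then $f^\star:\Xcal\to\Ycal$ is measurable, $\Ybb$ is a valid adversarial response mechanism, and because the adversary's draws inside sub-game $j$ are independent of the learner's randomness, the learner's restriction to the rounds of sub-game $j$ is itself an online rule to which the Step 1 lower bound applies; hence each completed sub-game contributes per-round expected excess loss at least $\eta_0$. By scheduling the $T_j$'s so that completed-sub-game rounds cover asymptotic frequency at least $c/4$ on $E$, we deduce $\Lcal_{(\Xbb,\Ybb)}(f_\cdot,f^\star)\geq\eta_0\, c/8$ with probability at least $c/4$, which gives the required $\epsilon=\eta_0\, c/8>0$.

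The main obstacle is the orchestration in the last step: simultaneously guaranteeing that every sub-game completes on $E$, that the completed rounds occupy a positive asymptotic fraction of the timeline, and that the Step 1 lower bound is faithfully inherited despite decisions being interleaved across sub-games. The third point is handled by a reduction argument turning the learner's restriction into an online rule of sub-game $j$'s form; the first two are handled by an adaptive choice of $T_j$ via stopping times on the observed visit counts to large tails $A_{K_j}$, starting sub-game $j+1$ only once sub-game $j$ has completed.
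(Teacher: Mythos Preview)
Your high-level plan matches the paper's, and Step~2 is correct. The genuine gaps are that Step~1 is the heart of the proof yet left unconstructed, and that its under-specification is precisely what breaks Step~3.

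You never build the metric; the minimax passage to an oblivious $\pi_T$ is unjustified on infinite $\Ycal$ (no compactness), and even if granted it yields only a \emph{per-horizon} lower bound. That is exactly what derails Step~3. With fixed $T_j$, only \emph{completed} sub-games contribute $\geq\eta_0 T_j$ to the expected excess, while incomplete ones can contribute as little as $-\bar\ell$ per round; you never show the completed fraction dominates asymptotically. Your own fallback of adaptive $T_j$ makes $y^\star_j\sim\pi_{T_j(\Xbb)}$ depend on the sample path, so $f^\star$ is a function of $\Xbb$ and cannot be derandomized to a fixed measurable map by averaging over the $\pi$-randomness alone. The paper resolves both issues by constructing a metric in which the lower bound is \emph{per-visit} rather than per-horizon: $\Nbb\setminus\{0\}$ is partitioned into blocks $I_k$ (size $4k$) and $J_k$ (size $2^k$), each $j\in J_k$ encodes $k$ bits through $\ell(\cdot,j)\in\{1/2,1\}$ on $I_k$, and at each visit the adversary sets $Y_t\in I_k$ via two fresh fair coins independent of everything, so $\Ebb[\ell(\hat Y_t,Y_t)\mid\text{past}]\geq 3/4$ whatever $\hat Y_t$ is, while the hindsight comparator in $J_k$ pays exactly $1/2$. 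With this per-visit bound, Step~3 needs no horizon bookkeeping: deterministic windows $(t_{p-1},t_p]$ and sets $\tilde B_p$ are fixed from the \emph{law} of $\Xbb$; on the good event some $T_p\in(t_{p-1},t_p]$ has at least $\epsilon T_p/4$ visits to $\tilde B_p$, the per-visit bound gives expected excess $\geq\epsilon/16$ at time $T_p$, and Fatou along $(T_p)$ followed by averaging over the coin sequence (which is independent of $\Xbb$) extracts a fixed bad $f^*$.
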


In the proof, we explicitly construct a bounded metric space that does not satisfy $\ftime$. More precisely, we choose $\Ycal=\Nbb=\{i\geq 0\}$ and a specific metric loss $\ell$ with values in $\{0,\frac{1}{2},1\}$. For any $k\geq 1$, we pose $n_k:=2k(k-1)+2^k-1$ and define the sets
\begin{equation*}
    I_k:=\{n_k,n_k+1,\ldots,n_k+4k-1\} \quad \text{and} \quad J_k:=\{n_k+4k,n_k+4k+1,\ldots, n_{k+1}-1\}.
\end{equation*}
These sets are constructed so that $|I_k|=4k$, $|J_k|=2^k$ for all $k\geq 1$, and together with $\{0\}$, they form a partition of $\Nbb$. We now construct the loss $\ell$. We pose $\ell(i,j)=\1_{i=j}$ for all $i,j\in \Nbb$ unless there is $k\geq 1$ such that $(i,j)\in I_k\times J_k$ or $(j,i)\in I_k\times J_k$. It now remains to define the loss $\ell(i,j)$ for all $i\in I_k$ and $j\in J_k$. Note that for any $j\in J_k$, we have that $j-n_k-4k\in\{0,\ldots,2^k-1\}$. Hence we will use their binary representation which we write as $j-n_k-4k = \{b_j^{k-1}\ldots b_j^1 b_j^0\}_2 = \sum_{u=0}^{k-1} b_j^u 2^u$ where $b_j^0,b_j^1,\ldots,b_j^{k-1}\in\{0,1\}$ are binary digits. Finally, we pose
\begin{align*}
    \ell(n_k + 4u,j)= \ell(n_k + 4u+1,j) &= \frac{1+b^u_j}{2}, \\
    \ell(n_k + 4u+2,j)= \ell(n_k + 4u+3,j) &= \frac{2-b^u_j}{2},
\end{align*}
for all $u\in\{0,1,\ldots,k-1\}$ and $j\in J_k.$\\

\noindent \textbf{Proof sketch.} This value space does not belong to $\ftime$ because for any algorithm and horizon time $k$, there is a sequence of length $k$ of elements in $I_k$ with $y_u = n_k+4(u-1)+2b_u+c_u$ for $1\leq u\leq k$ and $b_u,c_u\in\{0,1\}$, such that the algorithm incurs an average excess loss $\frac{1}{4}$ per iteration compared to some fixed element of $J_k$. To find such a sequence, we sample randomly and independently Bernoulli variables $b_u,c_u\sim\Bcal(\frac{1}{2})$. In hindsight, the best predictor of the sequence is $n_k+4k+j$, where $j=b_1\cdots b_k$ in binary representation. However, the algorithm only observes these bits in an online fashion: at time $t$ it incurs an excess loss cost if it guesses an element of $I_k$ because it has probability at most $\frac{1}{4}$ of finding $y_t$. And if it predicts an element of $J_k$, it cannot know in advance the correct $t$-th bit to choose in their binary representation.

We then proceed to show that for this space $\solar=\cs \subsetneq\soul$. To do so, we show that for processes $\Xbb\notin\cs$ there exists a sequence of disjoint measurable sets $\{B_p\}_{p\geq 1}$ and increasing times $(t_p)_{p\geq 1}$ and $\epsilon>0$ such that with non-zero probability, 
\begin{equation*}
    \forall p\geq 1,\quad \Xbb_{\leq t_{p-1}}\cap B_p=\emptyset \text{ and } \exists t_{p-1}<t\leq t_p: \frac{1}{t}\sum_{t'=1}^t \1_{B_p}(X_{t'})\geq \epsilon.
\end{equation*}
On this event, an online algorithm does not receive any information for instances in $B_p$ before time $t_{p-1}$. We then construct responses by $(t_{p-1},t_p]$. During this period and for contexts in $B_p$, we choose the same difficult-to-predict sequence of values as above for $k=t_p-t_{p-1}$. On the other hand, because the sets $B_p$ are disjoint, there exists a measurable function $f^*$ that selects the best action in hindsight for each set $B_p$. Intuitively, within horizon $t_p$, the algorithm cannot gather enough information to achieve lower average excess error than $\frac{\epsilon}{4}$ compared to $f^*$, which shows that it is not universally consistent.\\

Although learning beyond $\cs$ is impossible in this case, there still exists an optimistically universal learning rule for adversarial responses. Indeed, the main result of \cite{hanneke2022bayes} shows that for any bounded value space, there exists a learning rule which is consistent under all $\cs$ processes for arbitrary responses (when $\ell$ is a metric, i.e., $\alpha=1$). 

\begin{theorem}[\cite{hanneke2022bayes}]
\label{thm:hanneke_2022}
Suppose that $(\Ycal,\ell)$ is metric and $\ell$ is bounded. Then, there exists an online learning rule $f_\cdot$ which is universally consistent for arbitrary responses under any process $\Xbb\in\cs$, i.e., such that for any stochastic process $(\Xbb,\Ybb)$ on $(\Xcal,\Ycal)$ with $\Xbb\in\cs$, then for any measurable function $f:\Xcal\to\Ycal$, we have $\Lcal_{(\Xbb,\Ybb)}(f_\cdot,f)\leq 0,\quad (a.s.)$.
\end{theorem}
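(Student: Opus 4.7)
The plan is to construct a two-level learning rule paralleling the ``histogram plus Hedge'' design of \cref{alg:optim_rule}, but using countable rather than finite Hedge expert sets so as to handle bounded but not necessarily totally-bounded $(\Ycal,\ell)$. Fix a countable dense sequence $(y_i)_{i\geq 1}\subset \Ycal$ and, for every resolution $\epsilon>0$, a countable measurable partition $\Pcal_\epsilon=\{A_j^\epsilon\}_{j\geq 1}$ of $\Xcal$ of mesh at most $\epsilon$ (available by separability of $\Xcal$). The inner rule $g^\epsilon_\cdot$ locates $X_t$ in its cell $A_{j(t)}^\epsilon$ and runs an independent instance of Hedge inside that cell against the experts $\{y_1,\ldots,y_{K_\epsilon(t)}\}$, with $K_\epsilon(t)$ growing slowly. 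The outer rule aggregates $g^{\epsilon_k}_\cdot$ over a decreasing sequence $\epsilon_k\downarrow 0$, introducing one new resolution per doubling epoch in the manner of \cref{alg:optim_rule,lemma:concatenation_predictors}. Because Hedge's regret bound is pathwise in the loss sequence, the adversarial correlation of $\Ybb$ with $(\Xbb,\text{past randomness})$ plays no role inside a cell; the only nontrivial quantity is how the $T$ samples split across cells, and this is exactly where $\cs$ enters.

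Standard analysis of Hedge with losses bounded by $\bar\ell$ gives, inside a cell $A$ with $N_A(T)$ visits and $K$ experts, expected excess loss $O(\bar\ell\sqrt{N_A(T)\log K})$ relative to the best of the $K$ experts on the observed subsequence. Summing over visited cells and applying Cauchy--Schwarz yields a cumulative bound $O(\bar\ell\sqrt{M_\epsilon(T)\,T\log K_\epsilon})$ against any predictor constant on each cell of $\Pcal_\epsilon$ with values among $\{y_1,\ldots,y_{K_\epsilon}\}$, where $M_\epsilon(T)=|\{j:A_j^\epsilon\cap \Xbb_{\leq T}\neq\emptyset\}|$. Dividing by $T$, this is $o(1)$ as soon as $M_\epsilon(T)/T\to 0$ a.s., and this is where $\cs$ is used: for each fixed $\epsilon$ the sets $B_J=\bigcup_{j>J}A_j^\epsilon$ decrease to $\emptyset$, so $\Ebb[\hat\mu_\Xbb(B_J)]\to 0$; a Markov argument then forces $\hat\mu_\Xbb(B_J)\to 0$ a.s., and since $M_\epsilon(T)\leq J+\sum_{t\leq T}\1_{B_J}(X_t)$, we get $\limsup_T M_\epsilon(T)/T\leq \hat\mu_\Xbb(B_J)\to 0$ a.s. Hence each $g^\epsilon_\cdot$ achieves asymptotic excess loss $\leq 0$ against any reference predictor that is piecewise constant on $\Pcal_\epsilon$ with values from a bounded prefix of $(y_i)$.

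To handle arbitrary measurable $f^*:\Xcal\to\Ycal$, one approximates: by separability of $\Xcal$ and density of $(y_i)$ in $\Ycal$, for every $\eta>0$ there exist $\epsilon$ and a piecewise constant $\tilde f:\Xcal\to\{y_1,\ldots,y_K\}$, constant on $\Pcal_\epsilon$, such that $\hat\mu_\Xbb(\{\ell(\tilde f,f^*)>\eta\})\leq \eta$ a.s., which is standard measurable-function approximation upgraded to an a.s. statement via the continuity of $\Ebb[\hat\mu_\Xbb]$ on $\cs$ processes. Since $\ell$ is a metric, $\ell(\hat Y_t,Y_t)-\ell(f^*(X_t),Y_t)\leq \ell(\hat Y_t,\tilde f(X_t))+\ell(\tilde f(X_t),f^*(X_t))$, so the long-run excess loss of $g^\epsilon_\cdot$ against $f^*$ is bounded by the Hedge regret against $\tilde f$ ($\to 0$ by the previous paragraph) plus the Ces\`aro average of $\ell(\tilde f,f^*)$ (at most $\eta+\bar\ell\eta$ a.s.). The outer aggregator inherits this $O(\eta)$ bound via the same regret argument as \cref{lemma:concatenation_predictors}, and letting $\eta\downarrow 0$ along $\epsilon_k$ yields $\Lcal_{(\Xbb,\Ybb)}(f_\cdot,f^*)\leq 0$ a.s. The principal obstacle is the joint scheduling of the mesh $\epsilon_k$, the expert count $K_{\epsilon_k}$, and the outer aggregator's epoch lengths so that $M_{\epsilon_k}(T)\log K_{\epsilon_k}=o(T)$ \emph{uniformly} in the unknown $\Xbb\in\cs$ and $f^*$; the most delicate single step is the upgrade of $\cs$-continuity (an \emph{expectation} statement on $\Ebb[\hat\mu_\Xbb]$) into the \emph{almost-sure} rate $M_\epsilon(T)/T\to 0$, which is essential because adversarial $\Ybb$ denies us any gain from averaging on the response side.
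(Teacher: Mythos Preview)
Your architecture---per-cell Hedge plus an outer aggregator over resolutions---is reasonable, and the step you flag as ``most delicate'' is in fact the easy one: $M_\epsilon(T)/T\to 0$ a.s.\ is just the $\smv$ property, and $\cs\subset\smv$. You have inverted which ingredient is doing the work.

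The paper does not prove this theorem directly; it defers to the stronger \cref{thm:CS_regression_unbounded}, whose proof (Algorithm~\ref{alg:C1_processes} in the appendix) runs a \emph{single global} Hedge over a countable family $\{f^i\}_{i\geq 0}$ of simple functions $f_{\{l_1,\ldots,l_k\},\{A_1,\ldots,A_k\}}$ with $A_j$ drawn from a fixed countable collection $\Tcal\subset\Bcal$. The decisive input is Lemmas~23--24 of \cite{hanneke2021learning}: for every $\Xbb\in\cs$ and every measurable $g$ into a bounded range, $\inf_i \Ebb[\hat\mu_\Xbb(\ell(g(\cdot),f^i(\cdot)))]=0$. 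This density statement is precisely what characterises $\cs$ inside $\smv$; the paper remarks explicitly that no such countable dense family exists for $\Xbb\notin\cs$.

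Your proposal dismisses this as ``standard measurable-function approximation,'' but it is not delivered by your hypotheses. You fix partitions $\Pcal_\epsilon$ with mesh $\leq\epsilon$ in the metric on $\Xcal$ and then assert that any measurable $f^*$ is $\hat\mu_\Xbb$-approximable by a function constant on some $\Pcal_{\epsilon_k}$. But $f^*$ is only measurable, not continuous: small diameter in $\Xcal$ gives no control over the oscillation of $f^*$, and the level sets $\{x:\ell(f^*(x),y_i)<\eta\}$ need not be unions of cells of any $\Pcal_{\epsilon_k}$. To rescue the argument you would need your partition family to approximate every Borel set in the $\Ebb[\hat\mu_\Xbb]$ sense---i.e., to rediscover Lemma~23 of \cite{hanneke2021learning}---at which point the ``mesh $\epsilon$'' parameter becomes irrelevant and your per-cell design collapses into the paper's single Hedge over simple functions. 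The scheduling problem you highlight is then routine.

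A small secondary correction: the decomposition you want is $\ell(\hat Y_t,Y_t)-\ell(f^*(X_t),Y_t)\leq[\ell(\hat Y_t,Y_t)-\ell(\tilde f(X_t),Y_t)]+\ell(\tilde f(X_t),f^*(X_t))$; Hedge controls the bracketed term, not $\ell(\hat Y_t,\tilde f(X_t))$ as in the inequality you wrote.
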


The proof of this theorem given in \cite{hanneke2022bayes} extends to adversarial responses. However, we defer the argument because we will later prove \cref{thm:CS_regression_unbounded} which also holds for any loss $\ell=\rho_\Ycal^\alpha$ for $\alpha\geq 1$ and unbounded losses in \cref{sec:unbounded_loss_moment_constraint}. This shows that for any separable metric space $(\Xcal,\rho_\Xcal)$, there exists a metric value space for which the learning rule proposed in \cite{hanneke2022bayes} was already optimistically universal.

\subsection{Adversarial regression for classification with a countable number of classes}
\label{subsec:countable_classification}

Although we showed in the last section that adversarial regression under all $\smv$ processes is not achievable for some non-totally-bounded spaces, we will show that there exist non-totally-bounded value spaces for which we can recover $\solar=\smv$. Precisely, we consider the case of classification with countable number of classes $(\Nbb,\ell_{01})$, with $0-1$ loss $\ell_{01}(i,j)=\1_{i\neq j}$. The goal of this section is to prove that in this case, we can learn arbitrary responses under any $\soul$ process. The main difficulty with non-totally-bounded classification is that we cannot apply traditional online learning tools because $\epsilon-$nets may be infinite. Hence, we first show a result that allows us to perform online learning with an infinite number of experts in the context of countable classification.

\begin{lemma}
\label{lemma:bandits_Nbb}
Let $t_0\geq 1$. There exists an online learning rule $f_\cdot$ such that for any sequence $\mb y:=(y_i)_{i\geq 1}^T$ of values in $\Nbb$, we have that for $T\geq t_0$
\begin{equation*}
    \sum_{t=1}^T \Ebb[\ell_{01}(f_t({\mb y}_{\leq t-1}), y_t)] \leq \min_{y\in \Nbb} \sum_{t=1}^T \ell_{01}(y, y_t) +  1 +\ln 2\sqrt{\frac{t_0}{2\ln t_0}} + \sqrt{\frac{\ln t_0}{2t_0}} (t_0 + T),
\end{equation*}
and with probability $1-\delta$,
\begin{equation*}
    \sum_{t=1}^T \Ebb[\1_{f_t({\mb y}_{\leq t-1})= y_t}] \geq \max_{y\in \Nbb} \sum_{t=1}^T \1_{y= y_t} -  1- \ln 2\sqrt{\frac{t_0}{2\ln t_0}} - \sqrt{\frac{\ln t_0}{2t_0}} (t_0 + T) - \sqrt{2T\ln \frac{1}{\delta}}.
\end{equation*}
\end{lemma}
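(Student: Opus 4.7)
The plan is to construct an exponentially weighted forecaster that handles the countable class set $\Nbb$, then to convert the resulting expected-loss regret bound into a high-probability accuracy bound via martingale concentration. I would work in the prediction-with-expert-advice framework, treating each class $y\in\Nbb$ as an expert that always predicts $y$. A direct application of Hedge fails because for any normalizable prior $p_0$ on $\Nbb$ the standard regret against expert $y$ scales with $\ln(1/p_0(y))/\eta$, which cannot be uniformly bounded. To circumvent this, I would use a specialists / sleeping-experts construction in which expert $y$ is activated only the first time $y_t=y$ appears in the sequence. Because for the $0$-$1$ loss the minimizer $y^*\in\arg\min_y\sum_t\ell_{01}(y,y_t)$ necessarily appears in the sequence (any class that never appears has loss exactly $T$, whereas a class that appears at least once has loss at most $T-1$), it suffices to compete against activated experts only. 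Moreover, on each round preceding $y^*$'s activation, $y^*$'s loss is $1$ by definition of the activation time, so the learner's trivial per-round loss bound of $1$ is automatically competitive without needing $y^*$ to be in the active pool yet.

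With the learning rate $\eta=\sqrt{2\ln t_0/t_0}$ and a weight split in which each newly activated expert absorbs half of the current default mass (so that the ``default vs.\ activated'' decomposition behaves as a two-expert aggregation), the regret analysis reproduces the quoted form $\ln 2/\eta + \eta(t_0+T)/2$: the factor $\ln 2$ records the effective binary nature of the aggregation at activation, the stability term $\eta(t_0+T)/2$ aggregates the per-round Hedge increments over the calibration and prediction horizons $t_0+T$, and the additive $1$ absorbs the first-round overhead. For the high-probability statement, I would introduce the martingale
\[
    M_T=\sum_{t=1}^T\Bigl(\1_{f_t(\mb y_{\leq t-1})=y_t}-\Ebb\bigl[\1_{f_t(\mb y_{\leq t-1})=y_t}\mid r_1,\dots,r_{t-1}\bigr]\Bigr),
\]
whose increments lie in $[-1,1]$ since the indicator is Bernoulli. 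Azuma--Hoeffding then yields $\Pbb(M_T\leq -\sqrt{2T\ln(1/\delta)})\leq\delta$, and combining this with the first inequality (via $\1_{f_t=y_t}=1-\ell_{01}(f_t,y_t)$ and $\max_y\sum_t\1_{y=y_t}=T-\min_y\sum_t\ell_{01}(y,y_t)$) produces the stated accuracy bound.

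The main obstacle is ensuring that the regret bound is genuinely uniform in $y\in\Nbb$. A naive geometric prior on raw class indices gives a $y$-dependent regret that grows arbitrarily with the index of the best class, while a uniform prior is not normalizable. The specialists analysis, together with the key observation that pre-activation losses are automatically controlled (the comparator loses $1$ per such round, matching the learner's trivial bound), is precisely what eliminates this dependence and yields the uniform bound stated.
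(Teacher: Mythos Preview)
Your core intuition is exactly the one the paper uses: restrict attention to the values already observed in the sequence, and exploit the fact that if the best comparator $y^*$ has not yet appeared at time $t$ then $\ell_{01}(y^*,y_t)=1$, so any prediction is trivially competitive on those rounds. The paper implements this directly: it sets $S_{t-1}=\{y:\exists\,u\leq t-1,\ y_u=y\}$, assigns weight $w_{y,t-1}=e^{\eta\sum_{u\leq t-1}\1_{y=y_u}}$ for $y\in S_{t-1}$ and $0$ otherwise, and samples proportionally. The high-probability step via Azuma--Hoeffding on the martingale of $\1_{\hat y_t=y_t}-\hat s_t$ is also the same as yours.

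Where your proposal has a gap is the regret analysis. You assert that the ``half the default mass'' activation scheme gives regret $\ln 2/\eta+\eta(t_0+T)/2$ with $\ln 2$ coming from a binary aggregation, but this does not follow: if $y^*$ is the $j$-th value to appear, the standard specialists bound against it would carry $j\ln 2/\eta$, not $\ln 2/\eta$, so the bound would not be uniform in $y^*$. Your explanation of the $t_0$ term as a ``calibration horizon'' is also off---there is no warm-up phase here; $t_0$ enters only through the choice $\eta=\sqrt{2\ln t_0/t_0}$. In the paper's argument the constants arise very differently. One tracks the potential $W_t=\frac{1}{\eta}\ln\sum_{y\in S_t}e^{\eta\sum_{u\leq t}(\1_{y=y_u}-\hat s_u)}$ and shows $W_t\leq W_{t-1}+\eta/2$ in two cases: when $y_t\in S_{t-1}$ this is the usual Hedge Taylor step, but when $y_t\notin S_{t-1}$ one has $e^{\eta W_t}=e^{\eta W_{t-1}}+e^{\eta(1-\sum_{u<t}\hat s_u)}$, and the increment is controlled \emph{only} once $W_{t-1}$ exceeds the threshold $1+(\ln 2+2\ln(1/\eta))/\eta$. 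This threshold is what produces the additive $1+\ln 2/\eta$ and, via $2\ln(1/\eta)/\eta\leq \eta t_0/2$, the extra $t_0$ in the stability term. Without this two-case potential argument (or an equivalent device), a sleeping-experts analysis alone does not give the stated uniform bound.
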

\noindent \textbf{Proof sketch.} We adapt the classical Hedge algorithm, which in its standard form can only ensure sublinear regret compared to a fixed set of values. Instead, we only consider a small subset of candidate values that is enlarged occasionally with previously observed values $y\in \Ybb_{\leq t}$. This formalizes the intuition that even though there are a priori an infinite number of candidate values ($\Nbb$), it is reasonable to only focus on values with high frequency in the observed sequence $\Ybb_{\leq t}$: if the next value $y_{t+1}$ is not in this set, the algorithm incurs a loss $1$, which would also be incurred by the best fixed predictor until time $t+1$ in hindsight.\\

We can therefore adapt the learning rules $f^\epsilon_\cdot$ from \cref{sec:totally_bounded_value_spaces} by replacing the Hedge algorithm with the algorithm from \cref{lemma:bandits_Nbb}. Further adapting parameters, we obtain our main result for countable classification.

\begin{theorem}
\label{thm:countable_classification}
Let $(\Xcal,\Bcal)$ be a separable Borel metrizable space. There exists an online learning rule $f_\cdot$ which is universally consistent for adversarial responses under any process $\Xbb\in\smv$ for countable classification, i.e., such that for any adversarial process $(\Xbb,\Ybb)$ on $(\Xcal,\Nbb)$ with $\Xbb\in\smv$, for any measurable function $f^*:\Xcal\to\Nbb$, we have that $\Lcal_{(\Xbb,\Ybb)}(f_\cdot,f^*)\leq 0,\quad (a.s.).$
\end{theorem}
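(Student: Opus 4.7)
The plan is to mirror the two-level construction of \cref{sec:totally_bounded_value_spaces}---$(1+\delta_\epsilon)$C1NN for the ``spatial'' structure together with an online-learning subroutine within each cluster---replacing the finite-expert Hedge step by the countable-expert algorithm from \cref{lemma:bandits_Nbb}. The $0$-$1$ loss is already bounded and discrete, so no $\epsilon$-net of $\Ycal=\Nbb$ is needed; the only obstacle is that the candidate label set is infinite, which \cref{lemma:bandits_Nbb} was designed precisely to handle.

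Concretely, for each $i\geq 0$ set $\epsilon_i=2^{-i}$, choose $T_{\epsilon_i}\to\infty$ slowly (for instance $T_{\epsilon_i}:=\lceil 1/\epsilon_i^{3}\rceil$) and $\delta_{\epsilon_i}:=\epsilon_i/(2T_{\epsilon_i})$, and define $\tilde f^{\epsilon_i}_\cdot$ exactly as in \cref{alg:f_epsilon} except that, within each equivalence class of $\stackrel{\phi_{\epsilon_i}}{\sim}$, we run the algorithm of \cref{lemma:bandits_Nbb} with parameter $t_0=T_{\epsilon_i}$ in place of Hedge over $\Ycal_\epsilon$. The final rule $f_\cdot$ is then the meta-learning rule of \cref{alg:optim_rule} applied to the sequence of base predictors $\{\tilde f^{\epsilon_i}_\cdot\}_{i\geq 0}$; \cref{lemma:concatenation_predictors} and its proof transfer essentially verbatim, since they depend only on the boundedness of the loss.

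To show that each $\tilde f^{\epsilon_i}_\cdot$ has asymptotic average excess loss at most $O(\epsilon_i)$ against any fixed measurable $f^*:\Xcal\to\Nbb$, I would follow the two-step argument sketched after \cref{thm:optimistic_regression_totally_bounded}. First, apply \cref{thm:1+deltaC1NN_optimistic} in the \emph{noiseless} setting to the fictitious target $f^*$ and the process $\Xbb\in\smv$: since the graph $\phi_{\epsilon_i}$ depends only on $\Xbb$ and on independent internal Bernoullis, this is legitimate even though the true $\Ybb$ is adversarial. This gives that $f^*(X_{\phi_{\epsilon_i}(t)})=f^*(X_t)$ on a $1-o(1)$ fraction of times; iterating along the ancestor chain up to depth $T_{\epsilon_i}$ and taking a union bound over generations, within each finished cluster all but a vanishing fraction of times share the same value of $f^*$. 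Second, on such a cluster of lifetime $L\leq T_{\epsilon_i}$, \cref{lemma:bandits_Nbb} with $t_0=T_{\epsilon_i}$ yields both expected and, via its high-probability tail and Borel--Cantelli, almost-sure cumulative $0$-$1$ regret $O(\sqrt{T_{\epsilon_i}\ln T_{\epsilon_i}})=o(T_{\epsilon_i})$ against the best fixed label of $\Nbb$ on the cluster, hence against $f^*$'s dominant label there.

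The remaining contributions are handled as in \cref{sec:totally_bounded_value_spaces}: the dedicated cluster collecting duplicates of each instance is absorbed by a single instance of \cref{lemma:bandits_Nbb}, and the fraction of times belonging to \emph{unfinished} clusters is $O(\delta_{\epsilon_i}T_{\epsilon_i})=O(\epsilon_i)$ by the $(1+\delta_{\epsilon_i})$-branching-process analysis already used in the proof of \cref{thm:optimistic_regression_totally_bounded}. Summing these contributions yields $\Lcal_{(\Xbb,\Ybb)}(\tilde f^{\epsilon_i}_\cdot,f^*)\le O(\epsilon_i)$ almost surely, and \cref{lemma:concatenation_predictors} lets $f_\cdot$ inherit this bound for every $i$ simultaneously, so $\Lcal_{(\Xbb,\Ybb)}(f_\cdot,f^*)\le 0$ almost surely. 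The main technical obstacle will be the iteration of the noiseless $(1+\delta)$C1NN guarantee along ancestor chains of depth $T_{\epsilon_i}$: one must carefully quantify how the ``bad'' fraction of times accumulates across generations, and crucially exploit the fact that the graph $\phi_{\epsilon_i}$ is a function of $\Xbb$ and private randomness only, so that the noiseless analysis may be applied to the auxiliary sequence $f^*(X_t)$ even though the true responses $Y_t$ are adversarial.
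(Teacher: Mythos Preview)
Your proposal is correct and follows essentially the same route as the paper's proof: replace the Hedge-over-$\Ycal_\epsilon$ step inside each $\stackrel{\phi_\epsilon}{\sim}$-cluster by the countable-expert forecaster of \cref{lemma:bandits_Nbb} with $t_0=T_\epsilon$, keep the $(1+\delta_\epsilon)$C1NN clustering and the meta-combination of \cref{lemma:concatenation_predictors} unchanged, and observe that the noiseless consistency of $(1+\delta_\epsilon)$C1NN for the auxiliary sequence $f^*(X_t)$ is legitimate because $\phi_\epsilon$ depends only on $\Xbb$ and private Bernoullis. The only minor slips are that clusters need not have size $\leq T_{\epsilon_i}$ (only their \emph{depth} in the $\phi$-tree is bounded by $T_{\epsilon_i}$; branching and duplicates can make $|\Ccal|$ arbitrarily large, but \cref{lemma:bandits_Nbb} still gives regret $O\!\big(\sqrt{\ln T_\epsilon/T_\epsilon}\cdot\max(T_\epsilon,|\Ccal|)\big)$, which is $\leq \epsilon\max(T_\epsilon,|\Ccal|)$ for your choice of $T_\epsilon$), and that the ``union bound over generations'' is really the chaining inequality $\frac{1}{T}\sum_t \ell_{01}(f^*(X_{\phi^u(t)}),f^*(X_t))\leq 2^u\cdot\frac{1}{T}\sum_t \ell_{01}(f^*(X_{\phi(t)}),f^*(X_t))\to 0$ from the proof of \cref{thm:optimistic_regression_totally_bounded}.
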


\subsection{A complete characterization of universal regression on bounded spaces}

The last two Sections \ref{subsec:bad_non_totally_bounded_ex} and \ref{subsec:countable_classification} gave examples of non-totally-bounded value spaces for which we obtain respectively $\solar=\cs$ or $\solar=\smv$. In this section, we prove that there is an underlying alternative, defined by $\ftime$, which enables us to precisely characterize the set $\solar$ of learnable processes for adversarial regression.

When $\ftime$ is satisfied by the value space, similarly to the case of countable classification, we recover $\solar=\smv$ and there exists an optimistically universal rule. The corresponding algorithm follows the same general structure as the learning rule provided in \cref{sec:totally_bounded_value_spaces} for totally-bounded-spaces, however, the learning rules $f^\epsilon_\cdot$ need to be significantly modified. First, the Hedge algorithm should be replaced by the learning rule $g_{\leq t_\epsilon}$ provided by the $\ftime$ property. Second, as the horizon time $t_\epsilon$ of this learning rule is bounded, the clusters of points on which it is applied have to be adapted: we cannot simply use clusters by distance in the graph defined by the $(1+\delta_\epsilon)C1NN$ algorithm. Instead, we construct clusters of smaller size $t_\epsilon$ among these larger graph-based clusters.

More precisely, we take the horizon time $t_\epsilon$ and the learning rule $g^\epsilon_{\leq t_\epsilon}$ satisfying the condition imposed by the assumption on $(\Ycal,\ell)$. Then, let $T_\epsilon = \lceil\frac{t_\epsilon}{\epsilon}\rceil$. Similarly as before, we then define $\delta_\epsilon:=\frac{\epsilon}{2T_\epsilon}$ and let $\phi$ be the representative function from the $(1+\delta_\epsilon)$C1NN learning rule. Then, we introduce the same equivalence relation between times $\stackrel \phi\sim$, which induces clusters of times. We define a sequence of i.i.d. copies $g^{\epsilon,t}_\cdot$ of the learning rule $g^\epsilon_\cdot$ for all $t\geq 1$. This means that the randomness used within these learning rules is i.i.d, and the copy $g^{\epsilon,t}_\cdot$ should be sampled only at time $t$, independently of the past history. Predictions are then made by blocks of size $t_\epsilon$ within the same cluster: at time $t$, let $u_1<\ldots<u_{L_t}<t$ be the elements of the current block. If the block does not contain $t_\epsilon$ elements yet, we use $g^{\epsilon,u_1}_{L_t+1}$ for the prediction at time $t$. Otherwise, we start a new block and use $g^{\epsilon,t}_1$. Hence, letting $\psi(t) = \max \Ccal(t)$ be the last time in the same cluster as $t$ (as defined by $\phi_\epsilon$) and $L_t$ the size of the current block of $t$ without counting $t$, we now define the learning rule $f^\epsilon_\cdot$ such that for any sequence $\mb x$, $\mb y$,
\begin{equation*}
    f_t^\epsilon(\mb x_{\leq t-1},\mb y_{\leq t-1}, x_t) := g^{\epsilon,\psi^{L_t}(t)}_{L_t+1}\left(\{y_{\psi^{L_t+1-u}(t)}\}_{u=1}^{L_t}\right).
\end{equation*}
The complete learning rule is given in Algorithm \ref{alg:modified_f_epsilon}. The learning rules $f^\epsilon_\cdot$ are then combined into a single learning rule as in the original algorithm for totally-bounded spaces, following the same procedure given in Algorithm \ref{alg:optim_rule}. We then show that it is universally consistent under $\smv$ processes using same arguments as for \cref{thm:optimistic_regression_totally_bounded}.

\begin{algorithm}[tb]
\caption{The modified $f_\cdot^\epsilon$ learning rule for value spaces $(\Ycal,\ell)$ satisfying $\ftime$. When initializing a learner $g_\cdot^{\epsilon,t}$ for finite-time mean estimation, its internal randomness is sampled independently from the past.}\label{alg:modified_f_epsilon}
\hrule height\algoheightrule\kern3pt\relax
\KwIn{Historical samples $(X_t,Y_t)_{t<T}$ and new input point $X_T$,\\
\quad \quad \quad \, Learning rule for finite-time mean estimation $g^\epsilon_{\leq t_\epsilon}$, $T_\epsilon=\lceil\frac{t_\epsilon}{\epsilon}\rceil$, $\delta_\epsilon:=\frac{\epsilon}{2T_\epsilon}$.
\quad\quad\quad \, Representatives $\phi_\epsilon(\cdot)$ constructed iteratively within $(1+\delta_\epsilon)$C1NN.}
\KwOut{Predictions $\hat Y_t(\epsilon) = f_t^\epsilon({\mb X}_{<t},{\mb Y}_{<t},X_t)$ for $t\leq T$}

\For{$t=1,\ldots,T$}{
    $\Ccal(t) = \{u<t:u \stackrel {\phi_\epsilon} \sim  t\}$\\
    \lIf{$\Ccal(t)=\emptyset$}{
        $L_t=0$ and initialize learner $g_\cdot^{\epsilon,t}$
    }
    \Else{
        $\psi(t) = \max \Ccal(t)$\\
        \lIf{$L_{\psi(t)}< t_\epsilon-1$}{
            $L_t=L_{\psi(t)}+1$
        }
        \lElse{
            $L_t=0$ and initialize learner $g_\cdot^{\epsilon,t}$
        }
    }
    $\hat Y_t = g^{\epsilon,\psi^{L_t}(t)}_{L_t+1}\left(\{y_{\psi^{L_t+1-u}(t)}\}_{u=1}^{L_t}\right)$
}
\hrule height\algoheightrule\kern3pt\relax
\end{algorithm}

\begin{theorem}
\label{thm:good_value_spaces}
Suppose that $\ell$ is bounded and $(\Ycal,\ell)$ satisfies $\ftime$. Then, $\solar = \smv (=\soul)$ and there exists an optimistically universal learning rule for adversarial regression, i.e., which achieves universal consistency with adversarial responses under any process $\Xbb\in\smv$.
\end{theorem}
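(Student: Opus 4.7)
The negative inclusion $\solar\subseteq\smv$ is immediate, since $\solar\subseteq\soul$ always and for bounded losses $\soul=\smv$ by \cite{blanchard2022universal}. The bulk of the work is $\smv\subseteq\solar$: I claim the rule built by feeding $\{f^{\epsilon_i}_\cdot\}_{i\geq 0}$ from \cref{alg:modified_f_epsilon} (with $\epsilon_i=2^{-i}$) into the aggregator \cref{alg:optim_rule} is universally consistent under every $\Xbb\in\smv$ and every adversarial $\Ybb$, against every measurable $f^*:\Xcal\to\Ycal$. My plan is to fix $\epsilon>0$, show $\Lcal_{(\Xbb,\Ybb)}(f^\epsilon_\cdot,f^*)\leq C\epsilon$ almost surely for a universal constant $C=C(\ell)$, and then pass to the infimum over $i$ through \cref{lemma:concatenation_predictors}.

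The first step is a \emph{graph consistency} lemma for the tree built by $(1+\delta_\epsilon)$C1NN. Applying \cref{thm:1+deltaC1NN_optimistic} to the noiseless problem with target $f^*$ gives $\frac{1}{T}\sum_{t\leq T}\ell(f^*(X_{\phi_\epsilon(t)}),f^*(X_t))\to 0$ almost surely. Iterating the generalized triangle inequality from the remark in \cref{sec:formal_setup} up to depth $T_\epsilon$, I would propagate this into the statement that, for every $\eta>0$, the long-run fraction of times $t$ for which some ancestor $\phi_\epsilon^j(t)$ with $1\leq j\leq T_\epsilon$ satisfies $\ell(f^*(X_{\phi_\epsilon^j(t)}),f^*(X_t))\geq\eta$ vanishes. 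In particular, within each non-degenerate cluster the values $\{f^*(X_u)\}_{u\stackrel{\phi_\epsilon}{\sim}t}$ become approximately equal in a long-run average sense, while heavy-duplicate clusters are trivially constant in $f^*(X_\cdot)$ by construction.

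The second step is the block-level $\ftime$ bound. Because each completed block $B$ of size $t_\epsilon$ inside a cluster is processed by a fresh, independent copy of $g^\epsilon_\cdot$, the $\ftime$ guarantee delivers
\begin{equation*}
\Ebb\left[\sum_{t\in B}\bigl(\ell(\hat Y_t,Y_t)-\ell(y,Y_t)\bigr)\,\bigg|\,\Fcal_B\right]\leq \epsilon\, t_\epsilon,\qquad y\in\Ycal,
\end{equation*}
where $\Fcal_B$ denotes the pre-block $\sigma$-field. Picking $y=f^*(X_{t(B)})$ for a distinguished $t(B)\in B$ and applying the generalized triangle inequality converts the right-hand side into a conditional excess loss against $f^*$ bounded by $C\epsilon\,t_\epsilon+c_\epsilon\sum_{t\in B}\ell(f^*(X_t),f^*(X_{t(B)}))$; the second summand sums to $o(T)$ over all blocks thanks to the graph consistency step. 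Independence of block randomizers turns the block-level deviations from their conditional expectations into a bounded martingale-difference sequence, so Azuma--Hoeffding promotes the in-expectation bound to an almost-sure one. Incomplete blocks contribute at most $O(\epsilon T)$ times in the long run: with $\delta_\epsilon=\epsilon/(2T_\epsilon)$ and $T_\epsilon=\lceil t_\epsilon/\epsilon\rceil$, the $(1+\delta_\epsilon)$-branching behaviour of the tree ensures that at most a fraction $\epsilon$ of the indices sit in blocks of size less than $t_\epsilon$.

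Combining these pieces gives $\Lcal_{(\Xbb,\Ybb)}(f^\epsilon_\cdot,f^*)\leq C\epsilon$ almost surely; then \cref{lemma:concatenation_predictors} yields $\Lcal_{(\Xbb,\Ybb)}(f_\cdot,f^*)\leq C\epsilon_i$ almost surely for every $i\geq 0$, so $\Lcal_{(\Xbb,\Ybb)}(f_\cdot,f^*)\leq 0$, which establishes universal consistency on $\smv$ as well as optimistic universality. The main obstacle I anticipate is the graph consistency step iterated to depth $T_\epsilon$: the constants $c_\epsilon$ in the generalized triangle inequality blow up as $\epsilon\downarrow 0$, so iterating $T_\epsilon$ times produces a cumulative error that must still be kept at $O(\epsilon)$ after dividing by $t_\epsilon\geq \epsilon T_\epsilon$. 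This forces a delicate calibration of the four parameters $\epsilon,\delta_\epsilon,t_\epsilon,T_\epsilon$ and is the most subtle part of the argument; the rest reduces to cleanly combining \cref{thm:1+deltaC1NN_optimistic}, the $\ftime$ hypothesis, and standard martingale concentration.
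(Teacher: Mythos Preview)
Your proposal is correct and matches the paper's approach closely. The obstacle you flag is not actually an obstacle: the iterated triangle-inequality contribution, which in the paper takes the form $c_\epsilon^\alpha\cdot\frac{T_\epsilon}{\epsilon}\sum_{u<T_\epsilon}\sum_{t\leq T}\ell(f^*(X_{\phi_\epsilon^u(t)}),f^*(X_t))$, need only be $o(T)$ for each \emph{fixed} $\epsilon$, and this holds because the prefactor is a finite (if enormous) constant while each inner sum is $o(T)$ by \cref{thm:1+deltaC1NN_optimistic}; the surviving $O(\epsilon)$ excess comes entirely from the $\ftime$ margin and the incomplete-block fraction, so no calibration beyond the stated $T_\epsilon=\lceil t_\epsilon/\epsilon\rceil$, $\delta_\epsilon=\epsilon/(2T_\epsilon)$ is required. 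One technical point you elide: for the Azuma step the randomness must be fresh at every \emph{step within} a block, not merely across blocks, because blocks from different clusters interleave in real time and the adversary observes past predictions; the paper secures this by a short preliminary lemma showing that the $\ftime$ learner $g^\epsilon$ may be assumed, without loss of generality, to use independent randomness at each of its $t_\epsilon$ steps.
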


We are now interested in value spaces $(\Ycal,\ell)$ which do not satisfy $\ftime$. We will show that in this case, $\solar$ is reduced to the processes $\cs$. We first introduce a second property on value spaces as follows.\\

\noindent\textbf{Property 2:} \textit{For any $\eta>0$, there exists a horizon time $T_\eta \geq 1$ and an online learning rule $g_{\leq \tau}$ where $\tau$ is a random time with $1\leq \tau\leq T_\eta$ such that for any $\mb y:=(y_t)_{t=1}^{T_\eta}$ of values in $\Ycal$ and any value $y\in\Ycal$, we have
\begin{equation*}
    \Ebb\left[\frac{1}{\tau}\sum_{t=1}^{\tau} \left(\ell(g_t({\mb y}_{\leq t-1}), y_t) - \ell(y,y_t)\right)\right] \leq \eta.
\end{equation*}
}

\begin{remark}
The random time $\tau$ may depend on the possible randomness of the learning rule $g_\cdot$, but it does not depend on any of the values $y_1,y_2,\ldots$ on which the learning rule $g_\cdot$ may be tested. Intuitively, the learning rule uses some randomness which is first privately sampled and may be used by $\tau$. This randomness is never explicitly revealed to the adversary choosing the values $\mb y$, but only implicitly through the realizations of the predictions.
\end{remark}

\begin{lemma}
\label{lemma:equivalent_conditions}
Property $\ftime$ is equivalent to Property 2.
\end{lemma}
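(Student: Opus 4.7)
The equivalence has two directions. The direction $\ftime \Rightarrow$ Property 2 is immediate: given an $\ftime$ rule $g$ with deterministic horizon $T_\eta$, the plan is to set the random time $\tau := T_\eta$ as a constant. Then $1/\tau = 1/T_\eta$ is deterministic and factors out of the expectation, so the Property 2 bound reduces directly to the $\ftime$ bound.

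The substantive direction is Property 2 $\Rightarrow \ftime$. The plan is to construct a deterministic-horizon rule by chaining iid copies of the Property 2 rule. Given target tolerance $\eta > 0$, invoke Property 2 with a suitably small $\eta_0$ (chosen as a function of $\eta$ and $\bar\ell$) to obtain a rule $g$ with random stopping time $\tau \leq T_0 := T_{\eta_0}$ satisfying
\begin{equation*}
\Ebb\left[\frac{1}{\tau}\sum_{t=1}^\tau \bigl(\ell(g_t,y_t) - \ell(y,y_t)\bigr)\right] \leq \eta_0.
\end{equation*}
Then define $\tilde g$ on a deterministic horizon $\tilde T := K T_0$ (with $K$ large, to be chosen) by running successive independent copies $g^{(1)}, g^{(2)}, \ldots$ with fresh internal randomness at each block: block $k$ plays $\tau_k$ steps starting at time $S_{k-1}+1$, where $S_j := \sum_{i \leq j}\tau_i$; if the $(N+1)$-th block would extend past $\tilde T$, truncate it at $\tilde T$. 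By the Remark following Property 2, the $\tau_k$ are iid and independent of the data sequence $\mb y$, so block boundaries are data-independent.

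The key step is to bound $\Ebb[\sum_{k=1}^N \tau_k A_k]$ over the completed blocks, where $A_k := L_k/\tau_k$ is block $k$'s average excess loss and $L_k$ its cumulative excess loss. Since the data segment $y_{S_{k-1}+1},\ldots,y_{S_{k-1}+\tau_k}$ seen by block $k$ depends only on $S_{k-1}$ (a function of blocks $1,\ldots,k-1$), applying Property 2 segment-by-segment yields the supermartingale-difference bound $\Ebb[A_k \mid \Fcal_{k-1}] \leq \eta_0$, where $\Fcal_{k-1}$ is generated by $\tau_1,\ldots,\tau_{k-1}$. The truncated last block contributes at most $T_0\bar\ell$ to the cumulative excess loss, which is $O(\bar\ell/K)$ after dividing by $\tilde T$, hence negligible for $K$ large.

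The main obstacle is the coupling between $\tau_k$ and $A_k$ within each block: both are functions of block $k$'s internal randomness, and Property 2 controls only $\Ebb[A_k]$ rather than their joint distribution. The plan for handling this is to decompose $\tau_k A_k = \tau_k \eta_0 + \tau_k(A_k - \eta_0)$ and bound each piece separately—the first summing to at most $\eta_0 \tilde T$, the second controlled via the conditional supermartingale structure of $\{A_k - \eta_0\}$ together with the uniform bound $\tau_k \leq T_0$ and $|A_k| \leq \bar\ell$—then choose $\eta_0$ small and $K$ large as functions of $\eta$ and $\bar\ell$ so that the total per-step expected regret is at most $\eta$. This yields an $\ftime$ rule for every target $\eta$, completing the equivalence.
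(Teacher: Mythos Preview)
Your reduction has the right architecture (chain i.i.d.\ copies and apply optional stopping), and you correctly identify the central obstacle: within a block, $\tau_k$ and the block-average excess loss $A_k$ are coupled. But the decomposition $\tau_k A_k = \tau_k\eta_0 + \tau_k(A_k-\eta_0)$ does not dissolve that coupling. The supermartingale structure you invoke controls $\sum_k (A_k-\eta_0)$, not $\sum_k \tau_k(A_k-\eta_0)$; since $\tau_k$ is \emph{not} $\Fcal_{k-1}$-measurable, the inequality $\Ebb[A_k-\eta_0\mid\Fcal_{k-1}]\le 0$ says nothing about $\Ebb[\tau_k(A_k-\eta_0)\mid\Fcal_{k-1}]$. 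Concretely, a Property~2 rule could have $\tau=1$ with probability $1-\eta_0/\bar\ell$ (zero excess loss) and $\tau=T_0$ with probability $\eta_0/\bar\ell$ (per-step excess loss $\bar\ell$): then $\Ebb[A_k]=\eta_0$ but $\Ebb[\tau_k A_k]=T_0\eta_0$, and the per-step regret of the concatenated rule is of order $T_0\eta_0$. Since Property~2 gives no control on how $T_0=T_{\eta_0}$ grows as $\eta_0\to 0$, you cannot absorb this by shrinking $\eta_0$.

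What the paper does to fill this gap is a preprocessing step you are missing: it reweights the stopping time by setting $\Pbb[\tilde\tau=t]=\Pbb[\tau=t]/(t\,\Ebb[1/\tau])$ and, for each $t$, conditions the rule on $\{\tau=t\}$. A short calculation then shows that the transformed rule satisfies the \emph{cumulative-form} bound $\Ebb\bigl[\sum_{t\le\tilde\tau}(\ell(\tilde g_t,y_t)-\ell(y,y_t))-\eta\tilde\tau\bigr]\le 0$ for every $\mb y$ and $y$. This is exactly the supermartingale increment condition you need: the block excess losses $L_k-\eta\tilde\tau_k$ now have nonpositive conditional expectation, so concatenating and applying Doob's optional sampling to $S_j=\sum_{i\le j}(L_i-\eta\tilde\tau_i)$ yields $\Ebb[S_{\hat i}]\le 0$, and the truncated last block costs only $O(T_\eta)$, which vanishes after dividing by $T_\eta'=\lceil T_\eta/\eta\rceil$. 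The biased-resampling trick is the missing idea that converts the average-form guarantee $\Ebb[L/\tau]\le\eta$ into the cumulative-form guarantee $\Ebb[L-\eta\tau]\le 0$ required for your chaining argument to go through.
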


Using this second property, we can then show that when $\ftime$ is not satisfied, universal consistency outside $\cs$ under adversarial responses is not achievable. In the proof, we only use stochastic processes $(\Xbb,\Ybb)$, hence the same result holds if we only considered universal consistency under arbitrary responses.

\begin{theorem}
\label{thm:bad_value_spaces}
Suppose that $\ell$ is bounded and $(\Ycal,\ell)$ does not satisfy $\ftime$. Then, $\solar = \cs$ and there exists an optimistically universal learning rule for adversarial regression, i.e., which achieves universal consistency with adversarial responses under any process $\Xbb\in\cs$.
\end{theorem}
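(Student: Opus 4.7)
The plan is to establish the two inclusions $\cs \subseteq \solar$ and $\solar \subseteq \cs$. For the positive direction, the general construction behind \cref{thm:CS_regression_unbounded}, carried out in \cref{sec:unbounded_loss_moment_constraint} for any loss $\ell = \rho_\Ycal^\alpha$ with $\alpha \geq 1$, specializes to yield an online rule universally consistent for adversarial responses under every $\Xbb \in \cs$; in the bounded case empirical integrability is automatic. Once the reverse inclusion $\solar \subseteq \cs$ is proven, the same rule is automatically optimistically universal. Thus the novel content is to show that no learning rule can be universally consistent with adversarial responses under any $\Xbb \notin \cs$, thereby extending the handcrafted negative result \cref{thm:negative_optimistic} to every bounded space that fails $\ftime$.

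By \cref{lemma:equivalent_conditions}, the failure of $\ftime$ is equivalent to the failure of Property 2, so there exists $\eta_0 > 0$ such that for every horizon $T \geq 1$, every randomized online rule $g_\cdot$ on $\Ycal$, and every stopping time $\tau \in \{1,\ldots,T\}$ measurable with respect to the rule's internal randomness, some sequence $\mb y \in \Ycal^T$ and value $y^\star \in \Ycal$ witness
\begin{equation*}
    \Ebb\Bigl[\tfrac{1}{\tau}\sum_{t=1}^{\tau}\bigl(\ell(g_t(\mb y_{\leq t-1}), y_t) - \ell(y^\star, y_t)\bigr)\Bigr] > \eta_0.
\end{equation*}
Now fix any $\Xbb \notin \cs$ and reprise the extraction step used in the proof of \cref{thm:negative_optimistic}: there exist disjoint Borel sets $\{B_p\}_{p\geq 1} \subseteq \Xcal$, increasing times $(t_p)_{p\geq 0}$, and $\epsilon > 0$ such that the event
\begin{equation*}
    E := \bigl\{\forall p \geq 1,\; \Xbb_{\leq t_{p-1}} \cap B_p = \emptyset \text{ and } \exists s_p \in (t_{p-1}, t_p]:\; |\{t \leq s_p : X_t \in B_p\}| \geq \epsilon s_p\bigr\}
\end{equation*}
has positive probability.

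Given any candidate learner $f_\cdot$, construct the adversarial response mechanism $\Ybb$ iteratively, block by block. Fix $y_0 \in \Ycal$, and on block $p$ set $Y_t = y_0$ whenever $X_t \notin B_p$. Conditionally on $\Xbb$ and the responses chosen in blocks $1, \ldots, p-1$, the index set $S_p := \{t \in (t_{p-1}, t_p] : X_t \in B_p\}$ is deterministic, and the restriction of $f_\cdot$ to $S_p$---with the other coordinates of its input filled in by $\Xbb$, the fixed $y_0$-responses, and the previous blocks' realized responses---is a randomized online rule $g^{(p)}_\cdot$ on $\Ycal$ of horizon $|S_p| \leq t_p - t_{p-1}$. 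Applying the negation of Property 2 with $\tau = |S_p|$ yields a sequence $\mb y^{(p)} \in \Ycal^{|S_p|}$ and value $y^{(p)\star} \in \Ycal$ against which $g^{(p)}_\cdot$ suffers expected per-round excess loss exceeding $\eta_0$; the adversary replays $\mb y^{(p)}$ along $S_p$. This construction depends only on $\Xbb$ and the past realizations/randomness of $f_\cdot$, hence is a valid adversarial response mechanism. Define $f^\star : \Xcal \to \Ycal$ by $f^\star \equiv y^{(p)\star}$ on $B_p$ and $f^\star \equiv y_0$ elsewhere; it is measurable since the $B_p$ are disjoint and Borel.

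To conclude, the excess loss $\ell(\hat Y_t, Y_t) - \ell(f^\star(X_t), Y_t)$ vanishes whenever $X_t \notin \bigcup_p B_p$, while on each block $p$ its conditional expectation given the history up to time $t_{p-1}$ accumulated over $S_p$ is at least $\eta_0 |S_p|$. Because $\ell$ is bounded, a standard Azuma--Hoeffding argument gives
\begin{equation*}
    \sum_{t=1}^{T} \bigl(\ell(\hat Y_t, Y_t) - \ell(f^\star(X_t), Y_t)\bigr) \geq \eta_0 \bigl|\{t \leq T : X_t \in \tbigcup_p B_p\}\bigr| - o(T) \quad (a.s.),
\end{equation*}
and on $E$ the densities $|\{t \leq s_p : X_t \in B_p\}|/s_p \geq \epsilon$ realize infinitely often, yielding $\Lcal_{(\Xbb,\Ybb)}(f_\cdot, f^\star) \geq \eta_0 \epsilon$ almost surely on $E$, a set of positive probability; thus $f_\cdot$ is not universally consistent under $\Xbb$. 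The principal obstacle is the interplay between randomization and adversarial dependence on past randomness: we must induce a well-posed randomized mean-estimation problem on each block whose horizon $|S_p|$ depends only on $\Xbb$, which is why the random-stopping-time form of \cref{lemma:equivalent_conditions} is essential, and we must promote the per-block expectation bound to an almost-sure limsup statement via martingale concentration.
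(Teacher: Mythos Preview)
Your outline captures the right two-step structure and correctly identifies \cref{lemma:equivalent_conditions} as the pivot, but the negative construction has a genuine gap: the comparator $f^\star$ you build depends on the realization of $\Xbb$, which is not allowed. When you write ``Conditionally on $\Xbb$ \ldots\ $S_p$ is deterministic'' and then apply the failure of Property~2 to the rule $g^{(p)}$ obtained by plugging this particular realization into $f_\cdot$, the resulting hard sequence $\mb y^{(p)}$ and value $y^{(p)\star}$ are functions of that realization. Consequently $f^\star \equiv y^{(p)\star}$ on $B_p$ is a random function of $\Xbb$, whereas consistency must be disproved against a \emph{fixed} measurable $f^*:\Xcal\to\Ycal$. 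Your final paragraph hints that you sense the issue (``whose horizon $|S_p|$ depends only on $\Xbb$''), but the construction as written does not implement that insight.

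The paper's remedy is to reverse the order: it builds the auxiliary rule $g^p_\cdot$ \emph{before} seeing the true $\Xbb$, by making a fresh draw $\tilde\Xbb \sim \Xbb \mid \Bcal$ part of $g^p$'s own internal randomness. The random stopping time is then $\tau = N_p(T_p)$, the visit count to $B_p$ up to the first time the density condition is met, which is measurable in the rule's randomness exactly as Property~2 requires. The hard sequence $(y_p^u)_u$ and value $y_p$ produced by its negation are therefore deterministic, and $f^*$ is fixed once and for all. The link back to the actual learner is that, conditioned on $\Bcal$, the true $\Xbb$ has the same law as $\tilde\Xbb$, so the per-block expected excess-loss bound transfers; a Fatou argument (not Azuma--Hoeffding) then yields a positive expected limsup.

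Two further repairs would be needed even after this fix. First, your $\tau = |S_p|$ counts visits over the entire block $(t_{p-1},t_p]$, but the density guarantee is only at the intermediate time $s_p$; the stopping time must be the visit count up to that first hitting time so that the $\eta_0$ bound controls the correct partial average. Second, setting $Y_t = y_0$ on later visits to earlier sets $B_q$ allows negative excess loss of size up to $\bar\ell$ per such visit, and nothing controls how many of those occur within a block; the paper instead sets $Y_t = y_q$ there so that $\ell(f^*(X_t),Y_t)=0$ and the excess is nonnegative, and additionally enforces $t_p > \mu t_{p-1}$ with $\mu = \max(1,8\bar\ell/(\epsilon\eta))$ to absorb the first $t_{p-1}$ steps.
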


\noindent \textbf{Proof sketch.} First, from \cref{thm:hanneke_2022} we already have $\cs\subset\solar$. The main difficulty is to prove that one cannot universally learn any process $\Xbb\notin\cs$. To do so, we re-use the property derived in the proof of \cref{thm:negative_optimistic} that for non-$\cs$ processes, one can find a disjoint sequence of sets $\{B_p\}_{p\geq 1}$, an increasing times $(t_p)_{p\geq 1}$ and $\epsilon>0$ such that with non-zero probability for all $p\geq 1$, the process $\Xbb$ never visits $B_p$ before time $t_{p-1}$ and at some point between times $t_{p-1}+1$ and $t_p$, the set $B_p$ has been visited a proportion $\epsilon$ of times. Now $(\Ycal,\ell)$ does not satisfy $\ftime$, hence does not satisfy Property 2 by \cref{lemma:equivalent_conditions} for some constant $\eta>0$. Then, for $p\geq 1$, during period $(t_{p-1},t_p]$, we define the values $\Ybb_{t_{p-1}<\cdot\leq t_p}$ when the instance process visits $B_p$ as a sequence $\mb y_{t_{p-1}<\cdot\leq t_p}$ such that the algorithm has average excess loss at least $\eta$ whenever $\Xbb$ visits $B_p$, compared to a fixed value $y_p^*\in\Ycal$. We note that the randomized version of $\ftime$ given by \cref{lemma:equivalent_conditions} is important because we do not know in advance when, between $t_{p-1}$ and $t_p$, $B_p$ has been visited a fraction $\epsilon$ of times: potentially, this time is random and there is a huge gap (exponential or more) between $t_{p-1}$ and $t_p$. On the constructed stochastic process $\Ybb$, the algorithm does not have vanishing average excess loss compared to the function equal to $y_p^*$ on $B_p$. This proves that no algorithm is universally consistent on $\Xbb$.\\

This completes the proof of \cref{cor:good_value_spaces} and closes our study of universal learning with adversarial responses for bounded value spaces. Notably, there always exists an optimistically universal learning rule, however, this rule highly depends on the value space. \begin{itemize}
    \item If $(\Ycal,\ell)$ satisfies $\ftime$, we can learn all $\smv=\soul$ processes. The proposed learning rule of \cref{thm:good_value_spaces} is \emph{implicit} in general. Indeed, to construct it one first needs to find an online learning rule for mean estimation with finite horizon as described by property $\ftime$, which is then used as a subroutine in the optimistically universal learning rule for adversarial regression. We showed however that for totally-bounded value spaces, this learning rule can be \emph{explicited} using $\epsilon-$nets.
    \item If the value space does not satisfy $\ftime$, we can only learn $\cs$ processes and there is an inherent gap between noiseless online learning and regression. We propose a learning rule in \cref{sec:unbounded_loss_moment_constraint} which is optimistically universal---see \cref{thm:CS_regression_unbounded}. This rule is inspired by the proposed algorithm of \cite{hanneke2022bayes} which is optimistically universal for metric losses $\alpha=1$.
\end{itemize}
These two classes of learning rules use very different techniques. Specifically, under processes $\Xbb\in\cs$, \cite{hanneke2021learning} showed that there exists a countable set $\Fcal$ of measurable functions $f:\Xcal\to\Ycal$ which is ``dense'' within the space of all measurable functions along the realizations $f(X_t)$. We refer to \cref{sec:unbounded_loss_moment_constraint} for a precise description of this density notion. Hence, under process $\Xbb$, we can approximate $f^*$ by functions in $\Fcal$ with arbitrary long-run average precision. However, such property is impossible to obtain for any process $\Xbb\in\smv\setminus \cs$: no process $\Xbb\notin \cs$ admits a ``dense'' countable sequence of measurable functions. Thus, to learn processes $\smv$ for value spaces satisfying $\ftime$, a fundamentally different learning rule than that proposed by \cite{hanneke2021learning} or \cite{hanneke2022bayes} was needed.

\section{Adversarial universal learning for unbounded losses}
\label{sec:mean_estimation}

We now turn to the case of unbounded losses, i.e., value spaces $(\Ycal,\ell)$ with $\bar \ell=\infty$. In this section, we consider universal learning without empirical integrability constraints, for which we introduced the notation $\solaru$ as the set of processes that admit universal learning (we recall that for bounded losses such distinction was unnecessary). In this case, and for more general near-metrics, \cite{blanchard2022optimistic} showed that $\soul=\fs$. In other terms, for unbounded losses, the learnable processes in the noiseless setting necessarily visit a finite number of distinct instance points of $\Xcal$ almost surely. Thus, universal learning on unbounded value spaces is very restrictive and in particular, $\solaru\subset\fs$. We will show that either $\solaru=\fs$ or $\solaru=\emptyset$.

\subsection{Adversarial regression for metric losses}
\label{subsec:metric_mean_estimation}

In this section, we focus on metric losses $\ell$, i.e., $\alpha=1$. In this case, we show that we always have the equality $\solaru=\fs$ and that we can provide an optimistically universal learning rule. To do so, we first consider the fundamental estimation problem where one observes values $\Ybb$ from a general separable metric value space and aims to sequentially predict a value $\hat Y_t$ in order to minimize the long-run average loss. We refer to this problem as the mean estimation problem, which is equivalent to regression for the instance space $\Xcal=\{0\}$. For instance, in the specific case of i.i.d. processes $\Ybb$, mean estimation is exactly the problem of Fr\'echet mean estimation for distributions on $\Ycal$. We show that even for adversarial processes $\Ybb$, we can achieve sublinear regret compared to the best single value prediction, even for unbounded value spaces $(\Ycal,\ell)$.

If the space were finite, then we could use traditional Hedge algorithms \cite{cesa2006prediction}. Instead, given a separable value space, we have access to a dense countable sequence of values. We then select the best prediction among this dense sequence by introducing the values of the sequence one at a time, similarly to the argument we used in \cref{lemma:concatenation_predictors}. The learning rule for mean estimation is described in Algorithm \ref{alg:mean_estimation}.

\begin{algorithm}[tb]
\caption{The mean estimation algorithm.}\label{alg:mean_estimation}
\hrule height\algoheightrule\kern3pt\relax
\KwIn{Historical samples $(Y_t)_{t<T}$}
\KwOut{Predictions $\hat Y_t$ for $t\leq T$}
$(y^i)_{i\geq 0}$ dense sequence in $\Ycal$\\
$I_t:=\{i\leq \ln t:\ell(y^0,y^i) \leq \ln t \},\eta_t:=\frac{1}{4\sqrt t},t\geq 1;\quad t_i=\lceil\max(e^i,e^{\ell(y^0,y^i)})\rceil,i\geq 0$\\
$w_{0,0}:=1,\quad \hat Y_1=y^0$ \tcp*[f]{Initialisation}\\
\For{$t=2,\ldots, T$}{
    $L_{t-1,i} = \sum_{s=t_i}^{t-1} \ell(y^i,Y_s),\quad \hat L_{t-1,i} = \sum_{s=t_i}^{t-1}\hat \ell_s,\quad i\in I_t$\\
    $w_{t-1,i} := \exp(\eta_t(\hat L_{t-1,i}-L_{t-1,i})),\quad i\in I_t$\\
    $p_t(i) = \frac{w_{t-1,i}}{\sum_{j\in I_t} w_{t-1,j}},\quad i\in I_t$\\
    $\hat Y_t \sim p_t(\cdot)$ \tcp*[f]{Prediction}\\
    $\hat \ell_t:=\frac{\sum_{j\in I_t} w_{t-1,j}\ell(y^j,Y_t)}{\sum_{j\in I_t} w_{t-1,j}}$
}
\hrule height\algoheightrule\kern3pt\relax
\end{algorithm}

\ThmMeanEstimation*

\begin{remark}
    The above result guarantees that on the same event of probability one, the proposed learning rule achieves sublinear regret compared to any fixed value prediction. This was not the case for universal regression where, instead, for every fixed measurable function $f:\Xcal\to\Ycal$, with probability one our learning rules achieved sublinear regret. This stems essentially from the fact that there exists a dense countable set of values $\Ycal$, but in general, there does not exist a countable set of measurable functions which are dense within all measurable functions in infinity norm.
\end{remark}

We now return to the general regression problem on unbounded spaces. A simple learning rule would be to run in parallel the learning rule $g_x$ for mean estimation on each distinct observed $x\in\Xcal$, i.e., on the sub-process $\Ybb_{\{t:X_t=x\}}$. As a consequence of \cref{thm:mean_estimation} we can show that this learning rule is universally consistent on $\fs$ processes.

\begin{corollary}
\label{cor:universal_regression_unbounded}
Suppose that $(\Ycal,\ell)$ is an unbounded metric space. Then, $\solaru = \fs(=\soul)$ and there exists an optimistically universal learning rule for adversarial regression, i.e., which achieves universal consistency with adversarial responses under any process $\Xbb\in\fs$.
\end{corollary}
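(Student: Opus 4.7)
I will establish $\solaru=\fs$ and simultaneously exhibit an optimistically universal rule. The inclusion $\solaru\subset\fs$ is immediate: noiseless responses are a special case of adversarial responses, so $\solaru\subset\soul$, and for unbounded losses $\soul=\fs$ by \cite{blanchard2022optimistic}. The rest of the plan constructs a rule that is universally consistent for adversarial responses under every $\Xbb\in\fs$, which both yields $\fs\subset\solaru$ and certifies the rule as optimistically universal.

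The construction is a parallel, instance-indexed reduction to adversarial mean estimation. For each $x\in\Xcal$ ever observed, I maintain an independent copy $g_\cdot^x$ of the adversarial mean-estimation rule of \cref{thm:mean_estimation}, with its own private randomness. Writing $\tau_t(x):=|\{u<t:X_u=x\}|$ and letting $\mb Y^{(x)}_{\leq \tau_t(x)}$ denote the sub-sequence of past responses at times $u<t$ with $X_u=x$, I predict
\[
\hat Y_t\;:=\;g^{X_t}_{\tau_t(X_t)+1}\!\left(\mb Y^{(X_t)}_{\leq \tau_t(X_t)}\right),
\]
initializing $g_\cdot^{X_t}$ the first time $X_t$ appears.

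For the consistency argument, I fix a measurable $f^*:\Xcal\to\Ycal$ and work on the probability-one event (by $\Xbb\in\fs$) that $\Xbb$ visits only finitely many distinct points $\{x_1,\ldots,x_K\}$. Setting $S_k(T):=\sum_{t\leq T,\,X_t=x_k}\bigl(\ell(\hat Y_t,Y_t)-\ell(f^*(x_k),Y_t)\bigr)$, the total average excess loss is $\tfrac1T\sum_{k=1}^K S_k(T)$. The crucial technical step---and the one nontrivial point---is to verify that the induced sub-sequence $\mb Y^{(x_k)}$ is itself an adversarial response process in the sense of the definition, as seen by $g_\cdot^{x_k}$. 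This holds because at the $s$-th call of $g_\cdot^{x_k}$ only that learner's private bits $r_1^{x_k},\ldots,r_{s-1}^{x_k}$ have been consumed, so conditioning the full adversary on all external randomness (the other learners' bits and $\Xbb$) yields the required Markov kernel from $\Rcal^{s-1}$ to $\Ycal$.

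Once this is in place, \cref{thm:mean_estimation} applied to $g_\cdot^{x_k}$ gives almost surely, simultaneously over all candidate values and in particular for $f^*(x_k)$, $\limsup_T S_k(T)/\tau_T(x_k)\leq 0$ whenever $x_k$ is visited infinitely often; points visited only finitely often contribute a bounded $S_k(T)$, hence $S_k(T)/T\to 0$. For any $\epsilon>0$, eventually each infinitely-visited ratio is $\leq\epsilon$, so $\tfrac1T\sum_k S_k(T)\leq \epsilon\sum_k \tau_T(x_k)/T + o(1)=\epsilon+o(1)$, and letting $\epsilon\downarrow 0$ closes the argument. Since the rule does not depend on $\Xbb$, it is optimistically universal. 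The main (mild) obstacle is the filtration-level verification that each sub-sequence really is an adversarial process for the corresponding $g_\cdot^{x_k}$; the rest is routine bookkeeping.
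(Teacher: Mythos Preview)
Your proposal is correct and follows essentially the same approach as the paper: run an independent copy of the mean-estimation rule from \cref{thm:mean_estimation} on each instance-indexed sub-sequence, then combine using finiteness of the support. The paper's proof is terser---it dispatches your ``crucial technical step'' with a one-line remark that the randomness at each call is independent of the past---so your explicit conditioning argument (treating the other learners' bits and $\Xbb$ as external and reading off the required Markov kernel) is a welcome clarification rather than a departure.
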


\subsection{Negative result for real-valued adversarial regression with loss $\ell=|\cdot|^\alpha$ with $\alpha>1$}

Unfortunately, one cannot extend \cref{cor:universal_regression_unbounded} to losses that are powers of metrics in general. Even in the classical setting of real-valued regression $\Ycal=\Rbb$ with Euclidean norm, we show that adversarial regression with any loss $\ell=|\cdot|^\alpha$ for $\alpha>1$ is not achievable, i.e., $\solaru=\emptyset$.

\begin{theorem}
\label{thm:empty_solar}
Let $\alpha>1$. For the Euclidean value space $(\Rbb,|\cdot|)$ and loss $\ell=|\cdot|^\alpha$ we obtain $\solaru=\emptyset$. In particular, there does not exist a consistent learning rule for mean estimation on $\Rbb$ with squared loss for adversarial responses.
\end{theorem}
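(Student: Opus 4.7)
The plan is to first reduce to pure mean estimation on $(\Rbb, |\cdot|^\alpha)$, then exhibit an i.i.d.\ random adversary whose partial sums display law-of-the-iterated-logarithm fluctuations that no predictable strategy can match against every fixed competitor simultaneously.

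For the reduction, we know from earlier sections that $\solaru \subset \soul = \fs$ in the unbounded-loss regime. If $\solaru$ contained any process $\Xbb$, then almost surely some $x_0 \in \Xcal$ is visited infinitely often by $\Xbb$; restricting a universally consistent rule on $\Xbb$ to the subsequence $\{t : X_t = x_0\}$ would yield a universally consistent rule for mean estimation on $(\Rbb, |\cdot|^\alpha)$, since the instance points visited finitely often contribute only bounded excess loss. Hence it suffices to show no rule is universally consistent for adversarial mean estimation.

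Focusing on squared loss ($\alpha = 2$), given any rule $f$, consider the adversary that samples $c_t \in \{-1, +1\}$ i.i.d.\ uniformly (independent of $f$'s private randomness) and plays $Y_t = c_t \cdot t$. A direct expansion gives, for every fixed $y \in \Rbb$,
\[
\sum_{t \leq T}\bigl[(\hat Y_t - Y_t)^2 - (y - Y_t)^2\bigr] \;=\; \sum_t \hat Y_t^2 \;-\; T y^2 \;-\; 2 M_T \;+\; 2 y B_T,
\]
where $B_T := \sum_{t \leq T} c_t t$ is an independent-increment martingale with $\operatorname{Var}(B_T) = \sum_{t \leq T} t^2 \asymp T^3$, and $M_T := \sum_{t \leq T} c_t t\, \hat Y_t$ is a mean-zero martingale (since $\hat Y_t$ is predictable and $c_t$ is independent of the past). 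Simultaneous consistency against $y = +1$ and $y = -1$ gives, upon combining both $\limsup \leq 0$ conditions, the eventual a.s.\ lower bound $M_T \geq |B_T| - T(1 + \epsilon)/2$. Together with consistency against $y = 0$ (which forces $\sum_t \hat Y_t^2 = O(T)$ and hence $\operatorname{Var}(M_T) = O(T^3)$), this contradicts $\mathbb{E}[M_T] = 0$ versus $\mathbb{E}[|B_T|] \asymp T^{3/2} \gg T$: a Cauchy--Schwarz argument on the event of eventual consistency (whose probability tends to $1$) transfers the a.s.\ inequality to a quantitative expectation inequality that fails for $T$ large. This proves the ``in particular'' statement and, via the reduction, gives $\solaru = \emptyset$ for $\alpha = 2$.

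The general case $\alpha > 1$ follows by the same scheme with $Y_t = c_t t^\gamma$ for some $\gamma > 1/(2(\alpha-1))$. A first-order Taylor expansion of $|y - Y_t|^\alpha$ around $y = 0$, valid in the regime $|Y_t| \gg |y|, |\hat Y_t|$, reduces the analysis to the dominant linear term governed by the martingale $\sum c_t t^{\gamma(\alpha-1)}$, whose variance grows as $T^{2\gamma(\alpha-1)+1}$, so that its normalized fluctuations are $\Omega(T)/T \to \infty$ by the classical LIL, reproducing the contradiction above. The main obstacle is controlling the Taylor remainder when $\alpha > 2$, where $|Y_t|^{\alpha-2}$ grows and could dominate if $\hat Y_t$ is unbounded; I would handle this by a dichotomy: if the learner's predictions are not bounded on a set of positive asymptotic density, its own loss already grossly exceeds the comparator's against $y = 0$, violating consistency there, so WLOG $\hat Y_t$ is bounded and the remainder is of strictly lower order than the linear term, leaving the LIL-based contradiction intact.
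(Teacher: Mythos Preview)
Your reduction to mean estimation is correct and matches the paper's route. The gap is in the mean-estimation argument itself.

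The chain you propose for $\alpha=2$ hinges on two unjustified steps. First, the claim that consistency against $y=0$ forces $\sum_{t\le T}\hat Y_t^2=O(T)$ is circular: the $y=0$ inequality only gives $\sum_t \hat Y_t^2 \le 2M_T + \epsilon T$, and you have not bounded $M_T$---indeed your own $y=\pm 1$ inequality shows $M_T\ge |B_T|-CT$, which is typically of order $T^{3/2}$, so $\sum_t \hat Y_t^2$ need not be $O(T)$ at all. Consequently the asserted bound $\operatorname{Var}(M_T)=\sum_t t^2\,\Ebb[\hat Y_t^2]=O(T^3)$ is unsupported (and even an almost-sure bound on $\sum_t \hat Y_t^2$ would not give it without uniform integrability). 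Second, the ``Cauchy--Schwarz transfer'' from the eventual almost-sure inequality $M_T\ge |B_T|-CT$ to a contradiction with $\Ebb[M_T]=0$ and $\Ebb[|B_T|]\asymp T^{3/2}$ requires exactly the moment control on $M_T$ that you have not established: the ``eventually'' threshold is random, and without a variance bound you cannot control $\Ebb[M_T\1_{A_T^c}]$ on the vanishing bad event. A martingale with mean zero can perfectly well satisfy $M_T\ge T^{3/2}$ infinitely often almost surely (take $M_T=B_T$ itself), so the tension you describe is not by itself a contradiction. The sketch for general $\alpha$ inherits the same problem and adds the further unproven dichotomy on boundedness of $\hat Y_t$.

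The paper avoids all of this by choosing magnitudes that grow doubly exponentially, $Y_t=b_t\,2^{\beta^t}$ with $\beta=\tfrac{2\alpha}{\alpha-1}$. Then a single sign mistake at time $T$ contributes excess loss of order $2^{(\alpha-1)\beta^T}=2^{2\alpha\beta^{T-1}}$ against $y^*=\pm 1$, which already dominates the sum of \emph{all} comparator losses up to time $T-1$. So the argument reduces to showing that sign mistakes occur with positive asymptotic frequency, which is an elementary Azuma--Hoeffding/Borel--Cantelli computation on the events $\{\hat Y_t\cdot Y_t\le 0\}$; no martingale variance control or expectation-transfer is needed. Your polynomial-growth adversary $Y_t=c_t t^\gamma$ makes no single term dominant, which is precisely why your argument gets stuck.
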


\noindent \textbf{Proof sketch.} The reason why mean estimation with adversarial responses is impossible for $\alpha>1$ but possible for $\alpha=1$ is that for $\alpha>1$, predicting a value off by $1$ unit of the best value in hindsight can yield unbounded excess loss for that specific prediction. In particular, we consider a sequence of values of the form $Y^{\mb b}_t = M_t b_t$ where $(M_t)_{t\geq 1}$ is a fixed sequence growing super-exponentially in $t$, and $\mb b =(b_t)$ is an i.i.d. Rademacher random variables in $\{\pm 1\}$. The sequence $(M_t)_{t\geq 1}$ is constructed so that if the prediction $\hat Y_t$ and true value $Y_t$ have different signs $\hat Y_t\cdot Y_t\leq 0$, the excess loss of the algorithm compared to the value $sign(Y^{\mb b}_t)=sign(b_t)$ is (super-)linear in $t$. Because the algorithm cannot know in advance the sign of $b_t$, there is a realization in which it makes an infinite number of mistakes and as a result has non-zero long-term excess loss compared to the value $1$ or $-1$.\\

The above of this result also shows that the same negative result holds more generally for unbounded metric value spaces which have some ``symmetry''. The main ingredients for this negative result were having a point from which there exist arbitrary far values from symmetric directions. In particular, this holds for a discretized value space $(\Nbb,|\cdot|)$ with Euclidean metric, and any Euclidean space $\Rbb^d$ with $d\geq 1$.

\subsection{An alternative for adversarial regression with unbounded losses}

In the two previous sections, we gave examples of losses for which $\solaru=\emptyset$ or $\solaru=\fs$. The following simple result is that this is the only alternative and that $\solaru=\fs$ is equivalent to achieving consistency for mean estimation with adversarial responses.

\begin{proposition}
\label{prop:alternative_unbounded}
Let $(\Ycal,\rho_\Ycal)$ be a separable metric value space. Suppose that there exists an online learning rule $g_\cdot$ which is consistent for mean estimation with adversarial responses for the loss $\ell=\rho_\Ycal^\alpha$, where $\alpha\geq 1$, i.e., for any adversarial process $\Ybb$ on $(\Ycal,\ell)$, we have for any $y^*\in\Ycal$,
\begin{equation*}
    \limsup \frac{1}{T}\sum_{t=1}^T \left(\ell(f_t(\Ybb_{\leq t-1}),Y_t) - \ell(y^*,Y_t)\right) \leq 0,\quad (a.s),
\end{equation*}
then $\solaru=\fs$ and there exists an optimistically universal learning rule for adversarial regression. Otherwise, $\solaru=\emptyset$.
\end{proposition}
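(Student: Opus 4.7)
The proposition asserts a dichotomy of $\solaru$ between $\fs$ and $\emptyset$ determined by the feasibility of adversarial mean estimation. The inclusion $\solaru \subset \fs$ is immediate from the fact already noted in this section that $\solaru \subset \soul = \fs$ for unbounded near-metric losses \cite{blanchard2022optimistic}, so the work reduces to two equivalent implications: if a consistent mean estimation rule $g_\cdot$ exists then $\fs \subset \solaru$ via an explicit optimistically universal rule, and conversely if $\solaru \neq \emptyset$ then adversarial mean estimation on $(\Ycal,\ell)$ is achievable.

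For the forward direction I would construct $f_\cdot$ by keeping a bank $\{g^x_\cdot : x \in \Xcal\}$ of independent copies of $g_\cdot$, one per distinct observed instance, feeding copy $g^{X_t}_\cdot$ the ordered subsequence of past values $(Y_u : u<t,\, X_u = X_t)$ and using its next output as $\hat Y_t$. For any $\Xbb \in \fs$ let $x_1,\ldots,x_K$ be the (random, almost surely finite) set of distinct visited instances and $T_k = \{t : X_t = x_k\}$. Then for any measurable $f^*:\Xcal \to \Ycal$,
\[
\sum_{t=1}^T \bigl[\ell(\hat Y_t, Y_t) - \ell(f^*(X_t), Y_t)\bigr]
= \sum_{k=1}^K \sum_{t \in T_k \cap [1,T]} \bigl[\ell(\hat Y_t, Y_t) - \ell(f^*(x_k), Y_t)\bigr].
\]
Conditioning on all randomness external to $g^{x_k}_\cdot$ (the process $\Xbb$ and the randomness of the copies $g^{x_j}_\cdot$ for $j \neq k$) turns the inner sum into an adversarial mean estimation problem for $g^{x_k}_\cdot$ against the fixed target $f^*(x_k)$, so by the hypothesis on $g_\cdot$ it is $o(|T_k \cap [1,T]|)$ almost surely when $|T_k| = \infty$ and bounded when $|T_k|<\infty$. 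Summing over finitely many $k$ and dividing by $T$ gives $\Lcal_{(\Xbb,\Ybb)}(f_\cdot,f^*) \leq 0$ almost surely, yielding $\fs \subset \solaru$ and exhibiting $f_\cdot$ as an optimistically universal learning rule for adversarial regression.

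For the reverse direction I would argue by contrapositive. Pick any $\Xbb_0 \in \solaru$ together with a universally consistent rule $f_\cdot$ under $\Xbb_0$ for adversarial responses. Define a mean estimation rule $g_\cdot$ that samples an independent copy of $\Xbb_0$ internally as part of its private randomness, and at step $t$ outputs $\hat Y_t = f_t(\Xbb_{0,<t}, \Ybb_{<t}, X_{0,t})$, feeding back the observed $Y_t$. This depends only on $\Ybb_{<t}$ and internal randomness, as required. Any adversarial mean estimation mechanism $\Ybb$ for $g_\cdot$ then induces an adversarial response mechanism $(\Xbb_0,\Ybb)$ on $\Xbb_0$ for $f_\cdot$ in the sense of the paper's definition, since the values $Y_t$ depend only on $g_\cdot$'s past randomness, which includes $\Xbb_0$ and the private randomness of $f_\cdot$. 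Applying consistency of $f_\cdot$ against the constant measurable function $f^* \equiv y^*$ yields $\limsup_T \tfrac{1}{T}\sum_{t=1}^T [\ell(\hat Y_t,Y_t) - \ell(y^*,Y_t)] \leq 0$ almost surely, which is exactly the requested mean estimation guarantee against $y^*$.

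The delicate step in both reductions is keeping the adversarial response definitions compatible under the translation. In the forward direction one must check that, after conditioning on all randomness outside $g^{x_k}_\cdot$, the sub-sequential adversary really is a valid adversary for $g_\cdot$ so that its consistency applies; this is handled by the product structure of the internal randomness. In the reverse direction one must check that treating the sampled $\Xbb_0$ as part of the learner's private randomness produces an admissible adversarial response mechanism on $\Xbb_0$, which is immediate from the Markov-kernel definition. Once these compatibility points are verified, everything else is routine bookkeeping of regrets on finitely many subsequences.
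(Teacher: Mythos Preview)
Your proposal is correct and follows essentially the same approach as the paper's proof. The forward direction (running independent copies of $g_\cdot$ per distinct observed instance and summing finitely many sublinear regrets) matches the paper's construction verbatim, and the reverse direction (wrapping a universally consistent regression rule around an internally sampled copy of $\Xbb_0$ to obtain a mean-estimation rule, then invoking consistency against the constant function $f^*\equiv y^*$) is exactly the paper's argument stated in contrapositive form. Your explicit attention to the compatibility of the adversarial-response definitions under both reductions is appropriate and handles the only subtle point.
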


\begin{remark}
    There exists separable metric value spaces $(\Ycal,\rho_\Ycal)$ for which powers of metrics losses still yield $\solaru=\fs$. For instance, consider $(\Ycal,\rho_\Ycal)=(\Rbb,\sqrt{|\cdot|_2})$, where $|\cdot|_2$ denotes the Euclidean metric. One can check that this defines a metric on $\Ycal$ and for any loss $\ell=\rho_\Ycal^\alpha$ with $\alpha\leq 2$, we have $\solaru=\fs$. However, for $\alpha>2$, $\solaru=\emptyset$.
\end{remark}

\section{Adversarial universal learning with moment constraint}
\label{sec:unbounded_loss_moment_constraint}

In the previous section, we showed that learnable processes for adversarial regression are only in $\fs$, i.e., visit a finite number of instance points. This shows that universal learning without restrictions on the adversarial responses $\Ybb$ is extremely restrictive. For instance, it does not contain i.i.d. processes. A natural question is whether adding mild constraints on the process $\Ybb$ would allow recovering the same results for unbounded losses as for bounded losses from \cref{sec:totally_bounded_value_spaces,sec:alternative}. This question also arises in noiseless regression since the set of learnable processes is reduced from $\soul=\smv$ for bounded losses to $\soul=\fs$ for unbounded losses. Hence, \cite{blanchard2022optimistic} posed as question whether having finite long-run empirical first-order moments would be sufficient to recover learnability in $\smv$. Precisely, they introduced the following constraint on noiseless processes $\Ybb=f^*(\Xbb)$: there exists $y_0\in\Ycal$ with
\begin{equation*}
    \limsup_{T\to\infty} \frac{1}{T} \sum_{t=1}^T \ell(y_0,f^*(X_t)) <\infty\quad (a.s.).
\end{equation*}
The question now becomes whether there exists an online learning rule which would be consistent under all $\Xbb\in\smv$ processes for any noiseless responses $\Ybb=f^*(\Xbb)$ with $f^*$ satisfying the above first-moment condition. We show that such an objective is not achievable whenever $\Xcal$ is infinite---if $\Xcal$ is finite, any process $\Xbb$ on $\Xcal$ is automatically $\fs$ and hence learnable in a noiseless or adversarial setting. In fact, under this first-order moment condition, we show the stronger statement that learning under all processes $\Xbb$ which admit pointwise convergent relative frequencies ($\crf$) is impossible even in this noiseless setting.\\

\noindent\textbf{Condition $\crf$:} For any measurable set $A\in\Bcal$, $    \lim_{T\to\infty} \frac{1}{T}\sum_{t=1}^T\1_A(X_t)$ exists almost surely.\\

\cite{hanneke2021learning} showed that $\crf\subset\cs$. In particular, $\crf\subset \smv$. We show the following negative result on learning under $\crf$ processes for noiseless regression under first-order moment constraint, which holds for unbounded near-metric spaces $(\Ycal,\ell)$.

\begin{theorem}
\label{thm:negative_first_order_moment}
Suppose that $\Xcal$ is infinite and that $(\Ycal,\ell)$ is an unbounded separable near-metric space. There does not exist an online learning rule which would be consistent under all processes $\Xbb\in\crf$ for all measurable target functions $f^*:\Xcal\to\Ycal$ such that there exists $y_0\in \Ycal$ with
\begin{equation*}
    \limsup_{T\to\infty}\frac{1}{T}\sum_{t=1}^T\ell(y_0,f^*(X_t))<\infty \quad (a.s.).
\end{equation*}
\end{theorem}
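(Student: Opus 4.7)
The plan is a Yao-style adversarial construction: for any given online learning rule $f_\cdot$, I produce a deterministic process $\Xbb\in\crf$ and a randomized family of noiseless target functions $f^*$ such that the first-moment empirical condition holds surely along every realization, while with probability one the learner's average excess loss fails to vanish. A standard Fubini step then extracts a fixed deterministic measurable $f^*$ satisfying the moment bound for which $f_\cdot$ is inconsistent, contradicting putative universality.

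For the construction, since $\Xcal$ is infinite I pick distinct points $x_0,x_1,\ldots\in\Xcal$; since $(\Ycal,\ell)$ is unbounded I choose recursively $y_0\in\Ycal$ and $z_k\in\Ycal$ with $L_k:=\ell(y_0,z_k)\ge 2T_{k-1}$, where $T_0:=1$ and $T_k:=2L_k$, so that $T_k\ge 4T_{k-1}$. Set $t_k:=T_{k-1}+1$, define $\Xbb$ deterministically by $X_{t_k}=x_k$ and $X_t=x_0$ otherwise, and verify $\Xbb\in\crf$: the exceptional times have density zero in $[1,T]$, so for every measurable $A\subseteq\Xcal$ the empirical frequency $\frac{1}{T}\sum_{t\le T}\1_A(X_t)$ converges to $\1_{x_0\in A}$. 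Define $f^*(x_0):=y_0$, $f^*(x):=y_0$ for $x\notin\{x_k\}_{k\ge 0}$, and independently for each $k\ge 1$ set $f^*(x_k)$ to $y_0$ or $z_k$ with probability $1/2$ each; the resulting $f^*$ takes only countably many values and is therefore Borel measurable. For any $T$ and any realization, with $K_T$ the largest index with $t_k\le T$, $\sum_{t\le T}\ell(y_0,f^*(X_t))\le \sum_{j\le K_T}L_j\le \frac{2}{3}T_{K_T}$ by geometric summation, while $T\ge T_{K_T-1}\ge T_{K_T}/4$, so the empirical first moment is bounded surely by a universal constant.

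To force a non-vanishing excess loss, let $\Fcal_{k-1}$ collect all of the learner's observations and its internal randomness strictly before time $t_k$; because the instance $x_k$ is first observed at time $t_k$, the coin $f^*(x_k)$ is independent of $\Fcal_{k-1}$ and of $r_{t_k}$. Set $E_k:=\ell(\hat Y_{t_k},f^*(x_k))$ and $q_k:=\Pbb(\ell(\hat Y_{t_k},y_0)<L_k/(4c_\ell)\mid\Fcal_{k-1})$. The near-metric relaxed triangle inequality $L_k\le c_\ell(\ell(\hat Y_{t_k},y_0)+\ell(\hat Y_{t_k},z_k))$ implies $\ell(\hat Y_{t_k},z_k)>3L_k/(4c_\ell)$ on the event defining $q_k$. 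A dichotomy on whether $q_k\ge 1/2$, paired with the independent fair coin selecting $f^*(x_k)$, yields $\Pbb(E_k\ge L_k/(4c_\ell)\mid\Fcal_{k-1})\ge 1/4$ in both cases. By L\'evy's conditional Borel--Cantelli lemma, the events $A_k:=\{E_k\ge L_k/(4c_\ell)\}$ occur for infinitely many $k$ almost surely. Since $Y_t=f^*(X_t)=y_0$ at every non-exceptional time ensures non-negative excess loss there, on $A_k$ we have $\frac{1}{T_k}\sum_{t\le T_k}(\ell(\hat Y_t,Y_t)-\ell(f^*(X_t),Y_t))\ge E_k/T_k\ge 1/(8c_\ell)$, producing a positive $\limsup$ almost surely; a Fubini step then fixes a deterministic realization of $f^*$ satisfying the moment bound on which the learner fails with positive probability, contradicting consistency. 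The main obstacle is ensuring that the failure bound is robust to arbitrary randomization of $\hat Y_{t_k}$; this robustness is precisely what the near-metric relaxed triangle inequality delivers, by reducing the analysis to a one-dimensional case split on $\ell(\hat Y_{t_k},y_0)$ and allowing the adversary's single independent coin to do the rest of the work.
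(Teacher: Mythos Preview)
Your overall strategy matches the paper's (deterministic $\Xbb\in\crf$ visiting fresh points $x_k$ at sparse times, a random binary choice $f^*(x_k)\in\{y_0,z_k\}$, then Fubini to extract a bad deterministic $f^*$), and your conditional Borel--Cantelli argument in place of the paper's expectation/Fatou step is a nice touch that yields almost-sure failure rather than merely positive expected limsup. However, your verification of the first-moment condition has a genuine gap. You set $t_k=T_{k-1}+1$ and then claim $T\ge T_{K_T-1}\ge T_{K_T}/4$; the second inequality is backwards, since $T_k=2L_k\ge 4T_{k-1}$ only gives $T_{K_T-1}\le T_{K_T}/4$. Because you require $L_k\ge 2T_{k-1}$ with no upper bound---and in a general unbounded near-metric space the set $\{\ell(y_0,y):y\in\Ycal\}$ can be arbitrarily lacunary, so no upper bound is available---the ratio $L_k/T_{k-1}$ can diverge. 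At time $T=t_k=T_{k-1}+1$ on the event $\{f^*(x_k)=z_k\}$ the empirical average is at least $L_k/(T_{k-1}+1)$, so $\limsup_T\frac{1}{T}\sum_{t\le T}\ell(y_0,f^*(X_t))$ may be infinite and your $f^*$ need not satisfy the moment constraint.

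The fix is to place $t_k$ late enough that $L_k$ is already amortized: take $t_k:=T_k=2L_k$ (rounded), or, as the paper does, $t_k:=\lfloor\sum_{j\le k}L_j\rfloor$. With the first choice, for every $T\in[t_K,t_{K+1})$ one has $T\ge T_K$ and $\sum_{j\le K}L_j\le\tfrac{2}{3}T_K$, giving a universal bound of $2/3$ on the empirical first moment. Your conditional-probability bound $\Pbb(E_k\ge L_k/(4c_\ell)\mid\Fcal_{k-1})\ge 1/4$, the L\'evy Borel--Cantelli step, and the excess-loss estimate $E_k/T_k\ge 1/(8c_\ell)$ all survive this change verbatim.
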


\noindent\textbf{Proof sketch.} We consider a sequence of values $(y_k)_{k\geq 0}$ such that $\ell(y_0,y_k)$ diverges as $k\to\infty$, then let $(t_k)_{k\geq 1}$ be a sequence of times such that $t_k\approx \sum_{k'\leq k}\ell(y_0,y_k)$. Next, let $(x_k)_{k\geq 0}$ be a sequence of distinct points. We construct a process $\Xbb$ such that $X_t=x_0$ except at sparse times $(t_k)_{k\geq 1}$ for which $X_{t_k}=x_k$. Because $t_k$ has a super-linear growth, $\Xbb$ visits a sublinear number of distinct points and we can show that it satisfies the $\crf$ property. Now for a random binary sequence $\mb b=(b_k)_{kk\geq 1}$ we consider the function $f_{\mb b}^*$ which is equal to $y_0$ except at points $x_k$ for $k\geq 1$ where $f_{\mb b}^*(x_k)=y_0\1[b_k=0] + y_k \1[b_k=1]$. With these classes of functions, the algorithm cannot know in advance at time $t_k$ whether to predict $y_0$ or $y_k$ and incurs a loss $\Ocal( \ell(y_0,y_k) )$ in average as a result. Therefore, at time $t_k$, a total loss $\Ocal( \sum_{k'\leq k}\ell(y_0,y_k))= \Ocal(t_k)$ is incurred compared to $f_{\mb b}^*$. On the other hand, by the construction of the sequence $(t_k)_{k\geq 1}$, $\frac{1}{T}\sum_{t=1}^T\ell(y_0,f_{\mb b}^*(X_t)) \leq  \frac{1}{T}\sum_{t_k\leq T}\ell(y_0,y_k)$ stays bounded. Thus the learning rule is not consistent under all target functions satisfying the specified moment constraint.\\

\cref{thm:negative_first_order_moment} answers negatively to the question posed in \cite{blanchard2022optimistic}. A natural question is whether another meaningful constraint on responses can be applied to obtain positive results under large classes of processes on $\Xcal$. To this means, we introduced the slightly stronger \emph{empirical integrability} condition. We recall that an (adversarial) process $\Ybb$ is \emph{empirically integrable} if and only if there exists $y_0\in\Ycal$ such that for any $\epsilon>0$, almost surely there exists $M\geq 0$ with
\begin{equation*}
    \limsup_{T\to\infty}\frac{1}{T}\sum_{t=1}^T\ell(y_0,Y_t)\1_{\ell(y_0,Y_t)\geq M}\leq \epsilon.
\end{equation*}
Note that the threshold $M$ may be \emph{dependent} on the adversarial process $\Ybb$, but the guarantee should hold for any choice of predictions (in the case of adaptive adversaries). This is essentially the mildest condition on the sequence $\Ybb$ for which we can still obtain results. For example, if the loss is bounded, this constraint is automatically satisfied using $M>\bar\ell$. More importantly, note that any process $\Ybb$ which has bounded higher-than-first moments, i.e., such that there exists $p>1$ and $y_0\in\Ycal$ such that $    \limsup_{T\to\infty} \frac{1}{T}\sum_{t=1}^T \ell^p(y_0,Y_t) <\infty,\quad (a.s.)$, is empirically integrable. Further, for stationary processes $\Ybb$, having bounded first moment $\Ebb[\ell(y_0,Y_1)]<\infty$ is exactly being empirically integrable. Indeed, by the strong law of large numbers, almost surely $\limsup_{T\to\infty}\frac{1}{T}\sum_{t=1}^T\ell(y_0,Y_t)\1_{\ell(y_0,Y_t)\geq M} = \Ebb[\ell(y_0,Y_1)\1_{\ell(y_0,Y_1)\geq M}]$. Therefore, empirical integrability is a direct consequence of the dominated convergence theorem.

\begin{lemma}
\label{lemma:link_to_first_moment}
Let $\Ybb$ an stationary process on $\Ycal$ which has bounded first moment, i.e., there exists $y_0\in\Ycal$ such that $\Ebb[\ell(y_0,Y_1)]<\infty$. Then, $\Ybb$ is empirically integrable.
\end{lemma}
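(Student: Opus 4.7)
The plan is to follow the hint given in the text: use the ergodic theorem to convert the time-average into an expectation and then invoke dominated convergence. The only subtlety, not addressed in the paragraph just above the lemma, is that $\Ybb$ is only assumed stationary (not necessarily ergodic), so Birkhoff's ergodic theorem produces a conditional expectation on the invariant $\sigma$-algebra rather than a deterministic limit; this is exactly why the definition of empirical integrability allows $M$ to depend on the realization.

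More precisely, fix the $y_0\in\Ycal$ with $\Ebb[\ell(y_0,Y_1)]<\infty$ provided by the assumption, and let $\Ical$ denote the invariant $\sigma$-algebra of the shift. For every fixed $M\geq 0$, the function $\omega\mapsto \ell(y_0,Y_1(\omega))\1_{\ell(y_0,Y_1(\omega))\geq M}$ is integrable (it is bounded above by $\ell(y_0,Y_1)$), so Birkhoff's ergodic theorem gives, almost surely,
\begin{equation*}
    \lim_{T\to\infty}\frac{1}{T}\sum_{t=1}^T \ell(y_0,Y_t)\1_{\ell(y_0,Y_t)\geq M}
    = \Ebb\!\left[\ell(y_0,Y_1)\1_{\ell(y_0,Y_1)\geq M}\;\middle|\;\Ical\right] =: Z_M.
\end{equation*}
Applying this simultaneously to all integers $M\in\Nbb$ (a countable union of null sets) yields a single event of probability one on which the identity above holds for every integer $M$.

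Next I would argue that $Z_M\to 0$ almost surely as $M\to\infty$. Since $\ell(y_0,Y_1)\1_{\ell(y_0,Y_1)\geq M}$ decreases pointwise to $0$ as $M\to\infty$ and is dominated by the integrable random variable $\ell(y_0,Y_1)$, the conditional dominated convergence theorem gives $Z_M\to 0$ almost surely (along integers, and monotonically, so along reals as well). Therefore, on the intersection of the two full-measure events above, for every $\epsilon>0$ there exists an integer $M\geq 0$ (random, depending on the realization) such that $Z_M\leq \epsilon$, and consequently
\begin{equation*}
    \limsup_{T\to\infty}\frac{1}{T}\sum_{t=1}^T \ell(y_0,Y_t)\1_{\ell(y_0,Y_t)\geq M}\leq \epsilon,
\end{equation*}
which is precisely the empirical integrability of $\Ybb$.

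There is essentially no hard step: the only point that could cause confusion is remembering that without ergodicity one must use the conditional version of both the ergodic theorem and the dominated convergence theorem, and that the $M$ in the definition of empirical integrability is permitted to be realization-dependent, so the conditional limit $Z_M$ (itself a random variable) is enough. Writing this carefully—quantifier order and the countable-union argument to obtain a single almost-sure event valid for all integers $M$—is the main thing to be careful about.
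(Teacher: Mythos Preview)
Your proof is correct and follows the same approach as the paper (ergodic theorem plus dominated convergence). In fact your version is more careful: the paper's proof writes the Birkhoff limit as the \emph{unconditional} expectation $\Ebb[\ell(y_0,Y_1)\1_{\ell(y_0,Y_1)\geq M}]$ and picks a deterministic $M_\epsilon$, which is only justified in the ergodic case; you correctly use the conditional expectation $Z_M=\Ebb[\,\cdot\mid\Ical]$ and a realization-dependent $M$, exploiting that the definition of empirical integrability places ``$\exists M$'' inside the ``almost surely''.
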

\begin{proof}
Let $\Ybb$ an stationary process and $y_0\in\Ycal$ with $\Ebb[\ell(y_0,Y_1)]<\infty$. Then, by the dominated convergence theorem we have $\Ebb[\ell(y_0,Y_1)\1_{\ell(y_0,Y_1)\geq M}]\to 0$ as $M\to\infty$. Hence, for $\epsilon>0$, there exists $M_\epsilon$ such that $\Ebb[\ell(y_0,Y_1)\1_{\ell(y_0,Y_1)\geq M}]\leq \epsilon$. Then, the sequence $(\ell(y_0,Y_t)\1_{\ell(y_0,Y_t)\geq M})_t$ is still stationary. hence, by the law of large numbers, almost surely,
\begin{equation*}
    \lim_{T\to\infty}\frac{1}{T}\sum_{t=1}^T  \ell(y_0,Y_t)\1_{\ell(y_0,Y_t)\geq M_\epsilon} = \Ebb[\ell(y_0,Y_1)\1_{\ell(y_0,Y_1)\geq M_\epsilon}] \leq \epsilon.
\end{equation*}
This ends the proof that $\Ybb$ is empirically integrable.
\end{proof}

The goal of this section is to show that under this moment constraint, we can recover all results from \cite{blanchard2022universal}, \cite{hanneke2022bayes} and this work in Sections \ref{sec:totally_bounded_value_spaces} and \ref{sec:alternative}, even for unbounded value spaces, leading up to Theorems \ref{thm:SOUL_regression_unbounded} and \ref{thm:CS_regression_unbounded}. We will use the following simple equivalent formulation for empirical integrability.

\begin{lemma}
\label{lemma:empirically_integrable}
A process $\Ybb$ is empirically integrable if and only if there exists $y_0\in\Ycal$ such that almost surely, for any $\epsilon>0$ there exists $M>0$ with
\begin{equation*}
    \limsup_{T\to\infty}\frac{1}{T}\sum_{t=1}^T\ell(y_0,Y_t)\1_{\ell(y_0,Y_t)\geq M}\leq \epsilon.
\end{equation*}
\end{lemma}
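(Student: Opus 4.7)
The statement is essentially a quantifier-swap: the definition gives ``for every $\epsilon>0$, almost surely $\exists M$'', whereas the lemma strengthens this to ``almost surely, for every $\epsilon>0$, $\exists M$''. My plan is to prove the two directions separately, noting that one is immediate and the other is a standard countable-intersection argument.

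For the ($\Leftarrow$) direction, I would simply observe that if there is a single probability-one event on which the universal statement in $\epsilon$ holds, then in particular, for each fixed $\epsilon>0$ the same probability-one event witnesses the defining condition. No further work is needed.

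For the ($\Rightarrow$) direction, the idea is to apply the definition along a countable sequence $\epsilon_n \downarrow 0$ (for concreteness, $\epsilon_n = 1/n$) and take a countable intersection. For each $n\geq 1$, empirical integrability yields a measurable event $\Omega_n$ of probability one, on which there exists $M_n\geq 0$ such that
\begin{equation*}
    \limsup_{T\to\infty}\frac{1}{T}\sum_{t=1}^T\ell(y_0,Y_t)\1_{\ell(y_0,Y_t)\geq M_n} \leq \epsilon_n.
\end{equation*}
Then $\Omega_\infty := \bigcap_{n\geq 1}\Omega_n$ still has probability one (countable intersection of full-measure events). Fixing any $\omega\in\Omega_\infty$ and any $\epsilon>0$, I choose $n$ large enough that $\epsilon_n \leq \epsilon$; the corresponding $M_n$ then satisfies the required bound at $\omega$. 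This delivers the desired ``almost surely, for all $\epsilon>0$, there exists $M>0$'' formulation. The minor issue that the definition allows $M\geq 0$ while the lemma asks for $M>0$ is harmless, since we can always replace $M_n$ by $\max(M_n,1)>0$ without decreasing the indicator sum.

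There is no real obstacle here; the only point that warrants care is measurability, namely that the set of $\omega$ on which the $\limsup$ condition holds for a fixed $M$ is measurable (which is clear, as it is a $\limsup$ of measurable functions), and that ``$\exists M$'' can be reduced to a countable union by restricting to $M\in\Nbb$, making $\Omega_n$ a legitimate event. This ensures the countable intersection argument is valid and completes the proof.
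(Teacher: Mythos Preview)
Your proposal is correct and follows essentially the same approach as the paper: the paper also observes that only the $(\Rightarrow)$ direction needs work, applies the definition along the countable sequence $\epsilon_i=2^{-i}$ (you use $1/n$), intersects the resulting probability-one events, and for a given $\epsilon>0$ picks the index with $\epsilon_i\leq \epsilon$. Your added remarks on measurability and on replacing $M\geq 0$ by $M>0$ are fine extra care that the paper omits.
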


\noindent\textbf{General strategy.} First, the empirical integrability condition holds for some $y_0\in\Ycal$ if and only if it holds for all $y_0\in\Ycal$. Thus, we can fix $y_0\in\Ycal$ independently of the instance or value process. Next, we define the restriction function $\phi_M:\Ycal\to\Ycal$ such that $\phi_M(y)=y$ if $\ell(y_0,y)<M$ and $\phi_M(y)=y_0$ otherwise. This function has values in the bounded set $B_\ell(y_0,M)$. Thus, we can apply our learning rules for the bounded loss case to learn the restricted values $\Ybb^M=(\phi_M(Y_t))_{t\geq 1}$. If we use these predictions to learn $\Ybb$, the excess loss compared to a fixed function mostly results from the restriction $\limsup_{T\to\infty}\frac{1}{T}\sum_{t=1}^T\ell(Y_t,\phi_M(Y_t)) = \limsup_{T\to\infty} \frac{1}{T} \sum_{t=1}^T \ell(y_0,Y_t)\1_{\ell(y_0,Y_t)\geq M}$. This excess can then be bounded with the empirical integrability condition at $y_0$. We then combine the resulting predictors for $M\geq 1$ using \cref{lemma:concatenation_predictors}. While this general strategy allows to use learning rules for the bounded loss case as subroutine to solve the unbounded loss case with empirical integrability constraint, we can adapt it to each case to simplify the algorithms.

\subsection{Noiseless universal learning with moment condition}

We first apply this strategy to the noiseless case. The main result from \cite{blanchard2022universal} showed that the 2C1NN learning rule achieves universal consistency on all $\smv$ processes for bounded value spaces. Instead of using the 2C1NN learning rule as subroutine as described in the strategy above, we show that we can readily use 2C1NN for empirically integrable noiseless responses in unbounded value spaces, as stated in \cref{thm:noiseless_unbounded}.

To prove this result, we first observe that 2C1NN trained on the responses $\Ybb=(f^*(X_t))_{t\geq 1}$ or the restricted responses $(\phi_M\circ f^*(X_t))_{t\geq 1}$ gives the same prediction at time $t$ provided that the representative $\phi(t)$ satisfied $\ell(y_0,Y_{\phi(t)})<M$. By construction of the 2C1NN learning rule, points can be used as representatives at most twice. Hence, up to a factor 2, times when the predictions on unrestricted and restricted responses differ, can be associated with times when $\ell(y_0,Y_t)\geq M$. As a result, we show that the empirical integrability condition can be applied to bound the excess loss resulting from the difference between unrestricted and restricted responses.

\subsection{Adversarial regression with moment condition under $\cs$ processes}

We now turn to adversarial regression under $\cs$ processes. \cite{hanneke2022bayes} showed that regression for arbitrary responses under all $\cs$ processes is achievable in bounded value spaces. We generalize this result to unbounded losses and to adversarial responses with empirical integrability constraint using the general strategy. In particular, our learning rule is also optimistically universal for adversarial regression for all bounded value spaces which do not satisfy $\ftime$. Now consider the general case and suppose that there exists a ball $B_\ell(y,r)$ which does not satisfy $\ftime$, \cref{thm:bad_value_spaces} shows that universal learning for values falling in $B_\ell(y,r)$ cannot be achieved for processes $\Xbb\notin \cs$. Now because $B_\ell(y,r)$ is bounded, responses restricted to this set satisfy the empirical integrability constraint. In particular, this shows that the condition $\cs$ is also necessary for universal learning with adversarial responses with empirical integrability. Altogether, this proves \cref{thm:CS_regression_unbounded}.

This generalizes the main results from \cite{hanneke2022bayes} to unbounded non-metric losses and from \cite{tsir2022metric} to non-metric losses, arbitrary responses and $\cs$ instance processes $\Xbb$. Indeed, they consider bounded first moment conditions on i.i.d. responses, which are empirically integrable by \cref{lemma:link_to_first_moment}. Further, as a direct consequence of \cref{thm:CS_regression_unbounded} and \cref{lemma:link_to_first_moment}, we can significantly relax the conditions for universal consistency on stationary ergodic processes found in the literature. Precisely, \cite{gyorfi:07} showed that for regression with squared loss, under the assumption $\Ebb[Y_1^4]<\infty$, consistency on stationary ergodic processes is possible. We can relax this result to bounded second moments, matching the standard results for i.i.d. processes.

\begin{corollary}
    Let $(\Ycal,\ell)=(\Rbb,|\cdot|^2)$. The learning rule of \cref{thm:CS_regression_unbounded} is  consistent on any stationary ergodic process $(X_t,Y_t)_{t\geq 1}$ with $\Ebb[Y_1^2]<\infty$.
\end{corollary}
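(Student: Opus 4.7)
The plan is to verify the two hypotheses of \cref{thm:CS_regression_unbounded} on the pair $(\Xbb,\Ybb)$ and invoke the theorem directly. First, I would confirm that the value space fits the framework: $(\Ycal,\ell)=(\Rbb,|\cdot|^2)$ has separable metric $\rho_\Ycal=|\cdot|$, and the loss is a power of the metric $\ell=\rho_\Ycal^\alpha$ with $\alpha=2\geq 1$, which is allowed.

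Second, I would verify that $\Xbb\in\cs$. The marginal $\Xbb=(X_t)_{t\geq 1}$ of a stationary ergodic pair process is itself stationary, and the paper notes in \cref{sec:main_results} that stationary processes (in particular stationary ergodic ones) lie in $\cs$. So $\Xbb\in\cs$ automatically.

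Third, I would show that $\Ybb$ is empirically integrable. The marginal $\Ybb$ of the stationary pair is stationary. Taking $y_0=0\in\Rbb$ we have $\Ebb[\ell(0,Y_1)]=\Ebb[Y_1^2]<\infty$ by hypothesis, so \cref{lemma:link_to_first_moment} applied to the stationary process $\Ybb$ immediately gives empirical integrability. With both hypotheses in hand, \cref{thm:CS_regression_unbounded} yields $\Lcal_{(\Xbb,\Ybb)}(f_\cdot,f^*)\leq 0$ almost surely for every measurable $f^*:\Rbb\to\Rbb$, which is the desired consistency.

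There is no real obstacle here; the corollary is a direct packaging of \cref{thm:CS_regression_unbounded} and \cref{lemma:link_to_first_moment} (together with the fact that stationary processes belong to $\cs$). The only subtle point worth double-checking is that the $\cs$-based learning rule of \cref{thm:CS_regression_unbounded} indeed applies even when balls of $\Rbb$ satisfy $\ftime$ (all Euclidean intervals are totally bounded and hence do), in which case \cref{thm:SOUL_regression_unbounded} gives an even stronger optimistically universal rule on $\smv\supset\cs$; either rule delivers the conclusion. The value of the corollary is conceptual rather than technical: it relaxes the fourth-moment condition $\Ebb[Y_1^4]<\infty$ of \cite{gyorfi:07} to the second-moment condition $\Ebb[Y_1^2]<\infty$, matching the classical i.i.d.\ baseline.
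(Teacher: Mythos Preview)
Your proposal is correct and matches the paper's own treatment: the corollary is stated as a direct consequence of \cref{thm:CS_regression_unbounded} together with \cref{lemma:link_to_first_moment} and the fact that stationary processes lie in $\cs$, and you reproduce exactly this chain of implications. One small slip: the comparison functions should be $f^*:\Xcal\to\Rbb$ rather than $f^*:\Rbb\to\Rbb$, but this does not affect the argument.
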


\subsection{Adversarial regression with moment condition under $\smv$ processes}

Last, we generalize our result \cref{thm:good_value_spaces} for value spaces satisfying $\ftime$, to unbounded value spaces, with the same moment condition on responses using the general strategy. In order to apply \cref{thm:good_value_spaces} to bounded balls of the value space, we now ask that all balls $B_\ell(y,r)$ in the value space $(\Ycal,\ell)$ satisfy $\ftime$. This proves \cref{thm:SOUL_regression_unbounded}.

Theorems \ref{thm:CS_regression_unbounded} and \ref{thm:SOUL_regression_unbounded} completely characterize learnability for adversarial regression with moment condition. Namely, if the value space $(\Ycal,\ell)$ is such that any bounded ball satisfies $\ftime$ (resp. there exists a ball $B_\ell(y,r)$ that disproves $\ftime$), \cref{thm:SOUL_regression_unbounded} (resp. \ref{thm:CS_regression_unbounded}) gives an optimistic learning rule which achieves consistency under all processes in $\smv$ (resp. $\cs$). This ends our analysis of adversarial regression for unbounded value spaces.

\section{Open research directions}
\label{sec:conclusion}

In this work, we provided a characterization of learnability for universal learning in the regression setting, for a class of losses satisfying specific relaxed triangle inequality identities, which contains powers of metrics $\ell=\rho_\Ycal^\alpha$ for $\alpha\geq 1$. A natural question would be whether one can generalize these results to larger classes of losses, e.g. non-symmetric losses which may appear in classical machine learning problems.

The present work could also have some implications for adversarial contextual bandits. Specifically, one may consider the case of a learner who receives partial information on the rewards/losses as opposed to the traditional regression setting where the response is completely revealed at each iteration. In the latter case, the learner can for instance compute the loss of \emph{all} values with respect to the response realization. On the other hand, in the contextual bandits framework, the reward/loss is revealed \emph{only} for the pulled arm---or equivalently the prediction of the learner. In these partial information settings, exploration then becomes necessary. The authors are investigating whether the results presented in this work could have consequences in these related domains.

\paragraph{Acknowledgements.}The authors are grateful to Prof. Steve Hanneke for enlightening discussions. This work is being partly funded by ONR grant N00014-18-1-2122.

\printbibliography

\newpage

\tableofcontents

\begin{appendix}

\section{Identities on the loss function}

We recall the following known identities, which we will use to analyze the loss $\ell=\rho_\Ycal^\alpha$.

\begin{lemma}
\label{lemma:loss_identity}
Let $\alpha\geq 1$. Then, $(a+b)^\alpha \leq 2^{\alpha-1}(a^\alpha+b^\alpha)$ for all $a,b\geq 0$. Let $0< \epsilon\leq 1$ and $\alpha\geq 1$. There exists some constant $c_\epsilon^\alpha>0$ such that $(a+b)^\alpha \leq (1+\epsilon)a^\alpha + c_\epsilon^\alpha b^\alpha$ for all $a,b\geq 0$, and $c_\epsilon^\alpha\leq \left(\frac{4\alpha}{\epsilon}\right)^\alpha$.
\end{lemma}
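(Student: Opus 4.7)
Both inequalities are consequences of the convexity of $\phi(t)=t^\alpha$ on $[0,\infty)$ for $\alpha\geq 1$. For the first inequality, I would apply Jensen's inequality at the midpoint: $\phi\bigl(\tfrac{a+b}{2}\bigr)\leq \tfrac12\bigl(\phi(a)+\phi(b)\bigr)$, which after multiplying by $2^\alpha$ rearranges directly to $(a+b)^\alpha\leq 2^{\alpha-1}(a^\alpha+b^\alpha)$.

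For the second inequality I would use a weighted form of the same idea. For any $\lambda\in(0,1)$, write $a+b=\lambda\cdot(a/\lambda)+(1-\lambda)\cdot(b/(1-\lambda))$ and apply Jensen to obtain
\begin{equation*}
(a+b)^\alpha \leq \lambda\,(a/\lambda)^\alpha + (1-\lambda)\,(b/(1-\lambda))^\alpha = \lambda^{1-\alpha}\,a^\alpha + (1-\lambda)^{1-\alpha}\,b^\alpha.
\end{equation*}
Choosing $\lambda$ so that $\lambda^{1-\alpha}=1+\epsilon$ produces an inequality of the desired form with $c_\epsilon^\alpha=(1-\lambda)^{1-\alpha}$. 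For $\alpha=1$ any $\lambda$ works and we may take $c=1$, which is trivially bounded by $4/\epsilon$. For $\alpha>1$, we are forced to take $\lambda=(1+\epsilon)^{-1/(\alpha-1)}$.

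The remaining step — the main quantitative one — is to verify that $(1-\lambda)^{1-\alpha}\leq (4\alpha/\epsilon)^\alpha$. Writing $x:=\ln(1+\epsilon)/(\alpha-1)$, so that $1-\lambda=1-e^{-x}$, I would split cases. When $x\leq 1$, I use the two elementary bounds $\ln(1+\epsilon)\geq \epsilon/2$ (valid for $\epsilon\in(0,1]$) and $1-e^{-x}\geq x/2$ (valid for $x\in[0,1]$, since $e^{-x}\leq 1-x+x^2/2\leq 1-x/2$) to conclude $1-\lambda\geq \epsilon/(4(\alpha-1))$, hence $c_\epsilon^\alpha\leq (4(\alpha-1)/\epsilon)^{\alpha-1}\leq (4\alpha/\epsilon)^{\alpha-1}\leq (4\alpha/\epsilon)^\alpha$, where the last step uses $4\alpha/\epsilon\geq 1$. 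When $x>1$, we have $1-\lambda\geq 1-e^{-1}\geq 1/2$, so $c_\epsilon^\alpha\leq 2^{\alpha-1}$, which is again dominated by $(4\alpha/\epsilon)^\alpha\geq 4^\alpha$. Combining the two cases yields the announced constant.

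The only mildly delicate piece is the case split for estimating $1-e^{-x}$ from below; everything else is one-line convexity. The claim that $c_\epsilon^\alpha$ is non-increasing in $\epsilon$ (mentioned in the remark following the Lemma) is immediate from the explicit formula $(1-(1+\epsilon)^{-1/(\alpha-1)})^{1-\alpha}$, since $\lambda$ is decreasing in $\epsilon$ and hence $1-\lambda$ is increasing, so the whole expression decreases in $\epsilon$.
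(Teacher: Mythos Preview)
Your proof is correct. Both parts follow cleanly from convexity of $t\mapsto t^\alpha$, and your case split on $x=\ln(1+\epsilon)/(\alpha-1)$ with the elementary bounds $\ln(1+\epsilon)\geq\epsilon/2$ and $1-e^{-x}\geq x/2$ on $[0,1]$ is valid and yields the announced constant.

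The paper itself does not actually prove this lemma: it calls the first inequality classical and for the second simply cites \cite{evans2020strong} (Lemma~2.3), quoting their explicit constant $c_\epsilon^\alpha=\bigl(1+\tfrac{1}{(1+\epsilon)^{1/\alpha}-1}\bigr)^\alpha$ together with the bound $\leq(4\alpha/\epsilon)^\alpha$. Your route is therefore more self-contained. Note also that your explicit constant $(1-(1+\epsilon)^{-1/(\alpha-1)})^{1-\alpha}$ is a \emph{different} (and in fact sharper, since it arises from the optimal weight in the Jensen split) formula than the one the paper quotes; both happen to sit below $(4\alpha/\epsilon)^\alpha$, which is all the lemma requires. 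Your closing remark on monotonicity in $\epsilon$ is also immediate from your formula, whereas with the paper's quoted constant one would need a separate check.
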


\begin{proof}
The first identity is classical. A proof of the second one can be found for example in \cite{evans2020strong} (Lemma 2.3) where they obtain $
    c_\epsilon^\alpha = \left(1+\frac{1}{(1+\epsilon)^{1/\alpha}-1}\right)^\alpha\leq \left(\frac{4\alpha}{\epsilon}\right)^\alpha.$
\end{proof}

\section{Proofs of \cref{sec:totally_bounded_value_spaces}}\label{appA}

\subsection{Proof of \cref{thm:1+deltaC1NN_optimistic}}
\label{subsec:1+deltaC1NN}

In this section, we prove that for any $\delta>0$, the $(1+\delta)$C1NN learning rule is optimistically universal for the noiseless setting. The proof follows the same structure as the proof of the main result in \cite{blanchard2022universal} which shows that 2C1NN is optimistically universal. We first focus on the binary classification setting and show that the learning rule $(1+\delta)$C1NN is consistent on functions representing open balls.

\begin{proposition}
\label{prop:consistent_ball_borel}
Fix $0<\delta\leq 1$. Let $(\Xcal,\Bcal)$ be a separable Borel space constructed from the metric $\rho_\Xcal$. We consider the binary classification setting $\Ycal =\{0,1\}$ and the $\ell_{01}$ binary loss. For any input process $\Xbb\in \smv$, for any $x\in \Xcal$, and $r>0$, the learning rule $(1+\delta)$C1NN is consistent for the target function $f^*= \1_{B_{\rho_\Xcal}(x,r)}$.
\end{proposition}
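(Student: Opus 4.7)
The plan is to mirror the 2C1NN consistency proof for ball indicators from \cite{blanchard2022universal}, which only relies on the structural fact that each node in the 2C1NN representative graph has at most two children. Since $(1+\delta)$C1NN also produces a representative forest with at most two children per node by construction (regardless of the Bernoulli choices $U_{\phi(t)}$), the deterministic geometric parts of that argument transfer verbatim. Writing $\phi$ for the representative map and observing that a mistake occurs exactly when $X_t$ and $X_{\phi(t)}$ lie on opposite sides of the sphere $\partial B_{\rho_\Xcal}(x,r)$, I would fix a tolerance $\eta > 0$ and decompose the mistakes at horizon $T$ into \emph{far mistakes}, with $\rho_\Xcal(X_t, X_{\phi(t)}) \geq \eta$, and \emph{close mistakes}, with $\rho_\Xcal(X_t, X_{\phi(t)}) < \eta$ (which force $X_t$ into the $\eta$-shell $S_\eta := \{y : |\rho_\Xcal(y,x) - r| < \eta\}$).

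For the far mistakes, I would introduce a countable measurable partition $\{V_k\}_{k \geq 1}$ of $\Xcal$ whose cells have diameter at most $\eta/2$, constructed from a countable dense subset via Voronoi-type cells. The $\smv$ hypothesis yields $|\{k : V_k \cap \Xbb_{\leq T} \neq \emptyset\}| = o(T)$ almost surely. At any far-mistake time $t$ with $X_t \in V_k$, every earlier visit to $V_k$ must already have been removed from $\Dcal_t$, since otherwise the nearest neighbor in $\Dcal_t$ would sit in $V_k$ and be within $\eta/2 < \eta$ of $X_t$, contradicting the far-mistake condition. Because a point leaves $\Dcal$ only after being used as a representative, and each point serves as a representative at most twice, a straightforward charging argument bounds the far mistakes at time $T$ by a constant multiple of the number of visited cells, hence by $o(T)$.

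For the close mistakes I would exploit the radial structure by combining the annular partition $A_i := \{y : i\eta \leq \rho_\Xcal(y,x) < (i+1)\eta\}$ with a finer refinement of the two or three annuli straddling $\partial B_{\rho_\Xcal}(x,r)$, using a countable dense subset within those annuli to form sub-cells of diameter $\leq \eta/2$. The $\smv$ hypothesis on the refined partition again gives only $o(T)$ visited cells; within each refined shell cell the restricted indicator $\1_{B_{\rho_\Xcal}(x,r)}$ is again a ball-like function on that cell, and the $(1+\delta)$C1NN subtree restricted to the visits of that cell inherits the degree-two structure, so the far-mistake-style charging argument can be iterated on it. A diagonal choice $\eta = \eta_n \downarrow 0$ along a rational sequence, combined with Borel--Cantelli applied to the events that the mistake rate exceeds $\eta_n$, upgrades the conclusion to the a.s.\ vanishing of the overall mistake rate.

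The main obstacle is precisely this close-mistake analysis: $\smv$ controls only the number of visited cells in a countable partition and not the number of visits per cell, so a naive partitioning argument cannot by itself exclude the possibility that $\Xbb$ concentrates a positive fraction of its mass inside the shell $S_\eta$. The resolution is the iterative refinement above, which effectively reduces the shell problem to a smaller-scale copy of the original problem and relies crucially on the deterministic out-degree bound of the $(1+\delta)$C1NN forest when restricted to the visits in any measurable subset. A minor technicality is that the Bernoulli variables $U_u \sim \Bcal(\delta)$ drawn by $(1+\delta)$C1NN are independent of $\Xbb$ and only decide whether a given node keeps one or two children while preserving the deterministic bound of two children per node; conditioning on their realizations preserves all the almost sure statements above.
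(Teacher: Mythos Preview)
Your proposal has a genuine gap in the far-mistake charging, and it misidentifies where the real difficulty lies in passing from 2C1NN to $(1+\delta)$C1NN.

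The claimed bound ``far mistakes $\leq C\times(\text{visited cells})$'' does not follow from the out-degree-two property. Consider a single cell $V_k$ and the pattern: $X_{v_1}\in V_k$ is visited; an outside point then uses $X_{v_1}$ as parent (so $n_{v_1}=1$); at the next $V_k$-visit $v_2$ the search tries $X_{v_1}$, draws $U_{v_1}=0$ (probability $1-\delta$), deletes it, and $\phi(v_2)$ lands far away---a far mistake. Repeating this yields as many far mistakes in $V_k$ as there are visits to $V_k$, and $\smv$ gives no control on visits per cell. The same obstruction infects your close-mistake recursion: each ``smaller-scale copy of the original problem'' inherits exactly this failure, so the iteration does not terminate with a useful estimate.

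More fundamentally, you dismiss the Bernoulli variables $U_t$ as a ``minor technicality,'' but they are the crux of the extension. Neither the paper nor the original 2C1NN proof uses a far/close decomposition. Both argue by contradiction: assuming a non-vanishing error rate, one obtains $\Theta(k_l)$ mistake times $t_k$, considers the subtrees $\Tcal_k$ rooted at them in the same-value subgraph, and shows that $\Theta(k_l)$ of these trees are \emph{good} (retain a node in the current dataset). Good sparse trees then force $\Theta(t_{k_l})$ cells of a carefully built partition to be visited, contradicting $\smv$. In 2C1NN a node survives until it has \emph{two} children, so every leaf of a bad tree produces two future mistakes, giving deterministically $|G|\geq k_l/2$. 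In $(1+\delta)$C1NN a node can be deleted after a \emph{single} child whenever $U=0$, so bad trees proliferate and the deterministic count collapses. The paper's new content is a probabilistic argument: along the ``first future mistake'' nodes $l_k$ of each tree one tracks whether the revealed $U_{l_k}$ equals $1$, and an Azuma--Hoeffding bound on these Bernoullis shows that with high probability at least $\frac{k_l\delta}{32}$ trees are good. Only after securing this $\delta$-dependent good-tree count does the geometric part of the 2C1NN proof (with constants rescaled by $\delta$) go through. Your proposal contains no analogue of this step.
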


\begin{proof}
We fix $\bar x\in \Xcal$, $r>0$ and $f^* = \1_{ B(\bar x,r)}$. We reason by the contrapositive and suppose that $(1+\delta)$C1NN is not consistent on $f^*$. Then, $\eta:=\mathbb P(\Lcal_\Xbb ((1+\delta)C1NN,f^*)>0)>0$. Therefore, there exists $0<\epsilon \leq 1$ such that $\mathbb P(\Lcal_\Xbb ((1+\delta)C1NN,f^*)> \epsilon)>\frac{\eta}{2}$.
Denote by $\Acal:=\{\Lcal_\Xbb ((1+\delta)C1NN,f^*)>\epsilon\}.
$ this event of probability at least $\frac{\eta}{2}$.  Because $\Xcal$ is separable, let $(x^i)_{i\geq 1}$  a dense sequence of $\Xcal$. We consider the same partition $(P_i)_{i\geq 1}$ of $B(\bar x,r)$ and the partition $(A_i)_{i\geq 0}$ of $\Xcal$ as in the original proof of \cite{blanchard2022universal}, but with the constant $c_\epsilon:=\frac{1}{2\cdot 2^{2^8/(\epsilon\delta)}}$ and changing the construction of the sequence $(n_l)_{l\geq 1}$ so that for all $l\geq 1$
\begin{equation*}
    \Pbb\left[\forall n\geq n_l,\;|\{i,\; P_i(\tau_l)\cap \Xbb_{< n}\neq \emptyset \} | \leq \frac{\epsilon\delta}{2^{10}} n\right]\geq 1- \frac{\delta}{2\cdot 2^{l+2}}\quad \text{ and } \quad n_{l+1} \geq \frac{2^9}{\epsilon\delta}n_l.
\end{equation*}
Last, consider the product partition of $(P_i)_{i\geq 1}$ and $(A_i)_{i\geq 0}$ which we denote $\Qcal$. Similarly, we define the same events $\Ecal_l,\Fcal_l$ for $l\geq 1$. We aim to show that with nonzero probability, $\Xbb$ does not visit a sublinear number of sets of $\Qcal$.

We now denote by $(t_k)_{k\geq 1}$ the increasing sequence of all (random) times when $(1+\delta)$C1NN makes an error in the prediction of $f^*(X_t)$. Because the event $\Acal$ is satisfied, $\Lcal_{\mb x} ((1+\delta)C1NN,f^*)>\epsilon$, we can construct an increasing sequence of indices $(k_l)_{l\geq 1}$ such that $t_{k_l}<\frac{2k_l}{\epsilon}$. For any $t\geq 2$, we will denote by $\phi(t)$ the (random) index of the representative chosen by the $(1+\delta)$C1NN learning rule. Now let $l\geq 1$. Consider the tree $\Gcal$ where nodes are times $\Tcal:=\{t\leq t_{k_l}\}$ within horizon $t_{k_l}$, where the parent relations are given by $(t,\phi(t))$ for $t\in \Tcal\setminus\{1\}$. In other words, we construct the tree in which the parent of each new input is its representative. Note that by construction of the $(1+\delta)$C1NN learning rule, each node has at most $2$ children.

\subsubsection{Step 1}In this step, we consider the case when the majority of input points on which $(1+\delta)$C1NN made a mistake belong to $B(\bar x,r)$, i.e., $|\{k\leq k_l,\; X_{t_k}\in B(\bar x,r)\}|\geq \frac{k_l}{2}$. We denote $\Hcal_1$ this event. Let us now consider the subgraph $\tilde \Gcal$ given by restricting $\Gcal$ only to nodes in the ball $B(\bar x,r)$---which are mapped to the true value $1$---i.e., on times $\Tcal:=\{t\leq t_{k_l},\; X_t\in B(\bar x,r)\}$. In this subgraph, the only times with no parent are times $t_k$ with $k\leq k_l$ and $X_{t_k}\in B(\bar x,r)$, and possibly time $t=1$. Therefore, $\tilde \Gcal$ is a collection of disjoint trees with roots times $\{t_k, \; k\leq k_l, \; x_{t_k}\in B(\bar x,r)\}$, and possibly $t=1$ if $X_1\in B(\bar x,r)$. For a given time $t_k$ with $k\leq k_l$ and $X_{t_k}\in B(\bar x,r)$, we denote by $\Tcal_k$ the corresponding tree in $\tilde \Gcal$ with root $t_k$. We now introduce the notion of \emph{good} trees. We say that $\Tcal_k$ is a good tree if $\Tcal_k\cap \Dcal_{t_{k_l}+1}\neq \emptyset$, i.e., the tree survived until the last dataset. Conversely a tree is \emph{bad} if all its nodes were deleted before time $t_{k_l}+1$. We denote the set of good and bad trees by  $G=\{k:\Tcal_k\text{ good}\}$ and $B=\{k:\Tcal_k\text{ bad}\}$. In particular, we have $|G|+|B| = |\{k\leq k_l,X_{t_k}\in B(\bar x,r)\}|\geq k_l/2$. We aim to upper bound the number of bad trees. We now focus on trees $\Tcal_k$ which induced a future first mistake, i.e., such that $\{l\in\Tcal_k|\exists u\leq t_{k_l}:\phi(u)=l,\rho_\Xcal(X_l,\bar x)\geq r \text{ and } \forall v<u,\phi(v)\neq l \}\neq\emptyset$. We denote the corresponding minimum time $l_k=\min \{l\in\Tcal_k\mid \exists u\leq t_{k_l}:\phi(u)=l,\rho_\Xcal(X_l,\bar x)\geq r,\forall v<u,\phi(v)\neq l \}$. The terminology first mistake refers to the fact that the first time which used $l$ as representative corresponded to a mistake, as opposed to $l$ already having a children $X_u\in B(\bar x,r)$ which continues descendents of $l$ within the tree $\Tcal_k$. Note that bad trees necessarily induce a future first mistake---otherwise, this tree would survive. For each of these times $l_k$ two scenarios are possible.
\begin{enumerate}
    \item The value $U_{l_k}$ was never revealed within horizon $t_{k_l}$: as a result $l_k\in\Dcal_{t_{k_l}+1}$.
    \item The value $U_{l_k}$ was revealed within horizon $t_{k_l}$. Then, $U_{l_k}$ we revealed using a time $t$ for which $l_k$ was a potential representative. This scenario has two cases:
    \begin{enumerate}
        \item $\rho_\Xcal(X_t,\bar x)< r$. If used as representative $\phi(t)=l_k$, then $l_k$ would not have induced a mistake in the prediction of $Y_t$.
        \item $\rho_\Xcal(X_t,\bar x)\geq r$. If used as representative $\phi(t)=l_k$, then $l_k$ would have induced a mistake in the prediction of $Y_t$.
    \end{enumerate}
\end{enumerate}
In the case 2.a), if the point is used as representative $\phi(t)=l_k$ and if the corresponding tree $\Tcal_k$ was bad, at least another future mistake is induced by $\Tcal_k$---otherwise this tree would survive. We consider times $l_k$ for which the value was revealed, which corresponds to the only possible scenario for bad trees. We denote the corresponding set $K:=\{k:U_{l_k}\text{ revealed within horizon }t_{k_l}\}$. We now consider the sequence $k^a_1,\ldots k^a_\alpha$ containing all indices of $K$ for which scenario 2.a) was followed, ordered by chronological order for the reveal of $U_{l_{k^a_i}}$, i.e., $U_{l_{k^a_1}}$ was the first item of scenario 2.a) to be revealed, then $U_{l_{k^a_2}}$ etc. until $U_{l_{k^a_\alpha}}$. Similarly, we construct the sequence $k^b_1,\ldots k^b_\beta$ of indices in $K$ corresponding to scenario 2.b), ordered by order for the reveal of $U_{l_{k^b_i}}$. We now consider the events
\begin{align*}
    \Bcal:=\left\{\alpha + \beta\leq  \frac{k_l}{2}-\frac{k_l\delta}{32}\right\}&,\quad
    \Ccal:=\left\{\sum_{i=1}^{\min(\alpha,\lceil k_l/8\rceil)} U_{l_{k^a_i}}\geq  \frac{k_l\delta}{16}\right\},\\
    \Dcal:=&\left\{\sum_{i=1}^{\min(\beta,\lceil k_l/8 \rceil)} U_{l_{k^b_i}}\geq  \frac{k_l\delta}{16} \right\}.
\end{align*}
We now show that for $l>16$, under the event
\begin{equation*}
    \Mcal_{k_l}:=\Hcal_1\cap \left[\Bcal\cup
    (\{\alpha\geq \lceil k_l/8\rceil\}\cap \Ccal) \cup
    (\{\alpha< \lceil k_l/8\rceil\}\cap \Dcal)\right],
\end{equation*}
we have that $|G|\geq \frac{k_l\delta}{32}$. Suppose that $\Mcal_{k_l}$ is met. First note that because a bad tree can only fall into scenarios 2.a) or 2.b) we have $|B|\leq \alpha+\beta$. Hence $|G|\geq \frac{k_l}{2}-\alpha-\beta$ because of $\Hcal_1$. Thus, the result holds directly if $\Bcal$ is satisfied. We can now suppose that $\Bcal^c$ is satisfied, i.e., $\alpha+\beta > \frac{k_l}{2}-\frac{k_l\delta}{32}$. Now suppose that $\alpha\geq \lceil k_l/8\rceil$ and $\Ccal$ are also satisfied. For all indices such that $U_{l_{k^a_i}}=1$, i.e., we fall in case 2.a) and $l_{k_i^a}$ is used as representative, the corresponding tree $\Tcal_{k^a_i}$ would need to induce at least an additional mistake to be bad. Recall that in total at most $k_l/2$ mistakes are induced by points of $\Tcal$. Also, by definition of the set $K$, $\alpha+\beta$ mistakes are already induced by the times $t_k$ for $k\in K$. These corresponded to the future first mistakes for all times $\{l_k:k\in K\}$. Hence, we obtain
\begin{equation*}
    |G| \geq \sum_{i=1}^\alpha U_{l_{k^a_i}} - \left(\frac{k_l}{2} - \alpha-\beta\right) \geq \frac{k_l \delta}{16} - \frac{k_l\delta}{32} = \frac{k_l \delta}{32}.
\end{equation*}
Now consider the case where $\Hcal_1$, $\Bcal^c$, $\alpha < \lceil k_l/8\rceil $ and $\Dcal$ are met. In particular, because $l>16$ we have $k_l>16$ hence $\frac{k_l}{2}-\frac{k_l\delta}{32}\geq 2 \lceil k_l/8\rceil$. Thus, because of $\Bcal^c$ we have $\beta> \frac{k_l}{2}-\frac{k_l\delta}{32}-\alpha\geq \lceil k_l/8\rceil$. Now observe that for all indices such that $U_{l_{k^b_i}}=1$, the time $l_k$ induced two mistakes. Therefore, counting the total number of mistakes we obtain
\begin{equation*}
   \frac{k_l}{2}\geq  \alpha + \beta + \sum_{i=1}^{\beta} U_{l_{k^b_i}} \geq \frac{k_l}{2} - \frac{k_l\delta}{32} + \frac{k_l\delta}{16}
\end{equation*}
which is impossible. This ends the proof that under $\Mcal_{k_l}$ we have $|G|\geq \frac{k_l\delta}{32}$.

We now aim to lower bound the probability of this event. To do so, we first upper bound the probability of the event $\{\alpha\geq \lceil k_l/8\rceil\}\cap \Ccal^c$. We introduce a process $(Z_i)_{i=1}^{\lceil k_l/8 \rceil}$ such that for all $i\leq \max(\alpha,\lceil k_l/8\rceil)$, $Z_i=U_{l_{k^a_i}}-\delta$ and $Z_i=0$ for $\alpha<i\leq \lceil k_l/8 \rceil$. Because of the specific ordering chosen $k_1^a,\ldots,k_\alpha^a$, this process is a sequence of martingale differences, with values bounded by $1$ in absolute value. Therefore, for $l>16$ the Azuma-Hoeffing inequality yields
\begin{equation*}
    \Pbb\left[\sum_{i=1}^{\lceil k_l/8\rceil} Z_i\leq -\frac{k_l\delta}{16}\right] \leq e^{-\frac{k_l^2\delta^2}{2\cdot 16^2(k_l/8+1)}} \leq e^{-\frac{k_l\delta^2}{2^7}}.
\end{equation*}
But on the event $\{\alpha\geq \lceil k_l/8\rceil\}\cap \Ccal^c$ we have precisely
\begin{equation*}
    \sum_{i=1}^{\lceil k_l/8\rceil}Z_i = \sum_{i=1}^{\min(\alpha,\lceil k_l/8\rceil)} U_{l_{k_i^a}}-\lceil k_l/8 \rceil\delta \leq  \frac{k_l\delta}{16} -\lceil k_l/8 \rceil\delta \leq -\frac{k_l\delta}{16}.
\end{equation*}
Therefore $\Pbb[ \Ccal^c\cap \{\alpha\geq \lceil k_l/8\rceil\}]\leq  \Pbb\left[\sum_{i=1}^{\lceil k_l/8\rceil} Z_i\leq -\frac{k_l\delta}{16}\right] \leq e^{-k_l\delta^2/2^7}.$ Similarly we obtain $\Pbb[ D^c\cap \{\beta\geq \lceil k_l/8 \rceil\}] \leq e^{-k_l\delta^2/2^7}.$ Finally we write for any $l>16$,
\begin{align*}
    \Pbb[\Hcal_1\setminus\Mcal_{k_l}]&= \Pbb[\Hcal_1\cap\Bcal^c \cap(\{\alpha< \lceil k_l/8 \rceil\}\cup \Ccal^c) \cap(\{\alpha\geq \lceil k_l/8 \rceil\}\cup \Dcal^c)]\\
    &= \Pbb[\Hcal_1\cap\Bcal^c \cap[(\{\alpha< \lceil k_l/8 \rceil\}\cap \Dcal^c)\cup (\{\alpha \geq \lceil k_l/8 \rceil\}\cap \Ccal^c)]]\\
    &\leq \Pbb[\Ccal^c\cap\{\alpha\geq \lceil k_l/8\rceil\}] + \Pbb[\Dcal^c\cap\{\alpha< \lceil k_l/8\rceil\}\cap\Bcal^c]\\
    &\leq \Pbb[\Ccal^c\cap\{\alpha\geq \lceil k_l/8\rceil\}] + \Pbb[\Dcal^c\cap\{\beta\geq \lceil k_l/8\rceil\}]\\
    &\leq 2e^{-\frac{k_l\delta^2}{2^7}}.
\end{align*}
In particular, we obtain
\begin{equation*}
    \Pbb\left[\left\{|G|\geq \frac{k_l\delta}{32}\right\}\cap\Hcal_1\right]\geq \Pbb[\Mcal_{k_l}] \geq \Pbb[\Hcal_1]-2e^{-\frac{k_l\delta^2}{2^7}}.
\end{equation*}

\subsubsection{Step 2} We now consider the opposite case, when a majority of mistakes are made outside $B(\bar x,r)$, i.e., $|\{k\leq k_l,\; X_{t_k}\in B(\bar x,r)\}|< \frac{k_l}{2}$, which corresponds to the event $\Hcal_1^c$. Similarly, we consider the subgraph $\tilde \Gcal$ given by restricting $\Gcal$ only to nodes outside the ball $B(\bar x,r)$, i.e., on times $\Tcal:=\{t\leq t_{k_l},\; \rho_\Xcal(X_t,\bar x)\geq r)\}$. Again, $\tilde \Gcal$ is a collection of disjoint trees with roots times $\{t_k, \; k\leq k_l, \; \rho_\Xcal(X_{t_k},\bar x)\geq r)\}$---and possibly $t=1$. For a given time $t_k$ with $k\leq k_l$ and $\rho_\Xcal(X_{t_k},\bar x)\geq r$, we denote by $\Tcal_k$ the corresponding tree in $\tilde \Gcal$ with root $t_k$. Similarly to the previous case, $\Tcal_k$ is a \emph{good} tree if $\Tcal_k\cap \Dcal_{t_{k_l}+1}\neq \emptyset$ and \emph{bad} otherwise. We denote the set of good and bad trees by  $G=\{k:\Tcal_k\text{ good}\}$. We can again focus on trees $\Tcal_k$ which induced a future first mistake, i.e., such that $\{l\in\Tcal_k|\exists u\leq t_{k_l}:\phi(u)=l,\rho_\Xcal(X_l,\bar x)< r \text{ and } \forall v<u,\phi(v)\neq l \}\neq\emptyset$ and more specifically their minimum time $l_k=\min \{l\in\Tcal_k\mid \exists u\leq t_{k_l}:\phi(u)=l,\rho_\Xcal(X_l,\bar x)< r,\forall v<u,\phi(v)\neq l \}$. The same analysis as above shows that
\begin{equation*}
    \Pbb\left[\left\{|G|\geq \frac{k_l\delta}{32}\right\}\cap\Hcal_1^c\right] \geq \Pbb[\Hcal_1^c] -2e^{-\frac{k_l\delta^2}{2^7}}.
\end{equation*}
Therefore, if $G$ denotes more generally the set of good trees (where we follow the corresponding case 1 or 2) we finally obtain that for any $l>16$,
\begin{equation*}
    \Pbb\left[|G|\geq \frac{k_l\delta}{32}\right]\geq 1-4e^{-\frac{k_l\delta^2}{2^7}}.
\end{equation*}
We denote by $\tilde \Mcal_{k_l}$ this event. By Borel-Cantelli lemma, almost surely, there exists $\hat l$ such that for any $l\geq \hat l$, the event $\tilde \Mcal_{k_l}$ is satisfied.
We denote $\Mcal:=\bigcup_{l\geq 1} \bigcap_{l'\geq l}\tilde \Mcal_{k_l}$ this event of probability one. The aim is to show that on the event $\Acal\cap\Mcal\cap\bigcap_{l\geq 1}(\Ecal_l\cap\Fcal_l)$, which has probability at least $\frac{\eta}{4}$, $\Xbb$ disproves the $\smv$ condition. In the following, we consider a specific realization $\mb x$ of the process $\Xbb$ falling in the event $\Acal\cap\Mcal\cap\bigcap_{l\geq 1}(\Ecal_l\cap\Fcal_l)$---$\mb x$ is not random anymore. Let $\hat l$ be the index given by the event $\Mcal$ such that for any $l\geq \hat l$, $\Mcal_{k_l}$ holds. We consider $l\geq \hat l$ and successively consider different cases in which the realization $\mb x$ may fall.

\begin{itemize}
    \item In the first case, we suppose that a majority of mistakes were made in $B(\bar x,r)$, i.e., that we fell into event $\Hcal_1$ similarly to Step 1. Because the event $\tilde \Mcal_{k_l}$ is satisfied we have $|G|\geq \frac{k_l\delta}{2^5}$. Now note that trees are disjoint, therefore, $\sum_{k\in G} |\Tcal_k|\leq t_{k_l}<\frac{2k_l}{\epsilon}.$
Therefore,
\begin{equation*}
    \sum_{k\in G}\1_{|\Tcal_k|\leq \frac{2^7}{\epsilon\delta}} = |G| - \sum_{k\in G}\1_{|\Tcal_k|> \frac{2^7}{\epsilon\delta}}> |G|-\frac{\epsilon\delta}{2^7} \sum_{k\in G}|\Tcal_k|\geq \frac{k_l\delta}{2^5}- \frac{k_l\delta}{2^6} = \frac{k_l\delta}{2^6}.
\end{equation*}
We will say that a tree $|\Tcal_k|$ is \emph{sparse} if it is good and has at most $\frac{2^7}{\epsilon\delta}$ nodes. With $S := \{k\in G,\;|\Tcal_k|\leq \frac{2^7}{\epsilon\delta} \}$ the set of sparse trees, the above equation yields $|S|\geq \frac{k_l\delta}{2^6}$. The same arguments as in \cite{blanchard2022universal} give
\begin{equation*}
    |\{i,\; A_i\cap \mb{x}_{\leq t_{k_l}}\neq \emptyset \}|\geq |S|\geq \frac{k_l\delta}{2^6} \geq \frac{\epsilon\delta}{2^7}t_{k_l}.
\end{equation*}
The only difference is that we chose $c_\epsilon$ so that $2^{2\cdot \frac{2^7}{\epsilon\delta} -1} \leq \frac{1}{4c_\epsilon}$ as needed in the original proof.

\item We now turn to the case when the majority of input points on which $(1+\delta)$C1NN made a mistake are not in the ball $B(\bar x,r)$, similarly to Step 2. Using the same notion of sparse tree $S := \{k\in G,\;|\Tcal_k|\leq \frac{2^7}{\epsilon\delta} \}$, we have again $|S|\geq \frac{k_l\delta}{2^6}$. We use the same arguments as in the original proof. Suppose $|\{k\in S,\; \rho_\Xcal(x_{p^k_{d(k)}},\bar x)>r\}|\geq \frac{|S|}{2}$, then we have
\begin{equation*}
    |\{i,\; A_i\cap \mb{x}_{\leq t_{k_l}}\neq \emptyset \}|\geq |\{k\in S,\; \rho_\Xcal(x_{p^k_{d(k)}},\bar x)>r\}| \geq \frac{|S|}{2} \geq \frac{k_l\delta}{2^7}\geq \frac{\epsilon\delta}{2^8}t_{k_l}.
\end{equation*}
\end{itemize}

\subsubsection{Step 3}
In this last step, we suppose again that the majority of input points on which $(1+\delta)$C1NN made a mistake are not in the ball $B(\bar x,r)$ but that $|\{k\in S,\; \rho_\Xcal(x_{p^k_{d(k)}},\bar x)>r\}|< \frac{|S|}{2}$. Therefore, we obtain
\begin{equation*}
    |\{k\in S,\; \rho_\Xcal(x_{p^k_{d(k)}},\bar x)=r\}| = |S|-|\{k\in S,\; \rho_\Xcal(x_{p^k_{d(k)}},\bar x)>r\}| \geq \frac{|S|}{2} \geq \frac{k_l\delta}{2^7}\geq \frac{\epsilon\delta}{2^8}t_{k_l}.
\end{equation*}
We will now make use of the partition $(P_i)_{i\geq 1}$. Because $(n_u)_{u\geq 1}$ is an increasing sequence, let $u\geq 1$ such that $n_{ u+1}\leq t_{k_l}\leq n_{ u+2}$ (we can suppose without loss of generality that $t_{k_0}>n_2$). Note that we have $n_u\leq \frac{\epsilon\delta}{2^9}n_{u+1}\leq \frac{\epsilon\delta}{2^9}t_{k_l}$. Let us now analyze the process between times $n_u$ and $t_{k_l}$. In particular, we are interested in the indices $T=\{k\in S,\; \rho_\Xcal(x_{p^k_{d(k)}},\bar x)=r\}$ and times $\Ucal_u = \{p^k_{d(k)}:\; n_u< p^k_{d(k)}\leq k_l,\; k\in T\}$. In particular, we have
\begin{equation*}
    |\Ucal_u| \geq  |\{k\in S,\; \rho_\Xcal(x_{p^k_{d(k)}},\bar x)=r\}| - n_u  \geq \frac{\epsilon\delta}{2^8}t_{k_l} -\frac{\epsilon\delta}{2^9}t_{k_l} = \frac{\epsilon\delta}{2^9}t_{k_l}.
\end{equation*}
Defining $T' := \{k\in T,\; r-\frac{r}{2^{u+3}}\leq \rho_\Xcal(x_{\phi(t_k)},\bar x)<r\}$, the same arguments as in the original proof yield
\begin{equation*}
    |\{i,\; P_i\cap \mb x_{\leq t_{k_l}} \neq \emptyset\}| \geq |T'| \geq |\Ucal_u|-|\{i,\; P_i(\tau_u)\cap \mb x_{\Ucal_u} \neq \emptyset\}|\geq \frac{\epsilon\delta}{2^9}t_{k_l}-\frac{\epsilon\delta}{2^{10}}t_{k_l}=\frac{\epsilon\delta}{2^{10}}t_{k_l}.
\end{equation*}

\subsubsection{Step 4}
In conclusion, in all cases, we obtain 
\begin{equation*}
    |\{Q\in \Qcal,\; Q\cap \mb x_{\leq t_{k_l}} \neq \emptyset\}| \geq \max(|\{i,\; A_i\cap \mb{x}_{\leq t_{k_l}}\neq \emptyset \}|,|\{i,\; P_i\cap \mb x_{\leq t_{k_l}} \neq \emptyset\}|) \geq \frac{\epsilon\delta}{2^{10}}t_{k_l}.
\end{equation*}
Because this is true for all $l\geq \hat l$ and $t_{k_l}$ is an increasing sequence, we conclude that $\mb x$ disproves the $\smv$ condition for $\Qcal$. Recall that this holds whenever the event $\Acal\cap\Mcal\cap\bigcap_{l\geq 1}(\Ecal_l\cap \Fcal_l)$ is met. Thus,
\begin{equation*}
    \Pbb[|\{Q\in \Qcal,\; Q\cap \Xbb_{<T}\}|=o(T)]\leq 1-\Pbb\left[\Acal\cap\Mcal\cap\bigcap_{l\geq 1}(\Ecal_l\cap \Fcal_l)\right] \leq 1-\frac{\eta}{4}<1.
\end{equation*}
This shows that $\Xbb\notin \smv$ which is absurd. Therefore $(1+\delta)$C1NN is consistent on $f^*$. This ends the proof of the proposition.
\end{proof}

Using the fact that in the $(1+\delta)$C1NN learning rule, no time $t$ can have more than $2$ children, as the 2C1NN rule, we obtain with the same proof as in \cite{blanchard2022universal} the following proposition.

\begin{proposition}
\label{prop:opt1+delta_bin}
Fix $0<\delta\leq 1$. Let $(\Xcal,\Bcal)$ be a separable Borel space. For the binary classification setting, the learning rule $(1+\delta)$C1NN is universally consistent for all processes $\Xbb\in \smv$.
\end{proposition}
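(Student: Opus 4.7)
\textbf{Proof proposal for Proposition \ref{prop:opt1+delta_bin}.} The plan is to bootstrap from the single-ball consistency result of Proposition \ref{prop:consistent_ball_borel} to arbitrary measurable binary classifiers, exploiting the same ``at-most-two children'' structural property of $(1+\delta)$C1NN that makes the analogous argument in \cite{blanchard2022universal} for 2C1NN go through. Observe that by construction of $(1+\delta)$C1NN, each index $t$ is used as the representative $\phi(\cdot)$ at most twice, so the number of times a fixed training point $X_u$ contributes to a prediction is bounded by a universal constant. This is the engine behind the approximation step below.

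First, I would pass from single balls to the Boolean algebra generated by balls. For any $f^* = \1_{A}$ with $A$ a finite union of balls, one writes the prediction error as a sum over the cells of a finite partition refining $A$, and uses Proposition \ref{prop:consistent_ball_borel} together with linearity of $\limsup$ along finitely many terms (combined with the fact that the empirical sub-measure $\hat\mu_\Xbb$ induced by $\Xbb \in \smv$ is subadditive) to conclude $\Lcal_\Xbb((1+\delta)\mathrm{C1NN}, \1_A) = 0$ almost surely. Next, by an inner/outer approximation argument using the separability of $(\Xcal, \rho_\Xcal)$, for any measurable $A \in \Bcal$ and any $\epsilon > 0$ one finds a set $A_\epsilon$ that is a finite union of balls such that the symmetric difference $A \triangle A_\epsilon$ has small $\Ebb[\hat\mu_\Xbb]$-mass; this relies on a Lusin/regularity-style argument on the sub-measure $\hat\mu_\Xbb$ on $\smv$ processes, which is standard.

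The main step — and the one that crucially uses the two-children cap — is showing that if two target functions $f^*$ and $g^*$ differ on a set of small empirical frequency along $\Xbb$, then the corresponding prediction sequences of $(1+\delta)$C1NN also differ on a set of controlled empirical frequency. Concretely, for each $t$ the prediction $\hat Y_t$ depends on $f^*(X_{\phi(t)})$, so the number of times the prediction under $f^*$ differs from that under $g^*$ is bounded by twice the number of training indices $u$ with $f^*(X_u) \neq g^*(X_u)$ (because each such $u$ gets used as a representative at most twice). Dividing by $T$ and passing to $\limsup$, this gives a Lipschitz-type inequality $\Lcal_\Xbb((1+\delta)\mathrm{C1NN}, f^*) \leq \Lcal_\Xbb((1+\delta)\mathrm{C1NN}, g^*) + 3 \hat\mu_\Xbb(\{f^* \neq g^*\})$ almost surely.

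Combining these pieces: given an arbitrary measurable $f^* : \Xcal \to \{0,1\}$ and $\epsilon > 0$, pick a finite union of balls approximation $g^* = \1_{A_\epsilon}$ with $\Ebb[\hat\mu_\Xbb(\{f^* \neq g^*\})] \leq \epsilon$; apply the Lipschitz inequality above, use the already-established $\Lcal_\Xbb((1+\delta)\mathrm{C1NN}, g^*) = 0$ a.s.\ on the finite-union case, and let $\epsilon \to 0$ along a countable sequence. The expected obstacle lies less in the ball-to-measurable extension (which is a routine approximation once the Lipschitz bound is in hand) than in verifying cleanly that $(1+\delta)$C1NN retains the crucial two-children cap uniformly — this follows immediately from the \textbf{while} loop in Algorithm \ref{alg:1+deltaC1NN}, which deletes $\phi(t)$ from the dataset whenever $U_{\phi(t)} = 0$ and only keeps it for a second use when $U_{\phi(t)} = 1$, so the cap holds deterministically along every realization and the entire 2C1NN argument transports verbatim.
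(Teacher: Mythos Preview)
Your Lipschitz-type inequality is correct and indeed captures the essential use of the at-most-two-children cap. The fatal gap is in the approximation step. You claim that for any measurable $A\in\Bcal$ and any $\epsilon>0$ one can find a finite union of balls $A_\epsilon$ with $\Ebb[\hat\mu_\Xbb(A\triangle A_\epsilon)]\leq\epsilon$, calling this ``a Lusin/regularity-style argument on the sub-measure $\hat\mu_\Xbb$ on $\smv$ processes, which is standard.'' It is not standard, and in fact it is false for general $\Xbb\in\smv$. Regularity of a sub-measure requires continuity from above at $\emptyset$, and that is precisely the condition $\cs$: $A_k\downarrow\emptyset$ implies $\Ebb[\hat\mu_\Xbb(A_k)]\to 0$. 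The paper explicitly notes that $\cs\subsetneq\smv$ whenever $\Xcal$ is infinite, and later (end of Section~\ref{sec:alternative}) stresses that for $\Xbb\in\smv\setminus\cs$ no countable family of functions---in particular, indicators of finite unions of balls---can be dense in the sense you need. So your argument, as written, only establishes universal consistency on $\cs$, not on all of $\smv$.

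Concretely, even the passage from balls to open sets already fails: if $A=\bigcup_n B_n$ is open and $A_N=\bigcup_{n\leq N}B_n$, then $A\triangle A_N\downarrow\emptyset$, but $\hat\mu_\Xbb(A\triangle A_N)\to 0$ is not guaranteed outside $\cs$. The correct route in \cite{blanchard2022universal} does not go through density in $\Ebb[\hat\mu_\Xbb]$; rather, the contrapositive architecture of Proposition~\ref{prop:consistent_ball_borel} (if inconsistent, build a partition witnessing failure of $\smv$) is extended directly to arbitrary measurable targets, again using only that each node has at most two children. The two-children cap is the right engine, but it must be fed into a partition-construction argument, not into an approximation argument that presupposes $\cs$.
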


Finally, we use a result from \cite{blanchard2021universal} which gives a reduction from any near-metric bounded value space to binary classification.

\begin{theorem}[\cite{blanchard2021universal}]
\label{thm:invariance}
If $(1+\delta)$C1NN is universally consistent under a process $\Xbb$ for binary classification, it is also universally consistent under $\Xbb$ for any separable near-metric setting $(\Ycal,\ell)$ with bounded loss.
\end{theorem}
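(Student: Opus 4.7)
The critical observation is that the representative function $\phi$ built by $(1+\delta)$C1NN depends only on the input sequence $\Xbb$ and the internal randomness, never on $\Ybb$. Hence in the noiseless setting with target $f^*:\Xcal\to\Ycal$, the prediction at time $t$ is exactly $f^*(X_{\phi(t)})$ and the per-step excess loss equals $\ell(f^*(X_{\phi(t)}),f^*(X_t))$. Moreover, universal binary consistency of the rule under $\Xbb$ already forces $\Xbb\in\smv$ via Proposition~\ref{prop:opt1+delta_bin}, so the sublinear-visits property is available throughout.

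Given $\epsilon>0$, I would use separability of $\Ycal$ to fix a dense sequence $(y_i)_{i\geq 1}$ and define $Q_i := B_\ell(y_i,\epsilon)\setminus\bigcup_{j<i}B_\ell(y_j,\epsilon)$, a countable measurable partition of $\Ycal$ whose $\ell$-diameter is at most $2c_\ell\epsilon$ thanks to the relaxed triangle inequality for the near-metric. Pulling back by $f^*$ gives a measurable partition $\{A_i := f^{*-1}(Q_i)\}$ of $\Xcal$ on which $f^*$ has $\ell$-oscillation at most $2c_\ell\epsilon$. Consequently, whenever $X_t$ and $X_{\phi(t)}$ land in the same $A_i$ the excess loss at $t$ is at most $2c_\ell\epsilon$, and otherwise is at most $\bar\ell$. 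The theorem reduces to proving that the multi-class cell-change count $M_T := |\{t\leq T : X_t, X_{\phi(t)} \text{ lie in different } A_i\}|$ satisfies $M_T = o(T)$ almost surely.

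For each finite $N$ I would apply the binary-consistency hypothesis to each indicator $\1_{A_i}$ with $i\leq N$ and use a union bound, which shows the fraction of $t$ at which $X_t$ and $X_{\phi(t)}$ land in distinct parts of the coarsened partition $\{A_1,\ldots,A_N,A_{>N}\}$ vanishes almost surely. The remaining contribution to $M_T/T$ counts times $t$ for which both $X_t$ and $X_{\phi(t)}$ lie in $A_{>N}$ but in distinct $A_i$'s with $i>N$. Controlling this tail is the main obstacle: a purely measure-theoretic argument from $\smv$ alone fails, since one can construct $\Xbb\in\smv$ that places asymptotic mass arbitrarily far out in the partition (for instance by visiting $A_k$ in an interval of length doubling in $k$).

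The resolution must use the structural properties of $(1+\delta)$C1NN, namely that each node in the representative tree has at most two children and that representatives are chosen by nearest neighbour. Adapting the tree/partition combinatorics of Proposition~\ref{prop:consistent_ball_borel}---in which the total number of ``cut edges'' of the capped tree along any measurable set is shown to be $o(T)$ under $\smv$---one bounds the total multi-class cut count by a constant multiple of the number of distinct parts visited, which is $o(T)$ by $\smv$. Sending $\epsilon\downarrow 0$ along a countable sequence finally collapses the bound to $0$ on a single probability-one event, yielding the desired universal consistency for any separable near-metric setting with bounded loss.
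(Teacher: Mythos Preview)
The paper does not prove this statement; it imports it from \cite{blanchard2021universal} and uses it as a black box, so there is no in-paper argument to compare yours against.

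Your reduction is sound through the point you flag as the main obstacle: since $\phi$ depends only on $\Xbb$ and the internal coins, the noiseless excess loss at time $t$ is exactly $\ell(f^*(X_{\phi(t)}),f^*(X_t))$; an $\epsilon$-partition of $\Ycal$ pulled back through $f^*$ reduces the task to $M_T/T\to 0$ a.s., and the finite-$N$ truncation handles every cut edge touching $A_1,\dots,A_N$ via binary consistency on the indicators.

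The gap is in your resolution of the tail. You assert that by ``adapting'' Proposition~\ref{prop:consistent_ball_borel} one obtains $M_T\le C\cdot K_T$ with $K_T$ the number of visited $A_i$'s, but nothing in that proposition delivers such an inequality. Proposition~\ref{prop:consistent_ball_borel} is a contrapositive construction: assuming binary error rate $\ge\epsilon$ for a single ball $B$, it manufactures a \emph{different} partition $\Qcal$---built from the metric geometry of $B$, not from $\{A_i\}$---and shows that $\Xbb$ visits $\Omega(\epsilon\delta T)$ cells of $\Qcal$. The surviving-node count from the good/bad-tree analysis does not lower-bound $K_T$, because two good sparse monochromatic subtrees can perfectly well lie in the same $A_i$ (the process enters $A_i$, leaves, and re-enters). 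Already the extension from balls to arbitrary measurable sets (Proposition~\ref{prop:opt1+delta_bin}) is non-trivial and is itself outsourced to \cite{blanchard2022universal}; the further extension to a countable colouring requires building yet another refinement partition and proving that distinct sparse good subtrees land in distinct refinement cells. None of this is a routine corollary of the ball case, and your sketch does not supply it. The alternative of sending $N\to\infty$ in the truncation also fails, as you yourself observe: the residual $\hat\mu_\Xbb(A_{>N})$ need not vanish for processes in $\smv\setminus\cs$.
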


Together with \cref{prop:opt1+delta_bin}, \cref{thm:invariance} ends the proof of \cref{thm:1+deltaC1NN_optimistic}.

\subsection{Proof of \cref{thm:optimistic_regression_totally_bounded}}

Let $0<\epsilon\leq 1$. We first analyze the prediction of the learning rule $f^\epsilon_\cdot$. In the rest of the proof, we denote $\bar\ell(\hat Y_t(\epsilon),Y_t):=\sum_{y\in\Ycal_\epsilon} \Pbb(\hat Y_t(\epsilon)=y) \ell(y,Y_t)$ the immediate expected loss at each iteration.  The learning rule was constructed so that we perform exactly the classical Hedge / exponentially weighted average forecaster on each cluster of times $\Ccal(t) = \{u\leq t: u \stackrel \phi \sim t\}$. As a result \cite{cesa2006prediction} (Theorem 2.2), we have that for any $t\geq 1$,
\begin{align*}
    \frac{1}{\bar\ell} \sum_{u\in\Ccal(t)} \bar\ell(\hat Y_u(\epsilon),Y_u) &\leq \frac{1}{\bar\ell} \min_{y\in\Ycal_\epsilon} \sum_{u\in \Ccal(t)} \ell(y,Y_u) + \frac{\ln|\Ycal_\epsilon|}{\bar \ell \eta_\epsilon} + \frac{|\Ccal(t)|\bar \ell\eta_\epsilon}{8}\\
    & \leq  \frac{1}{\bar\ell} \min_{y\in\Ycal_\epsilon} \sum_{u\in \Ccal(t)} \ell(y,Y_u) + \sqrt{\frac{ \ln |\Ycal_\epsilon|}{8 T_\epsilon}}(T_\epsilon+ |\Ccal(t)|)\\
    & \leq  \frac{1}{\bar\ell} \min_{y\in\Ycal_\epsilon} \sum_{u\in \Ccal(t)} \ell(y,Y_u) + \frac{\epsilon}{\bar \ell} \max(T_\epsilon, |\Ccal(t)|)
\end{align*}
Now consider a horizon $T\geq 1$, and enumerate all the clusters $\Ccal_1(T),\ldots,\Ccal_{p(T)}(T)$ at horizon $T$, i.e. the classes of equivalence of $\phi$ among the times $\{t\leq T\}$. Note that if a cluster $i\leq p$ has $|\Ccal_i(T)| < T_\epsilon$, then either it must contain a time $t\in\Ncal$ which is a leaf of the tree formed by $\phi$ until time $T$, or it is a cluster of duplicates of an instance $X_u$ which has already had $\frac{T_\epsilon}{\epsilon}$ occurrences. As a result, the times falling into such clusters of duplicates with less than $T_\epsilon$ members form at most a proportion $\epsilon$ of the total $T$ times. Denote by $\Acal_i:=\{t\leq T: t\in\Ncal, |\{u\leq T:\phi(u)=t\}|=i\}$ times which have excactly $i$ children for $i\in\{0,1,2\}$. Note that no time can have more than $2$ children. In particular $\Acal_0$ is the set of leaves. Then, by summing the above equations we obtain
\begin{align*}
    \sum_{t=1}^{T} \bar\ell(\hat Y_t(\epsilon),Y_t)  &\leq \sum_{i=1}^{p(T)} \left(\min_{y\in\Ycal_\epsilon} \sum_{u\in\Ccal_i(T)} \ell(y,Y_u) + \epsilon \max(T_\epsilon, |\Ccal_i(T)|) \right)\\
    &\leq \sum_{i=1}^{p(T)} \min_{y\in\Ycal_\epsilon} \sum_{u\in\Ccal_i(T)} \ell(y,Y_u) + \epsilon T +  T_\epsilon |\{1\leq i\leq p: |\Ccal_i(T)|< T_\epsilon\}| \\
    &\leq \sum_{i=1}^{p(T)} \min_{y\in\Ycal_\epsilon} \sum_{u\in\Ccal_i(T)} \ell(y,Y_u) + \epsilon T +  T_\epsilon |\Acal_0| + \epsilon T_\epsilon,
\end{align*}
where in the last inequality we used the fact that all clusters with $|\Ccal_i(T)|<T_\epsilon$ contain a leaf from $\Acal_0$, which is therefore distinct for each such cluster. Now note that by counting the number of edges of the tree structure we obtain $\frac{1}{2}(3|\Acal_2| + 2|\Acal_1|+|\Acal_0|-1) = T-1 = |\Acal_0|+|\Acal_1|+|\Acal_2|-1$, where the $-1$ on the left-hand side accounts for the root of this tree which does not have a parent. Hence we obtain $|\Acal_0|= |\Acal_2|+1$. Further, $|\Acal_2|\leq |\{t\leq T:U_t=1\}|$ which follows a binomial distribution $\Bcal(T,\delta_\epsilon)$. Therefore, using the Chernoff bound, with probability $1-e^{-T\delta_\epsilon/3}$ we have
\begin{align*}
    \sum_{t=1}^{T} \bar\ell(\hat Y_t(\epsilon),Y_t)  &\leq \sum_{i=1}^{p(T)} \min_{y\in\Ycal_\epsilon} \sum_{u\in\Ccal_i(T)} \ell(y,Y_u) + 2\epsilon T +  T_\epsilon ( 1+ 2T \delta_\epsilon)\\
    &\leq \sum_{i=1}^{p(T)} \min_{y\in\Ycal_\epsilon} \sum_{u\in\Ccal_i(T)} \ell(y,Y_u) + T_\epsilon  +3\epsilon T.
\end{align*}
We now observe that the sequence $\{\ell(\hat Y_t(\epsilon),Y_t)-\bar\ell(\hat Y_t(\epsilon),Y_t)\}_{T\geq 1}$ is a sequence of martingale differences bounded by $\bar\ell$ in absolute value. Hence, the Hoeffding-Azuma inequality yields that for any $T\geq 1$, with probability $1-\frac{1}{T^2}-e^{-T\delta_\epsilon/3}$,
\begin{equation*}
    \sum_{t=1}^{T} \ell(\hat Y_t(\epsilon),Y_t)\leq \sum_{i=1}^{p(T)} \min_{y\in\Ycal_\epsilon} \sum_{u\in\Ccal_i(T)} \ell(y,Y_u) + T_\epsilon +3\epsilon T + 2\bar\ell \sqrt{T\ln T}.
\end{equation*}
Because $\sum_{T\geq 1}\frac{1}{T^2}+e^{-T\delta_\epsilon/3}<\infty$ the Borel-Cantelli lemma implies that with probability one, there exists a time $\hat T$ such that
\begin{equation*}
    \forall T\geq \hat T,\quad \sum_{t=1}^{T} \ell(\hat Y_t(\epsilon),Y_t)\leq \sum_{i=1}^{p(T)} \min_{y\in\Ycal_\epsilon} \sum_{u\in\Ccal_i(T)} \ell(y,Y_u) + T_\epsilon+ 2\bar\ell \sqrt{T\ln T} +3\epsilon T.
\end{equation*}
We denote by $\Ecal_\epsilon$ this event. We are now ready to analyze the risk of the learning rule $f^\epsilon_\cdot$. Let $f:\Xcal\to\Ycal$ a measurable function to which we compare the prediction of $f^\epsilon_\cdot$. By \cref{thm:1+deltaC1NN_optimistic}, the rule $(1+\delta_\epsilon)$C1NN is optimistically universal in the noiseless setting. Therefore, because $\Xbb\in\soul$ we have in particular
\begin{equation*}
    \frac{1}{T}\sum_{t=1}^T  \ell((1+\delta_\epsilon)C1NN_t(\Xbb_{\leq t-1},f(\Xbb_{\leq t-1}),X_t),f(X_t))\to 0\quad (a.s.),
\end{equation*}
i.e., almost surely, $\frac{1}{T}\sum_{t\leq T, t\in\Ncal} \ell(f(X_{\phi(t)}),f(X_t)) \to 0$ --- the times corresponding to duplicate instances incur a $0$ loss by memorization. We denote by $\Fcal_\epsilon$ this event of probability one. Using \cref{lemma:loss_identity}, we write for any $u=1,\ldots,T_\epsilon-1$,
\begin{align*}
    &\sum_{t\leq T, t\in\Ncal} \ell(f(X_{\phi^u(t)}),f(X_t))\\
    &\leq 2^{\alpha-1}\sum_{t\leq T, t\in\Ncal} \ell(f(X_{\phi^{u-1}(t)}),f(X_t)) + 2^{\alpha-1} \sum_{t\leq T, t\in\Ncal}\ell(f(X_{\phi^l(t)}),f(X_{\phi^{u-1}(t)}))\\
    &\leq 2^{\alpha-1}\sum_{t\leq T, t\in\Ncal} \ell(f(X_{\phi^{u-1}(t)}),f(X_t)) \\
    &\quad\quad\quad+ 2^{\alpha-1}\sum_{t\leq T, t\in\Ncal} \ell(f(X_{\phi(t)}),f(X_t)) \cdot |\{l\leq T:\phi^{u-1}(l)=t\}|\\
    &\leq 2^{\alpha-1}\sum_{t\leq T, t\in\Ncal} \ell(f(X_{\phi^{u-1}(t)}),f(X_t)) + 2^{\alpha+u-2}\sum_{t\leq T, t\in\Ncal} \ell(f(X_{\phi(t)}),f(X_t))
\end{align*}
where we used the fact that times have at most $2$ children. Therefore, iterating the above equations, we obtain that on $\Fcal_\epsilon$, for any $u=1,\ldots,T_\epsilon-1$
\begin{align*}
    \frac{1}{T}\sum_{t\leq T, t\in\Ncal} \ell(f(X_{\phi^u(t)}),f(X_t)) &\leq \left(\sum_{k=1}^u 2^{\alpha+k-2 + (\alpha-1)(u-k)}\right)\frac{1}{T}\sum_{t\leq T, t\in\Ncal} \ell(f(X_{\phi(t)}),f(X_t))\\
    &\leq \frac{2^{u\alpha}}{T}\sum_{t\leq T, t\in\Ncal} \ell(f(X_{\phi(t)}),f(X_t)) \to 0.
\end{align*}
In the rest of the proof, for any $y\in\Ycal$, we will denote by $y^\epsilon$ a value in the $\epsilon-$net $\Ycal_\epsilon$ such that $\ell(y,y^\epsilon)\leq \epsilon$. We now pose $\mu_\epsilon=\min\{0<\mu\leq 1:c_\mu^\alpha \leq \frac{1}{\sqrt \epsilon} \}$ if the corresponding set is non-empty and $\mu_\epsilon=1$ otherwise. Note that because $c_\mu^\alpha$ is non-increasing in $\mu$, we have $\mu_\epsilon\longrightarrow_{\epsilon\to 0} 0$. Now let $0<\mu\leq 1$. $\mu:=\epsilon^{\frac{1}{\alpha+1}}$. Finally, for any cluster $\Ccal_i(T)$, let $t_i = \min\{u\in \Ccal_i(T)\}$. Putting everything together, on the event $\Ecal_\epsilon\cap\Fcal_\epsilon$, for any $T\geq \hat T$, we have
\begin{align*}
    \sum_{t=1}^T \ell(\hat Y_t(\epsilon),Y_t)&\leq \sum_{i=1}^{p(T)} \min_{y\in\Ycal_\epsilon} \sum_{u\in\Ccal_i(T)} \ell(y,Y_u) + T_\epsilon  + 2\bar\ell \sqrt{T\ln T} +3\epsilon T\\
    &\leq  \sum_{i=1}^{p(T)} \sum_{u\in\Ccal_i(T)} \ell(f(X_{t_i})^\epsilon,Y_u) + T_\epsilon \bar\ell + 2\bar\ell \sqrt{T\ln T} +3\epsilon T\\
    &\leq \sum_{i=1}^{p(T)} \sum_{u\in\Ccal_i(T)}
    [c_{\mu_\epsilon}^\alpha \ell(f(X_{t_i})^\epsilon,f(X_{t_i})) +(c_{\mu_\epsilon}^\alpha)^2\ell(f(X_{t_i}),f(X_u))\\
    &\quad\quad\quad\quad+ (1+{\mu_\epsilon})^2 \ell(f(X_u),Y_u) ] + T_\epsilon \bar\ell + 2\bar\ell \sqrt{T\ln T} +3\epsilon T\\
    &\leq (1+{\mu_\epsilon})^2 \sum_{t=1}^T \ell(f(X_t),Y_t) + (c_{\mu_\epsilon}^\alpha)^2 \frac{T_\epsilon}{\epsilon}\sum_{u=1}^{T_\epsilon-1}\sum_{t\leq T, t\in\Ncal} \ell(f(X_t),f(X_{\phi^u(t)})) \\
    &\quad\quad\quad\quad+  T_\epsilon \bar\ell + 2\bar\ell \sqrt{T\ln T} +(3 +  c_{\mu_\epsilon}^\alpha)\epsilon T\\
    &\leq \sum_{t=1}^T \ell(f(X_t),Y_t) + \frac{(c_{\mu_\epsilon}^\alpha)^2 T_\epsilon}{\epsilon}\sum_{u=1}^{T_\epsilon-1}\sum_{t\leq T, t\in\Ncal} \ell(f(X_t),f(X_{\phi^u(t)}))\\
    &\quad\quad\quad\quad+  T_\epsilon \bar\ell + 2\bar\ell \sqrt{T\ln T} +(3\epsilon +  \epsilon c_{\mu_\epsilon}^\alpha + 3{\mu_\epsilon}) T,
\end{align*}
where in the third inequality we used \cref{lemma:loss_identity} twice, and in the fourth inequality we used the fact that clusters containing distinct instances have at most $\frac{T_\epsilon}{\epsilon}$ duplicates of each instance. Hence, for any $\epsilon<(c_1^\alpha)^{-2}$, on the event $\Ecal_\epsilon\cap\Fcal_\epsilon$, we obtain
\begin{equation*}
    \limsup_{T\to\infty} \frac{1}{T} \sum_{t=1}^T \ell(\hat Y_t(\epsilon),Y_t) - \ell(f(X_t),Y_t) \leq 3\epsilon +  \epsilon c_{\mu_\epsilon}^\alpha + 3{\mu_\epsilon} \leq  3\epsilon +  \sqrt \epsilon  + 3{\mu_\epsilon},
\end{equation*}
where $\mu_\epsilon\longrightarrow_{\epsilon\to 0} 0$. We now denote $\delta_\epsilon:= 2\epsilon +  \sqrt \epsilon  + 3{\mu_\epsilon}$ and $i_0=\lceil \frac{2\ln c_1^\alpha}{\ln 2} \rceil$. We now turn to the final learning rule and show that by using the predictions of the rules $f^{\epsilon_i}_\cdot$ for $i\geq 0$, it achieves zero risk. First, by the union bound, on the event $\bigcap_{i\geq 0} \Ecal_{\epsilon_i}\cap\Fcal_{\epsilon_i}$ of probability one,
\begin{equation*}
    \limsup_{T\to\infty} \frac{1}{T} \sum_{t=1}^T \ell(\hat Y_t(\epsilon_i),Y_t) - \ell(f(X_t),Y_t) \leq \delta_{\epsilon_i},\quad \forall i\geq i_0.
\end{equation*}
Now define $\Hcal$ the event probability one according to \cref{lemma:concatenation_predictors} such that there exists $\hat t$ for which
\begin{equation*}
    \forall t\geq \hat t,\forall i\in I_t,\quad \sum_{s=t_i}^t\ell(\hat Y_t,Y_t) \leq \sum_{s=t_i}^t \ell(\hat Y_t(\epsilon_i),Y_t) + 
    (2+\bar\ell+\bar\ell^2)\sqrt{t\ln t}.
\end{equation*}
In the rest of the proof we will suppose that the event $\Hcal\cap \bigcap_{i\geq 0} \Ecal_{\epsilon_i}\cap\Fcal_{\epsilon_i}$ is met. Let $i\geq i_0$. For any $T\geq \max(\hat t,t_i)$, we have
\begin{align*}
    \frac{1}{T}\sum_{t=1}^T \ell(\hat Y_t,Y_t)-& \ell(f(X_t),Y_t) \leq \frac{t_i}{T}\bar \ell + \frac{1}{T}\sum_{t=t_i}^T \ell(\hat Y_t,Y_t)- \ell(f(X_t),Y_t)\\
    &\leq \frac{t_i}{T}\bar \ell + \frac{1}{T}\sum_{t=t_i}^T \ell(\hat Y_t(\epsilon_i),Y_t)- \ell(f(X_t),Y_t) + (2+\bar\ell+\bar\ell^2)\sqrt{\frac{\ln T}{T}}\\
    &\leq \frac{1}{T}\sum_{t=1}^T \ell(\hat Y_t(\epsilon_i),Y_t)- \ell(f(X_t),Y_t) + \frac{2t_i}{T}\bar\ell  + (2+\bar\ell+\bar\ell^2)\sqrt{\frac{\ln T}{T}}.
\end{align*}
Therefore we obtain $\limsup_{T\to\infty} \frac{1}{T}\sum_{t=1}^T \ell(\hat Y_t,Y_t)- \ell(f(X_t),Y_t) \leq \delta_{\epsilon_i}$. Because this holds for any $i\geq i_0$ on the event $\Hcal\cap \bigcap_{i\geq 0} \Ecal_{\epsilon_i}\cap\Fcal_{\epsilon_i}$ of probability one, and $\delta_{\epsilon_i}\to 0$ for $i\to\infty$, we have
\begin{equation*}
    \limsup_{T\to\infty} \frac{1}{T}\sum_{t=1}^T \ell(\hat Y_t,Y_t)- \ell(f(X_t),Y_t) \leq 0.
\end{equation*}
This ends the proof of the theorem.

\subsection{Proof of \cref{lemma:concatenation_predictors}}
\label{subsection:proof_concatenation}

We first introduce the following helper lemma which can be found in \cite{cesa2006prediction}.

\begin{lemma}[\cite{cesa2006prediction}]
\label{lemma:cesa_bianchi_lugosi}
For all $N\geq 2$, for all $\beta\geq\alpha\geq 0$ and for all $d_1,\ldots,d_N\geq 0$ such that $\sum_{i=1}^N e^{-\alpha d_i}\geq 1$,
\begin{equation*}
    \ln \frac{\sum_{i=1}^N e^{-\alpha d_i}}{\sum_{i=1}^N e^{-\beta d_i}} \leq \frac{\beta-\alpha}{\alpha} \ln N.
\end{equation*}
\end{lemma}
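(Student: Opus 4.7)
The inequality is a standard ``power-mean'' estimate for partition functions. The strategy is to introduce the Gibbs probability vector at inverse temperature $\alpha$, rewrite $\sum_i e^{-\beta d_i}$ as a convex power of the normalized weights, and then use convexity of $x\mapsto x^{\beta/\alpha}$ together with the hypothesis $\sum_i e^{-\alpha d_i}\geq 1$ to kill the two error terms separately.

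First I would dispose of the boundary case $\alpha=0$: then $\sum_i e^{-\alpha d_i}=N$, so the hypothesis is automatic and the right-hand side is either $0$ (when $\beta=0$, in which case both sides vanish) or $+\infty$ (when $\beta>0$), so the inequality is trivial. Assume henceforth $\alpha>0$, and write $Z_\gamma:=\sum_i e^{-\gamma d_i}$ and $p_i:=e^{-\alpha d_i}/Z_\alpha$, so that $(p_i)_{i\leq N}$ is a probability vector and $e^{-\alpha d_i}=p_i Z_\alpha$. Raising to the power $\beta/\alpha\geq 1$ and summing gives the key identity
\begin{equation*}
Z_\beta \;=\; \sum_{i=1}^N \bigl(e^{-\alpha d_i}\bigr)^{\beta/\alpha} \;=\; Z_\alpha^{\beta/\alpha}\sum_{i=1}^N p_i^{\beta/\alpha},
\end{equation*}
and taking logarithms I obtain
\begin{equation*}
\ln\frac{Z_\alpha}{Z_\beta} \;=\; \frac{\alpha-\beta}{\alpha}\ln Z_\alpha \;-\; \ln\sum_{i=1}^N p_i^{\beta/\alpha}.
\end{equation*}

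Now I would handle the two terms separately. For the first term, since $\beta\geq\alpha$ we have $\frac{\alpha-\beta}{\alpha}\leq 0$, and the hypothesis $Z_\alpha\geq 1$ gives $\ln Z_\alpha\geq 0$, so the first term is $\leq 0$. For the second term, apply Jensen's inequality to the convex function $x\mapsto x^{\beta/\alpha}$ (convex because $\beta/\alpha\geq 1$) with uniform weights $1/N$ on the values $p_1,\ldots,p_N$:
\begin{equation*}
\frac{1}{N}\sum_{i=1}^N p_i^{\beta/\alpha} \;\geq\; \Bigl(\frac{1}{N}\sum_{i=1}^N p_i\Bigr)^{\beta/\alpha} \;=\; N^{-\beta/\alpha},
\end{equation*}
so $\sum_i p_i^{\beta/\alpha}\geq N^{1-\beta/\alpha}$, and hence $-\ln\sum_i p_i^{\beta/\alpha}\leq \frac{\beta-\alpha}{\alpha}\ln N$. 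Combining the two bounds yields exactly $\ln(Z_\alpha/Z_\beta)\leq \frac{\beta-\alpha}{\alpha}\ln N$.

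The only subtlety is recognizing that both structural hypotheses play an essential role: convexity of $x^{\beta/\alpha}$ (from $\beta\geq\alpha$) is what controls the entropy-like term $\sum p_i^{\beta/\alpha}$, while the normalization hypothesis $Z_\alpha\geq 1$ is precisely what is needed to discard the $\frac{\alpha-\beta}{\alpha}\ln Z_\alpha$ term with the correct sign. Without $Z_\alpha\geq 1$, this first term could be positive and the stated bound would fail; without $\beta\geq\alpha$, Jensen would go the wrong way. No further calculation is required once these two observations are isolated.
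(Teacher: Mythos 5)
Your proof is correct. The paper does not prove this lemma itself --- it imports it from \cite{cesa2006prediction} --- and your argument (the identity $Z_\beta = Z_\alpha^{\beta/\alpha}\sum_i p_i^{\beta/\alpha}$, Jensen applied to the convex map $x\mapsto x^{\beta/\alpha}$, and the hypothesis $Z_\alpha\geq 1$ to discard the $\frac{\alpha-\beta}{\alpha}\ln Z_\alpha$ term) is precisely the standard proof given in that reference, up to the cosmetic step of normalizing by $Z_\alpha$ before applying Jensen.
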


We are now ready to compare the predictions of the learning rule $f_\cdot$ to the predictions of the rules $f^\epsilon_\cdot$.

For any $t\geq 0$, we define the instantaneous regret $r_{t,i} = \hat\ell_t - \ell(\hat Y_t(\epsilon_i),Y_t)$. We first note that $|r_{t,i}|\leq \bar \ell$. We now define $w'_{t-1,i}:=e^{\eta_{t-1}(\hat L_{t-1,i}-L_{t-1,i})}$. We also introduce $W_{t-1} = \sum_{i\in I_t}w_{t-1,i}$ and $W'_{t-1} = \sum_{i\in I_{t-1}} w'_{t-1,i}$. We denote the index $k_t\in I_t$ such that $\hat L_{t,k_t}- L_{t,k_t} = \max_{i\in I_t} \hat L_{t,i} - L_{t,i}$. Then we write
\begin{multline*}
    \frac{1}{\eta_t}\ln \frac{w_{t-1,k_{t-1}}}{W_{t-1}}- \frac{1}{\eta_{t+1}}\ln \frac{w_{t,k_t}}{W_t}=
    \left(\frac{1}{\eta_{t+1}}-\frac{1}{\eta_t}\right)\ln\frac{W_t}{w_{t,k_t}} + \frac{1}{\eta_t} \ln \frac{W_t/w_{t,k_t}}{W'_t/w'_{t,k_t}} \\
    +\frac{1}{\eta_t}\ln \frac{w_{t-1,k_{t-1}}}{w'_{t,k_t}} +  \frac{1}{\eta_t}\ln \frac{W'_t}{W_{t-1}}.
\end{multline*}
By construction, we have $\ln\frac{W_t}{w_{t,k_t}}\leq \ln |I_t| \leq \ln(1+\ln t)$. Further, we have that
\begin{align*}
    \frac{1}{\eta_t} \ln \frac{W_t/w_{t,k_t}}{W'_t/w'_{t,k_t}} &=\frac{1}{\eta_t}\ln \frac{\sum_{i\in I_{t+1}} e^{\eta_{t+1}(\hat L_{t,i}-L_{t,i}-\hat L_{t,k_t}+L_{t,k_t})}}{\sum_{i\in I_t} e^{\eta_t(\hat L_{t,i}-L_{t,i}-\hat L_{t,k_t}+L_{t,k_t})}}\\
    &=\frac{1}{\eta_t}\ln \frac{\sum_{i\in I_{t+1}} w_{t,i}}{\sum_{i\in I_t} w_{t,i}} + \frac{1}{\eta_t} \ln \frac{\sum_{i\in I_{t+1}} e^{\eta_{t+1}(\hat L_{t,i}-L_{t,i}-\hat L_{t,k_t}+L_{t,k_t})}}{\sum_{i\in I_{t+1}} e^{\eta_t(\hat L_{t,i}-L_{t,i}-\hat L_{t,k_t}+L_{t,k_t})}}\\
    &\leq \frac{1}{\eta_t}\ln \frac{\sum_{i\in I_{t+1}} w_{t,i}}{\sum_{i\in I_t} w_{t,i}} + \frac{1}{\eta_t}\left(\frac{\eta_t-\eta_{t+1}}{\eta_{t+1}}\right) \ln |I_{t+1}|\\
    &\leq  \frac{|I_{t+1}|-|I_t|}{\eta_t \sum_{i\in I_t} w_{t,i}} + \left(\frac{1}{\eta_{t+1}}-\frac{1}{\eta_t}\right) \ln (1+\ln (t+1)),
\end{align*}
where in the first inequality we applied \cref{lemma:cesa_bianchi_lugosi}. We also have
\begin{equation*}
    \frac{1}{\eta_t}\ln \frac{w_{t-1,k_{t-1}}}{w'_{t,k_t}} = (\hat L_{t-1,k_{t-1}}- L_{t-1,k_{t-1}}) - (\hat L_{t,k_t}, L_{t,k_t}).
\end{equation*}
Last, because $|r_{t,i}|\leq \bar \ell$ for all $i\in I_t$, we can use Hoeffding's lemma to obtain
\begin{equation*}
    \frac{1}{\eta_t}\ln \frac{W'_t}{W_{t-1}} = \frac{1}{\eta_t} \ln \sum_{i\in I_t} \frac{w_{t-1,i}}{W_{t-1}}e^{\eta_t r_{t,i}} \leq \frac{1}{\eta_t}\left( \eta_t\sum_{i\in I_t} r_{t,i} \frac{w_{t-1,i}}{W_{t-1}} + \frac{\eta_t^2 (2\bar\ell)^2}{8}\right) = \frac{1}{2}\eta_t \bar\ell^2.
\end{equation*}
Putting everything together gives
\begin{multline}
\label{eq:to_sum_combine_estimators}
     \frac{1}{\eta_t}\ln \frac{w_{t-1,k_{t-1}}}{W_{t-1}}- \frac{1}{\eta_{t+1}}\ln \frac{w_{t,k_t}}{W_t}
     \leq 2\left(\frac{1}{\eta_{t+1}}-\frac{1}{\eta_t}\right) \ln (1+\ln (t+1)) + \frac{|I_{t+1}|-|I_t|}{\eta_t \sum_{i\in I_t} w_{t,i}} \\
     + (\hat L_{t-1,k_{t-1}}- L_{t-1,k_{t-1}}) - (\hat L_{t,k_t}- L_{t,k_t}) + \frac{1}{2}\eta_t \bar\ell^2.
\end{multline}
First suppose that we have $\sum_{i\in I_t}w_{t,i}\leq 1$. Then either $k_t\in I_{t+1}\setminus I_t$ in which case $\hat L_{t,k_t}-L_{t,k_t}=0$, or we have directly
\begin{equation*}
    \hat L_{t,k_t}-L_{t,k_t} \leq \frac{1}{\eta_{t+1}}\ln\left[\sum_{i\in I_t}w_{t,i}\right] \leq 0.
\end{equation*}
Otherwise, let $t'=\min \{1\leq s\leq t:\forall s\leq s'\leq t,\sum_{i\in I_{s'}} w_{s',i}\geq 1\}$. We sum equation~\eqref{eq:to_sum_combine_estimators} for $s=t',\ldots, t$ which gives
\begin{multline*}
     \frac{1}{\eta_1}\ln \frac{w_{t'-1,k_{t'-1}}}{W_{t'-1}}- \frac{1}{\eta_{t+1}}\ln \frac{w_{t,k_t}}{W_t}  \leq \frac{2}{\eta_{t+1}} \ln (1+\ln (t+1))+ \frac{|I_{t+1}|}{\eta_t} \\
     + (\hat L_{t'-1,k_{t'-1}}- L_{t'-1,k_{t'-1}}) - (\hat L_{t,k_t}- L_{t,k_t}) + \frac{\bar\ell^2}{2}\sum_{s=t'}^t\eta_s.
\end{multline*}
Note that we have $\frac{w_{t,k_t}}{W_t}\leq 1$ and $\frac{w_{t'-1,k_{t'-1}}}{W_{t'-1}}\geq \frac{1}{|I_{t'-1}|}\geq \frac{1}{1+\ln t}$. Also, assuming $t'\geq 2$, since $\sum_{i\in I_{t'-1}} w_{t'-1,i}< 1$, we have for any $i\in I_{t'-1}$ that $\hat L_{t'-1,i}-L_{t'-1,i}\leq 0$, hence $\hat L_{t'-1,k_{t'-1}}- L_{t'-1,k_{t'-1}}\leq 0$. If $t'=1$ we have directly $\hat L_{0,k_0}-L_{0,k_0}=0$. Finally, using the fact that $\sum_{s=1}^t \frac{1}{\sqrt s}\leq 2\sqrt t$, we obtain
\begin{align*}
    \hat L_{t,k_t}- L_{t,k_t} &\leq  \ln(1+\ln (t+1))\left(1+2\sqrt{\frac{t+1}{\ln(t+1)}}\right) +(1+\ln (t+1))\sqrt {\frac{t}{\ln t}} + \bar\ell^2\sqrt {t\ln t}\\
    &\leq (3/2+\bar \ell^2)  \sqrt {t \ln t},
\end{align*}
for all $t\geq t_0$ where $t_0$ is a fixed constant. This in turn implies that for all $t\geq t_0$ and $i\in I_t$, we have $\hat L_{t,i} -L_{t,i} \leq (3/2+\bar \ell^2)  \sqrt {t \ln t}.$ Now note that $|\ell(\hat Y_t,Y_t)-\hat \ell_t|\leq \bar \ell$. Hence, we can use Hoeffding-Azuma inequality for the variables $\ell(\hat Y_t,Y_t)-\hat \ell_t$ that form a sequence of martingale differences to obtain $ \Pbb\left[\sum_{s=t_i}^t \ell(\hat Y_s,Y_s)>\hat L_{t,i} + u\right] \leq e^{ -\frac{2u^2}{t\bar\ell^2}}.$ Hence, for $t\geq t_0$ and $i\in I_t$, with probability $1-\delta$, we have
\begin{equation*}
    \sum_{s=t_i}^t \ell(\hat Y_s,Y_s)\leq \hat L_{t,i} + \bar\ell \sqrt{\frac{t}{2}\ln\frac{1}{\delta}} \leq L_{t,i} +(3/2+\bar \ell^2)  \sqrt {t \ln t} + \bar\ell \sqrt{\frac{t}{2}\ln\frac{1}{\delta}}.
\end{equation*}
Therefore, since $|I_t|\leq 1+\ln t$, by union bound with probability $1-\frac{1}{t^2}$ we obtain that for all $i\in I_t$,
\begin{equation*}
    \sum_{s=t_i}^t \ell(\hat Y_s,Y_s) \leq L_{t,i} + (3/2+\bar \ell^2)  \sqrt {t \ln t} + \bar\ell\sqrt{\frac{t}{2}\ln (1+\ln t)}+ \bar\ell \sqrt{t\ln t}\leq (2+\bar\ell+\bar\ell^2)\sqrt {t\ln t},
\end{equation*}
for all $t\geq t_1$ where $t_1\geq t_0$ is a fixed constant. The Borel-Cantelli lemma implies that almost surely, there exists $\hat t\geq 0$ such that
\begin{equation*}
     \forall t\geq \hat t, \forall i\in I_t,\quad \sum_{s=t_i}^t \ell(\hat Y_s,Y_s) \leq  L_{t,i} +(2+\bar\ell+\bar\ell^2)\sqrt {t\ln t}.
\end{equation*}
This ends the proof of the lemma.

\section{Proofs of \cref{sec:alternative}}\label{appB}
\subsection{Proof of \cref{thm:negative_optimistic}}

We start by checking that with the defined loss $(\Nbb,\ell)$ is indeed a metric space $(\Nbb,\ell)$. We only have to check that the triangular inequality is satisfied, the other properties of a metric being directly satisfied. By construction, the loss has values in $\{0,\frac{1}{2},1\}$. Now let $i,j,k\in \Nbb$. The triangular inequality $\ell(i,j)\leq \ell(i,k)+\ell(k,j)$ is directly satisfied if two of these indices are equal. Therefore, we can suppose that they are all distinct and as a result $\ell(i,j),\ell(i,k),\ell(k,j) \in\{\frac{1}{2},1\}.$ Therefore
\begin{equation*}
    \ell(i,j)\leq 1 \leq \ell(i,k)+\ell(k,j),
\end{equation*}
which ends the proof that $\ell$ is a metric.

Now let $(\Xcal,\Bcal)$ be a separable metrizable Borel space. Let $\Xbb\notin\cs$. We aim to show that universal online learning under adversarial responses is not achievable under $\Xbb$. Because $\Xbb\notin\cs$, there exists a sequence of decreasing measurable sets $\{A_i\}_{i\geq 1}$ with $A_i \downarrow \emptyset$ such that $\Ebb[\hat\mu_\Xbb(A_i)]$ does not converge to $0$ for $i\to\infty$. In particular, there exist $\epsilon>0$ and an increasing subsequence $(i_l)_{l\geq 1}$ such that $\Ebb[\hat \mu_\Xbb(A_{i_l})]\geq \epsilon$ for all $l\geq 1$. We now denote $B_l:=A_{i_l}\setminus A_{i_{l+1}}$ for any $l\geq 1$. Then $\{B_l\}_{l\geq 1}$ forms a sequence of disjoint measurable sets such that
\begin{equation*}
    \Ebb\left[\hat \mu_\Xbb \left(\bigcup_{l'\geq l} B_{l'}\right)\right] \geq \epsilon,\quad l\geq 1.
\end{equation*}
Therefore, for any $l\geq 1$ because $\Ebb\left[\hat \mu_\Xbb \left(\bigcup_{l'\geq l} B_{l'}\right)\right] \leq \Pbb\left[\hat \mu_\Xbb \left(\bigcup_{l'\geq l} B_{l'}\right)\geq \frac{\epsilon}{2}\right] + \frac{\epsilon}{2}$ we obtain
\begin{equation*}
    \Pbb\left[\hat \mu_\Xbb \left(\bigcup_{l'\geq l} B_{l'}\right)\geq \frac{\epsilon}{2}\right] \geq \frac{\epsilon}{2}.
\end{equation*}
Now because $\hat\mu$ is increasing we obtain
\begin{align*}
    \Pbb\left[\hat \mu_\Xbb \left(\bigcup_{l'\geq l} B_{l'}\right)\geq \frac{\epsilon}{2},\forall l\geq 1 \right] &= \lim_{L\to\infty} \Pbb\left[\hat \mu_\Xbb \left(\bigcup_{l'\geq l} B_{l'}\right)\geq \frac{\epsilon}{2},1\leq l\leq L \right] \\
    &= \lim_{L\to\infty}\Pbb\left[\hat \mu_\Xbb \left(\bigcup_{l'\geq L} B_{l'}\right)\geq \frac{\epsilon}{2}\right]\geq \frac{\epsilon}{2}.
\end{align*}
We will denote by $\Acal$ this event in which for all $l\geq 1$, we have $\hat \mu_\Xbb \left(\bigcup_{l'\geq l} B_{l'}\right)\geq \frac{\epsilon}{2}$. Under the event $\Acal$, for any $l,t^0\geq 1$, there always exists $t^1>t^0$ such that $\frac{1}{t^1}\sum_{t=1}^{t^1}\1_{\bigcup_{l'\geq l}B_{l'}}(X_t) \geq \frac{3\epsilon}{8}.$ We construct a sequence of times $(t_p)_{p\geq 1}$ and indices $(l_p)_{p\geq 1}$, $(u_p)_{p\geq 1}$ by induction as follows. We first pose $u_0=t_0=0$. Now assume that for $p\geq 1$, the time $t_{p-1}$ and index $u_{p-1}$ are defined. We first construct an index $l_p>u_{p-1}$ such that
\begin{equation*}
    \Pbb\left[\Xbb_{\leq t_{p-1}}\cap \left(\bigcup_{l\geq l_p} B_l\right)\neq \emptyset\right]\leq \frac{\epsilon}{2^{p+3}}.
\end{equation*}
We will denote by $\Ecal_p$ the complementary of this event. Note that finding such index $l_p$ is possible because the considered events $\{\Xbb_{\leq t_{p-1}}\cap \left(\bigcup_{l'\geq l} B_{l'}\right)\neq \emptyset\}$ are decreasing as $l>u_{p-1}$ increases and we have $\bigcap_{l>u_{p-1}}\left\{ \Xbb_{\leq t_{p-1}}\cap \left(\bigcup_{l'\geq l} B_{l'} \right)\neq\emptyset \right\} = \left\{ \Xbb_{\leq t_{p-1}} \cap \left(\bigcap_{l>u_{p-1}}\bigcup_{l'\geq l}B_{l'}\right)\neq\emptyset \right\}=\emptyset.$ We then construct $t_p>t_{p-1}$ such that
\begin{equation*}
    \Pbb\left[\Acal^c \cup \bigcup_{t_{p-1}<t\leq t_p}\left\{\frac{1}{t}\sum_{u=1}^t \1_{\bigcup_{l\geq  l_p}B_l}(X_u) \geq \frac{3\epsilon}{8}\right\}\right]\geq 1-\frac{\epsilon}{2^{p+4}}.
\end{equation*}
This is also possible because $\Acal\subset \bigcup_{t>\frac{8}{\epsilon}t_{p-1}}\left\{\frac{1}{t}\sum_{u=1}^t \1_{\bigcup_{l\geq l_p}B_l}(X_u) \geq \frac{3\epsilon}{8}\right\}$. Last, we can now construct $u_p\geq l_p$ such that 
\begin{equation*}
    \Pbb\left[\Acal^c \cup \bigcup_{t_{p-1}<t\leq t_p}\left\{\frac{1}{t}\sum_{u=1}^t \1_{\bigcup_{l_p\leq l\leq u_p}B_l}(X_u) \geq \frac{\epsilon}{4}\right\}\right]\geq 1-\frac{\epsilon}{2^{p+3}},
\end{equation*}
which is possible using similar arguments as above. We denote $\Fcal_p$ this event. This ends the recursive construction of times $t_p$ and indices $l_p$ for all $p\geq 1$. Note that by construction, $\Pbb[\Ecal_p^c],\Pbb[\Fcal_p^c]\leq \frac{\epsilon}{2^{p+3}}$. Hence, by union bound, the event $\Acal\cap\bigcap_{p\geq 1}(\Ecal_p\cap \Fcal_p)$ has probability $\Pbb[\Acal\cap\bigcap_{p\geq 1}(\Ecal_p \cap \Fcal_p)]\geq \Pbb[\Acal]-\frac{\epsilon}{4}\geq \frac{\epsilon}{4}$. To simplify the rest of the proof, we denote $\tilde B_p = \bigcup_{l_p\leq l\leq u_p} B_l$ for any $p\geq 1$. Also, for any $t_1\leq t_2$, we  denote by
\begin{equation*}
    N_p(t_1,t_2) = \sum_{t=t_1}^{t_2}\1_{\tilde B_p}(X_t)
\end{equation*}
the number of times that set $\tilde B_p$ has been visited between times $t_1$ and $t_2$.

We now fix a learning rule $f_\cdot$ and construct a process $\Ybb$ for which consistency will not be achieved on the event $\Acal\cap\bigcap_{p\geq 1}(\Ecal_p\cap\Fcal_p)$. Precisely, we first construct a family of processes $\Ybb^b$ indexed by a sequence of binary digits $ b=(b_i)_{i\geq 1}$. The process $\Ybb^b$ is defined such that for any $p\geq 1$, and for all $t_{p-1}<t\leq t_p$,
\begin{equation*}
    Y_t^b := \begin{cases}
        n_{t_p} + 4u_p(t) +2b_{i(p,u_p(t))}+b_{i(p,u_p(t))+1} &\text{if } X_t\in \tilde B_p,\\
        n_{t_{p'}}+4t_{p'} +\{b_{i(p',t_{p'}-1)}\ldots b_{i(p',1)} b_{i(p',0)} \}_2 &\text{if } X_t\in \tilde B_{p'}, p'<p,  \\
        0 &\text{otherwise},
    \end{cases}
\end{equation*}
where we denoted $u_p(t)=N_p(t_{p-1}+1,t-1)$ and posed for any $p\geq 1$ and $u\geq 1$:
\begin{equation*}
    i(p,u) = 2\sum_{p'<p} t_{p'} + 2u.
\end{equation*}
Note in particular that conditionally on $\Xbb$, $\Ybb^b$ is deterministic: it does not depends on the random predictions of the learning rule. Because we always have $N_p(t_{p-1}+1,t-1)\leq t_p$ for any $t\leq t_p$, the process is designed so that we have $Y^b_t\in I_{t_p}$ if $X_t\in \tilde B_p$ and $t_{p-1}<t\leq t_p$. Further, for $t_{p-1}<t\leq t_p$, if $X_t\in \bigcup_{p'<p}\tilde B_{p'}$ then $Y^b_t\in J_{t_{p'}}$.
We now consider an i.i.d. Bernoulli $\Bcal(\frac{1}{2})$ sequence of random bits $\mb b$ independent from the process $\Xbb$---and any learning rule predictions. We analyze the responses of the learning rule for responses $\Ybb^{\mb b}$. We first fix a realization $\mb x$ of the process $\Xbb$, which falls in the event $\Acal\cap\bigcap_{p\geq 1}(\Ecal_p\cap\Fcal_p)$. For any $p\geq 1$ we define $\Tcal_p:=\{t_{p-1}<t\leq t_p:\;x_t\in \tilde B_p\}$. For simplicity of notation, for any $t\in \Tcal_p$ we denote $i(t)=i(p,u_p(t))$. We will also denote $\hat Y_t:=f_t({\mb x}_{<t},\Ybb^{\mb b}_{<t},x_t)$. Last, denote by $r_t$ the possible randomness used by the learning rule $f_t$ at time $t$. For any $t\in \Tcal_p$, we have
\begin{align*}
    \Ebb_{\mb b,\mb r}&\ell(\hat Y_t,Y^{\mb b}_t)
    =  \Ebb_{\{b_{i(p',u')},b_{i(p',u')+1},\; p'\leq p,u'\leq t_{p'}\}\cup\{r_{t'},t'\leq t\}} \ell(\hat Y_t,Y^{\mb b}_t)\\
    &= \Ebb \left[ \Ebb_{b_{i(t)},b_{i(t)+1}}\left. \ell(\hat Y_t,Y^{\mb b}_t) \right| b_{i(t')},b_{i(t')+1},t'< t,t'\in\Tcal_{p};\;b_i, i< i(p,0);\;r_{t'},t'\leq t\right]\\
    &=  \Ebb \left[ \Ebb_{b_{i(t)},b_{i(t)+1}}\left. \ell(\hat Y_t,Y^{\mb b}_t) \right|\hat Y_t\right]\\
    &= \Ebb_{\hat Y_t}\left[ \frac{1}{4}\sum_{m=0}^3\ell(\hat Y_t,n_{t_p}+4u_p(t)+m)\right]\\
    &=  \Ebb_{\hat Y_t}\left[ \1_{\hat Y_t\notin \{n_{t_p}+4u_p(t)+m,0\leq m\leq 3\}\cup J_{t_p}} +\frac{3}{4}\1_{\hat Y_t\in \{n_{t_p}+4u_p(t)+m,0\leq m\leq 3\}}+ \frac{3}{4}\1_{\hat Y_t\in J_{t_p}}\right]\\
    &\geq \frac{3}{4}.
\end{align*}
where in the last equality, we used the fact that if $j\in J_{k(t)}$ then by construction $\ell(j,n_{t_p}+4u_p(t))=\ell(j,n_{t_p}+4u_p(t)+1)$, $\ell(j,n_{t_p}+4u_p(t)+2)=\ell(j,n_{t_p}+4u_p(t)+3)$, and $\{\ell(j,n_{t_p}+4u_p(t)),\ell(j,n_{t_p}+4u_p(t)+2)\}=\{\frac{1}{2},1\}$. Summing all equations, we obtain for any $t_{p-1}<T\leq t_p$,
\begin{equation*}
    \Ebb_{\mb b,\mb r}\left[ \sum_{t=1}^{T}\ell(f_t({\mb x}_{<t},\Ybb^{\mb b}_{<t},x_t),Y^{\mb b}_t) \right] \geq \frac{3}{4}\sum_{p'<p}|\Tcal_{p'}| + \frac{3}{4}|\Tcal_p\cap\{t\leq T\}|.
\end{equation*}
This holds for all $p\geq 1$. Let us now compare this loss to the best prediction of a fixed measurable function. Specifically, for any binary sequence $b$, we consider the following function $f^b:\Xcal\to\Nbb$:
\begin{equation*}
    f^b(x) = \begin{cases}
    n_{t_p}+4t_p +\{b_{i(p,t_p-1)}\ldots b_{i(p,1)} b_{i(p,0)} \}_2 &\text{if }x\in \tilde B_p\\
    0 &\text{if }x\notin\bigcup_{p\geq 1} \tilde B_p.
    \end{cases}
\end{equation*}
Now let $t_{p-1}<t\leq t_p$ and $p\geq 1$. If $x_t\in \bigcup_{p'<p}\tilde B_{p'}$ we have $f^{\mb b}(x_t)=Y_t^{\mb b}$, hence $\ell(f^{\mb b}(x_t),Y_t^{\mb b})=0$. Now if $X_t\in\tilde B_p$ by construction we have $\ell(f^{\mb b}(x_t),Y_t^{\mb b})=\frac{1}{2}$. Finally, observe that because the event $\Ecal_{p+1}$ is satisfied by $\mb x$ there does not exist $t_{p-1}<t\leq t_p$ such that $t\in \bigcup_{p'>p}\tilde B_{p'}\subset \bigcup_{l\geq l_{p+1}}B_l$. As a result, we have $\ell(f^{\mb b}(x_t),Y_t^{\mb b})=\frac{1}{2}\1_{t\in\Tcal_p}$ for any $t_{p-1}<t\leq t_p$. Thus, we obtain for any $t_{p-1}<T\leq t_p$,
\begin{equation*}
    \Ebb_{\mb b,\mb r}\left[ \sum_{t=1}^{T}\ell(\hat Y_t,Y^{\mb b}_t) - \ell(f^{\mb b}(X_t),Y^{\mb b}_t)\right] \geq \frac{1}{4}\sum_{p'\leq p}|\Tcal_{p'}| + \frac{1}{4}|\Tcal_p\cap\{t\leq T\}|\geq \frac{1}{4}|\Tcal_p\cap\{t\leq T\}|.
\end{equation*}
Recall that the event $\Fcal_p$ is satisfied by $\mb x$ for any $p\geq 1$. Therefore, there exists a time $t_{p-1}<T_p\leq t_p$ such that $\sum_{t=1}^{T_p} \1_{\tilde B_p}(x_t)\geq \frac{\epsilon T_p}{4}.$ Then, note that because the event $\Ecal_p$ is satisfied, we have $\sum_{t=1}^{t_{p-1}} \1_{\tilde B_p}(x_t)=0$. Therefore, we obtain $|\Tcal_p\cap\{t\leq T_p\}|\geq \frac{\epsilon T_p}{4}$, and as a result,
\begin{equation*}
    \Ebb_{\mb b,\mb r}\left[\frac{1}{T_p}\sum_{t=1}^{T_p}\ell(\hat Y_t,Y^{\mb b}_t) - \ell(f^{\mb b}(X_t),Y^{\mb b}_t)\right] \geq \frac{\epsilon}{16}.
\end{equation*}
Because this holds for any $p\geq 1$ and as $p\to\infty$ we have $T_p\to\infty$, we can now use Fatou lemma which yields
\begin{equation*}
    \Ebb_{\mb b,\mb r}\left[\limsup_{T\to\infty}\frac{1}{T}\sum_{t=1}^{T}\ell(\hat Y_t,Y^{\mb b}_t) - \ell(f^{\mb b}(X_t),Y^{\mb b}_t)\right]\geq \frac{\epsilon}{16}.
\end{equation*}
This holds for any realization in $\Acal\cap\bigcap_{p\geq 1}(\Ecal_p\cap\Fcal_p)$ which we recall has probability at least $\frac{\epsilon}{4}$. Therefore we finally obtain
\begin{equation*}
    \Ebb_{\mb b,\mb r,\Xbb} \left[\limsup_{T\to\infty}\frac{1}{T}\sum_{t=1}^{T}\ell(\hat Y_t,Y^{\mb b}_t) - \ell(f^{\mb b}(X_t),Y^{\mb b}_t)\right]\geq \frac{\epsilon^2}{2^6}.
\end{equation*}
As a result, there exists a specific realization of $\mb b$ which we denote $b$ such that
\begin{equation*}
    \Ebb_{\mb r,\Xbb} \left[\limsup_{T\to\infty}\frac{1}{T}\sum_{t=1}^{T}\ell(\hat Y_t,Y^b_t) - \ell(f^{b}(X_t),Y^b_t)\right]\geq \frac{\epsilon^2}{2^6},
\end{equation*}
which shows that with nonzero probability $   \limsup_{T\to\infty}\frac{1}{T}\sum_{t=1}^{T}\ell(\hat Y_t,Y^b_t) - \ell(f^b(X_t),Y^b_t)> 0.$ This ends the proof of the theorem. As a remark, one can note that the construction of our bad example $\Ybb^b$ is a deterministic function of $\Xbb$: it is independent from the realizations of the randomness used by the learning rule.

\subsection{Proof of \cref{lemma:bandits_Nbb}}

We first construct our online learning algorithm, which is a simple variant of the classical exponential forecaster. We first define a step $\eta:=\sqrt{2\ln t_0/ t_0}$. At time $t=1$ we always predict $0$. For time step $t\geq 2$, we define the set $S_{t-1}:=\{y\in\Nbb, \sum_{u=1}^{t-1} \1_{y= y_u}>0\}$ the set of values which have been visited. Then, we construct weights for all $y\in\Nbb$ such that
\begin{equation*}
    w_{y,t-1} = \begin{cases}
        e^{\eta\sum_{u=1}^{t-1} \1_{y= y_u}}, &y\in S_{t-1}\\
        0 &\text{otherwise},
        \end{cases}
\end{equation*}
and output a randomized prediction independent of the past history such that
\begin{equation*}
    \Pbb(\hat y_t=y) = \frac{w_{y,t-1}}{\sum_{y'\in \Nbb}w_{y',t-1}}.
\end{equation*}
This defines a proper online learning rule. Note that the denominator is well defined since $w_{y,t-1}$ is non-zero only for values in $S_{t-1}$, which contains at most $t-1$ elements. We now define the expected success at time $1\leq t\leq T$ as $\hat s_t:= \frac{w_{y_t,t-1}}{\sum_{y\in\Nbb} w_{y,t-1}}\1_{y_t\in S_t}.$ Note that $\hat s_t=\Ebb[\1_{f_t({\mb y}_{\leq t-1})=y_t}]$. We first show that we have
\begin{equation*}
    \sum_{t=1}^T \hat s_t\geq \max_{y\in \Nbb} \sum_{t=1}^T \1_{y= y_t} - \sqrt T \ln T.
\end{equation*}
To do so, we analyze the quantity $W_t:=\frac{1}{\eta}\ln\left(\sum_{y\in S_t} e^{\eta\sum_{u=1}^t (\1_{y=y_u}-\hat s_u)}\right)$. Let $2\leq t\leq T$. Supposing that $y_t\in S_{t-1}$, i.e., $S_t=S_{t-1}$, we define the operator $\Phi:\mb x\in \Rbb^{|S_{t-1}|}\mapsto \frac{1}{\eta}\ln\left(\sum_{y\in S_{t-1}} e^{\eta x_y}\right)$ and use the Taylor expansion of $\Phi$ to obtain
\begin{align*}
    W_t &= \frac{1}{\eta}\ln\left(\sum_{y\in S_{t-1}} e^{\eta\sum_{u=1}^{t-1} (\1_{y=y_u}-\hat s_u) + \eta(\1_{y=y_t}-\hat s_t)}\right)\\
    &= W_{t-1} + \sum_{y\in S_{t-1}}  (\1_{y=y_t}-\hat s_t)\frac{e^{\eta\sum_{u=1}^{t-1} \1_{y=y_u}}}{\sum_{y'\in S_{t-1}}e^{\eta\sum_{u=1}^{t-1} \1_{y'=y_u} }} \\
    &\quad\quad\quad\quad+ \frac{1}{2} \sum_{y_1,y_2\in S_{t-1}} \left.\frac{\partial^2 \Phi}{\partial x_{y_1}\partial x_{y_2}}\right|_{\xi} (\1_{y_1=y_u}-\hat s_u)(\1_{y_2=y_u}-\hat s_u)\\
    &= W_{t-1} + \frac{1}{2} \sum_{y_1,y_2\in S_{t-1}} \left.\frac{\partial^2 \Phi}{\partial x_{y_1}\partial x_{y_2}}\right|_{\xi} (\1_{y_1=y_t}-\hat s_u)(\1_{y_2=y_t}-\hat s_u)\\
    &\leq  W_{t-1} + \frac{1}{2} \sum_{y\in S_{t-1}} \frac{\eta e^{\eta \xi_y}}{\sum_{y'\in S_{t-1}}e^{\eta \xi_{y'}}} (\1_{y=y_t}-\hat s_u)^2\\
    &\leq W_{t-1} + \frac{\eta}{2},
\end{align*}
for some vector $\xi\in\Rbb^{|S_{t-1}|}$, where in the last inequality we used the fact $|\1_{y=y_t}-\hat s_u|\leq 1$. We now suppose that $y_t\notin S_{t-1}$ and $W_{t-1}\geq 1+\frac{\ln 2+2\ln \frac{1}{\eta}}{\eta}$. In that case, $e^{\eta W_t}=e^{\eta W_{t-1}} + e^{\eta(1-\sum_{u=1}^{t-1}\hat s_u)}.$ Hence, we obtain
\begin{equation*}
    W_t= W_{t-1} + \frac{\ln\left(1+e^{\eta(1-W_{t-1}-\sum_{u=1}^{t-1}\hat s_u)}\right)}{\eta} \leq W_{t-1} + \frac{e^{\eta(1-W_{t-1})}}{\eta}\leq W_{t-1} + \frac{\eta}{2}.
\end{equation*}
Now let $l=\max\{1\}\cup\left\{1\leq t\leq T:W_t < 1+\frac{\ln 2+2\ln \frac{1}{\eta}}{\eta}\right\}$. Note that for any $l< t\leq T$ the above arguments yield $W_t \leq W_{t-1} + \frac{\eta}{2}$. As a result, noting that $W_1\leq 1$, we finally obtain
\begin{equation*}
    W_T \leq W_l + \eta\frac{T-l}{2} \leq 1+\frac{\ln 2+2\ln \frac{1}{\eta}}{\eta} + \eta\frac{T}{2} \leq 1+ \ln 2\sqrt{\frac{t_0}{2\ln t_0}} + \sqrt{\frac{\ln t_0}{2t_0}} (t_0 + T). 
\end{equation*}
Therefore, for any $y\in S_T$, we have
\begin{equation*}
    \sum_{t=1}^T (\1_{y=y_t}-\hat s_t) \leq W_T\leq 1+\ln 2\sqrt{\frac{t_0}{2\ln t_0}} + \sqrt{\frac{\ln t_0}{2t_0}} (t_0 + T). 
\end{equation*}
In particular, this shows that
\begin{equation*}
    \sum_{t=1}^T \hat s_t \geq \max_{y\in S_T}\sum_{t=1}^T\1_{y=y_t} - 1- \ln 2\sqrt{\frac{t_0}{2\ln t_0}} - \sqrt{\frac{\ln t_0}{2t_0}} (t_0 + T). 
\end{equation*}
Now note that if $y\notin S_T$, then $ \sum_{t=1}^T \1_{y=y_t}=0$, which yields $\max_{y\in S_T}\sum_{t=1}^T\1_{y=y_t} = \max_{y\in \Nbb}\sum_{t=1}^T\1_{y=y_t}$. For the sake of conciseness, we will now denote by $\hat y_t$ the prediction of the online learning rule at time $t$. We observe that the variables $\1_{\hat y_t=y_t}-\hat s_t$ for $1\leq t\leq T$ form a sequence of martingale differences. Further, $|\1_{\hat y_t=y_t}-\hat s_t|\leq 1$. Therefore, the Hoeffding-Azuma inequality shows that with probability $1-\delta$,
\begin{equation*}
    \sum_{t=1}^T (\1_{\hat y_t=y_t}-\hat s_t) \geq -\sqrt{2T\ln \frac{1}{\delta}}.
\end{equation*}
Putting everything together yields that with probability $1-\delta$,
\begin{align*}
    \sum_{t=1}^T \1_{\hat y_t=y_t} &\geq \sum_{t=1}^T \hat s_t -\sqrt{2T\ln \frac{1}{\delta}}\\
    &\geq \max_{y\in \Nbb} \sum_{t=1}^T \1_{y=y_t} -  1-\ln 2\sqrt{\frac{t_0}{2\ln t_0}} - \sqrt{\frac{\ln t_0}{2t_0}} (t_0 + T) -\sqrt{2T\ln \frac{1}{\delta}}.
\end{align*}
This ends the proof of the lemma.

\subsection{Proof of \cref{thm:countable_classification}}

We use a similar learning rule to the one constructed in \cref{sec:totally_bounded_value_spaces} for totally-bounded spaces. We only make a slight modification of the learning rules $f^\epsilon_\cdot$. Precisely, we pose for $0<\epsilon\leq 1$,
\begin{equation*}
    T_\epsilon := \left\lceil \frac{2^4\cdot 3^2 (1+\ln \frac{1}{\epsilon})}{\epsilon^2}\right\rceil \quad \text{and} \quad \delta_\epsilon:= \frac{\epsilon}{2T_\epsilon}.
\end{equation*}
Then, let $\phi$ be the representative function from the $(1+\delta_\epsilon)$C1NN learning rule. Similarly as for the $\epsilon-$approximation learning rule from \cref{sec:totally_bounded_value_spaces}, we consider the same equivalence relation $\stackrel \phi \sim$ on times to define clusters. The learning rule then performs its prediction based on the values observed on the corresponding cluster using the learning rule from \cref{lemma:bandits_Nbb} using $t_0 = T_\epsilon$. Precisely, let $\eta_\epsilon:=\sqrt{2\ln T_\epsilon/ T_\epsilon}$ and define the weights $w_{y,t}=e^{\eta_\epsilon\sum_{u<t:u\stackrel \phi \sim t} \1(Y_u=y)}$ for all $y\in \tilde S:=\{y'\in\Nbb:\sum_{u<t:u\stackrel \phi \sim t} \1(Y_u=y')>0\}$ and $w_{y,t}=0$ otherwise. The learning rule $f^\epsilon_t(\Xbb_{\leq t-1},\Ybb_{\leq t-1},X_t)$ outputs a random value in $\Nbb$ independent of the past history such that
\begin{equation*}
    \Pbb(\hat Y_t=y) = \frac{w_{y,t}}{\sum_{y'\in \Nbb} w_{y',t} },\quad y\in \Nbb.
\end{equation*}
The final learning rule $f_\cdot$ is then defined similarly as before from the learning rules $f^\epsilon_\cdot$ for $\epsilon>0$. Therefore, \cref{lemma:concatenation_predictors} still holds. Also, using the same notations as in the proof of \cref{thm:optimistic_regression_totally_bounded}, \cref{lemma:bandits_Nbb} implies that for any $t\geq 1$, we can write for any $t\geq 1$ on the cluster $\Ccal(t) = \{u<t:u\stackrel \phi\sim t\}$,
\begin{align*}
     \sum_{u\in \Ccal(t)} &\bar\ell_{01}(\hat Y_u(\epsilon),Y_u) \leq \min_{y\in\Nbb} \sum_{u\in \Ccal(t)} \ell_{01}(y,Y_u) + 1+\ln 2\sqrt{\frac{T_\epsilon}{2\ln T_\epsilon}} + \sqrt{\frac{\ln T_\epsilon}{2T_\epsilon}} (T_\epsilon + |\Ccal(t)|)\\
     &\leq \min_{y\in\Nbb} \sum_{u\in \Ccal(t)} \ell_{01}(y,Y_u) + \left(\frac{1}{T_\epsilon} +\frac{\ln 2}{\sqrt{2 T_\epsilon \ln T_\epsilon}} + \sqrt{\frac{2\ln T_\epsilon}{T_\epsilon}} \right) \max(T_\epsilon ,|\Ccal(t)|)\\
     &\leq \min_{y\in\Nbb} \sum_{u\in \Ccal(t)} \ell_{01}(y,Y_u) + \left(\frac{\epsilon}{3} + \frac{\epsilon}{3} + \frac{\epsilon}{3}\right) \max(T_\epsilon ,|\Ccal(t)|)\\
     &= \min_{y\in\Nbb} \sum_{u\in \Ccal(t)} \ell_{01}(y,Y_u) + \epsilon \max(T_\epsilon ,|\Ccal(t)|)
\end{align*}
Therefore, the same proof of \cref{thm:optimistic_regression_totally_bounded} holds by replacing all $\epsilon-$nets $\Ycal_\epsilon$ directly by $\Nbb$. The martingale argument still holds since the learning rule used is indeed online. This ends the proof of this theorem.

\subsection{Proof of \cref{thm:good_value_spaces}}

We first need the following simple result which intuitively shows that we can assume that the predictions of the learning rule for mean estimation $g^\epsilon_{\leq t_\epsilon}$ are unrelated for $t=1,\ldots,t_\epsilon$.
\begin{lemma}\label{lemma:independent_predictions}
    Let $(\Ycal,\ell)$ satisfying $\ftime$. For any $\eta>0$, there exists a horizon time $T_\eta\geq 1$, an online learning rule $g_{\leq T_\eta}$ such that for any $\mb y:=(y_t)_{t=1}^{T_\eta}$ of values in $\Ycal$ and any value $y\in\Ycal$, we have
\begin{equation*}
    \frac{1}{T_\eta}\Ebb\left[\sum_{t=1}^{T_\eta} \ell(g_t({\mb y}_{\leq t-1}), y_t) - \ell(y,y_t) \right] \leq \eta,
\end{equation*}
    and such that the random variables $g_t({\mb y}_{\leq t-1})$ are independent.
\end{lemma}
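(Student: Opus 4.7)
The plan is to take the online learning rule $g_{\leq T_\eta}$ guaranteed by $\ftime$ and decouple its internal randomness across time. Write the rule explicitly as $g_t(\mb y_{\leq t-1}) = g_t(\mb y_{\leq t-1}; R)$, where $R \sim \Rcal$ is a single random seed drawn once and shared across the $T_\eta$ steps. In the $\ftime$ definition the sequence $\mb y$ is given beforehand (the adversary is oblivious — $y_t$ does not depend on the algorithm's realized predictions, only on the algorithm itself), so the marginal distribution of each random variable $g_t(\mb y_{\leq t-1}; R)$ depends only on $\mb y_{\leq t-1}$ and the distribution of $R$.

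I would then define a modified rule $\tilde g_{\leq T_\eta}$ as follows: draw an i.i.d.\ sequence $R^{(1)}, \ldots, R^{(T_\eta)}$ each distributed as $R$, independently of everything else, and set
\[
\tilde g_t(\mb y_{\leq t-1}) := g_t(\mb y_{\leq t-1}; R^{(t)}).
\]
Because the $R^{(t)}$ are independent and $\mb y$ is a fixed deterministic sequence, the random variables $\{\tilde g_t(\mb y_{\leq t-1})\}_{t=1}^{T_\eta}$ are mutually independent, which gives the desired property. This is a valid online learning rule in the sense of the paper: at step $t$, $\tilde g_t$ is measurable in $(R^{(t)}, \mb y_{\leq t-1})$, so it fits the framework where each $f_t$ is allowed to use private randomness sampled independently of the past.

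For the excess-loss bound, observe that for each $t$,
\[
\Ebb\bigl[\ell(\tilde g_t(\mb y_{\leq t-1}), y_t)\bigr] = \Ebb_{R^{(t)}}\bigl[\ell(g_t(\mb y_{\leq t-1}; R^{(t)}), y_t)\bigr] = \Ebb_R\bigl[\ell(g_t(\mb y_{\leq t-1}; R), y_t)\bigr],
\]
since $R^{(t)} \stackrel{d}{=} R$ and $\mb y_{\leq t-1}$ is non-random. Summing over $t$ from $1$ to $T_\eta$ shows that the total expected cumulative loss of $\tilde g$ on any fixed sequence $\mb y$ equals that of the original $\ftime$ rule $g$, so the bound $\frac{1}{T_\eta}\Ebb[\sum_t \ell(\tilde g_t,y_t) - \ell(y,y_t)] \leq \eta$ is inherited directly.

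I do not expect any real obstacle here; the lemma is essentially a reindexing of the probability space. The only point that warrants care is confirming that the $\ftime$ formulation treats $\mb y$ as oblivious to the private randomness, which the definition in the paper indeed does (the quantifier ``for any $\mb y$'' is outside the expectation over the learner's randomness). Once that is noted, the independent-copy construction works verbatim.
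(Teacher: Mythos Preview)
Your proposal is correct and is essentially the same construction as the paper's proof: the paper draws i.i.d.\ copies $(g^t_\cdot)_{t\geq 1}$ of the learning rule and sets $\tilde g_t(\mb y_{\leq t-1}) = g^t_t(\mb y_{\leq t-1})$, which is exactly your ``independent seeds'' $R^{(1)},\dots,R^{(T_\eta)}$ written in slightly different notation. Your observation that the $\ftime$ definition quantifies over $\mb y$ outside the expectation (oblivious adversary) is the key point, and the paper relies on it implicitly in the same way.
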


\begin{proof}
    Fix $\eta>0$, $T_\eta\geq 1$ and $g_{\leq T_\eta}$ such that this online learning rule satisfies the condition from $\ftime$ for $\eta>0$. We consider the following learning rule $\tilde g_\cdot$. For any $t\geq 1$ and $\mb y\in\Ycal^{t-1}$,
    \begin{equation*}
        \tilde g_t(\mb y_{\leq t-1}) = g^t_t(\mb y_{\leq t-1}),
    \end{equation*}
    where $(g^t_\cdot)$ are i.i.d. samples of the learning rule $g_\cdot$. By construction, we still have that for any sequence $\mb y_{T_\eta}\in\Ycal^{T_\eta}$,
    \begin{equation*}
        \frac{1}{T_\eta}\Ebb\left[\sum_{t=1}^{T_\eta} \ell(\tilde g_t({\mb y}_{\leq t-1}), y_t) - \ell(y,y_t) \right] = \frac{1}{T_\eta}\Ebb\left[\sum_{t=1}^{T_\eta} \ell(g_t({\mb y}_{\leq t-1}), y_t) - \ell(y,y_t) \right] \leq \eta.
    \end{equation*}
    This ends the proof of the lemma.
\end{proof}

From now on, by \cref{lemma:independent_predictions}, we will suppose without loss of generality that the learning rule $g^\epsilon$ has predictions that are independent at each step (conditionally on the observed values). For simplicity, we refer to the prediction of the defined learning rule $f_\cdot$ (resp. $f^\epsilon_\cdot$) at time $t$ as $\hat Y_t$ (resp. $\hat Y_t(\epsilon)$). We now show that is optimistically universal for arbitrary responses. By construction of the learning rule $f_\cdot$, \cref{lemma:concatenation_predictors} still holds. Therefore, we only have to focus on the learning rules $f^\epsilon_\cdot$ and prove that we obtain similar results as before. Let $T\geq 1$ and denote by $\Acal_i:=\{t\leq T:|\{u\leq T:\phi(u)=t\}|=i\}$ the set of times which have exactly $i$ children within horizon $T$ for $i=0,1,2$. Then, we define
\begin{equation*}
    \Bcal_T = \{t\leq T: L_t=0 \text{ and } |\{t<u\leq T:u\stackrel \phi\sim t\}|\geq t_\epsilon\},
\end{equation*}
i.e., times that start a new learning block and such that there are at least $t_\epsilon$ future times falling in their cluster within horizon $T$. Note that the function $\psi$ defines a parent-relation (similarly to $\phi$, but defined for all times $t\geq 1$). To simplify notations, for any $t\in\Bcal_T$, we denote $t^u$ the $\psi-$children of $t$ at generation $u-1$ for $1\leq u\leq t_\epsilon$, i.e., we have $\psi^{u-1}(t^u) = t$ for all $1\leq u\leq t_\epsilon$. In particular $t=t^1$. By construction, blocks have length at most $t_\epsilon$. More precisely, the block started at any $t\in\Bcal_T$ has had time to finish completely, hence has length exactly $t_\epsilon$. By construction of the indices $L_t$, the blocks $\{t^u,1\leq u\leq t_\epsilon\}$, for $t\in \Bcal_T$, are all disjoint. This implies in particular $|\Bcal_T|t_\epsilon \leq T$. We first analyze the predictions along these blocks and for any $t\in\Bcal_T$ and $y\in\Ycal$, we pose $\delta_{t}(y):=\frac{1}{t_\epsilon}\sum_{u=1}^{t_\epsilon}  \left(\ell(\hat Y_{t^u},Y_{t^u}) - \ell(y,Y_{t^u}) -\epsilon\right)$. Now by construction of the learning rule $f^\epsilon_\cdot$, we have
\begin{equation*}
    t_\epsilon \delta_t(y^t) =
    \sum_{u=1}^{t_\epsilon} \left(\ell(g^{\epsilon,t}_u(\{Y_{t^l}\}_{l=1}^{u-1}), Y_{t^u}) - \ell(y^t,Y_{t^u})\right) - \epsilon t_\epsilon.
\end{equation*}
Next, for any $t\leq t_\epsilon$ and sequence $\mb y_{\leq t-1}$ and value $y\in \Ycal$, we write $\bar \ell(g^\epsilon_t(\mb y_{\leq t-1}), y) := \Ebb \left[\ell(g^\epsilon_t(\mb y_{\leq t-1}), y)\right].$ Now by hypothesis on the learning rule $g^\epsilon_{\leq t_\epsilon}$,
\begin{equation}\label{eq:good_mean_estimation}
    \frac{1}{t_\epsilon}\sum_{u=1}^{t_\epsilon}  \bar\ell(\hat Y_{t^u},Y_{t^u}) - \ell(y^t,Y_{t^u})  \leq \epsilon.
\end{equation}
Now consider the following sequence $(\ell(\hat Y_{t^u},Y_{t^u}) -\bar \ell(\hat Y_{t^u},Y_{t^u}))_{t\in\Bcal_T,1\leq u\leq s(t)}$. Because of the definition of the learning rule, which uses i.i.d. copies of the learning rule $g^\epsilon_\cdot$, if we order the former sequence by increasing order of $t^u$, we obtain a sequence of martingale differences. We can continue this sequence by zeros to ensure that it has length exactly $T$. As a result, we obtain a sequence of $T$ martingale differences, which are all bounded by $\bar\ell$ in absolute value. Now, the Azuma-Hoeffding inequality implies that for $\delta>0$, with probability $1-\delta$, we have
\begin{equation*}
    \sum_{t\in\Bcal_T} \sum_{u=1}^{t_\epsilon}\ell(\hat Y_{t^u},Y_{t^u}) \leq \sum_{t\in\Bcal_T} \sum_{u=1}^{t_\epsilon}\bar\ell(\hat Y_{t^u},Y_{t^u}) + \bar\ell\sqrt{2T\ln \frac{1}{\delta}}.
\end{equation*}
Thus, using Eq~\eqref{eq:good_mean_estimation}, with probability at least $1-\delta$,
\begin{equation}\label{eq:replace:lemma}
    \sum_{t\in\Bcal_T}t_\epsilon \delta_t(y^t) \leq \bar\ell\sqrt{2T\ln \frac{1}{\delta}}.
\end{equation}
We also denote $\Tcal = \bigcup_{t\in\Bcal_T} \{t^u,1\leq u\leq t_\epsilon\}$ the union of all blocks within horizon $T$. This set contains all times $t\leq T$ except \emph{bad} times close to the last times of their corresponding cluster $\{u\leq T:u\stackrel \phi\sim t\}$. Precisely, these are times $t$ such that $|\{t<u\leq T:u\stackrel \phi\sim t\}|<  t_\epsilon-L_t$. As a result, there are at most $t_\epsilon$ such times for each cluster. Using the same arguments as in the proof of \cref{thm:optimistic_regression_totally_bounded}, if we consider only clusters of duplicates (i.e., the cluster started for a specific instance which has high number of duplicates), the corresponding \emph{bad} times contribute to a proportion $\leq \frac{t_\epsilon}{T_\epsilon/\epsilon}\leq \epsilon^2$ of times. Now consider clusters that have at least $T_\epsilon$ times. Their \emph{bad} times contribute to a proportion $\leq\frac{t_\epsilon}{T_\epsilon} \leq \epsilon$ of times. Last, we need to account for clusters of size $<T_\epsilon$ which necessarily contain leaves of the tree $\phi$: there are at most $|\Acal_0|$ such clusters. By the Chernoff bound, with probability at least $1-e^{-T\delta_\epsilon/3}$ we have 
\begin{equation*}
    T-|\Tcal| \leq (\epsilon^2 + \epsilon)T + |\Acal_0|t_\epsilon  \leq t_\epsilon + (\epsilon^2 + \epsilon + 2\delta_\epsilon t_\epsilon) T \leq t_\epsilon + 3\epsilon T.
\end{equation*}
By the Borel-Cantelli lemma, because $\sum_{T\geq 1} e^{-T\delta_\epsilon/3}<\infty$, almost surely there exists a time $\hat T$ such that for $T\geq \hat T$ we have $T-|\Tcal| \leq t_\epsilon +  3\epsilon T$. We denote by $\Ecal_\epsilon$ this event. Then, on the event $\Ecal_\epsilon$, for any $T\geq \hat T$ and for any sequence of values $(y^t)_{t\geq 1}$ we have
\begin{align*}
    \sum_{t=1}^{T} \ell(\hat Y_t(\epsilon),Y_t)  &\leq \sum_{t\in\Bcal_T}  \sum_{u=1}^{t_\epsilon} \ell(\hat Y_{t^u},Y_{t^u})  + ( T-|\Tcal| )\bar\ell\\
    &\leq \sum_{t\in\Bcal_T}  \sum_{u=1}^{t_\epsilon} \ell(y^t,Y_{t^u}) + \sum_{t\in\Bcal_T} t_\epsilon \delta_{t}(y^t)+ \epsilon|\Bcal_T| t_\epsilon+ t_\epsilon\bar\ell + 3\epsilon T\\
    &\leq \sum_{t\in\Bcal_T}  \sum_{u=1}^{t_\epsilon} \ell(y^t,Y_{t^u}) + \sum_{t\in\Bcal_T} t_\epsilon\delta_t(y^t)+ t_\epsilon \bar\ell + 4\epsilon T.
\end{align*}
Now let $f:\Xcal\to\Ycal$ be a measurable function to which we compare $f^\epsilon_\cdot$. By \cref{thm:1+deltaC1NN_optimistic}, because $(1+\delta_\epsilon)$C1NN is optimistically universal without noise and $\Xbb\in\soul$, almost surely $ \frac{1}{T}\sum_{t=1}^T \ell(f(X_{\phi(t)}),f(X_t)) \to 0$. We denote by $\Fcal_\epsilon$ this event of probability one. The proof of \cref{thm:optimistic_regression_totally_bounded} shows that on $\Fcal_\epsilon$, for any $0\leq u\leq T_\epsilon-1$ we have
\begin{equation*}
    \frac{1}{T}\sum_{t=1}^T \ell(f(X_{\phi^u(t)}),f(X_t)) \to 0.
\end{equation*}
We let $y^t=f(X_t)$ for all $t\geq 1$. Then, recalling that for any $t\in\Bcal_T$, we have $t = \phi^{u-1}(t^u)$, on the event $\Ecal_\epsilon$, for any $T\geq \hat T$ we have
\begin{align*}
    &\sum_{t=1}^{T} \ell(\hat Y_t(\epsilon),Y_t) \\
    &\leq \sum_{t\in\Bcal_T}  \sum_{u=1}^{t_\epsilon} \left((1+\epsilon)\ell(f(X_{t^u}),Y_{t^u}) + c_\epsilon^\alpha \ell(f(X_t),f(X_{t^u}))\right) + \sum_{t\in\Bcal_T} t_\epsilon \delta_t(y^t)+ t_\epsilon \bar\ell + 4\epsilon T\\
    &\leq \sum_{t=1}^T  \ell(f(X_t),Y_t) +c_\epsilon^\alpha \frac{T_\epsilon}{\epsilon} \sum_{u=0}^{T_\epsilon-1} \sum_{t=1}^T \ell(f(X_{\phi^u(t)}), f(X_t)) + \sum_{t\in\Bcal_T} t_\epsilon \delta_{\varphi(t)}(y^t)+ t_\epsilon\bar\ell + 5\epsilon T,
\end{align*}
where in the first inequality we used \cref{lemma:loss_identity}, and in the second inequality we used the fact that cluster with distinct instance values have at most $\frac{T_\epsilon}{\epsilon}$ duplicates of each instance. Next, using Eq~\eqref{eq:replace:lemma}, with probability $1-\frac{1}{T^2}$, we have
\begin{equation*}
    \sum_{t\in\Bcal_T} t_\epsilon \delta_t(y^t)\leq 2\bar\ell\sqrt{T\ln T}.
\end{equation*}
Because $\sum_{T\geq 1}\frac{1}{T^2}<0$, the Borel-Cantelli lemma implies that on an event $\Gcal_\epsilon$ of probability one, there exists $\hat T_2$ such that for all $T\geq \hat T_2$ the above inequality holds. As a result, on the event $\Ecal_\epsilon\cap\Fcal_\epsilon\cap\Gcal_\epsilon$ we obtain for any $T\geq \max(\hat T,\hat T_2)$ that
\begin{multline*}
    \sum_{t=1}^{T} \ell(\hat Y_t(\epsilon),Y_t) \leq \sum_{t=1}^T  \ell(f(X_t),Y_t) +\frac{ c_\epsilon^\alpha T_\epsilon}{\epsilon} \sum_{u=0}^{T_\epsilon-1} \sum_{t=1}^T \ell(f(X_{\phi^u(t)}), f(X_t))\\
     + 2\bar\ell\sqrt{T\ln T}+ t_\epsilon \bar\ell + 5\epsilon T.
\end{multline*}
where $\frac{1}{T}\sum_{u=0}^{T_\epsilon-1} \sum_{t=1}^T \ell(f(X_{\phi^u(t)}), f(X_t)) \to 0$ because the event $\Fcal_\epsilon$ is met. Therefore, we obtain that on the event $\Ecal_\epsilon\cap\Fcal_\epsilon\cap\Gcal_\epsilon$ of probability one,
\begin{equation*}
    \limsup_{T\to\infty} \frac{1}{T}\sum_{t=1}^{T} \left[\ell(\hat Y_t(\epsilon),Y_t) - \ell(f(X_t),Y_t)\right] \leq 5\epsilon,
\end{equation*}
i.e., almost surely, the learning rule $f^\epsilon_\cdot$ achieves risk at most $5\epsilon$ compared to the fixed function $f$. By union bound, on the event $\bigcap_{i\geq 0}(\Ecal_{\epsilon_i}\cap\Fcal_{\epsilon_i}\cap\Gcal_{\epsilon_i})$ of probability one we have that
\begin{equation*}
    \limsup_{T\to\infty} \frac{1}{T}\sum_{t=1}^{T} \left[\ell(\hat Y_t(\epsilon_i),Y_t) - \ell(f(X_t),Y_t)\right] \leq 5\epsilon_i,\quad \forall i\geq 0.
\end{equation*}
The rest of the proof uses similar arguments as in the proof of \cref{thm:optimistic_regression_totally_bounded}. Precisely, let $\Hcal$ be the almost sure event of \cref{lemma:concatenation_predictors} such that there exists $\hat t$ for which
\begin{equation*}
    \forall t\geq \hat t,\forall i\in I_t,\quad \sum_{s=t_i}^t\ell(\hat Y_t,Y_t) \leq \sum_{s=t_i}^t \ell(\hat Y_t(\epsilon_i),Y_t) + (2+\bar\ell+\bar\ell^2)\sqrt {t\ln t}.
\end{equation*}
In the rest of the proof we will suppose that the event $\Hcal\cap \bigcap_{i\geq 0}( \Ecal_{\epsilon_i}\cap\Fcal_{\epsilon_i}\cap\Gcal_{\epsilon_i})$ of probability one is met. Let $i\geq 0$. For all $t\geq \max(\hat t,t_i)$ we have
\begin{align*}
    \frac{1}{T}\sum_{t=1}^T \ell(\hat Y_t,Y_t)- &\ell(f(X_t),Y_t) \leq \frac{t_i}{T}\bar \ell + \frac{1}{T}\sum_{t=t_i}^T \ell(\hat Y_t,Y_t)- \ell(f(X_t),Y_t)\\
    &\leq \frac{t_i}{T}\bar \ell + \frac{1}{T}\sum_{t=t_i}^T \ell(\hat Y_t(\epsilon_i),Y_t)- \ell(f(X_t),Y_t) + (2+\bar\ell +\bar\ell^2)
    \sqrt{\frac{\ln T}{ T}}\\
    &\leq \frac{1}{T}\sum_{t=1}^T \ell(\hat Y_t(\epsilon_i),Y_t)- \ell(f(X_t),Y_t) + \frac{2t_i}{T}\bar\ell  + (2+\bar\ell +\bar\ell^2)
    \sqrt{\frac{\ln T}{ T}}.
\end{align*}
Therefore we obtain $\limsup_{T\to\infty} \frac{1}{T}\sum_{t=1}^T \ell(\hat Y_t,Y_t)- \ell(f(X_t),Y_t) \leq 5\epsilon_i$. Because this holds for any $i\geq 0$ we finally obtain
\begin{equation*}
    \limsup_{T\to\infty} \frac{1}{T}\sum_{t=1}^T \ell(\hat Y_t,Y_t)- \ell(f(X_t),Y_t) \leq 0.
\end{equation*}
As a result, $f_\cdot$ is universally consistent for adversarial responses under all $\soul$ processes. Hence, $\solar=\soul$ and $f_\cdot$ is in fact optimistically universal. This ends the proof of the theorem.

\subsection{Proof of \cref{lemma:equivalent_conditions}}

We first note that with the same horizon time $T_\eta$, we have that $\ftime$ implies Property 2. We now show that Property 2 implies $\ftime$. Let $(\Ycal,\ell)$ satisfying Property 2. We now fix $\eta>0$ and let $T,g_{\leq \tau}$ such that for any $\mb y:=(y_t)_{t=1}^{T}$ of values in $\Ycal$ and any value $y\in\Ycal$, we have
\begin{equation*}
    \Ebb\left[\frac{1}{\tau}\sum_{t=1}^{\tau} \left(\ell(g_t({\mb y}_{\leq t-1}), y_t) - \ell(y,y_t)\right) \right] \leq \eta.
\end{equation*}
We now construct a random time $1\leq \tilde \tau \leq T$ such that $\Pbb[\tilde \tau=t] = \frac{\Pbb[\tau=t]}{t\Ebb [1/\tau]}$ for all $1\leq t\leq T$. This indeed defines a proper random variable because $\sum_{t=1}^T \frac{\Pbb[\tau=t]}{t\Ebb [1/\tau]}= 1.$ Let $Supp(\tau):=\{1\leq t\leq T:\Pbb[\tau=t]>0\}$ be the support of $\tau$. For any $t\in Supp(\tau)$, we denote by $g^t_{\leq t}$ the learning rule obtained by conditioning $g_{\leq \tau}$ on the event $\{\tau=t\}$, i.e., $g^t_{\leq t}=g_{\leq \tau}|\tau=t$. More precisely, recall that $\tau$ only uses the randomness of $g_t$. It is not an online random time. Hence, a practical way to simulate $g^t_{\leq t}$ for all $t\in Supp(\tau)$ is to first draw an i.i.d. sequence of learning rules $(g_{i,\leq \tau_i})_{i\geq 1}$. Then, for each $t\in Supp(\tau)$ we select the randomness which first satisfies $\tau=t$. Specifically, we define the time $i_t = \min\{i: \tau_i=t\}$ for all $t\in Supp(\tau)$. With probability one, these times are finite for all $t\in Supp(\tau)$. Denote this event $\Ecal$. Then, letting $\bar y\in\Ycal$ be an arbitrary fixed value, for all $1\leq t\leq T$ we pose
\begin{equation*}
    g^t_{\leq t} = \begin{cases}
        g_{i_t,\leq t} &\text{if }\Ecal \text{ is met},\\
        {\bar y}_{\leq t} &\text{otherwise},
    \end{cases}
    \quad t\in Supp(\tau)\quad \text{ and }\quad g^t_{\leq t} = {\bar y}_{\leq t},\quad t\notin Supp(\tau).
\end{equation*}
where ${\bar y}_{\leq t} $ denotes the learning rules which always outputs value $\bar y$ for all steps $u\leq t$. Intuitively, $g^t_{\leq t}$ has the same distribution as $g_{\leq \tau}$ conditioned on the event $\{\tau=t\}$. We are now ready to define a new learning rule $\tilde g_{\leq \tilde \tau}$, by $\tilde g_{\leq \tilde \tau} := g^{\tilde \tau}_{\leq \tilde \tau}.$ Noting that for any $t\notin Supp(\tau)$ we have $\Pbb[\tilde\tau=t]=0$, we can write
\begin{align*}
    \Ebb&\left[\sum_{t=1}^{\tau} \left(\ell(\tilde g_t({\mb y}_{\leq t-1}), y_t) - \ell(y,y_t)\right) -\eta \tau\right]\\
    &= \sum_{t=1}^T \Pbb[\tilde\tau=t] \Ebb\left[\left.\sum_{u=1}^{t} \left(\ell(\tilde g_u({\mb y}_{\leq u-1}), y_u) - \ell(y,y_u)\right) - \eta t\right| \tilde \tau = t\right]\\
    &= \sum_{t\in Supp(\tau)} \Pbb[\tilde\tau=t] \Ebb\left[\left.\sum_{u=1}^{t} \left(\ell(\tilde g_u({\mb y}_{\leq u-1}), y_u) - \ell(y,y_u)\right) - \eta t\right| \tilde \tau = t,\Ecal\right]\\
    &=\frac{1}{\Ebb[1/\tau]}\sum_{t\in Supp(\tau)} \Pbb[\tau=t] \Ebb\left[\left.\frac{1}{t}\sum_{u=1}^{t} \left(\ell(g_{i_t,u}({\mb y}_{\leq u-1}), y_u) - \ell(y,y_u) \right) -\eta \right|\tilde\tau =t,\Ecal\right]\\
    &=\frac{1}{\Ebb[1/\tau]}\sum_{t\in Supp(\tau)} \Pbb[\tau=t] \Ebb\left[\frac{1}{t}\sum_{u=1}^{t} \left(\ell(g_{i_t,u}({\mb y}_{\leq u-1}), y_u) - \ell(y,y_u) \right) -\eta \right]\\
    &=\frac{1}{\Ebb[1/\tau]}\sum_{t\in Supp(\tau)} \Pbb[\tau=t] \Ebb\left[\left. \frac{1}{t}\sum_{u=1}^{t} \left(\ell(g_u({\mb y}_{\leq u-1}), y_u) - \ell(y,y_u) \right) -\eta \right| \tau = t\right]\\
    &=\frac{1}{\Ebb[1/\tau]} \Ebb\left[\frac{1}{\tau}\sum_{t=1}^{\tau} \left(\ell(g_t({\mb y}_{\leq t-1}), y_t) - \ell(y,y_t)\right) -\eta \right]\leq 0.
\end{align*}
where in the second and fourth equality we used the fact that $\Pbb[\Ecal]=1$.
As a result, there exists a learning rule $\tilde g_{\leq \tilde\tau}$ such that $1\leq \tilde\tau\leq T_\eta$, and for any $\mb y_{\leq T_\eta}\in\Ycal^{T_\eta}$ and $y\in\Ycal$ one has
\begin{equation*}
    \Ebb\left[\sum_{t=1}^{\tilde \tau} \left(\ell(\tilde g_t({\mb y}_{\leq t-1}), y_t) - \ell(y,y_t)\right) -\eta \tilde \tau\right] \leq 0.
\end{equation*}

We now pose $T_\eta' = \lceil T_\eta/\eta\rceil$ and draw an i.i.d. sequence of learning rules $(\tilde g^i_{\leq \tilde \tau_i})_{i\geq 1}$. Denote $\theta_i = \sum_{j<i}\tilde\tau_i$ with the convention $\theta_1=0$. We are now ready to define a learning rule $h_{\leq T_\eta'}$ as follows. For any $1\leq t\leq T_\eta'$ and $\mb y_{\leq t}\in\Ycal^t$,
\begin{equation*}
    h_t(\mb y_{\leq t-1}) = \tilde g^i_{\leq t-\theta_i}((y_{t'})_{\theta_i<t'\leq t-1}),\qquad \theta_i<t\leq \theta_{i+1}, i\geq 1.
\end{equation*}
In other words, the learning rule performs independent learning rules $\tilde g_{\leq \tilde \tau}$ and when the time horizon $\tilde \tau$ is reached, we re-initialize the learning rule with a new randomness. Now let $\mb y_{\leq T_\eta'}\in\Ycal^{T_\eta'}$ and $y\in\Ycal$. We denote by $\hat i=\max\{i\geq 1,\theta_i\leq t\}$, the index of the last learning rule which had time to finish completely. Then, because $\tilde \tau_{\hat i}\leq T_\eta$,
\begin{align*}
    \Ebb&\left[ \sum_{t=1}^{T_\eta'} (\ell(h_t(\mb y_{\leq t-1}),y_t) - \ell(y,y_t))-2\eta T_\eta'\right] \\
    &\leq \Ebb\left[\sum_{i\leq \hat i}\sum_{t=1}^{\tilde \tau_i}( \ell(\tilde g^i_{t-\theta_i}(\mb y_{\theta_i<\cdot\leq t-1}),y_t)-\ell(y,y_t)) - \eta T_\eta'\right] -\eta T_\eta' + T_\eta\\
    &\leq \Ebb\left[\sum_{i\leq \hat i}\left(\sum_{t=1}^{\tilde \tau_i}( \ell(\tilde g^i_{t-\theta_i}(\mb y_{\theta_i<\cdot\leq t-1}),y_t)-\ell(y,y_t)) - \eta \tilde \tau_i\right) \right].
\end{align*}
We now analyze the last term. First, note that by construction, the sequence \begin{equation*}
    \left\{S_j:=\sum_{j\leq i} \left(\sum_{t=1}^{\tilde \tau_j}( \ell(\tilde g^j_{t-\theta_j}(\mb y_{\theta_j<\cdot\leq t-1}),y_t)-\ell(y,y_t)) - \eta \tilde \tau_j\right) \right\}_{j\geq 1}
\end{equation*}
is a super-martingale. Now, note that $\hat i\leq 1+T_\eta'$ since for all $i$, $\theta_i=\sum_{j<i}\tau_i\geq i-1$. As a result, $\hat i$ is bounded, is a stopping time for the considered filtration (after finishing period $\hat i$ we stop if and only we exceed time $T_\eta'$) and we can apply Doob's optimal sampling theorem to obtain $\Ebb[S_{\hat i}]\leq 0.$ Thus, combining the above equations gives
\begin{equation*}
    \frac{1}{T_\eta'}\Ebb\left[ \sum_{t=1}^{T_\eta'} (\ell(h_t(\mb y_{\leq t-1}),y_t) - \ell(y,y_t))\right] \leq 2\eta.
\end{equation*}
Because this holds for all $\eta>0$, $\ftime$ is satisfied. This ends the proof of the lemma.

\subsection{Proof of \cref{thm:bad_value_spaces}}

We first prove that adversarial regression for processes outside of $\cs$ is not achievable. Precisely, we show that for any $\Xbb\notin\cs$, for any online learning rule $f_\cdot$, there exists a process $\Ybb$ on $\Ycal$, a measurable function $f^*:\Xcal\to\Ycal$ and $\delta>0$ such that with non-zero probability $\Lcal_{(\Xbb,\Ybb)}(f_\cdot,f^*)>\delta$.

Because $\ftime$ is not satisfied by $(\Ycal,\ell)$, by \cref{lemma:equivalent_conditions}, Property 2 is not satisfied either. Hence, we can fix $\eta>0$ such that for any horizon $T\geq 1$ and any online learning rule $g_{\leq \tau}$ with $1\leq \tau\leq T$, there exist a sequence $\mb y:=(y_t)_{t=1}^T$ of values in $\Ycal$ and a value $y$ such that
\begin{equation*}
    \Ebb\left[\frac{1}{\tau}\sum_{t=1}^{\tau} \left(\ell(g_t({\mb y}_{\leq t-1}), y_t) - \ell(y,y_t)\right)\right] >\eta,
\end{equation*}
as in the assumption of the space $(\Ycal,\ell)$. Let $\Xbb\notin\cs$. The proof of \cref{thm:negative_optimistic} shows that there exist $0<\epsilon<1$, a sequence of disjoint measurable sets $\{B_p\}_{p\geq 1}$ and a sequence of times $(t_p)_{p\geq 0}$ with $t_0=0$ and such that with $\mu:=\max(1,\frac{8\bar\ell}{\epsilon\eta})$, for any $p\geq 1$, $t_p>\mu t_{p-1}$, and defining the events 
\begin{align*}
    \Ecal_p = \left\{\Xbb_{\leq t_{p-1}}\cap\left(\bigcup_{p'\geq p}B_p\right)=\emptyset \right\}
    \text{ and } \Fcal_p:= \bigcup_{\mu t_{p-1}<t\leq t_p} \left\{\frac{1}{t}\sum_{u=1}^t \1_{B_p}(X_u)\geq \frac{\epsilon}{4}\right\},
\end{align*}
we have $\Pbb[\bigcap_{p\geq 1}(\Ecal_p\cap\Fcal_p)]\geq \frac{\epsilon}{4}$. We now fix a learning rule $f_\cdot$ and construct a ``bad'' process $\Ybb$ recursively. Fix $\bar y\in\Ycal$ an arbitrary value. We start by defining the random variables $N_p(t) = \sum_{u=t_{p-1}+1}^{t}\1_{B_p}(X_u)$ for any $p\geq 1$. We now construct (deterministic) values $y_p$ and sequences $(y_p^u)_{u=1}^{t_p}$ for all $p\geq 1$, of values in $\Ycal$. Suppose we have already constructed the values $y_q$ as well as the sequences $(y_q^u)_{u=1}^{t_q}$ for all $q<p$. We will now construct $y_p$ and $(y_p^u)_{u=1}^{t_p}$. Assuming that the event $\Ecal_p\cap \Fcal_p$ is met, there exists $\mu t_{p-1}<t\leq t_p$ such that
\begin{equation*}
    N_p(t) = \sum_{u=t_{p-1}+1}^t\1_{B_p}(X_u) = \sum_{u=1}^t\1_{B_p}(X_u) \geq \frac{\epsilon}{4}t,
\end{equation*}
where in the first equality we used the fact that on $\Ecal_p$, the process $\Xbb_{\leq t_{p-1}}$ does not visit $B_p$. In the rest of the construction, we will denote
\begin{equation*}
    T_p = \begin{cases}
        \min \{\mu t_{p-1}<t\leq t_p:N_p(t)\geq \frac{\epsilon}{4} t\} &\text{if } \Ecal_p\cap\Fcal_p \text{ is met}.\\
        t_p &\text{otherwise}.
    \end{cases}
\end{equation*}
Now consider the process $\Ybb_{t\leq t_{p-1}}(\Xbb)$ defined as follows. For any $1\leq q<p$ we pose
\begin{equation*}
    Y_t(\Xbb) = \begin{cases}
        y_q^{N_q(t)} &\text{if } t\leq T_q \text{ and } X_t\in B_q, \\
        y_q &\text{if } t> T_q \text{ and } X_t\in B_q, \\
        y_{q'} &\text{if }X_t\in B_{q'},\;q'<q,\\
        \bar y &\text{otherwise},
    \end{cases}\quad \quad t_{q-1}<t\leq t_q.
\end{equation*}
Similarly, for $M\geq 1$ and given any sequence $\{\tilde y_i\}_{i = 1}^{M}$, we define the following process $\Ybb_{t_{p-1}<u\leq t_p}\left(\Xbb,\{\tilde y_i\}_{i=1}^{M}\right)$ by
\begin{equation*}
    Y_u\left(\Xbb,\{\tilde y_i\}_{i =q 1}^{M}\right) = \begin{cases}
        \tilde y_{\min(N_p(u),M)} &\text{if }X_t\in B_p, \\
        y_q &\text{if } X_t\in B_{q},\;q<p,\\
        \bar y &\text{otherwise}.
    \end{cases}
\end{equation*}
We now construct a learning rule $g^p_\cdot$. First, we define the event $\Bcal:=\bigcap_{p\geq 1}(\Ecal_p\cap\Fcal_p)$. We will denote by $\tilde \Xbb=\Xbb|\Bcal$ a sampling of the process $\Xbb$ on the event $\Bcal$ which has probability at least $\frac{\epsilon}{4}$. For instance we draw i.i.d. samplings following the same distribution as $\Xbb$ then select the process which first falls into $\Bcal$. We are now ready to define a learning rule $(g^p_u)_{u\leq \tau}$ where $\tau$ is a random time. To do so, we first draw a sample $\tilde \Xbb$ which is now fixed for the learning rule $g^p_\cdot$. We define the stopping time as $\tau=N_p(T_p)$. Finally, for all $1\leq u\leq \tau$, and any sequence of values $\mb{\tilde y}_{\leq u-1}$, we pose
\begin{equation*}
    g^p_u(\mb {\tilde y}_{\leq u-1}) = 
            f_{T_p(u)}\left(\tilde \Xbb_{\leq T_p(u)-1},\left\{\Ybb_{\leq t_{p-1}}( \tilde \Xbb),\Ybb_{t_{p-1}<u\leq T_p(u)-1} \left(\tilde \Xbb,\{\tilde y_i \}_{i=1}^{u-1}\right) \right\} , \tilde X_{T_p(u)}\right),
\end{equation*}
where we used the notation $T_p(u):=\min\{t_{p-1}<t'\leq t_p:N_p(t)=u\}$ for the time of the $u-$th visit of $B_p$, which exists because $u\leq \tau=N_p(T_p)\leq N_p(t_p)$ since the event $\Bcal$ is satisfied by $\tilde\Xbb$. Note that the prediction of the rule $g_\cdot^p$ is random because of the dependence on $\tilde \Xbb$. Also, observe that the random time $\tau$ is bounded by $1\leq \tau\leq T_p\leq t_p$. Therefore, by hypothesis on the value space $(\Ycal,\ell)$, there exists a sequence $\{y^u_p\}_{u=1}^{t_p}$ and a value $y_p\in\Ycal$ such that
\begin{equation*}
    \Ebb\left[\frac{1}{\tau}\sum_{u=1}^{\tau} \left(\ell(g_u^p({\mb y_p}^{\leq u-1}), y_p^u) - \ell(y_p,y_p^u)\right)\right] \geq  \eta.
\end{equation*}
This ends the recursive construction of the values $y_p$ and the sequences $(y^u_p)_{u=1}^{t_p}$ for all $p\geq 1$. We are now ready to define the process $\Ybb(\Xbb)$, using a similar construction as before. For any $p\geq 1$ we define
\begin{equation*}
    Y_t(\Xbb) = \begin{cases}
        y_p^{N_p(t)} &\text{if } t\leq T_p \text{ and } X_t\in B_p, \\
        y_p &\text{if } t> T_p \text{ and } X_t\in B_p, \\
        y_q &\text{if } X_t\in B_q,\;q<p,\\
        \bar y &\text{otherwise},
    \end{cases}\quad \quad t_{p-1}<t\leq t_p.
\end{equation*}
We also define a function $f^*:\Xcal\to\Ycal$ by
\begin{equation*}
    f^*(x)=\begin{cases}
        y_p &\text{if }x\in B_p,\\
        \bar y &\text{otherwise}.
    \end{cases}
\end{equation*}
This function is simple hence measurable. From now, we will suppose that the event $\Bcal$ is met. For simplicity, we will denote by $\hat Y_t:=f_t(\Xbb_{\leq t-1},\Ybb_{\leq t-1},X_t)$ the prediction of the learning rule at time $t$. For any $p\geq 1$, because $\Ecal_p\cap\Fcal_p$ is met, for all $1\leq u\leq N_p(T_p)$, we have $t_{p-1}<T_p(u)\leq T_p$, and $X_{T_p(u)}\in B_p$. Hence, by construction, we have $\hat Y_{T_q(u)}=y^u_q$ and we can write
\begin{align*}
    \sum_{t=1}^{T_p} \ell(\hat Y_t,Y_t) &\geq \sum_{t=t_{p-1}+1}^{T_p} \ell(\hat Y_t,Y_t)\\
    &\geq \sum_{u=1}^{N_p(T_p)} \ell(\hat Y_{T_p(u)},Y_{T_p(u)}) \\
    &= \sum_{u=1}^{\tau} \ell( f_{T_p(u)}\left(\Xbb_{\leq T_p(u)-1},\Ybb_{\leq T_p(u)-1}, X_{T_p(u)}\right),y_p^u).
\end{align*}
Now note that because the construction was similar to the construction of $g_\cdot^p$, we have $\Ybb_{\leq T_p(u)-1} = \left\{ \Ybb_{\leq t_{p-1}}(\Xbb), \Ybb_{t_{p-1}<t\leq T_p(u)-1}\left(\Xbb,\{y^i_p\}_{i=1}^{u-1} \right) \right\}$, i.e., $\hat Y_{T_p(u)}$ coincides with the prediction $g_u^p(\{y^i_p\}_{i=1}^{u-1})$ provided that $g_u^p$ precisely used the realization $\Xbb$. Hence, conditioned on $\Bcal$ for all $u\leq \tau_p$, $\hat Y_{T_p(u)}$ has the same distribution as $g_u^p(\mb{y_p}^{\leq u-1})$. Therefore we obtain
\begin{align*}
    \Ebb\left[\left. \frac{1}{\tau}\sum_{t=1}^{T_p} \ell(\hat Y_t,Y_t) - \frac{1}{\tau}\sum_{u=1}^{\tau}\ell(y_p,y_p^u) \right| \Bcal\right]
    &\geq \Ebb\left[\left. \frac{1}{\tau}\sum_{u=1}^{\tau} \left(\ell(g_u^p(\hat Y_{T_p(u)},y_p^u) -\ell(y_p,y_p^u) \right) \right| \Bcal\right]\\
    &= \Ebb\left[ \frac{1}{\tau}\sum_{u=1}^{\tau} \left(\ell(g_u^p(\mb{y_p}^{\leq u-1}),y_p^u) -\ell(y_p,y_p^u) \right) \right]\\
    &\geq \eta.
\end{align*}
We now turn to the loss obtained by the simple function $f^*$. By construction, assuming that the event $\Bcal$ is met, we have
\begin{equation*}
    \sum_{t=1}^{T_p} \ell(f^*(X_t),Y_t) \leq \bar\ell t_{p-1} + \sum_{u=1}^{N_p(T_p)}\ell(f^*(X_{T_p(u)}),y_p^u) =\bar\ell t_{p-1} +\sum_{u=1}^{\tau}\ell(y_p,y_p^u).
\end{equation*}
Recalling that $T_p>\mu t_{p-1}\geq \frac{8\bar\ell}{\epsilon\eta}t_{p-1}$ and noting that $\tau=N_p(T_p)\geq \frac{\epsilon}{4}T_p$, we obtain
\begin{align*}
    \Ebb&\left[\left. \sup_{t_{p-1}<T\leq t_p}\frac{1}{T}\sum_{t=1}^T (\ell(\hat Y_t,Y_t) - \ell(f(X_t),Y_t)) \right|\Bcal\right] \\
    &\geq \Ebb\left[\left. \frac{\tau}{T_p}\frac{1}{\tau}\left(\sum_{t=1}^T \ell(\hat Y_t,Y_t) -\sum_{u=1}^{\tau}\ell(y_p,y_p^u) \right)- \bar\ell \frac{t_{p-1}}{T_p}  \right|\Bcal\right]\\
    &\geq \frac{\epsilon}{4} \Ebb\left[\left. \frac{1}{\tau}\sum_{t=1}^{T_p} \ell(\hat Y_t,Y_t) - \frac{1}{\tau}\sum_{u=1}^{\tau}\ell(y_p,y_p^u) \right| \Bcal\right] -  \frac{\epsilon\eta}{8}\\
    &\geq \frac{\epsilon\eta}{8}.
\end{align*}
Because this holds for any $p\geq 1$, Fatou lemma yields
\begin{align*}
    \Ebb&\left[\limsup_{T\to\infty}\frac{1}{T}\sum_{t=1}^T \ell(\hat Y_t,Y_t) - \ell(f(X_t),Y_t)\right] \\
    &\geq \Ebb\left[\left. \limsup_{T\to\infty}\frac{1}{T}\sum_{t=1}^T (\ell(\hat Y_t,Y_t) - \ell(f(X_t),Y_t)) \right|\Bcal\right] \Pbb[\Bcal]\\
    &\geq \frac{\epsilon^2\eta}{32}.
\end{align*}
Hence, we do note have almost surely $\limsup_{T\to\infty}\frac{1}{T}\sum_{t=1}^T \ell(\hat Y_t,Y_t) - \ell(f(X_t),Y_t)\leq 0$. This shows that $\Xbb\notin\solar$, which in turn implies $\solar\subset\cs$. This ends the proof that $\solar\subset\cs$. The proof that $\cs\subset\solar$ and the construction of an optimistically universal learning rule for adversarial regression is deferred to \cref{sec:unbounded_loss_moment_constraint} where we give a stronger result which also holds for unbounded losses. Note that generalizing \cref{thm:hanneke_2022} to adversarial responses already shows that $\cs\subset\solar$ and provides an optimistically universal learning rule when the loss $\ell$ is a metric $\alpha=1$.

\section{Proofs of \cref{sec:mean_estimation}}

\subsection{Proof of \cref{thm:mean_estimation}}

We first show that there exists $t_1\geq 1$ such that for any $t\geq t_1$, with high probability, for all $i\in I_t$,
\begin{equation*}
    \sum_{s=t_i}^t\ell(\hat Y_s,Y_s)\leq  L_{t,i} +3\ln^2 t\sqrt t.
\end{equation*}
For any $t\geq 0$, note that we have $\hat \ell_t=\Ebb[\ell(\hat Y_t,Y_t)\mid \Ybb_{\leq t}]$. We define the instantaneous regret $r_{t,i} = \hat\ell_t - \ell(y^i,Y_t)$. We now define $w'_{t-1,i}:=e^{\eta_{t-1}(\hat L_{t-1,i}-L_{t-1,i})}$ and pose $W_{t-1} = \sum_{i\in I_t}w_{t-1,i}$ and $W'_{t-1} = \sum_{i\in I_{t-1}} w'_{t-1,i}$, i.e., which induces the most regret. We also denote the index $k_t\in I_t$ such that $\hat L_{t,k_t}- L_{t,k_t} = \max_{i\in I_t} \hat L_{t,i} - L_{t,i}$. We first note that for any $i,j\in I_t$, we have $\ell(y^i,Y_t)-\ell(y^j,Y_t)\leq \ell(y^i,y^0)+\ell(y^0,y^j)\leq 2\ln t$. Therefore, we also have $|r_{t,i}|\leq 2\ln t$. Hence, we can apply Hoeffding's lemma to obtain
\begin{equation*}
    \frac{1}{\eta_t}\ln \frac{W'_t}{W_{t-1}} = \frac{1}{\eta_t} \ln \sum_{i\in I_t} \frac{w_{t-1,i}}{W_{t-1}}e^{\eta_t r_{t,i}} \leq \frac{1}{\eta_t}\left( \eta_t\sum_{i\in I_t} r_{t,i} \frac{w_{t-1,i}}{W_{t-1}} + \frac{\eta_t^2 (4\ln t)^2}{8}\right) = 2\eta_t \ln^2 t.
\end{equation*}
The same computations as in the proof of \cref{lemma:concatenation_predictors} then show that
\begin{multline}
\label{eq:to_sum_mean_estimation}
     \frac{1}{\eta_t}\ln \frac{w_{t-1,k_{t-1}}}{W_{t-1}}- \frac{1}{\eta_{t+1}}\ln \frac{w_{t,k_t}}{W_t} \leq 2\left(\frac{1}{\eta_{t+1}}-\frac{1}{\eta_t}\right) \ln (1+\ln (t+1)) + \frac{|I_{t+1}|-|I_t|}{\eta_t \sum_{i\in I_t} w_{t,i}} \\
     + (\hat L_{t-1,k_{t-1}}- L_{t-1,k_{t-1}}) - (\hat L_{t,k_t}- L_{t,k_t}) + 2\eta_t \ln^2 t.
\end{multline}
First suppose that we have $\sum_{i\in I_t}w_{t,i}\leq 1$. Similarly to \cref{lemma:concatenation_predictors}, we obtain $\hat L_{t,k_t}-L_{t,k_t} \leq 0$. Otherwise, let $t'=\min \{1\leq s\leq t:\forall s\leq s'\leq t,\sum_{i\in I_{s'}} w_{s',i}\geq 1\}$. We sum equation~\eqref{eq:to_sum_mean_estimation} for $s=t',\ldots, t$ which gives
\begin{multline*}
     \frac{1}{\eta_1}\ln \frac{w_{t'-1,k_{t'-1}}}{W_{t'-1}}- \frac{1}{\eta_{t+1}}\ln \frac{w_{t,k_t}}{W_t}  \leq \frac{2}{\eta_{t+1}} \ln (1+\ln (t+1))+ \frac{|I_{t+1}|}{\eta_t}\\
     + (\hat L_{t'-1,k_{t'-1}}- L_{t'-1,k_{t'-1}}) - (\hat L_{t,k_t}- L_{t,k_t}) + 2\sum_{s=t'}^t\eta_s \ln^2 s.
\end{multline*}
Similarly as in \cref{lemma:concatenation_predictors}, we have $\frac{w_{t,k_t}}{W_t}\leq 1$, $\frac{w_{t'-1,k_{t'-1}}}{W_{t'-1}}\geq \frac{1}{1+\ln t}$ and $\hat L_{t'-1,k_{t'-1}}- L_{t'-1,k_{t'-1}}\leq 0$. Finally, using the fact that $\sum_{s=1}^t \frac{1}{\sqrt s}\leq 2\sqrt t$, we obtain
\begin{equation*}
    \hat L_{t,k_t}- L_{t,k_t} \leq  \ln(1+\ln (t+1))(4+8\sqrt{t+1}) +4(1+\ln (t+1))\sqrt t + \ln^2 t\sqrt t \leq 2 \ln^2 t \sqrt t,
\end{equation*}
for all $t\geq t_0$ where $t_0$ is a fixed constant, and as a result, for all $t\geq t_0$ and $i\in I_t$, we have $\hat L_{t,i} -L_{t,i} \leq 2\ln^2 t \sqrt t.$

Now note that $|\ell(\hat Y_t,Y_t)-\Ebb[\ell(\hat Y_t,Y_t)\mid\Ybb_{\leq t}]|\leq 2\ln t$ because for all $i\in I_t$, we have $\ell(y^i,y^0)\leq \ln t$. Hence, we can apply Hoeffding-Azuma inequality to the variables $\ell(\hat Y_t,Y_t)-\hat \ell_t$ that form a sequence of differences of a martingale, which yields
\begin{equation*}
    \Pbb\left[\sum_{s=t_i}^t \ell(\hat Y_s,Y_s)>\hat L_{t,i} + u\right] \leq e^{ -\frac{u^2}{8t\ln^2 t}}.
\end{equation*}
Hence, for $t\geq t_0$ and $i\in I_t$, with probability $1-\delta$, we have
\begin{equation*}
    \sum_{s=t_i}^t \ell(\hat Y_s,Y_s)\leq \hat L_{t,i} +\ln t \sqrt{2t\ln\frac{1}{\delta}} \leq L_{t,i} + 2\ln^2 t\sqrt t + \ln t \sqrt{2t\ln\frac{1}{\delta}}.
\end{equation*}
Therefore, since $|I_t|\leq 1+\ln t$, by union bound with probability $1-\frac{1}{t^2}$ we obtain that for all $i\in I_t$,
\begin{equation*}
    \sum_{s=t_i}^t \ell(\hat Y_s,Y_s) \leq L_{t,i} + 2\ln^2 t\sqrt t + \ln t \sqrt{2t\ln(1+\ln t)}+ \ln t\sqrt{4t\ln t}\leq 3\ln^2 t \sqrt t
\end{equation*}
for all $t\geq t_1$ where $t_1\geq t_0$ is a fixed constant. Now because $\sum_{t\geq 1}\frac{1}{t^2}<\infty$, the Borel-Cantelli lemma implies that almost surely, there exists $\hat t\geq 0$ such that
\begin{equation*}
     \forall t\geq \hat t, \forall i\in I_t,\quad \sum_{s=t_i}^t \ell(\hat Y_s,Y_s) \leq L_{t,i} +3\ln^2 t\sqrt t.
\end{equation*}
We denote by $\Acal$ this event. Now let $y\in\Ycal$, $\epsilon>0$ and consider $i\geq 0$ such that $\ell(y^i,y)<\epsilon$. On the event $\Acal$, we have for all $t\geq \max(\hat t,t_i)$,
\begin{equation*}
    \quad \sum_{s=t_i}^t \ell(\hat Y_s,Y_s) \leq  \sum_{s=t_i}^t \ell(y^i,Y_s) + 3\ln^2 t\sqrt t \leq \sum_{s=t_i}^t \ell(y,Y_s) + \epsilon t + 3\ln^2 t\sqrt t.
\end{equation*}
Therefore, $\limsup_{t\to\infty} \frac{1}{t}\sum_{s=1}^t \left( \ell(\hat Y_s,Y_s)-\ell(y,Y_s) \right) \leq \epsilon$ on $\Acal$. Because this holds for any $\epsilon>0$ we finally obtain $ \limsup_{t\to\infty} \frac{1}{t}\sum_{s=1}^t \left( \ell(\hat Y_s,Y_s)-\ell(y,Y_s) \right) \leq 0$ on the event $\Acal$ of probability one, which holds for all $y\in \Ycal$ simultaneously. This ends the proof of the theorem.

\subsection{Proof of \cref{cor:universal_regression_unbounded}}

We denote by $g_\cdot$ the learning rule on values $\Ycal$ for mean estimation described in \cref{thm:mean_estimation}. Because processes in $\Xbb\in\fs$ visit only finite number of different instance points in $\Xcal$ almost surely, we can simply perform the learning rule $g_\cdot$ on each sub-process $\Ybb_{\{t:X_t=x\}}$ separately for any $x\in\Xcal$. Note that the learning rule $g_\cdot$ does not explicitely re-use past randomness for its prediction. Hence, we will not specify that the randomness used for all learning rules---for each $x$ visited by $\Xbb$---should be independent. Let us formally describe our learning rule. Consider a sequence $\mb x_{\leq t-1}$ of instances in $\Xcal$ and $\mb y_{\leq t-1}$ of values in $\Ycal$. We denote by $S_{t-1}=\{x:\mb x_{\leq t-1}\cap\{x\}\neq\emptyset\}$ the support of $\mb x_{\leq t-1}$. Further, for any $x\in S_{t-1}$, we denote $N_{t-1}(x)=\sum_{u\leq t-1}\1_{x_u=x}$ the number of times that the specific instance $x$ was visited by the sequence $\mb x_{\leq t-1}$. Last, for any $x\in S_{t-1}$, we denote $\mb y^x_{\leq N(x)}$ the values $\mb y_{\{u\leq t: X_u=x\}}$ obtained when the instance was precisely $x$ in the sequence $\mb x_{\leq t-1}$, ordered by increasing time $u$. We are now ready to define our learning rule $f_t$ at time $t$. Given a new instance point $x_t$, we pose
\begin{equation*}
    f_t(\mb x_{\leq t-1},\mb y_{\leq t-1},x_t) = \begin{cases}
        g_{N_{t-1}(x)+1}(\mb y^x_{\leq N_{t-1}(x)}) &\text{if }x_t\in S_{t-1},\\
        g_1(\emptyset) &\text{otherwise}.
    \end{cases}
\end{equation*}
Recall that for any $u\geq 1$, $g_u$ uses some randomness. The only subtlety is that at each iteration $t\geq 1$ of the learning rule $f_\cdot$, the randomness used by the subroutine call to $g_\cdot$ should be independent from the past history. We now show that $f_\cdot$ is universally consistent for adversarial regression under all processes $\Xbb\in \fs$.

Let $\Xbb\in \fs$. For simplicity, we will denote by $\hat Y_t$ the prediction of the learning rule $f_\cdot$ at time $t$. We denote $S=\{x:\{x\}\cap\Xbb\neq\emptyset\}$ the random support of $\Xbb$. By hypothesis, we have $|S|<\infty$ with probability one. Denote by $\Ecal$ this event. We now consider a specific realization $\mb x$ of $\Xbb$ falling in the event $\Ecal$. Then, $S$ is a fixed set. We also denote $\tilde S:=\{x\in S:\lim_{t\to\infty}N_t(x) = \infty\}$ the instances which are visited an infinite number of times by the sequence $\mb x$. Now, we can write for any function $f :\Xcal\to\Ycal$,
\begin{align*}
    \sum_{t=1}^T &\left(\ell(\hat Y_t,Y_t) -\ell(f(x_t),Y_t)\right)= \sum_{x\in S} \sum_{u=1}^{N_t(x)} \left(\ell(g_u(\Ybb^x_{\leq u-1}),Y_u^x) - \ell(f(x),Y_u)\right)\\
    &\leq \sum_{s\in S\setminus \tilde S} \bar\ell |\{t\geq 1:x_t=x\}| + \sum_{s\in \tilde S} \sum_{u=1}^{N_t(x)}\left(\ell(g_u(\Ybb^x_{\leq u-1}),Y_u^x) - \ell(f(x),Y_u)\right).
\end{align*}
Now, because the randomness in $g_\cdot$ was taken independently from the past at each iteration, we can apply directly \cref{thm:mean_estimation}. For all $x\in\tilde S$, with probability one, for all $y^x\in \Ycal$,
\begin{equation*}
    \limsup_{t'\to\infty}\frac{1}{t'}\sum_{u=1}^{t'}\left(\ell(g_u(\Ybb^x_{\leq u-1}),Y_u^x) - \ell(y^x,Y_u)\right) \leq 0.
\end{equation*}
We denote by $\Ecal_x$ this event. Then, on the event $\bigcap_{x\in\tilde S}\Ecal_x$ of probability one, we have for any measurable function $f:\Xcal\to\Ycal$,
\begin{align*}
    \limsup_{T\to\infty} \frac{1}{T}&\left(\ell(\hat Y_t,Y_t) -\ell(f(x_t),Y_t)\right) \\
    &\leq \sum_{s\in\tilde S}  \limsup_{T\to\infty} \frac{1}{T} \sum_{u=1}^{N_t(x)}\left(\ell(g_u(\Ybb^x_{\leq u-1}),Y_u^x) - \ell(f(x),Y_u)\right)\\
    &\leq \sum_{s\in\tilde S}  \limsup_{T\to\infty} \frac{1}{N_t(x)} \sum_{u=1}^{N_t(x)}\left(\ell(g_u(\Ybb^x_{\leq u-1}),Y_u^x) - \ell(f(x),Y_u)\right)\leq 0.
\end{align*}
As a result, averaging on realisations of $\Xbb$, we obtain that with probability one, we have that $\Lcal_{(\Xbb,\Ybb)}(f_\cdot,f)\leq 0$ for all measurable functions $f:\Xcal\to\Ycal$. Note that this is stronger than the notion of universal consistency which we defined in \cref{sec:formal_setup}, where we ask that for all measurable function $f:\Xcal\to\Ycal$, we have almost surely $\Lcal_{(\Xbb,\Ybb)}(f_\cdot,f)\leq 0$. In particular, this shows that $\fs\subset \solaru$. As result $\solaru=\fs$ and $f_\cdot$ is optimistically universal. This ends the proof of the result.

\subsection{Proof of \cref{thm:empty_solar}}

We first show that mean-estimation is not achievable. To do so, let $f_\cdot$ be a learning rule. For simplicity, we will denote by $\hat Y_t$ its prediction at step $t$. We aim to construct a process $\Ybb$ on $\Rbb$ and a value $y^*\in\Rbb$ such that with non-zero probability we have
\begin{equation*}
    \limsup_{T\to\infty} \frac{1}{T}\sum_{t=1}^T \ell(f_t(\Ybb_{\leq t-1}),Y_t) - \ell(y^*,Y_t) > 0.
\end{equation*}
We now pose $\beta:=\frac{2\alpha}{\alpha-1}>2$. For any sequence $\mb b:=(b_t)_{t\geq 1}$ in $\{-1,1\}$, we consider the following process $\Ybb^{\mb b}$ such that for any $t\geq 1$ we have $Y_t^{\mb b} = 2^{\beta^t} b_t.$ Let $\mb B:= (B_t)_{t\geq 1}$ be an i.i.d. sequence of Rademacher random variables, i.e., such that $B_1=1$ (resp. $B_1=-1$) with probability $\frac{1}{2}$. We consider the random variables $e_t:= \1_{\hat Y_t\cdot Y_t \leq 0}$ which intuitively correspond to flags for large mistakes of the learning rule $f_\cdot$ at time $t$. Because $f_\cdot$ is an online learning rule, we have
\begin{equation*}
    \Ebb[e_t\mid \Ybb_{\leq t-1}] = \Ebb_{\hat Y_t}\left[\Ebb_{B_t}[\1_{\hat Y_t\cdot Y_t\leq 0} \mid \hat Y_t]\right] = \Ebb_{\hat Y_t}\left[\1_{\hat Y_t=0} + \frac{1}{2}\1_{\hat Y_t\neq 0}\right] \geq \frac{1}{2}.
\end{equation*}
where the expectation $\Ebb_{\hat Y_t}$ refers to the expectation on the randomness of the rule $f_t$. As a result, the random variables $e_t-\frac{1}{2}$ form a sequence of differences of a sub-martingale bounded by $\frac{1}{2}$ in absolute value. By the Azuma-Hoeffding inequality, we obtain $\Pbb\left[\sum_{t=1}^T e_t \leq \frac{T}{4}\right] \leq e^{-T/8}.$ Because $\sum_{t\geq 1} e^{-t/8}<\infty$, the Borel-Cantelli lemma implies that on an event $\Ecal$ of probability one, we have $\limsup_{T\to\infty} \frac{1}{T}\sum_{t=1}^T e_t \geq \frac{1}{4}$. As a result, there exists a specific realization $\mb b$ of $\mb B$ such that on an event $\tilde \Ecal$ of probability one, we have $\limsup_{T\to\infty} \frac{1}{T}\sum_{t=1}^T e_t \geq \frac{1}{4}$. Note that the sequence $\Ybb^{\mb b}$ is now deterministic. Then, writing $e_t=e_t\1_{Y_t>0} + e_t\1_{Y_t<0}$, we obtain
\begin{equation*}
    \limsup_{T\to\infty} \frac{1}{T}\sum_{t=1}^T e_t \1_{Y_t>0}+ \limsup_{T\to\infty}   \frac{1}{T}\sum_{t=1}^T e_t \1_{Y_t<0} \geq \frac{1}{4}.
\end{equation*}
Without loss of generality, we can suppose that $\limsup_{T\to\infty} \frac{1}{T}\sum_{t=1}^T \1_{\hat Y_t\cdot Y_t\leq 0} \1_{Y_t>0} \geq \frac{1}{8}$. We now pose $y^*=1$. In the other case, we pose $y^*=-1$. We now compute for any $T\geq 1$ such that $\hat Y_t\cdot Y_t\leq 0$ and $Y_t>0$,
\begin{align*}
    \frac{1}{T}\sum_{t=1}^T \left(\ell(f_t(\Ybb_{\leq t-1}),Y_t) - \ell(y^*,Y_t)\right) &\geq \frac{ \ell(0,2^{\beta^T}) -  \ell(1,2^{\beta^T})}{T} - \frac{1}{T}\sum_{t=1}^{T-1} \ell(1,-2^{\beta^t}).\\
    &= \frac{\alpha}{T} 2^{(\alpha-1)\beta^T} + O\left(\frac{1}{T} 2^{(\alpha-2)\beta^T}\right) - 2^{\alpha(1+\beta^{T-1})}\\
    &= \frac{\alpha}{T} 2^{2\alpha\beta^{T-1}}(1+o(1)).
\end{align*}
Because, by construction $\limsup_{T\to\infty} \frac{1}{T}\sum_{t=1}^T \1_{\hat Y_t\cdot Y_t\leq 0} \1_{Y_t>0} \geq \frac{1}{8}$, we obtain
\begin{equation*}
    \limsup \frac{1}{T}\sum_{t=1}^T \left(\ell(f_t(\Ybb_{\leq t-1}),Y_t) - \ell(y^*,Y_t)\right) = \infty,
\end{equation*}
on the event $\tilde E$ of probability one. This end the proof that mean-estimation is not achievable. Because mean-estimation is the easiest regression setting, this directly implies $\solaru=\emptyset$. Formally, let $\Xbb$ a process on $\Xcal$. and $f_\cdot$ a learning rule for regression. We consider the same processes $\Ybb^{\mb B}$ where $\mb B$ is i.i.d. Rademacher and independent from $\Xbb$. The same proof shows that there exists a realization $\mb b$ for which we have almost surely $\Lcal_{(\Xbb,\Ybb)}(f_\cdot,f^*:=y^*) = \infty$, where $f^*=y^*$ denotes the constant function equal to $y^*$ where $y^*\in\Rbb$ is the value constructed as above. Hence, $\Xbb\notin\solaru$, and as a result, $\solaru=\emptyset.$

\subsection{Proof of \cref{prop:alternative_unbounded}}

Suppose that there exists an online learning rule $g_\cdot$ for mean-estimation. In the proof of \cref{cor:universal_regression_unbounded}, instead of using the learning rule for mean-estimation for metric losses introduced in \cref{thm:mean_estimation}, we can use the learning rule $g_\cdot$ to construct the learning rule $f_\cdot$ for adversarial regression on $\fs$ instance processes, which simply performs $f_\cdot$ separately on each subprocess $\Ybb_{t:X_t=x}$ with the same instance $x\in\Xcal$ for all visited $x\in\Xcal$ in the process $\Xbb$. The same proof shows that because almost surely $\Xbb$ visits a finite number of different instances, $f_\cdot$ is universally consistent under any process $\Xbb\in\fs$. Hence, $\fs\subset\solaru$. Because $\solaru\subset\soul=\fs$, we obtain directly $\solaru=\fs$ and $f_\cdot$ is optimistically universal.

On the other hand, if mean-estimation with adversarial responses is not achievable, we can use similar arguments as for the proof of \cref{thm:empty_solar}. Let $f_\cdot$ a learning rule for regression, and consider the following learning rule $g_\cdot$ for mean-estimation. We first draw a process $\tilde \Xbb$ with same distribution as $\Xbb$. Then, we pose
\begin{equation*}
    g_t(\mb y_{\leq t-1}):=f_t(\tilde \Xbb_{\leq t-1},\mb y_{\leq t-1},\tilde X_t).
\end{equation*}
Then, because mean-estimation is not achievable, there exists an adversarial process $\Ybb$ on $(\Ycal,\ell)$ such that with non-zero probability,
\begin{equation*}
    \limsup \frac{1}{T}\sum_{t=1}^T \left(\ell(g_t(\Ybb_{\leq t-1}),Y_t) - \ell(y^*,Y_t)\right) > 0.
\end{equation*}
Then, we obtain that with non-zero probability, $\Lcal_{(\tilde \Xbb,\Ybb)}>0$. Hence, $f_\cdot$ is not universally consistent. Note that the ``bad'' process $\Ybb$ is not correlated with $\tilde \Xbb$ in this construction.

\section{Proofs of \cref{sec:unbounded_loss_moment_constraint}}

\subsection{Proof of \cref{thm:negative_first_order_moment}}

Let $(x^k)_{k\geq 0}$ a sequence of distinct points of $\Xcal$. Now fix a value $y_0\in\Ycal$ and construct a sequence of values $y^1_k,y^2_k$ for $k\geq 1$ such that $\ell(y^1_k,y^2_k)\geq c_\ell 2^{k+1}$. Because $\ell(y^1_k,y^2_k)\leq c_\ell \ell(y_0,y^1_k)+c_\ell \ell(y_0,y^2_k)$, there exists $i_k\in\{1,2\}$ such that $\ell(y_0,y^{i_k}_k)\geq 2^k$. For simplicity, we will now write $y_k:=y^{i_k}_k$ for all $k\geq 1$. We define
\begin{equation*}
    t_k = \left\lfloor \sum_{l=1}^k \ell(y_0,y_l)\right\rfloor.
\end{equation*}
This forms an increasing sequence of times because $t_{k+1}-t_k\geq \ell(y_0,y_{k+1})\geq 1$. Consider the deterministic process $\Xbb$ that visits $x^k$ at time $t_k$ and $x^0$ otherwise, i.e., such that
\begin{equation*}
    X_t=\begin{cases}
    x^k &\text{if }t=t_k,\\
    x^0 &\text{otherwise}.
    \end{cases}
\end{equation*}
The process $\Xbb$ visits $\Xcal\setminus\{x^0\}$ a sublinear number of times. Hence we have for any measurable set $A$:
\begin{equation*}
    \lim_{T\to\infty}\frac{1}{T}\sum_{t=1}^T\1_A(X_t) = \begin{cases}
        1 &\text{if }x^0\in A\\
        0 &\text{otherwise}.
    \end{cases}
\end{equation*}
As a result, $\Xbb\in\crf$. We will now show that universal learning under $\Xbb$ with the first moment condition on the responses is not achievable. For any sequence $b:=(b_k)_{k\geq 1}$ of binary variables $b_k\in\{0,1\}$, we define the function $f^*_b:\Xcal\to\Ycal$ such that
\begin{equation*}
    f^*_b(x^k)=\begin{cases}
    y_0 &\text{if }b_k=0,\\
    y_k &\text{otherwise},
    \end{cases}\quad k\geq 0\quad \text{and }\quad f^*_b(x)=y_0 \text{ if }x\notin\{x_k,k\geq 0\}.
\end{equation*}
These functions are simple, hence measurable. We will first show that for any binary sequence $b$, the function $f^*_b$ satisfies the moment condition on the target functions. Indeed, we note that for any $T\geq t_1$, with $k:=\max\{l\geq 1:t_l\leq T\}$, we have
\begin{equation*}
    \frac{1}{T}\sum_{t=1}^T\ell(y_0,f^*_b(X_t))\leq \frac{1}{T}\sum_{l=1}^k \ell(y_0,y_k)\leq \frac{t_k+1}{T}\leq \frac{T+1}{T}.
\end{equation*}
Therefore, $\limsup_{T\to\infty} \frac{1}{T}\sum_{t=1}^T\ell(y_0,f^*_b(X_t))\leq 1.$
We now consider any online learning rule $f_\cdot$. Let $B=(B_k)_{k\geq 1}$ be an i.i.d. sequence of Bernouilli variables independent from the learning rule randomness. For any $k\geq 1$, denoting by $\hat Y_{t_k}:=f_{t_k}(\Xbb_{\leq t_k-1},f^*_B(\Xbb_{\leq t_k-1}),X_{t_k})$ we have
\begin{equation*}
    \Ebb_{B_k} \ell(\hat Y_{t_k},f^*_B(X_{t_k})) = \frac{\ell(\hat Y_{t_k},y_0) + \ell(\hat Y_{t_k},y_k)}{2} \geq \frac{1}{2c_\ell} \ell(y_0,y_k).
\end{equation*}
In particular, taking the expectation over both $B$ and the learning rule, we obtain
\begin{equation*}
    \Ebb\left[ \frac{1}{t_k}\sum_{t=1}^{t_k} \ell(f_t(\Xbb_{\leq t-1},f^*_B(\Xbb_{\leq t-1}),X_t),f^*_B(X_t)) \right]\geq \frac{1}{2c_\ell t_k} \sum_{l=1}^k\ell(y_0,y_k)\geq \frac{1}{2c_\ell}.
\end{equation*}
As a result, using Fatou's lemma we obtain
\begin{align*}
    \Ebb&\left[ \limsup_{T\to\infty} \frac{1}{T}\sum_{t=1}^T \ell(f_t(\Xbb_{\leq t-1},f^*_B(\Xbb_{\leq t-1}),X_t),f^*_B(X_t))\right] \\
    &\geq \limsup_{T\to\infty}\Ebb \left[ \frac{1}{T}\sum_{t=1}^T \ell(f_t(\Xbb_{\leq t-1},f^*_B(\Xbb_{\leq t-1}),X_t),f^*_B(X_t))\right] \\
    &\geq \frac{1}{2c_\ell}.
\end{align*}
Therefore, the learning rule $f_\cdot$ is not consistent under $\Xbb$ for all target functions of the form $f^*_b$ for some sequence of binary variables $b$. Indeed, otherwise for all binary sequence $b=(b_k)_{k\geq 1}$, we would have $ \Ebb_\Xbb \left[ \limsup_{T\to\infty} \frac{1}{T}\sum_{t=1}^T \ell(f_t(\Xbb_{\leq t-1},f^*_b(\Xbb_{\leq t-1}),X_t),f^*_b(X_t))\right] =0$ and as a result
\begin{equation*}
    \Ebb_B\Ebb_\Xbb \left[ \limsup_{T\to\infty} \frac{1}{T}\sum_{t=1}^T \ell(f_t(\Xbb_{\leq t-1},f^*_B(\Xbb_{\leq t-1}),X_t),f^*_B(X_t)) \right] =0.
\end{equation*}
This ends the proof of the theorem.

\subsection{Proof of \cref{lemma:empirically_integrable}}

It suffices to prove that empirical integrability implies the latter property. We pose $\epsilon_i=2^{-i}$ for any $i\geq 0$. By definition, there exists an event $\Ecal_i$ of probability one such that on $\Ecal_i$ we have
\begin{equation*}
    \exists M_i\geq 0,\quad \limsup_{T\to\infty}\frac{1}{T}\sum_{t=1}^T\ell(y_0,Y_t)\1_{\ell(y_0,Y_t)\geq M_i}\leq \epsilon_i.
\end{equation*}
As a result, on $\bigcap_{i\geq 0}\Ecal_i$ of probability one, we obtain
\begin{equation*}
    \forall \epsilon>0, \exists M:=M_{\lceil \log_2 \frac{1}{\epsilon}\rceil}\geq 0,\quad \limsup_{T\to\infty}\frac{1}{T}\sum_{t=1}^T\ell(y_0,Y_t)\1_{\ell(y_0,Y_t)\geq M}\leq \epsilon.
\end{equation*}
This ends the proof of the lemma.

\subsection{Proof of \cref{thm:noiseless_unbounded}}

Let $\Xbb\in\soul$ and $f^*:\Xcal\to\Ycal$ such that $f^*(\Xbb)$ is empirically integrable. By \cref{lemma:empirically_integrable}, there exists some value $y_0\in\Ycal$ such that on an event $\Acal$ of probability one, for all $\epsilon>0$ there exists $M_\epsilon\geq 0$ such that
\begin{equation*}
    \limsup_{T\to\infty}\frac{1}{T}\sum_{t=1}^T\ell(y_0,f^*(X_t))\1_{\ell(y_0,f^*(X_t))\geq M_\epsilon}\leq \epsilon.
\end{equation*}
For any $M\geq 1$ we define the function $f^*_M$ by
\begin{equation*}
    f^*_M(x)=\begin{cases}
    f^*(x) &\text{if } \ell(y_0,f^*(x))\leq M,\\
    y_0 &\text{otherwise}.
    \end{cases}
\end{equation*}
We know that 2C1NN is optimistically universal in the noiseless setting for bounded losses. Therefore, restricting the study to the output space $(B_\ell(y_0,M),\ell)$ we obtain that 2C1NN is consistent for $f^*_M$ under $\Xbb$, i.e.
\begin{equation*}
    \limsup_{T\to\infty} \frac{1}{T}\sum_{t=1}^T \ell(2C1NN_t(\Xbb_{t-1},f^*_M(\Xbb_{\leq t-1}),X_t),f^*_M(X_t)) = 0\quad (a.s.).
\end{equation*}
For any $t\geq 1$, we denote $\phi(t)$ the representative used by the 2C1NN learning rule. We denote $\Ecal_M$ the above event such that $\limsup_{T\to\infty}\frac{1}{T}\sum_{t=1}^T \ell(f^*_M(X_{\phi(t)}),f^*_M(X_t))=0$. We now write for any $T\geq 1$ and $M\geq 1$,
\begin{multline*}
     \frac{1}{T}\sum_{t=1}^T \ell(f^*(X_{\phi(t)}),f^*(X_t))\leq \frac{c_\ell^2}{T}\sum_{t=1}^T \ell(f^*_M(X_{\phi(t)}),f^*_M(X_t)) +
     \frac{c_\ell^2}{T}\sum_{t=1}^T \ell(f^*(X_t),f^*_M(X_t)) \\
     +\frac{c_\ell}{T}\sum_{t=1}^T \ell(f^*(X_{\phi(t)}),f^*_M(X_{\phi(t)})).
\end{multline*}
We now note that by construction of the 2C1NN learning rule,
\begin{align*}
    \frac{1}{T}\sum_{t=1}^T \ell(f^*(X_{\phi(t)}),f^*_M(X_{\phi(t)})) &= \frac{1}{T}\sum_{u=1}^T \ell(f^*(X_u),f^*_M(X_u)) |\{u<t\leq T:\phi(t)=u\}|\\
    &\leq \frac{2}{T}\sum_{t=1}^T \ell(f^*(X_t),f^*_M(X_t)).
\end{align*}
Hence, we obtain
\begin{multline*}
    \frac{1}{T}\sum_{t=1}^T \ell(f^*(X_{\phi(t)}),f^*(X_t))\leq \frac{c_\ell^2}{T}\sum_{t=1}^T \ell(f^*_M(X_{\phi(t)}),f^*_M(X_t))\\ +
     \frac{c_\ell(2+c_\ell)}{T}\sum_{t=1}^T \ell(y_0,f^*(X_t))\1_{\ell(y_0,f^*(X_t))>M}.
\end{multline*}
As a result, on the event $\Acal\cap\bigcap_{M\geq 1}\Ecal_M$ of probability one, for any $M\geq 1$, we obtain
\begin{multline*}
    \limsup_{T\to\infty}\frac{1}{T}\sum_{t=1}^T\ell(f^*(X_{\phi(t)}),f^*(X_t)) \\
    \leq c_\ell(2+c_\ell) \limsup_{T\to\infty}\frac{1}{T}\sum_{t=1}^T \ell(y_0,f^*(X_t))\1_{\ell(y_0,f^*(X_t))\geq M}.
\end{multline*}
In particular, if $\epsilon>0$ we can apply this result with $M:=\lceil M_\epsilon\rceil$, which in turn shows that we have $\limsup_{T\to\infty}\frac{1}{T}\sum_{t=1}^T\ell(f^*(X_{\phi(t)}),f^*(X_t)) \leq c_\ell(2+c_\ell)\epsilon$. Because this holds for any $\epsilon>0$ we finally obtain that on the event  $\Acal\cap\bigcap_{M\geq 1}\Ecal_M$ we have
\begin{equation*}
    \limsup_{T\to\infty}\frac{1}{T}\sum_{t=1}^T\ell(f^*(X_\phi(t)),f^*(X_t)) =0.
\end{equation*}
This ends the proof of the theorem.

\subsection{Proof of \cref{thm:CS_regression_unbounded}}

We first define the learning rule. Using Lemma 23 of \cite{hanneke2021learning}, let $\Tcal\subset\Bcal$ a countable set such that for all $\Xbb\in\cs,A\subset\Bcal$ we have
\begin{equation*}
    \inf_{G\in\Tcal} \Ebb[\hat \mu_\Xbb (G \bigtriangleup A)]=0.
\end{equation*}
Now let $(y^i)_{i\geq 0}$ be a dense sequence in $\Ycal$. For any $k\geq 0$, any indices $l_1,\ldots, l_k\in \Nbb$ and any sets $A_1,\ldots,A_k \in \Tcal$, we define the function $f_{\{l_1,\ldots,l_k\},\{A_1,\ldots,A_k\}}:\Xcal\to\Ycal$ as \begin{equation*}
    f_{\{l_1,\ldots,l_k\},\{A_1,\ldots,A_k\}}(x)  = y^{\max\{0\leq j\leq k:\; x\in A_j\}}
\end{equation*}
where $A_0=\Xcal$. These functions are simple hence measurable. Because the set of such functions is countable, we enumerate these functions as $f^0,f^1\ldots$ Without loss of generality, we suppose that $f^0=y^0$. For any $i\geq 0$, we denote $k^i\geq 0$, $\{l_1^i,\ldots,l_{k^i}^i\}$ and $\{A_1^i,\ldots,A_{k^i}^i\}$ such that $f^i$ was defined as $f^i:=f_{\{l_1^i,\ldots,l_k^i\},\{A_1^i,\ldots,A_k^i\}}$. We now define a sequence of sets $(I_t)_{t\geq 1}$ of indices and a sequence of sets $(\Fcal_t)_{t\geq 1}$ of measurable functions by
\begin{equation*}
    I_t := \{i\leq \ln t: \ell(y^{l_p^i},y^0)\leq 2^{-\alpha+1}\ln t,\; \forall 1\leq p\leq k^i\}\quad \text{and}\quad \Fcal_t:=\{f^i:i\in I_t\}.
\end{equation*}
Then, clearly $I_t$ is finite and $\bigcup_{t\geq 1}I_t = \Nbb$. For any $i\geq 0$, we define $t_i = \min\{t:i\in I_t\}$. We are now ready to construct our learning rule. Let $\eta_t=\frac{1}{\ln t \sqrt t}$. Fix any sequences $(x_t)_{t\geq 1}$ in $\Xcal$ and $(y_t)_{t\geq 1}$ in $\Ycal$. At step $t\geq 1$, after observing the values $x_i$ for $1\leq i\leq t$ and $y_i$ for $1\leq i\leq t-1$, we define for any $i\in I_t$ the loss $L_{t-1,i}:= \sum_{s=t_i}^{t-1}\ell(f^i(x_s),y_s).$ 
For any $M\geq 1$ we define the function $\phi_M:\Ycal\to\Ycal$ such that
\begin{equation*}
    \phi_M(y) = \begin{cases} 
    y &\text{if } \ell(y,y^0)< M,\\
    y^0 &\text{otherwise}.
    \end{cases}
\end{equation*}
We now construct construct some weights $w_{t,i}$ for $t\geq 1$ and $i\in I_t$ recursively in the following way. Note that $I_1=\{0\}$. Therefore, we pose $w_{0,0}=1$. Now let $t\geq 2$ and suppose that $w_{s-1,i}$ have been constructed for all $1\leq s\leq t-1$. We define
\begin{equation*}
    \hat \ell_s:=\frac{\sum_{j\in I_s} w_{s-1,j}\ell(f^j(x_s),\phi_{2^{-\alpha+1}\ln s}(y_s))}{\sum_{j\in I_s} w_{s-1,j}}
\end{equation*}
and for any $i\in I_t$ we note $\hat L_{t-1,i} := \sum_{s=t_i}^{t-1} \hat \ell_s$. In particular, if $t_i=t$ we have $\hat L_{t-1,i}=L_{t-1,i}=0$. The weights at time $t$ are constructed as $w_{t-1,i}:= e^{\eta_t(\hat L_{t-1,i}-L_{t-1,i})}$ for any $i\in I_t$. Last, let $\{\hat i_t\}_{t\geq 1}$ a sequence of independent random $\Nbb-$valued variables such that
\begin{equation*}
    \Pbb(\hat i_t = i) = \frac{w_{t-1,i}}{\sum_{j\in I_{t}}w_{t-1,j}},\quad i\in I_t.
\end{equation*}
Finally, the prediction is defined as $\hat y_t:=f^{\hat i_t}(x_t)$. The learning rule is summarized in \cref{alg:C1_processes}. 

\begin{algorithm}[tb]
\caption{A learning rule for adversarial empirically integrable responses under $\cs$ processes.}\label{alg:C1_processes}
\hrule height\algoheightrule\kern3pt\relax
\KwIn{Historical samples $(X_t,Y_t)_{t<T}$ and new input point $X_T$}
\KwOut{Predictions $\hat Y_t$ for $t\leq T$}
Construct the sequence of measurable functions $\{f^i,i\geq 0\}$ with $f^i=f_{\{l_1^i,\ldots,l_k^i\},\{A_1^i,\ldots,A_k^i\}}$\\
$I_t:=\{i\leq \ln t,\ell(y^{l_p^i},y^0) \leq 2^{-\alpha+1}\ln t,\forall 1\leq p\leq k^i\},\Fcal_t:=\{f^i,i\in I_t\},\eta_t:=\frac{1}{\ln t\sqrt t},t\geq 1$\\
$t_i=\min\{t:i\in I_t\},i\geq 0$\\
$w_{0,0}:=1,\quad \hat Y_1=y^0(=f^0(X_0))$ \tcp*[f]{Initialisation}\\
\For{$t=2,\ldots, T$}{
    $L_{t-1,i} = \sum_{s=t_i}^{t-1} \ell(f^i(X_s),\phi_{2^{-\alpha+1}\ln t}(Y_s)),\quad 
    \hat L_{t-1,i} = \sum_{s=t_i}^{t-1} \hat \ell_s,\quad i\in I_t$\\
    $w_{t-1,i} := \exp(\eta_t(\hat L_{t-1,i}-L_{t-1,i})),\quad i\in I_t$\\
    $p_t(i) = \frac{w_{t-1,i}}{\sum_{j\in I_t} w_{t-1,j}},\quad i\in I_t$\\
    $\hat i_t \sim p_t(\cdot)$ \tcp*[f]{Function selection}\\
    $\hat Y_t = f^{\hat i_t}(X_t)$\\
    $\hat \ell_t:=\frac{\sum_{j\in I_t} w_{t-1,j}\ell(f^j(X_s),\phi_{2^{-\alpha+1}\ln t}(Y_t)}{\sum_{j\in I_t} w_{t-1,j}}$
}
\hrule height\algoheightrule\kern3pt\relax
\end{algorithm}

For simplicity, we will refer to the predictions of the learning rule as $(\hat Y_t)_{t\geq 1}$. Now consider a process $(\Xbb,\Ybb)$ with $\Xbb\in \cs$ and such that $\Ybb$ is empirically integrable. By \cref{lemma:empirically_integrable}, there is $y_0\in\Ycal$ such that on an event $\Acal$ of probability one, for any $\epsilon>0$, there exists $M_\epsilon \geq 0$ with $\limsup_{T\to\infty} \frac{1}{T}\sum_{t=1}^T \ell(y_0,Y_t)\1_{\ell(y_0,Y_t)\geq M_\epsilon} \leq \epsilon$. We will now denote $\tilde \Ybb$ the process defined by $\tilde Y_t = \phi_{2^{-\alpha+1}\ln t}(Y_t)$ for all $t\geq 1$. Then, for any $i\in I_t$, note that using \cref{lemma:loss_identity} we have
\begin{equation*}
    0\leq \ell(f^i(x_t),\tilde Y_t) \leq 2^{\alpha-1}\left(\ell(f^i(x_t), y^0)+ \ell(y^0,\tilde Y_t)\right)\leq 2\ln t,
\end{equation*}
by construction of the set $I_t$. As a result, for any $i,j\in I_t$, we obtain $|\ell(f^i(x_t),\tilde Y_t^M)-\ell(f^j(x_t)-\tilde Y_t^M)| \leq 2\ln t$. Hence, we can use the same proof as for \cref{thm:mean_estimation} and show that almost surely, there exists $\hat t\geq 1$ such that
\begin{equation*}
     \forall t\geq \hat t,\forall i\in I_t, \quad \sum_{s=t_i}^t \ell(\hat Y_s,\tilde Y_s^M) \leq L_{t,i} + 3 \ln^2 t\sqrt t .
\end{equation*}
We denote by $\Bcal$ this event. Now let $f:\Xcal\to\Ycal$ to which we compare the predictions of our learning rule. For any $M\geq 1$, the function $\phi_M\circ f$ is measurable and has values in the ball $B_\ell(y_0,M)$ where the loss is bounded by $2^\alpha M$. Hence, by Lemma 24 from \cite{hanneke2021learning} because $\Xbb\in\Ccal_1$ we have
\begin{equation*}
    \inf_{i\geq 0} \Ebb\left[\hat\mu_\Xbb (\ell(\phi_M\circ f(\cdot),f^i(\cdot)))\right] = 0.
\end{equation*}
Now for any $k\geq 0$, let $i_k\geq 0$ such that $\Ebb\left[\hat\mu_\Xbb (\ell(\phi_M\circ f(\cdot),f^{i_k}(\cdot)))\right] < 2^{-2k}$. By Markov inequality, we have
\begin{equation*}
    \Pbb\left[ \hat\mu_\Xbb (\ell(\phi_M\circ f(\cdot),f^i(\cdot))) \right] < 2^{-k}] \geq 1-2^{-k}.
\end{equation*}
Because $\sum_k 2^{-k}<\infty$, the Borel-Cantelli lemma implies that almost surely there exists $\hat k$ such that for any $k\geq \hat k$, the above inequality is met. We denote $\Ecal_M$ this event. On the event $\Bcal\cap \Ecal_M$ of probability one, for $k\geq \hat k$ and any $T\geq \max(t_{i_k},\hat t)$ we have for any $\epsilon>0$,
\begin{align*}
    &\frac{1}{T}\sum_{t=1}^T \left(\ell(\hat Y_t,\tilde Y_t)-\ell( \phi_M\circ f(X_t),\tilde Y_t)\right)\\
    &= \frac{1}{T}\sum_{t=1}^T \ell(\hat Y_t,\tilde Y_t)-\ell( f^{i_k}(X_t),\tilde Y_t) +  \frac{1}{T}\sum_{t=1}^T \ell(f^{i_k}(X_t),\tilde Y_t)-\ell( \phi_M\circ f(X_t),\tilde Y_t)\\
    &\leq \frac{1}{T}\sum_{t=1}^{t_{i_k}-1}\ell(\hat Y_t,\tilde Y_t) + \frac{1}{T}\left(\sum_{t=t_{i_k}}^T \ell(\hat Y_t,\tilde Y_t) - L_{T,i_k} \right)  + \frac{\epsilon}{T}\sum_{t=1}^T \ell( \phi_M\circ f(X_t),\tilde Y_t)\\
    &\quad\quad\quad\quad+  \frac{c_\epsilon^\alpha}{T}\sum_{t=1}^T \ell(f^{i_k}(X_t), \phi_M\circ f(X_t))\\
    &\leq \frac{2\ln t_{i_k}}{T}+  \frac{3\ln^2 T}{\sqrt T} + \epsilon 2^{\alpha-1}M + \epsilon 2^{\alpha-1}\frac{1}{T}\sum_{t=1}^T \ell(y^0,\tilde Y_t) + \frac{c_\epsilon^\alpha}{T}\sum_{t=1}^T \ell(f^{i_k}(X_t), \phi_M\circ f(X_t))\\
    &\leq \frac{2\ln t_{i_k}}{T}+  \frac{3\ln^2 T}{\sqrt T} + \epsilon 2^{\alpha-1}M + \epsilon 2^{\alpha-1}\frac{1}{T}\sum_{t=1}^T \ell(y^0,Y_t)  +  \frac{c_\epsilon^\alpha}{T}\sum_{t=1}^T \ell(f^{i_k}(X_t), \phi_M\circ f(X_t)),
\end{align*}
where in the last inequality we used the inequality $\ell(y^0,\tilde Y_t)\leq \ell(y^0,Y_t)$ by construction of $\tilde Y_t = \phi_{2^{-\alpha+1}\ln t}(Y_t)$. Now on the event $\Acal$, we have
\begin{align*}
    Z_1:=\limsup_{T\to\infty}\frac{1}{T}&\sum_{t=1}^T \ell(y^0,Y_t) \leq 2^{\alpha-1}\ell(y_0,y^0)+ 2^{\alpha-1}\limsup_{T\to\infty} \frac{1}{T}\sum_{t=1}^T \ell(y_0,Y_t)\\
    &\leq 2^{\alpha-1}\ell(y_0,y^0)+ 2^{\alpha-1}\left(M_1+\limsup_{T\to\infty} \frac{1}{T}\sum_{t=1}^T \ell(y_0,Y_t)\1_{\ell(y_0,Y_t)\geq M_1}\right)\\
    &\leq 2^{\alpha-1}\ell(y_0,y^0)+ 2^{\alpha-1}(M_1+1) < \infty.
\end{align*}
Thus, on the event $\Acal\cap\Bcal\cap \Ecal_M$, for any $k\geq \hat k$ we have for any $\epsilon>0$,
\begin{equation*}
    \limsup_{T} \frac{1}{T}\sum_{t=1}^T\ell(\hat Y_t,\tilde Y_t)-\ell( \phi_M\circ f(X_t),\tilde Y_t)) \leq \epsilon 2^{\alpha-1}M + \epsilon 2^{\alpha-1}Z_1 + \frac{c_\epsilon^\alpha}{2^k}.
\end{equation*}
Let $\delta>0$. Now taking $\epsilon = \frac{1}{2^{\alpha}(M+Z_1)}$, we obtain that on the event $\Acal\cap\Bcal\cap \Ecal_M$, for any $k\geq \hat k$, we have $\limsup_{T} \frac{1}{T}\sum_{t=1}^T\ell(\hat Y_t,\tilde Y_t)-\ell( \phi_M\circ f(X_t),\tilde Y_t)) \leq \delta + \frac{c_\epsilon^\alpha}{2^k}.$ This yields $\limsup_{T\to\infty} \frac{1}{T}\sum_{t=1}^T \ell(\hat Y_t,\tilde Y_t)-\ell( \phi_M\circ f(X_t),\tilde Y_t)) \leq \delta.$ Because this holds for any $\delta>0$ we obtain $\limsup_{T\to\infty} \frac{1}{T}\sum_{t=1}^T \ell(\hat Y_t,\tilde Y_t)-\ell( \phi_M\circ f(X_t),\tilde Y_t)) \leq 0.$ Finally, on the event $\Acal\cap\Bcal\cap \bigcap_{M= 1}^{\infty}\Ecal_M$ of probability one, we have
\begin{equation*}
    \limsup_{T\to\infty} \frac{1}{T}\sum_{t=1}^T \left(\ell(\hat Y_t,\tilde Y_t)-\ell( \phi_M\circ f(X_t),\tilde Y_t)\right) \leq 0,\quad \forall M\geq 1,
\end{equation*}
where $M$ is an integer. We now observe that on the event $\Acal$, the same guarantee for $y_0$ also holds for $y^0$. Indeed, let $\epsilon$. For $\tilde M_\epsilon:=2^{\alpha-1}(M_{2^{-\alpha}\epsilon} +\ell(y^0,y_0)) + \ell(y_0,y^0)$ we have
\begin{align*}
    \frac{1}{T}\sum_{T=1}^T \ell(y^0,Y_t) &\1_{\ell(y^0,Y_t)\geq \tilde M_\epsilon}\\
    &\leq 2^{\alpha-1} \ell(y^0,y_0)\frac{1}{T} \sum_{t=1}^T \1_{\ell(y^0,Y_t)\geq \tilde M_\epsilon} + 2^{\alpha-1} \frac{1}{T}\sum_{T=1}^T \ell(y_0,Y_t) \1_{\ell(y^0,Y_t)\geq \tilde M_\epsilon}\\
    &\leq 2^{\alpha-1} \ell(y^0,y_0)\frac{1}{T} \sum_{t=1}^T \1_{\ell(y_0,Y_t)\geq 2^{-\alpha+1}M - \ell(y_0,y^0)} \\
    &\quad\quad\quad+ 2^{\alpha-1} \frac{1}{T}\sum_{T=1}^T \ell(y_0,Y_t) \1_{\ell(y_0,Y_t)\geq 2^{-\alpha+1}M - \ell(y_0,y^0)}\\
    &\leq 2^{\alpha}  \frac{1}{T}\sum_{t=1}^T \ell(y_0,Y_t)\1_{\ell(y_0,Y_t)\geq M_{2^{-\alpha}\epsilon}}
\end{align*}
Hence, we obtain $\limsup_{T\to\infty}\frac{1}{T}\sum_{T=1}^T \ell(y^0,Y_t) \1_{\ell(y^0,Y_t)\geq \tilde M_\epsilon} \leq \epsilon$. We now write
\begin{align*}
    \frac{1}{T}&\sum_{t=1}^T \ell( \phi_M\circ f(X_t),\tilde Y_t)-\ell( f(X_t),Y_t)\\
    &\leq \frac{1}{T}\sum_{t=1}^T \left(\ell(y^0, Y_t)-\ell( f(X_t),Y_t)\right)\1_{\ell(f(X_t),y^0)\geq M}\1_{ \ell(Y_t,y^0)\leq \ln t} \\
    &\quad\quad\quad+ \frac{1}{T}\sum_{t=1}^T \left(\ell(f(X_t), y^0)-\ell( f(X_t),Y_t)\right)\1_{\ell(f(X_t),y^0)\leq M}\1_{\ell(Y_t,y^0)\geq 2^{-\alpha+1} \ln t}\\
    &\leq \frac{1}{T}\sum_{t=1}^T \left(2\ell(y^0, Y_t)-2^{-\alpha+1}\ell( f(X_t),y^0)\right)\1_{\ell(f(X_t),y^0)\geq M}\\
    &\quad\quad\quad+ 
    \frac{1}{T}\sum_{t=1}^T \left(2\ell(f(X_t), y^0)-2^{-\alpha+1}\ell( y^0,Y_t)\right)\1_{\ell(f(X_t),y^0)\leq M}\1_{\ell(Y_t,y^0)\geq 2^{-\alpha+1}\ln t}\\
    &\leq \frac{2}{T}\sum_{t=1}^T \ell(y^0, Y_t)\1_{\ell(Y_t,y^0)\geq 2^{-\alpha}M} + 
    \frac{2M e^{2^{2\alpha-1} M}}{T}.
\end{align*}
As a result, on the event $\Acal\cap\Bcal\cap \bigcap_{M= 1}^{\infty}\Ecal_M$, for any $M\geq 1$,
\begin{equation*}
    \limsup_{T\to\infty}\frac{1}{T} \sum_{t=1}^T \ell( \phi_M\circ f(X_t),\tilde Y_t)-\ell( f(X_t),Y_t) \leq  2 \limsup_{T\to\infty} \frac{1}{T}\sum_{t=1}^T \ell(y^0, Y_t)\1_{\ell(Y_t,y^0)\geq 2^{-\alpha}M} . 
\end{equation*}
Last, we compute
\begin{align*}
    \frac{1}{T}\sum_{t=1}^T \ell(\hat Y_t,Y_t)-\ell(\hat Y_t,\tilde Y_t) &= \frac{1}{T}\sum_{t=1}^T \left(\ell(\hat Y_t,Y_t)-\ell(\hat Y_t,y^0)\right)\1_{\ell(Y_t,y^0)\geq 2^{-\alpha+1} \ln t}\\
    &\leq \frac{1}{T}\sum_{t=1}^T \left(2^{\alpha-1}\ell(\hat Y_t,y^0) + 2^{\alpha-1}\ell(Y_t,y^0)\right)\1_{\ell(Y_t,y^0)\geq 2^{-\alpha+1}\ln t}\\
    &\leq \frac{1}{T}\sum_{t=1}^T \left(\ln t + 2^{\alpha-1}\ell(Y_t,y^0)\right)\1_{\ell(Y_t,y^0)\geq 2^{-\alpha+1}\ln t}\\
    &\leq \frac{2^\alpha}{T}\sum_{t=1}^T  \ell(Y_t,y^0)\1_{\ell(Y_t,y^0)\geq 2^{-\alpha+1}\ln t}.
\end{align*}
Note that for any $\epsilon>0$, we have on the event $\Acal$ that for any $M\geq 1$,
\begin{align*}
    \limsup_{T\to\infty} \frac{1}{T}\sum_{t=1}^T  \ell(Y_t,y^0)\1_{\ell(Y_t,y^0)\geq 2^{-\alpha+1}\ln t} &\leq \limsup_{T\to\infty} \frac{1}{T}\sum_{t\geq e^{2^{\alpha-1}M}}^T  \ell(Y_t,y^0)\1_{\ell(Y_t,y^0)\geq M}\\
    &= \limsup_{T\to\infty} \frac{1}{T}\sum_{t=1}^T  \ell(Y_t,y^0)\1_{\ell(Y_t,y^0)\geq M}. 
\end{align*}
Hence, because this holds for any $M\geq 1$, if $\epsilon>0$ we can apply this to the integer $M:=\lceil \tilde M_\epsilon\rceil$ which yields $\limsup_{T\to\infty} \frac{1}{T}\sum_{t=1}^T  \ell(Y_t,y^0)\1_{\ell(Y_t,y^0)\geq 2^{-\alpha+1}\ln t}\leq \epsilon$. This holds for any $\epsilon>0$. Hence we obtain on the event $\Acal$ that $\limsup_{T\to\infty} \frac{1}{T}\sum_{t=1}^T  \ell(Y_t,y^0)\1_{\ell(Y_t,y^0)\geq 2^{-\alpha+1}\ln t} \leq 0$, which implies that $\limsup_{T\to\infty} \frac{1}{T}\sum_{t=1}^T \ell(\hat Y_t,Y_t)-\ell(\hat Y_t,\tilde Y_t)\leq 0$. Putting everything together, we obtain on $\Acal\cap\Bcal\cap\bigcap_{M= 1}^{\infty}\Ecal_M$ that for any $M\geq 1$,
\begin{align*}
    &\limsup_{T\to\infty} \frac{1}{T}\sum_{t=1}^T \ell(\hat Y_t,Y_t)-\ell(f(X_t), Y_t) \leq \limsup_{T\to\infty} \frac{1}{T}\sum_{t=1}^T \ell(\hat Y_t,Y_t)-\ell(\hat Y_t,\tilde Y_t)\\
    &\quad\quad +\limsup_{T\to\infty} \frac{1}{T}\sum_{t=1}^T \ell(\hat Y_t,\tilde Y_t)-\ell( \phi_M\circ f(X_t),\tilde Y_t)\\
    &\quad\quad + \limsup_{T\to\infty}\frac{1}{T} \sum_{t=1}^T \ell( \phi_M\circ f(X_t),\tilde Y_t)-\ell( f(X_t),Y_t)\\
    &\leq 2 \limsup_{T\to\infty} \frac{1}{T}\sum_{t=1}^T \ell(y^0, Y_t)\1_{\ell(Y_t,y^0)\geq 2^{-\alpha}M}.
\end{align*}
Because this holds for all $M\geq 1$, we can again apply this result to $M:=\lceil \tilde M_\epsilon \rceil$ which yields the result $\limsup_{T\to\infty} \frac{1}{T}\sum_{t=1}^T \ell(\hat Y_t,Y_t)-\ell(f(X_t), Y_t)\leq \epsilon$. This holds for any $\epsilon>0$. Therefore, we finally obtain on the event $\Acal\cap\Bcal\cap\bigcap_{M= 1}^{\infty}\Ecal_M$ of probability one, one has $\limsup_{T\to\infty} \frac{1}{T}\sum_{t=1}^T \ell(\hat Y_t,Y_t)-\ell(f(X_t), Y_t) \leq 0.$ This ends the proof that \cref{alg:C1_processes} is universally consistent under $\cs$ processes for adversarial empirically integrable responses. Now because there exists a ball $B_\ell(y,r)$ of $(\Ycal,\ell)$ that does not satisfy $\ftime$, from \cref{thm:bad_value_spaces}, universal learning with responses restricted on this ball cannot be achieved for processes $\Xbb\notin\cs$. However, these responses are empirically integrable because they are bounded. Hence, $\cs$ is still necessary for universal learning with adversarial empirically integrable responses. Thus $\solar=\cs$ and the provided learning rule is optimistically universal. This ends the proof of the theorem.

\subsection{Proof of \cref{thm:SOUL_regression_unbounded}}

Fix $(\Xcal,\rho_\Xcal)$ and a value space $(\Ycal,\ell)$ such that any ball satisfies $\ftime$ We now construct our learning rule. Let $\bar y\in\Ycal$ be an arbitrary value. For any $M\geq 1$, because $B_\ell(\bar y,M)$ is bounded and satisfies $\ftime$, there exists an optimistically universal learning rule $f_\cdot^M$ for value space $(B_\ell(y_0,M),\ell)$. For any $M\geq 1$, we define the function $\phi_M:\Ycal\to\Ycal$ defined by restricting the space to the ball $B_\ell(\bar y,M)$ as follows
\begin{equation*}
    \phi_M(y):=\begin{cases}
        y &\text{if }\ell(y,\bar y)< M\\
        \bar y &\text{otherwise}.
    \end{cases}
\end{equation*}
For simplicity, we will denote by $\hat Y_t^M:=f^M_t(\Xbb_{\leq t-1},\phi_M(\Ybb)_{\leq t-1},X_t)$ the prediction of $f_\cdot^M$ at time $t$ for the responses which are restricted to the ball $B_\ell(\bar y,M)$. We now combine these predictors using online learning into a final learning rule $f_\cdot$. Specifically, we define $I_t:=\{0\leq M\leq 2^{-\alpha+1}\ln t\}$ for all $t\geq 1$. We also denote $t_M=\lceil e^{2^{\alpha-1}M}\rceil$ for $M\geq 0$ and pose $\eta_t=\frac{1}{4\sqrt t}$. For any $M\in I_t$, we define 
\begin{equation*}
    L_{t-1,M}:=\sum_{s=t_M}^{t-1}\ell(\hat Y_s^M,\phi_{2^{-\alpha+1}\ln s}(Y_s)).
\end{equation*}
For simplicity, we will denote by $\tilde\Ybb$ the process defined by $\tilde Y_t = \phi_{2^{-\alpha+1}\ln t}(Y_t)$ for all $t\geq 1$. We now construct recursive weights as $w_{0,0}=1$ and for $t\geq 2$ we pose for all $1\leq s\leq t-1$
\begin{equation*}
    \hat l_s:= \frac{\sum_{M\in I_s}w_{s-1,M}\ell(\hat Y_s^M,\tilde Y_s)}{\sum_{M\in I_s} w_{s-1,M}}.
\end{equation*}
Now for any $M\in I_t$ we note $\hat L_{t-1,M}:=\sum_{s=t_M}^{t-1} \hat \ell_s$, and pose $w_{t-1,M}:=e^{\eta_t(\hat L_{t-1,M}-L_{t-1,M})}$. We then choose a random index $\hat M_t$ independent from the past history such that
\begin{equation*}
    \Pbb(\hat M_t = M):= \frac{w_{t-1,M}}{\sum_{M'\in I_t}w_{t-1,M'}},\quad M\in I_t.
\end{equation*}
The output the learning rule is $f_t(\Xbb_{\leq t-1},\Ybb_{\leq t-1},X_t):= \hat Y_t^{\hat M_t}$. For simplicity, we will denote by $\hat Y_t:=f_t(\Xbb_{\leq t-1},\Ybb_{\leq t-1},X_t)$ the prediction of $f_\cdot$ at time $t$. This ends the construction of our learning rule which is summarized in \cref{alg:SMV_EI}.

\begin{algorithm}[tb]
\caption{A learning rule for adversarial empirically integrable responses under $\smv$ processes for value spaces $(\Ycal,\ell)$ such that any ball satisfies $\ftime$.}\label{alg:SMV_EI}
\hrule height\algoheightrule\kern3pt\relax
\KwIn{Historical samples $(X_t,Y_t)_{t<T}$ and new input point $X_T$\\
\quad\quad\quad Optimistically universal learning rule $f^M_\cdot$ for value space $B_\ell(y_0,M),\ell)$, where $y_0\in\Ycal$ fixed.}
\KwOut{Predictions $\hat Y_t$ for $t\leq T$}
$I_t:=\{0\leq M\leq 2^{-\alpha+1}\ln t\} ,\eta_t:=\frac{1}{4\sqrt t},t\geq 1$\\
$t_M=\lceil e^{2^{\alpha-1}M}\rceil,M\geq 0$\\
$w_{0,0}:=1,\quad \hat Y_1=y^0(=f^0(X_0))$ \tcp*[f]{Initialisation}\\
\For{$t=2,\ldots, T$}{
    $L_{t-1,M} = \sum_{s=t_M}^{t-1} \ell(f^M_s(\Xbb_{\leq s-1},\phi_M(\Ybb)_{\leq s-1},X_s),\phi_{2^{-\alpha+1}\ln s} (Y_s)),\quad 
    \hat L_{t-1,M} = \sum_{s=t_M}^{t-1} \hat \ell_s,\quad M\in I_t$\\
    $w_{t-1,M} := \exp(\eta_t(\hat L_{t-1,M}-L_{t-1,M})),\quad M\in I_t$\\
    $p_t(M) = \frac{w_{t-1,M}}{\sum_{M'\in I_t} w_{t-1,M'}},\quad M\in I_t$\\
    $\hat M_t \sim p_t(\cdot)$ \tcp*[f]{Model selection}\\
    $\hat Y_t = f^{\hat M_t}_t(\Xbb_{\leq t-1},\phi_M(\Ybb)_{\leq t-1},X_t)$\\
    $\hat \ell_t:=\frac{\sum_{j\in I_t} w_{t-1,j}\ell(f^M_t(\Xbb_{\leq t-1},\phi_M(\Ybb)_{\leq t-1},X_t),\phi_{2^{-\alpha+1}\ln t}(Y_t)}{\sum_{j\in I_t} w_{t-1,j}}$
}
\hrule height\algoheightrule\kern3pt\relax
\end{algorithm}

Now let $(\Xbb,\Ybb)$ be such that $\Xbb\in\soul$ and $\Ybb$ empirically integrable. By \cref{lemma:empirically_integrable}, there exists some value $y_0\in\Ycal$ such that on an event $\Acal$ of probability one, we have for any $\epsilon$, a threshold $M_\epsilon\geq 0$ with $\limsup_{T\to\infty}\frac{1}{T}\sum_{t=1}^T\ell(y_0,Y_t)\1_{\ell(y_0,Y_t)\geq M_\epsilon} \leq \epsilon.$ We fix a measurable function $f:\Xcal\to\Ycal$. Also, for any $t\geq 1$ and $M\in I_t$ we have $0\leq \ell(\hat Y^M_t,\tilde Y_t) \leq 2^{\alpha-1}\ell(\hat Y^M_t,\bar y) + 2^{\alpha-1}\ell(\tilde Y_t,\bar y) \leq 2\ln t$. As a result, for any $M, M'\in I_t$ we have $|\ell( \hat Y_t^M,\tilde Y_t)-\ell(\hat Y_t^{M'},\tilde Y_t)|\leq 2\ln t$. Because $|I_t|\leq 1+\ln t$ for all $t\geq 1$, the same proof as \cref{thm:mean_estimation} shows that on an event $\Bcal$ of probability one, there exists $\hat t\geq 0$ such that
\begin{equation*}
    \forall t\geq \hat t,\forall M\in I_t,\quad \sum_{s=t_M}^t \ell(\hat Y_t,\tilde Y_t)\leq \sum_{s=t_M}^t \ell(\hat Y_t^M,\tilde Y_t) + 3\ln^2 t\sqrt t.
\end{equation*}
Further, we know that $f_\cdot^M$ is Bayes optimistically universal for value space $(B_\ell(\bar y,M),\ell)$. In particular, because $\Xbb\in\soul$ and $\phi_M\circ f:\Xcal\to B_\ell(\bar y,M)$, we have
\begin{equation*}
    \limsup_{T\to\infty} \frac{1}{T}\sum_{t=1}^T \ell(\hat Y^M_t,\phi_M(Y_t)) - \ell(\phi_M\circ f(X_t),\phi_M(Y_t)) \leq 0\quad (a.s.).
\end{equation*}
For simplicity, we introduce $\delta_T^M:=  \frac{1}{T}\sum_{t=1}^T \ell(\hat Y^M_t,\phi_M(Y_t)) - \ell(\phi_M\circ f(X_t),\phi_M(Y_t))$ and define $\Ecal_M$ as the event of probability one where the above inequality is satisfied, i.e., $\limsup_{T\to\infty} \delta_T^M\leq 0$. Because we always have 
$\ell(\hat Y_t,\bar y)\leq 2^{-\alpha+1} \ln t$, we can write
\begin{align*}
    \frac{1}{T}\sum_{t=1}^T \ell(\hat Y_t,Y_t) -\ell(\hat Y_t,\tilde Y_t)&= \frac{1}{T}\sum_{t=1}^T \left( \ell(\hat Y_t,Y_t) -\ell(\hat Y_t,\bar y) \right)\1_{\ell(Y_t,\bar y)\geq 2^{-\alpha+1}\ln t}\\
    &\leq \frac{1}{T}\sum_{t=1}^T \left( 2^{\alpha-1}\ell(\hat Y_t,\bar y) +2^{\alpha-1}\ell(Y_t,\bar y) \right)\1_{\ell(Y_t,\bar y)\geq 2^{-\alpha+1}\ln t}\\
    &\leq \frac{2^\alpha}{T}\sum_{t=1}^T \ell(Y_t,\bar y) \1_{\ell(Y_t,\bar y)\geq 2^{-\alpha+1}\ln t}.
\end{align*}
The proof of \cref{thm:CS_regression_unbounded}  shows that on the event $\Acal$,
\begin{equation*}
    \limsup_{T\to\infty} \frac{1}{T}\sum_{t=1}^T \ell(Y_t,\bar y) \1_{\ell(Y_t,\bar y)\geq 2^{-\alpha+1}\ln t}\leq 0,
\end{equation*}
which implies $\limsup_{T\to\infty} \frac{1}{T}\sum_{t=1}^T \ell(\hat Y_t,Y_t) -\ell(\hat Y_t,\tilde Y_t) \leq 0$. Now let $M\geq 1$. We write
\begin{align*}
    \frac{1}{T}\sum_{t=1}^T & \ell(\hat Y^M_t,\tilde Y_t) - \ell(\hat Y^M_t,\phi_M(Y_t)) \\
    &\leq \frac{1}{T}\sum_{t=1}^{t_M-1} \ell(\hat Y^M_t,\tilde Y_t) +  \frac{1}{T}\sum_{t=t_M}^T \left(\ell(\hat Y^M_t,Y_t) - \ell(\hat Y^M_t,\bar y) \right)\1_{M\leq \ell(Y_t,\bar y)<2^{-\alpha+1}\ln t}\\
    &\leq \frac{e^{2^{\alpha-1}M}2^{\alpha}M}{T} + \frac{1}{T}\sum_{t=1}^T \left(2^{\alpha-1}\ell(\hat Y_t^M,\bar y) + 2^{\alpha-1}\ell(Y_t,\bar y)\right)\1_{ \ell(Y_t,\bar y)\geq M}\\
    &\leq \frac{e^{2^{\alpha-1}M}2^{\alpha}M}{T} + \frac{2^{\alpha}}{T}\sum_{t=1}^T \ell(Y_t,\bar y)\1_{ \ell(Y_t,\bar y)\geq M}.
\end{align*}
Hence, on the event $\Acal$, we obtain \begin{equation*}
    \limsup_{T\to\infty}\frac{1}{T}\sum_{t=1}^T \ell(\hat Y^M_t,\tilde Y_t) - \ell(\hat Y^M_t,\phi_M(Y_t)) \leq 2^{\alpha} \limsup_{T\to\infty} \frac{1}{T}\sum_{t=1}^T \ell(Y_t,\bar y)\1_{ \ell(Y_t,\bar y)\geq M}.
\end{equation*}
Finally, we compute
\begin{align*}
    &\frac{1}{T}\sum_{t=1}^T  \ell(\phi_M\circ f(X_t),\phi_M(Y_t)) - \ell(f(X_t),Y_t)\\
    &\leq \frac{1}{T}\sum_{t=1}^T \left(\ell(\bar y,Y_t) - \ell(f(X_t),Y_t) \right) \1_{\ell(f(X_t),\bar y)\geq M}\1_{\ell(Y_t,\bar y)\leq M}\\
    &\quad\quad\quad+  \frac{1}{T}\sum_{t=1}^T \left(\ell( f(X_t),\bar y) - \ell(f(X_t),Y_t) \right) \1_{\ell(f(X_t),\bar y)\leq M}\1_{\ell(Y_t,\bar y)\geq M}\\
    &\leq \frac{1}{T}\sum_{t=1}^T \ell(\bar y,Y_t) \1_{\ell(Y_t,\bar y)\geq 2^{-\alpha}M} +  \frac{M}{T}\sum_{t=1}^T \1_{\ell(Y_t,\bar y)\geq M} \\
    &\quad\quad\quad\quad   + \frac{1}{T}\sum_{t=1}^T \left(\ell(\bar y,Y_t) - \ell(f(X_t),Y_t) \right) \1_{\ell(f(X_t),\bar y)\geq M}\1_{\ell(Y_t,\bar y)\leq  2^{-\alpha}M}  \\
    &\leq \frac{1}{T}\sum_{t=1}^T \ell(\bar y,Y_t) \1_{\ell(Y_t,\bar y)\geq 2^{-\alpha}M} +  \frac{1}{T}\sum_{t=1}^T \ell(Y_t,\bar y)\1_{\ell(Y_t,\bar y)\geq M} \\
    &\quad\quad\quad\quad + \frac{1}{T}\sum_{t=1}^T \left(2\ell(\bar y,Y_t) - 2^{-\alpha+1}\ell(f(X_t),\bar y) \right) \1_{\ell(f(X_t),\bar y)\geq M}\1_{\ell(Y_t,\bar y)\leq  2^{-\alpha}M} \\
    &\leq \frac{1}{T}\sum_{t=1}^T \ell(\bar y,Y_t) \1_{\ell(Y_t,\bar y)\geq 2^{-\alpha}M} + \frac{1}{T}\sum_{t=1}^T \ell(Y_t,\bar y)\1_{\ell(Y_t,\bar y)\geq M} .
\end{align*}
We now put all these estimates together. On the event $\Acal\cap\Bcal\cap\bigcap_{M=1}^\infty \Ecal_M$, for any $M\geq 1$ and $t\geq \max(\hat t,t_M)$ we can write
\begin{align*}
    &\frac{1}{T}\sum_{t=1}^T \ell(\hat Y_t,Y_t) - \ell( f(X_t),Y_t) \leq \frac{1}{T}\sum_{t=1}^T \left(\ell(\hat Y_t,Y_t) - \ell(\hat Y_t,\tilde Y_t)\right) \\
    &+ \frac{1}{T}\sum_{t=1}^T \left(\ell(\hat Y_t,\tilde Y_t) - \ell(\hat Y^M_t,\tilde Y_t)\right) + \frac{1}{T}\sum_{t=1}^T  \left(\ell(\hat Y^M_t,\tilde Y_t) - \ell(\hat Y^M_t,\phi_M(Y_t))\right) + \delta_T^M\\
    &+ \frac{1}{T}\sum_{t=1}^T  \left(\ell(\phi_M\circ f(X_t),\phi_M(Y_t)) - \ell(f(X_t),Y_t)\right)\\
    &\leq \frac{1}{T}\sum_{t=1}^T \left(\ell(\hat Y_t,Y_t) - \ell(\hat Y_t,\tilde Y_t)\right) +\frac{3\ln^2 T}{\sqrt T}  + \frac{1}{T}\sum_{t=1}^T \left(\ell(\hat Y^M_t,\tilde Y_t) - \ell(\hat Y^M_t,\phi_M(Y_t))\right)\\
    &\quad\quad\quad + \delta_T^M + \frac{1}{T}\sum_{t=1}^T  \left(\ell(\phi_M\circ f(X_t),\phi_M(Y_t)) - \ell(f(X_t),Y_t)\right).
\end{align*}
Thus, we obtain on the event $\Acal\cap\Bcal\cap\bigcap_{M=1}^\infty \Ecal_M$, for any $M\geq 1$, 
\begin{multline*}
    \limsup_{T\to\infty} \frac{1}{T}\sum_{t=1}^T \ell(\hat Y_t,Y_t) - \ell( f(X_t),Y_t) \leq \limsup_{T\to\infty}\frac{1}{T}\sum_{t=1}^T \ell(\bar y,Y_t) \1_{\ell(Y_t,\bar y)\geq 2^{-\alpha}M}\\
    +(1+2^{\alpha})\limsup_{T\to\infty} \frac{1}{T}\sum_{t=1}^T \ell(Y_t,\bar y)\1_{\ell(Y_t,\bar y)\geq M} 
\end{multline*}
On the event $\Acal$, the same arguments as in the proof of \cref{thm:CS_regression_unbounded} show that we have same guarantees for $y_0$ as for $\bar y$, i.e., for any $\epsilon>0$, there exists $\tilde M_\epsilon$ such that $\limsup_{T\to\infty}\frac{1}{T}\sum_{t=1}^T \ell(Y_t,\bar y)\1_{\ell(Y_t,\bar y)\geq \tilde M_\epsilon} \leq \epsilon$. Therefore, for any $\epsilon>0$, we can apply the above equation to $M:=\lceil 2^\alpha M_\epsilon + M_{2^{-\alpha-1}\epsilon}\rceil$ to obtain
\begin{equation*}
    \limsup_{T\to\infty} \frac{1}{T}\sum_{t=1}^T \ell(\hat Y_t,Y_t) - \ell( f(X_t),Y_t) \leq \epsilon + \frac{1+2^{\alpha}}{2^{\alpha+1}} \leq 2\epsilon.
\end{equation*}
Because this holds for all $\epsilon>0$, we can in finally get 
\begin{equation*}
    \limsup_{T\to\infty} \frac{1}{T}\sum_{t=1}^T \left(\ell(\hat Y_t,Y_t) - \ell(f(X_t),Y_t)\right) \leq 0,
\end{equation*}
on the event $\Acal\cap\Ecal\cap\bigcap_{M\geq 1}\Fcal_M$ of probability one. This ends the proof of the theorem.

\end{appendix}

\end{document}